\documentclass{article}

\PassOptionsToPackage{authoryear,round}{natbib}
\usepackage[preprint]{neurips_2026}
\usepackage[utf8]{inputenc}
\usepackage[T1]{fontenc}
\usepackage{hyperref}
\usepackage{url}
\usepackage{booktabs}
\usepackage{microtype}
\usepackage{xspace}
\usepackage[table]{xcolor}

\usepackage{graphicx}
\usepackage{etoolbox}
\usepackage{amsmath}
\usepackage{mathtools}
\usepackage{amssymb}
\usepackage{amsthm}
\usepackage{multirow}
\usepackage{longtable}
\usepackage{subcaption}
\usepackage{float}
\usepackage{algorithm}
\usepackage{algpseudocode}
\usepackage{tikz}
\usepackage{silence}
\newif\ifjournal
\journalfalse

\definecolor{darkblue}{rgb}{0, 0, 0.5}
\hypersetup{
    colorlinks=true,
    citecolor=darkblue,
    linkcolor=darkblue,
    urlcolor=darkblue,
    hypertexnames=false,
}
\newtheorem{remark}{Remark}
\newtheorem{proposition}{Proposition}
\newtheorem*{proposition*}{Proposition}
\newtheorem{corollary}{Corollary}
\newtheorem*{corollary*}{Corollary}
\newtheorem{theorem}{Theorem}
\newtheorem{lemma}{Lemma}
\newtheorem*{theorem*}{Theorem}
\newtheorem*{lemma*}{Lemma}
\theoremstyle{definition}
\newtheorem{definition}{Definition}

\newcommand{\equalcontrib}{\textsuperscript{$\Vert$}}

\newcommand{\win}[1]{\cellcolor{black}\textcolor{white}{\textbf{#1}}}
\newcommand{\winrate}{$97.3\%$\xspace}
\newcommand{\winrateNumer}{$1{,}460$\xspace}
\newcommand{\winrateLosses}{$40$\xspace}
\newcommand{\winrateDenom}{$1{,}500$\xspace}
\newcommand{\winrateCI}{$[96.4\%, 98.0\%]$\xspace}
\newcommand{\winratebf}{$98.7\%$\xspace}
\newcommand{\winratebfNumer}{$296$\xspace}
\newcommand{\winratebfCI}{$[96.6\%, 99.5\%]$\xspace}
\newcommand{\winratebfDenom}{$300$\xspace}
\newcommand{\winrateFP}{$99.7\%$\xspace}

\newcommand{\Nsizes}{14}
\newcommand{\COTPPs}{12}
\newcommand{\NCDsizes}{12}
\newcommand{\COtrainN}{120}
\newcommand{\NCtrainN}{120}
\newcommand{\COholdN}{42}
\newcommand{\NCholdN}{42}
\newcommand{\runsperDset}{324}

\newcommand{\RhalfLow}{0.1}
\newcommand{\RhalfHigh}{50.0}

\title{Tokens-per-Parameter Coverage Is Critical for Robust LLM Scaling Law Extrapolation}

\author{%
  Joshua Shay Kricheli\equalcontrib, Alexander Lawrence Reid\equalcontrib, Venkata Gandikota \& Paulo Shakarian \\
  Syracuse University \\
  \texttt{\{jkrichel,areid04,vsgandik,pashakar\}@syr.edu} \\
  \And
  Soumajyoti Sarkar \thanks{%
    \mbox{%
      \textsuperscript{$\Vert$}\,Equal contribution.\kern0.55em%
      \textsuperscript{$*$}\,%
      This work does not relate to the author's position at Amazon.%
    }%
  }\\
  Amazon AGI Foundations \\
}

\begin{document}

\makeatletter
\patchcmd{\HyOrg@maketitle}{\hbox to 1.8em{\hss $\m@th ^{\@thefnmark }$}}{}%
  {\relax}{\PackageWarning{neurips2026_conference}{%
      Failed to patch NeurIPS title-page footnote mark column (template drift).}}%
\makeatother
\maketitle
\begin{abstract}
Neural scaling laws approximate a language model's loss as a power-law function of parameter count $N$ and token count $D$. Following Chinchilla-style compute-optimal training, many studies fit scaling laws from runs performed under a fixed tokens-per-parameter (TPP) ratio $k$ and set $D = kN$. We show that this collinear design, combined with the empirically common near-equality of the exponents governing $N$ and $D$, induces an inherent ill-conditioning in the Gauss-Newton least-squares problem: the condition number of the design grows as the inverse square of the gap between the $N$ and $D$-exponents. The scale coefficients become practically unidentifiable, with confidence intervals inflating by an order of magnitude or more, yielding a ``sloppy'' model whose extrapolations degrade sharply off the training ray. We prove this for four scaling-law formalisms and derive a closed-form TPP-diversity threshold that is necessary and sufficient for well-conditioned estimation. Empirically, non-collinear designs outperform collinear ones on held-out splits with a \winrate win rate across four laws, five corpora, multiple floating point precision modes. We further show the degeneracy is rooted in Jacobian geometry and is not an artifact of the loss function: any smooth estimation objective whose curvature involves the Jacobian inherits the same ill-conditioning.
\end{abstract}


\section{Introduction}

Scaling laws have become a cornerstone of large
language model (LLM) research, providing a predictive
framework for estimating model performance as a
function of compute, dataset size, and parameter
count~\citep{hestness2017deep,
rosenfeld2019constructive, kaplan2020scaling, alabdulmohsin2022revisiting,
caballero2023broken, bahri2024explaining}. The seminal Chinchilla study~\citep{hoffmann2022training} 
proposed that model size~$N$ and training tokens~$D$ scale
linearly under a fixed compute budget with $D \approx 20N$. A recent survey of 50+ studies
finds the fitted law shifts substantially under
changes to the included $D/N$ range, with the optimal
ratio a recurring source of
disagreement~\citep{li2025misfitting}.
These works established the fixed TPP (\emph{Tokens-per-Parameter})
heuristic in the compute-optimal setting.
This ratio is a prescription for how to
allocate compute when \emph{training a single model},
not a design for the experimental grid from which
scaling laws are derived.  Yet fixed-TPP grids are
widely used for exactly this purpose.
Cerebras-GPT~\citep{dey2023cerebrasgptopencomputeoptimallanguage} 
trains 111M-13B models at $D=20N$;
\citet{gadre2024language} parameterize over-trained
scaling along constant-$M$ lines where each slice is
itself collinear; the OLMo ladder fits Chinchilla at
fixed Chinchilla multiples~\citep{bhagia2025establishingtaskscalinglaws};
and DataDecide reports that eight scaling-law variants
fitted at $D = 100N$ fail to beat a na\"ive
single-scale
baseline~\citep{magnusson2025datadecidepredictbestpretraining}.
The practical stakes are high:
\citet{choshen2024hitchhiker} fit over $1{,}000$ laws
to $485$ published models and find estimates sensitive
to the choice and coverage of training configurations.
When all training runs lie on a single ray $D = kN$
in the $(N,D)$ plane, the predictor variables become
collinear, and combined with the empirically common
near-equality of the $N$ and $D$ exponents this
renders the scaling-law parameters
\emph{practically unidentifiable}.
We prove this for four scaling-law formalisms -
Chinchilla, repeated-data~\citep{muennighoff2023scaling},
Kaplan~\citep{kaplan2020scaling}, and
Droppo-Elibol~\citep{droppo2021scaling} - via
Gauss-Newton analysis.  Because the degeneracy is in
the Jacobian geometry rather than in the loss, any
smooth estimation objective inherits the same
ill-conditioning (Section~\ref{sec:formal}).  The law becomes a ``sloppy'' model
(Section~\ref{sec:related_work}): a
near-continuum of parameters fits training data
indistinguishably yet diverges off the training ray.
This directly corrupts isoFLOP prediction:
isoFLOP curves cut across the training ray and
thereby probe the sloppy direction that collinear
fitting leaves unconstrained.  Constructively, we give an identified
reduced model recovering the combined effect along a
single ray
(Definition~\ref{def:reduced_chinchilla_model}), and
show empirically that non-collinear designs beat
collinear ones when evaluated on the same held-out models
(Section~\ref{sec:empirical}).

\noindent\textbf{Key contributions.}\quad
(i)~A Gauss-Newton analysis proving that collinear
designs are inherently ill-conditioned when the
exponents governing model size and data size are
close, with the condition number growing as the
inverse square of the exponent gap - established for four scaling-law formalisms, with a corresponding
confidence-interval inflation result
(Proposition~\ref{prop:full_cond},
Corollary~\ref{prop:ci_inflation});
(ii)~a closed-form TPP-diversity threshold that is
necessary and sufficient for well-conditioned
estimation (Proposition~\ref{prop:holdout_r2});
(iii)~a holdout-prediction ordering separating
collinear from non-collinear designs
(Theorem~\ref{thm:holdout_regimes}), validated at a
\winrate win rate;
(iv)~an identified reduced parametrisation that
recovers what a single-ray design can
constrain
(Definition~\ref{def:reduced_chinchilla_model});
and
(v)~a fully reproducible experimental
suite - ${\sim}1{,}900$ trained LLMs, all
checkpoints, training metrics, accompanied by training and analysis
code - provided via an anonymized live online repository link; supplementary material
includes a ZIP file with a screenshot of that repository.\footnote{Code:
\url{https://anonymous.4open.science/r/Tokens-per-Parameter_Coverage_Is_Critical_for_Robust_LLM_Scaling_Law_Extrapolation-CC76}.\quad
Checkpoints and metrics:
\url{https://huggingface.co/datasets/TPPIsCriticalFor/colinear_scaling_models}.}
 
\noindent\textbf{Paper road-map.} The rest of this paper is as follows.
Section~\ref{sec:formal} develops the Gauss-Newton
framework, proves condition-number bounds for the four
formalisms under collinear designs, derives
confidence-interval inflation for individual scale
coefficients, and establishes the isoFLOP
prediction-error amplification that separates collinear
from non-collinear training; proofs are in
Appendix~\ref{app:proofs}.
Section~\ref{sec:empirical} validates the analysis
with controlled experiments across five corpora, four
laws, and multiple precision regimes.
Section~\ref{sec:discussion} synthesises the empirical
patterns and cautions against interpreting separate
data versus model contributions from single-ray fits.

\section{Related work}
\label{sec:related_work}

Predictable power-law scaling of generalization error
was first reported by~\citet{hestness2017deep} and
formalized for autoregressive language models
by~\citet{kaplan2020scaling} using bivariate fits whose
grids were often close to fixed-TPP for large subsets.
\citet{bahri2024explaining} linked the observed
exponents to data-manifold structure via variance - and
resolution-limited regimes. Follow-ups to Chinchilla examined data
availability~\citep{villalobos2022will},
overtraining~\citep{touvron2023llama}, and unbounded
compute with fixed
data~\citep{kim2025pretraining}; we extend our analysis
to repeated-data regimes~\citep{muennighoff2023scaling}
in Appendix~\ref{app:dataconstrained_details}.
\citet{sardana2024beyond} sweep ratios up to $10{,}000$
TPP, producing inherently non-collinear designs that
our diagnosis predicts should be
well-identified (Section~\ref{sec:formal}) - yet the
most widely cited scaling-law parameters were estimated
under the collinearity conditions we formalize.
\citet{besiroglu2024chinchilla} could not reconcile
Chinchilla's reported estimates with two of three
replication methods, and
\citet{porian2024resolving} attribute the
Kaplan-Chinchilla discrepancy to minor recipe
differences (last-layer cost, warm-up, scale-dependent
tuning); our ill-conditioning result explains
\emph{why} such small perturbations shift coefficients
by orders of magnitude.  Reinforcing this,
\citet{bergsma2025powerlines} show even optimizer
hyperparameters (weight decay, batch size) obey power
laws in the TPP ratio $D/N$, so fixing a single ray
couples recipe with scale.
Concurrently, \citet{volkova2026robust} report that
fitting separate Chinchilla-style laws across optimizers
yields ill-conditioned, highly correlated parameters -
the same pathology we formalize from Jacobian geometry,
observed along the optimizer axis.
\citet{schaeffer2025pretraining} propose sidestepping
multicollinearity via compute-envelope or
gold-reference parameterizations - both abandon the
$(N,D)$ decomposition.
\citet{zhang2026prescriptive} pursue a related goal at
the downstream-evaluation level, estimating
compute-to-capability boundaries via quantile
regression on post-trained models.
Our work is complementary:
we prove the condition number grows as
$\Theta(\varepsilon^{-2})$ in the exponent gap from
Jacobian geometry alone
(Proposition~\ref{prop:full_cond},
Lemma~\ref{lem:cs_gap}), and give designs that
\emph{restore} identifiability within the multivariate
framework (Proposition~\ref{prop:reduced_id}). \citet{yue2025relative}'s Relative-Based Scaling Law
is largely a single-axis sweep in model size, which
our analysis flags as insufficient for multivariate
identifiability.  Farseer~\citep{li2025farseer} adds
an $N$-$D$ coupling term and trains
$\sim\!1{,}000$ models on a diverse $(N,D)$
grid - the two-dimensional coverage our analysis
requires - reporting a $433\%$ reduction in
extrapolation error over Chinchilla; even such
interaction terms remain unidentifiable under
$D = kN$ (Section~\ref{sec:formal};
Appendix~\ref{app:interaction}).
\citet{shukor2025datamixtures} similarly extend
scaling laws to data-mixture weights, enlarging the
multivariate design space where collinearity matters.
\citet{hu2026neuneu} sidestep parametric fitting
altogether by training a neural extrapolator over
checkpoint trajectories - orthogonal to our diagnosis
of \emph{why} the parametric fit is fragile.
Statistically, the pathology we analyze is the
nonlinear-regression analog of
multicollinearity~\citep{kim2019multicollinearity,
vatcheva2016multicollinearity, dormann2013collinearity,
montgomery2012introduction, obrien2007caution} and of
the ``sloppy'' model
phenomenon~\citep{transtrum2010nonlinear,
transtrum2015perspective}, in which narrow model-manifold
cross-sections yield fragile estimates - here induced
by fixed-TPP sampling.

\section{Formal analysis}
\label{sec:formal}

The standard Chinchilla scaling
law~\cite[Eq.~2]{hoffmann2022training} approximates the
loss as a sum of two power-law terms - capturing
limited model capacity and data size - plus an
irreducible loss $E$, a form observed across deep
learning domains~\cite[Sec.~3]{hestness2017deep, kaplan2020scaling}:
\begin{equation}
    L(N, D) = AN^{-\alpha} + BD^{-\beta} + E,
    \label{eqn:chinchilla}
\end{equation}
where $A, B, \alpha, \beta, E \in \mathbb{R}_{>0}$ and
$N, D \in \mathbb{R}_{>0}$.

\begin{definition}[Scaling-law experimental dataset]
\label{def:experimental_dataset}
\label{def:holdout_splits}
Let $\mathcal{A}$ be a training scheme (architecture,
optimizer, algorithm, etc.). Each experiment records
loss~$L_i$ for a model of size~$N_i$ on $D_i$
tokens, yielding
$\mathcal{D} = \{(N_i, D_i, L_i)\}_{i=1}^{m}$.
Let $N_1 < \cdots < N_n \in \mathbb{N}_{>0}$ be the
model sizes and
$k_1 < \cdots < k_K \in \mathbb{R}_{>0}$ the TPP ratios.
A \emph{collinear} design pairs each $(N_i, k_\ell)$
as $(N_i,\, k_\ell N_i,\, L_{i\ell})$, so
$m = nK$; the special case $K = 1$ is fully
collinear.  A parametric law
$\hat{L}(N, D; \boldsymbol{\theta})$ with
$\boldsymbol{\theta} \in \mathbb{R}^p$ is fit to
$\mathcal{D}$ by minimizing the discrepancy
between predicted and observed losses; observations
are split into a training set
$\mathcal{D}_{\mathrm{train}}$ and a holdout
$\mathcal{H}$ of $n_{\mathcal{H}}$ points
(possibly at TPP ratios off the training manifold).
\end{definition}

On a single ray ($K = 1$) - the dominant practical
case ($k \approx 20$ for
Chinchilla~\citep{hoffmann2022training, villalobos2022will}) -
substituting $D = kN$ into~\eqref{eqn:chinchilla} gives
$L(N, kN) = AN^{-\alpha} + Bk^{-\beta}N^{-\beta} + E$.
Setting $\beta = \alpha$ collapses the two power-law
terms, yielding a \emph{reduced model}:
\begin{definition}[Reduced Chinchilla model under fixed TPP]
\label{def:reduced_chinchilla_model}
\begin{equation}
    L(N;\,\psi,\alpha,E)
        \;=\; \left( A + Bk^{-\alpha}
        \right) N^{-\alpha} + E
        \;=\; \psi N^{-\alpha} + E,
    \label{eqn:reduced_chinchilla}
\end{equation}
where $\psi \coloneqq A + B\, k^{-\alpha}$ combines
the Chinchilla-scale coefficients
from~\eqref{eqn:chinchilla} and $k \in \mathbb{R}_{>0}$
is the fixed TPP ratio on that ray.
\end{definition}

\begin{definition}[Exponent gap by scaling law]
\label{def:exponent_gap}
Define $\varepsilon$ as a \emph{law-specific exponent
gap} in each law's native notation (see Table~\ref{tab:jacobian_summary}):
\textbf{Chinchilla} (and repeated-data on effective
$(N', D')$): $\varepsilon \coloneqq |\alpha - \beta|$;
\textbf{Kaplan} (exponents $\alpha_N, \alpha_D$):
$\varepsilon \coloneqq |\alpha_D - \alpha_N|$;
\textbf{Droppo-Elibol} (exponents
$\alpha_N, \alpha_D, \alpha$ with
$\gamma_N \coloneqq \alpha_N/\alpha,\;
\gamma_D \coloneqq \alpha_D/\alpha$, $\alpha > 0$):
$\varepsilon \coloneqq |\gamma_N - \gamma_D|$.
For example, Chinchilla reports $\alpha \approx 0.34$
and $\beta \approx 0.28$; Kaplan-scale fits use
$\alpha_N \approx 0.076$ and $\alpha_D \approx 0.095$.
\end{definition}

\begin{table*}[t]
\small
\centering
\setlength{\tabcolsep}{2pt}
\renewcommand{\arraystretch}{1.12}
\caption{Jacobian columns at the $i$-th data point under
$D = kN$ (with $i$ subscripts dropped for brevity).
Rows correspond, in order, to Chinchilla
\citep{hoffmann2022training}, Repeated-data
\citep{muennighoff2023scaling}, Kaplan
\citep{kaplan2020scaling}, and Droppo-Elibol
\citep{droppo2021scaling}.
For each law, $\mathbf{j}_2$ is a scalar multiple of
$\mathbf{j}_1$ times $N^{\varepsilon}$, exposing the
shared collinearity structure.}
\label{tab:jacobian_summary}
\begin{tabular}{@{}llllc@{}}
\toprule
\textbf{Law} ($\hat{L}$) &
\textbf{Collinear pair} &
$\mathbf{j}_1$ &
$\mathbf{j}_2$ &
$\varepsilon$ \\
\midrule
$A N^{-\alpha} + B D^{-\beta} + E$ &
$(A, B)$ &
$N^{-\alpha}$ &
$k^{-\beta} N^{-\alpha} N^{\varepsilon}$ &
$|\alpha - \beta|$ \\[3pt]

$A {N'}^{-\alpha} + B {D'}^{-\beta} + E$ &
$(A, B)$ &
$N'^{-\alpha}$ &
$k^{-\beta}(D'/N')^{-\beta} N'^{-\alpha} N^{\varepsilon}$ &
$|\alpha - \beta|$ \\[3pt]

$\bigl[(N_c/N)^{\alpha_N/\alpha_D} + D_c/D\bigr]^{\alpha_D}$ &
$(N_c, D_c)$ &
$\propto N^{-\alpha_D}$ &
$\propto N^{-\alpha_D} N^{\varepsilon}$ &
$|\alpha_D - \alpha_N|$ \\[3pt]

$[(N_C/N)^{\alpha_N} + (D_C/D)^{\alpha_D} + E^{1/\alpha}]^{\alpha}$ &
$(N_C, D_C)$ &
$\propto N^{-\gamma_N}$ &
$\propto N^{-\gamma_N} N^{\varepsilon}$ &
$|\gamma_N - \gamma_D|$ \\
\bottomrule
\end{tabular}
\end{table*}

\subsection{Gauss-Newton (GN) framework on a single ray}
\label{sec:gnsetup}
We fit by nonlinear least squares: minimize
$S(\boldsymbol{\theta}) \coloneqq
\tfrac{1}{2}\|\mathbf{r}(\boldsymbol{\theta})\|^2$ over
$\boldsymbol{\theta} \in \mathbb{R}^p$, where the
residuals
$r_i \coloneqq L_i - \hat{L}(N_i, D_i; \boldsymbol{\theta})$
stack into $\mathbf{r}(\boldsymbol{\theta})$, with
\emph{Jacobian} $J \in \mathbb{R}^{m \times p}$ given by
$J_{ij} = \partial r_i/\partial \theta_j$.  We write
$\kappa(M) \coloneqq
\lambda_{\max}(M)/\lambda_{\min}(M)$ for the condition
number of a symmetric positive-definite matrix $M$.
GN linearizes $\mathbf{r}$, giving the
normal equations
$(J^T J)\,\Delta\boldsymbol{\theta} = -J^T \mathbf{r}$
(Appendix~\ref{app:gauss_newton_derivation}).  For each
law's scale coefficients $(A,B)$, let
$\kappa_{A,B} \coloneqq \kappa\!\bigl((J^T J)_{A,B}\bigr)$
on the $(A,B)$ principal submatrix of $J^T J$.
Throughout the paper, we assume the $(A,B)$ pair is
the dominant source of ill-conditioning in $J^T J$
(verified case-by-case in
Appendices~\ref{app:chinchilla_details}-\ref{app:elibol_details}),
so $\kappa(J^T J) \asymp \kappa_{A,B}$ and the two
are used interchangeably in the formal statements
that follow.  This dominance turns the lower bounds
of Propositions~\ref{prop:full_cond}-\ref{prop:holdout_r2}
and Corollary~\ref{prop:ci_inflation} into matching
two-sided ($\Theta$) rates.

\begin{proposition}[Full-matrix conditioning]
\label{prop:full_cond}
Let $J \in \mathbb{R}^{m \times p}$ have columns
$\mathbf{j}_1, \dots, \mathbf{j}_p$, and suppose
two columns $a \neq b$ satisfy
$\mathbf{j}_b = c\,\mathbf{j}_a + \boldsymbol{\delta}$
with $c \neq 0$ and
$\|\boldsymbol{\delta}\| = O(\varepsilon)$
for some scalar $c$ and
$\boldsymbol{\delta} \in \mathbb{R}^m$, where
$\varepsilon$ is the law-specific exponent gap
(Definition~\ref{def:exponent_gap}).  Let
$\lambda_{\max}(J^T J)$ and $\lambda_{\min}(J^T J)$
denote the largest and smallest eigenvalues of
$J^T J$.  Then as $\varepsilon \to 0$,
$\lambda_{\max}(J^T J) = \Theta(1)$,
$\lambda_{\min}(J^T J) = \Theta(\varepsilon^2)$, and
\begin{equation}
    \kappa(J^T J) = \Theta(\varepsilon^{-2}).
    \label{eqn:rayleigh_bound}
\end{equation}
\end{proposition}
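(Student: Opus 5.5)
Upper and lower bounds on $\lambda_{\min}$ come from Rayleigh quotients, and the bound on $\lambda_{\max}$ from the trace. I treat the columns $\mathbf{j}_a$, $c$, and the remaining columns as $\Theta(1)$ quantities independent of $\varepsilon$. In particular, $\|\mathbf{j}_a\|$ and $|c|$ are bounded above and away from zero as $\varepsilon\to0$.

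\textbf{$\lambda_{\max}$.} We have $\lambda_{\max}(J^TJ)\le \operatorname{tr}(J^TJ)=\sum_j\|\mathbf{j}_j\|^2=\Theta(1)$, since $\|\mathbf{j}_b\|\le |c|\|\mathbf{j}_a\|+O(\varepsilon)$. Conversely, $\lambda_{\max}\ge \mathbf{e}_a^TJ^TJ\mathbf{e}_a=\|\mathbf{j}_a\|^2=\Theta(1)$. Hence $\lambda_{\max}(J^T J)=\Theta(1)$.

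\textbf{Upper bound on $\lambda_{\min}$.} Take the test vector $\mathbf{v}=(\mathbf{e}_b-c\,\mathbf{e}_a)/\sqrt{1+c^2}$. Then $J\mathbf{v}=\boldsymbol{\delta}/\sqrt{1+c^2}$, so
\begin{equation*}
\lambda_{\min}(J^TJ)\le \frac{\|\boldsymbol{\delta}\|^2}{1+c^2}=O(\varepsilon^2).
\end{equation*}
This is the easy half.

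\textbf{Lower bound on $\lambda_{\min}$.} This is the main obstacle. The hypothesis $\|\boldsymbol{\delta}\|=O(\varepsilon)$ gives only an upper bound. To obtain $\Theta(\varepsilon^2)$ I need two additional facts, which I would take from the law-specific structure in Table~\ref{tab:jacobian_summary}.

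First, I need a matching lower bound $\|\boldsymbol{\delta}\|=\Omega(\varepsilon)$ on the component of $\boldsymbol{\delta}$ orthogonal to $\mathbf{j}_a$. On a ray, $\mathbf{j}_b=c\,\mathbf{j}_a\odot(N_i^{\pm\varepsilon})_i$. Expanding $N_i^{\varepsilon}=1+\varepsilon\log N_i+O(\varepsilon^2)$ gives
\begin{equation*}
\boldsymbol{\delta}=c\,\varepsilon\,\mathbf{j}_a\odot\log\mathbf{N}+O(\varepsilon^2).
\end{equation*}
This vector is not parallel to $\mathbf{j}_a$ as long as at least two distinct model sizes $N_i$ are used. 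I expect this to be the content of Lemma~\ref{lem:cs_gap}, which is a Cauchy–Schwarz gap of order $\varepsilon$.

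Second, I need the remaining columns to be well-conditioned modulo the span of $\mathbf{j}_a$. This is the standing dominance assumption that the $(A,B)$ pair is the only source of ill-conditioning.

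Given these, I would reduce to the $2\times 2$ block. Write a general unit vector as $\mathbf{u}=(\mathbf{u}_{ab},\mathbf{u}_{\mathrm{rest}})$. By the dominance assumption, if $\|\mathbf{u}_{\mathrm{rest}}\|$ is bounded below then $\|J\mathbf{u}\|=\Omega(1)$, and we are done. Otherwise, a Schur-complement or Gram-determinant argument applies. The determinant of the $(a,b)$ Gram block equals
\begin{equation*}
\|\mathbf{j}_a\|^2\,\bigl\|P_{\mathbf{j}_a}^{\perp}\boldsymbol{\delta}\bigr\|^2=\Theta(\varepsilon^2),
\end{equation*}
and its trace is $\Theta(1)$. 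The smallest eigenvalue of a $2\times2$ positive semidefinite block is at least $\det/\operatorname{tr}$, so it is $\Theta(\varepsilon^2)$. Cauchy interlacing, combined with the dominance assumption on the complementary block, then transfers this to $\lambda_{\min}(J^TJ)=\Omega(\varepsilon^2)$.

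\textbf{Conclusion.} Combining the three bounds gives $\kappa(J^TJ)=\lambda_{\max}/\lambda_{\min}=\Theta(\varepsilon^{-2})$, which is \eqref{eqn:rayleigh_bound}.
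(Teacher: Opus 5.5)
Your bound $\lambda_{\max}(J^TJ)=\Theta(1)$ and your upper bound $\lambda_{\min}(J^TJ)=O(\varepsilon^2)$ are correct, and they follow the paper's argument. The paper computes $\det G_{ab}=\|\mathbf j_a\|^2\|\boldsymbol\delta\|^2-(\mathbf j_a^T\boldsymbol\delta)^2$ for the $2\times2$ Gram block and then pads its bottom eigenvector with zeros. Your test vector $(\mathbf e_b-c\,\mathbf e_a)/\sqrt{1+c^2}$ gives the same Rayleigh-quotient bound more directly. You are also right that $\|\boldsymbol\delta\|=O(\varepsilon)$ alone cannot give the lower bound.

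The gap is the step transferring $\lambda_{\min}(G_{ab})=\Theta(\varepsilon^2)$ to $\lambda_{\min}(J^TJ)=\Omega(\varepsilon^2)$.

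\textbf{Interlacing goes the wrong way.} Cauchy interlacing only gives $\lambda_{\min}(J^TJ)\le\lambda_{\min}(G_{ab})$. A lower bound needs the Schur complement $J_1^TP^{\perp}_{J_2}J_1$, with $J_1=[\mathbf j_a\mid\mathbf j_b]$ and $J_2$ the remaining columns, to have smallest eigenvalue $\Omega(\varepsilon^2)$. That means $\boldsymbol\delta$ must keep an $\Omega(\varepsilon)$ component orthogonal to $\mathbf j_a$ \emph{and} to every remaining column. Your two facts do not control the regime $\|\mathbf u_{\mathrm{rest}}\|=\Theta(\varepsilon)$, where $J_2\mathbf u_{\mathrm{rest}}$ can cancel $u_b\boldsymbol\delta$.

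\textbf{The cancellation actually happens for these laws.} Take Chinchilla with $\mathbf j_a=\mathbf j_A$ and $\tilde\varepsilon=\alpha-\beta$. Your own expansion $\boldsymbol\delta=c\tilde\varepsilon\,\mathbf j_A\odot\log\mathbf N+O(\varepsilon^2)$ equals $-(c\tilde\varepsilon/A)\,\mathbf j_\alpha+O(\varepsilon^2)$. So $\mathbf u\propto(-c,1,0,c\tilde\varepsilon/A,0)$ in $(A,B,E,\alpha,\beta)$ coordinates gives $\|J\mathbf u\|=O(\varepsilon^2)$, hence $\lambda_{\min}(J^TJ)=O(\varepsilon^4)$. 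The same happens with $\mathbf j_{\alpha_N}$ for Kaplan and Droppo--Elibol. Your second fact also fails there: $\mathbf j_\beta-(Bc/A)\mathbf j_\alpha+Bc\log k\,\mathbf j_A=O(\varepsilon)$, an identity the paper itself records at $\varepsilon=0$. So the lower half cannot be extracted from the law-specific structure.

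\textbf{How the paper handles it.} The paper does not derive the lower bound. Its Step~3 gets $\kappa(J^TJ)=O(\varepsilon^{-2})$ solely by invoking the global dominance assumption of Section~\ref{sec:gnsetup}, $\kappa(J^TJ)\asymp\kappa_{A,B}$. If you adopt that assumption verbatim, your $2\times2$ computation closes the proof as the paper does, with no interlacing. That computation is $\det G_{ab}=\|\mathbf j_a\|^2\|P^{\perp}_{\mathbf j_a}\boldsymbol\delta\|^2=\Theta(\varepsilon^2)$ with trace $\Theta(1)$, so $\kappa_{A,B}=\Theta(\varepsilon^{-2})$. You must state the two-sided rate as resting on that assumption, not as a consequence of your two facts.
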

\noindent\textit{Proof.} Appendix~\ref{app:proof_full_cond}.
The bound applies to all four laws in
Table~\ref{tab:jacobian_summary}: on a collinear
ray $D = kN$, every law's Jacobian
satisfies the near-proportionality assumption of Proposition~\ref{prop:full_cond}
(Appendices~\ref{app:chinchilla_details}-\ref{app:elibol_details}).

On a single ray, the reduced
model~\eqref{eqn:reduced_chinchilla} is identified
(Proposition~\ref{prop:reduced_id}) while the full
model is ill-conditioned
(Proposition~\ref{prop:full_cond}).  The LS estimator's
uncertainty comes from inverting $J^T J$
\citep[\S2.8.3, eq.~(2.8.7), p.~118]{bjorck1996numerical}:
when the data cannot distinguish $A$ from $B$, only their
combination $\psi$ is pinned down, leaving each coefficient
individually highly uncertain.  The confidence
interval on $A$ therefore widens by a factor
$\Theta(\varepsilon^{-1})$ relative to the one on $\psi$:

\begin{corollary}[Confidence interval inflation]
    \label{prop:ci_inflation}
    Under i.i.d.\ Gaussian noise with variance $\sigma^2$,
    on a single ray $D = kN$, the least-squares confidence
    intervals for $A$ in the full Chinchilla
    model~\eqref{eqn:chinchilla} and for the combined
    coefficient $\psi$ in the reduced
    model~\eqref{eqn:reduced_chinchilla} satisfy, as
    $\varepsilon \to 0$,
    \begin{equation}
        \frac{\mathrm{CI}_{0.95}(A)}{\mathrm{CI}_{0.95}(\psi)}
            \;=\; \Theta\!\left(\varepsilon^{-1}\right).
        \label{eqn:ci_ratio}
    \end{equation}
\end{corollary}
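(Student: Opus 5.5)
The plan is to write both confidence intervals in the standard linearized least-squares form, $\mathrm{CI}_{0.95}(\theta_j) = 2 t_{0.975,\nu}\,\hat\sigma\,\sqrt{[(J^T J)^{-1}]_{jj}}$. The noise scale and the $t$-quantile are common to both fits, up to a bounded change in degrees of freedom ($m-5$ versus $m-3$, fixed as $\varepsilon\to 0$). The ratio in~\eqref{eqn:ci_ratio} therefore reduces to
\[
\frac{\mathrm{CI}_{0.95}(A)}{\mathrm{CI}_{0.95}(\psi)} \;\asymp\; \sqrt{\frac{[(J_{\mathrm{full}}^T J_{\mathrm{full}})^{-1}]_{AA}}{[(J_{\mathrm{red}}^T J_{\mathrm{red}})^{-1}]_{\psi\psi}}}.
\]
It then suffices to show that the numerator variance factor is $\Theta(\varepsilon^{-2})$ and the denominator is $\Theta(1)$.

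For the denominator, I will invoke Proposition~\ref{prop:reduced_id}. The reduced model has Jacobian columns $N_i^{-\alpha}$, $-\psi N_i^{-\alpha}\log N_i$ and $1$. These are linearly independent for $n\ge 3$ distinct sizes and do not depend on $\varepsilon$. Hence $J_{\mathrm{red}}^T J_{\mathrm{red}}$ has eigenvalues bounded above and away from zero uniformly as $\varepsilon\to 0$, which gives $[(J_{\mathrm{red}}^T J_{\mathrm{red}})^{-1}]_{\psi\psi}=\Theta(1)$.

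For the numerator, I use the Chinchilla row of Table~\ref{tab:jacobian_summary}, $\mathbf{j}_B = k^{-\beta}N^{\varepsilon}\odot\mathbf{j}_A$, which I rewrite as $\mathbf{j}_B = c\,\mathbf{j}_A+\boldsymbol\delta$ with $c=k^{-\beta}$ and $\boldsymbol\delta_i = k^{-\beta}N_i^{-\alpha}(N_i^{\varepsilon}-1) = \Theta(\varepsilon\log N_i)$. The bound $[(J^TJ)^{-1}]_{AA}\le \lambda_{\min}^{-1} = O(\varepsilon^{-2})$ follows directly from Proposition~\ref{prop:full_cond}. For the matching lower bound, I use the Schur-complement identity $[(J^TJ)^{-1}]_{AA} = 1/\|P^\perp_{-A}\mathbf{j}_A\|^2$, where $P^\perp_{-A}$ projects onto the orthogonal complement of the span of the other columns. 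Since $\mathbf{j}_B$ is among those columns, the residual $\mathbf{j}_A - c^{-1}\mathbf{j}_B = -c^{-1}\boldsymbol\delta$ bounds the projection: $\|P^\perp_{-A}\mathbf{j}_A\|\le |c|^{-1}\|\boldsymbol\delta\| = O(\varepsilon)$. This gives $[(J^TJ)^{-1}]_{AA}=\Omega(\varepsilon^{-2})$, so it is exactly $\Theta(\varepsilon^{-2})$. Taking square roots of the two variance factors yields $\Theta(\varepsilon^{-1})$.

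The main obstacle is the upper bound $O(\varepsilon^{-2})$, or equivalently $\|P^\perp_{-A}\mathbf{j}_A\|=\Omega(\varepsilon)$. This requires that $\boldsymbol\delta/\varepsilon \to k^{-\beta}N^{-\alpha}\log N$ not lie in the span of the remaining columns $\{\mathbf{j}_\alpha,\mathbf{j}_\beta,\mathbf{j}_E\}$. The difficulty is that $\mathbf{j}_\alpha \propto A N^{-\alpha}\log N$ and $\mathbf{j}_\beta\propto B k^{-\beta}N^{-\beta}\log(kN)$ are nearly proportional to that same direction. I would handle this as the paper does, using the standing dominance assumption that the $(A,B)$ block carries the ill-conditioning, i.e. $\kappa(J^TJ)\asymp\kappa_{A,B}$. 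Concretely, I would work with the $(A,B)$ principal submatrix, whose inverse has $(A,A)$ entry $\|\mathbf{j}_B\|^2/\det$ with $\det = \|\mathbf{j}_A\|^2\|\boldsymbol\delta^{\perp}\|^2=\Theta(\varepsilon^2)$. This gives the two-sided rate directly, and dominance transfers it to the full inverse.
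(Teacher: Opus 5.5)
Your proposal follows the paper's argument. The three pieces match: $\operatorname{Var}(\hat\psi)=\Theta(\sigma^2)$ from the $\varepsilon$-free reduced Gram matrix; the one-sided bound $[(J^TJ)^{-1}]_{AA}\ge[G_{AB}^{-1}]_{AA}=\|\mathbf{j}_B\|^2/\det G_{AB}=\Theta(\varepsilon^{-2})$, where your identity $[(J^TJ)^{-1}]_{AA}=\|P^\perp_{-A}\mathbf{j}_A\|^{-2}$ is the one-coordinate form of the paper's Schur-complement/L\"owner step; and the matching upper bound handed to the standing $(A,B)$-dominance assumption. The gap is that last step, and it is exactly the obstacle you flagged. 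When $\alpha$ and $\beta$ are fitted, the dominance assumption is not just unproven but false for the five-parameter Chinchilla fit, so it cannot supply the $O(\varepsilon^{-1})$ half.

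Set $\tilde\varepsilon=\alpha-\beta$ and $x=\tilde\varepsilon\log N$. The trapezoid-rule expansion $1-e^{x}+\tfrac{x}{2}(1+e^{x})=\tfrac{x^{3}}{12}+O(x^{4})$ gives, entrywise on the ray,
\[
N^{-\alpha}-N^{-\beta}+\tfrac{\tilde\varepsilon}{2}\,\log N\,\bigl(N^{-\alpha}+N^{-\beta}\bigr)=O(\varepsilon^{3}).
\]
Now $N^{-\beta}\propto\mathbf{j}_B$, $N^{-\alpha}\log N\propto\mathbf{j}_\alpha$, and $N^{-\beta}\log N\in\operatorname{span}\{\mathbf{j}_B,\mathbf{j}_\beta\}$. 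So, up to a global sign, the left side is $\mathbf{j}_A$ plus a combination of $\mathbf{j}_B,\mathbf{j}_\alpha,\mathbf{j}_\beta$ with bounded coefficients. The consequences are:
\begin{itemize}
\item $\|P^\perp_{-A}\mathbf{j}_A\|=O(\varepsilon^{3})$, hence $[(J^TJ)^{-1}]_{AA}=\Omega(\varepsilon^{-6})$ and $\lambda_{\min}(J^TJ)=O(\varepsilon^{6})$.
\item The exponent columns absorb the residual $\boldsymbol\delta$ rather than leaving it alone. The $\Theta(\varepsilon^{2})$ of Proposition~\ref{prop:full_cond} that you cite for the upper bound therefore fails.
\item The CI ratio is at least of order $\varepsilon^{-3}$, not $\varepsilon^{-1}$.
\end{itemize}

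What your argument (like the paper's) establishes unconditionally is the lower bound $\mathrm{CI}_{0.95}(A)/\mathrm{CI}_{0.95}(\psi)=\Omega(\varepsilon^{-1})$. The two-sided $\Theta(\varepsilon^{-1})$ holds when the exponents are held fixed, which is the paper's equality case. There, $\boldsymbol\delta/\varepsilon\to\pm k^{-\beta}N^{-\alpha}\log N$ lies outside the limiting span $\operatorname{span}\{N^{-\alpha},\mathbf{1}\}$ of the remaining columns. Your $(A,B)$-block computation, profiled over $E$ if $E$ is fitted, then gives the rate exactly.
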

\noindent\textit{Proof.} Appendix~\ref{app:proof_ci_inflation}.

The approximated CI inflation for the four
scaling laws is collected in Table~\ref{tab:severity_comparison}.

\begin{table*}[t]
\small
\centering
\setlength{\tabcolsep}{5pt}
\caption{Statistical identifiability of scale coefficients under
single-TPP designs, using representative published exponent values.
$\varepsilon^{-1}$ and $\varepsilon^{-2}$ are the leading-order
asymptotic factors in
$\mathrm{CI}(A)/\mathrm{CI}(\psi) = \Theta(\varepsilon^{-1})$
(Corollary~\ref{prop:ci_inflation}) and
$\kappa_{A,B} = \Theta(\varepsilon^{-2})$
(Proposition~\ref{prop:full_cond}), respectively, both as
$\varepsilon \to 0$. The indicative
$\kappa_{A,B}$ range is computed on our experimental grid of
$\Nsizes$ model sizes spanning $N \in [5.04, 76.5]\,$M
($N_{\max}/N_{\min} \approx 15.2$;
Appendix~\ref{app:experimental_setup}, weighted
$\sigma_w^2(\log N) \approx 0.74$ at $\alpha {=} 0.34$).
The repeated-data law inherits Chinchilla's exponents and is
omitted.}
\label{tab:severity_comparison}
\begin{tabular}{lcccccc}
\toprule
\textbf{Law} &
\textbf{Exponents} &
$\varepsilon$ &
$\varepsilon^{-1}$ &
$\varepsilon^{-2}$ &
$\kappa_{A,B}$ &
\textbf{CI inflation}\\
\midrule
Chinchilla &
$\alpha {\approx}\, 0.34$, $\beta {\approx}\, 0.28$ &
$0.06$ &
$17$ &
$278$ &
$10^{3}$-$10^{4}$ &
${\gtrsim}\,17\times$ \\[3pt]
Kaplan &
$\alpha_N {\approx}\, 0.076$, $\alpha_D {\approx}\, 0.095$ &
$0.019$ &
$53$ &
${\sim}\,2.8{\times}10^{3}$ &
$10^{4}$-$10^{5}$ &
${\gtrsim}\,53\times$ \\[3pt]
Droppo-Elibol &
$\gamma_N {\approx}\, \gamma_D$ &
${\sim}\,0.05$-$0.10$ &
$10$-$20$ &
$100$-$400$ &
$10^{3}$-$10^{4}$ &
${\gtrsim}\,10\text{-}20\times$ \\
\bottomrule
\end{tabular}
\end{table*}

\subsection{Multi-ray designs and holdout prediction}
\label{sec:multi_ray}

We generalize to $K \geq 1$ collinear rays - the
practically relevant setting of training at
several TPP ratios - and ask how much TPP diversity
suffices to restore well-conditioned estimation.

\begin{proposition}[TPP diversity and full-matrix conditioning]
\label{prop:holdout_r2}
For any of the four scaling laws in
Table~\ref{tab:jacobian_summary}, assume observed
losses have i.i.d.\ Gaussian residuals with variance
$\sigma^2$ (well-specified law).
For any $\kappa_{\mathrm{target}} > 0$ and ordered
training ratios $k_1 \leq \cdots \leq k_K$
($K \geq 1$), to leading order as
$\varepsilon \to 0$, we have
\begin{equation}
    \kappa(J^T J) \leq \kappa_{\mathrm{target}} \iff \underbrace{
        \frac{1}{K}\sum_{\ell=1}^{K}
              k_\ell^{-2\beta_{\mathrm{eff}}}
        \;-\;
        \left(\frac{1}{K}\sum_{\ell=1}^{K}
              k_\ell^{-\beta_{\mathrm{eff}}}
        \right)^{\!2}
    }_{V_K}
    \;\;\geq\;\;
    \underbrace{
        \frac{\displaystyle\Bigl(
                K + \sum_{\ell=1}^{K}
                k_\ell^{-2\beta_{\mathrm{eff}}}
              \Bigr)^{\!2}}
             {K^2\;\kappa_{\mathrm{target}}}
    }_{\tau_K},
    \label{eqn:tpp_diversity}
\end{equation}
where $V_K$ is the second central moment
(or variance) of
$\{k_\ell^{-\beta_{\mathrm{eff}}}\}_{\ell=1}^K$, and
$\beta_{\mathrm{eff}}$ is the law-specific
data-size exponent: $\beta$ for
Chinchilla/repeated-data, $\alpha_D$ for Kaplan,
$\gamma_D$ for Droppo-Elibol.  For $K = 1$,
$V_1 = 0 < \tau_1$, so~\eqref{eqn:tpp_diversity}
never holds; for $K = 2$ with $R \coloneqq k_2/k_1$
it reduces to $R \geq R_{\min}$
(Eq.~\eqref{eqn:R_min}, Table~\ref{tab:R_min_lookup});
for $K > 2$ interior rays \emph{reduce} $V_K$ at fixed
endpoint spread, so $K = 2$ with mass at the endpoints
is near-optimal at fixed $R$ and $K \geq 3$ does not
relax the spread requirement
(Appendix~\ref{app:tpp_recipe}).
\end{proposition}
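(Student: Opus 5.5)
We reduce everything to the $2\times 2$ principal submatrix $(J^TJ)_{A,B}$. The standing dominance assumption of Section~\ref{sec:gnsetup} gives $\kappa(J^TJ)\asymp\kappa_{A,B}$, so to leading order in $\varepsilon$ it suffices to characterise $\kappa_{A,B}\le\kappa_{\mathrm{target}}$.

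For a design with $K$ rays and $n$ sizes per ray, the $A$- and $B$-columns at the point $(N_i,k_\ell N_i)$ are, for Chinchilla, $N_i^{-\alpha}$ and $k_\ell^{-\beta}N_i^{-\beta}=k_\ell^{-\beta}N_i^{-\alpha}N_i^{\varepsilon}$. As $\varepsilon\to0$ we expand $N_i^{\varepsilon}=1+\varepsilon\log N_i+O(\varepsilon^2)$ and keep only the leading term, so the $B$-column is $k_\ell^{-\beta}N_i^{-\alpha}$. The other laws have the same structure after the substitution $\beta\mapsto\beta_{\mathrm{eff}}$, with the law-specific proportionality constants of Table~\ref{tab:jacobian_summary}. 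For Kaplan and Droppo-Elibol these constants are absorbed by rescaling the parameters $(N_c,D_c)$ or $(N_C,D_C)$, which moves the $k$-dependence into a factor $k_\ell^{-\beta_{\mathrm{eff}}}$; the appendices for each law confirm this. Write $S\coloneqq\sum_i N_i^{-2\alpha}$ and $x_\ell\coloneqq k_\ell^{-\beta_{\mathrm{eff}}}$. The sum over sizes then factorises, and the Gram matrix is
\[
(J^TJ)_{A,B}=S\begin{pmatrix}K & \sum_\ell x_\ell\\ \sum_\ell x_\ell & \sum_\ell x_\ell^2\end{pmatrix}+O(\varepsilon),
\]
so $S$ cancels in $\kappa$. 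Dividing by $K$ gives the matrix $M=\begin{pmatrix}1&\bar x\\ \bar x&\overline{x^2}\end{pmatrix}$, where $\bar x$ and $\overline{x^2}$ are the first two moments of the $x_\ell$. Its trace is $T=1+\overline{x^2}=(K+\sum_\ell x_\ell^2)/K$ and its determinant is $\det M=\overline{x^2}-\bar x^2=V_K$.

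For a $2\times2$ SPD matrix, $\kappa=\lambda_{\max}/\lambda_{\min}$, and $\lambda_{\min}=\det/\lambda_{\max}$. In the regime of interest $\kappa$ is large, so $\lambda_{\max}=T+O(\det/T)$ and $\kappa=T^2/\det\,(1+o(1))$. Hence, to leading order, $\kappa\le\kappa_{\mathrm{target}}$ iff $V_K\ge T^2/\kappa_{\mathrm{target}}$, and $T^2/\kappa_{\mathrm{target}}$ is exactly $\tau_K$. This step requires care. The exact relation is $\kappa+\kappa^{-1}+2=T^2/\det$, so the stated equivalence is the leading-order form valid when $\kappa_{\mathrm{target}}\gg1$. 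This is also the regime in which the target is meaningful relative to the $\Theta(\varepsilon^{-2})$ scale of Proposition~\ref{prop:full_cond}. We will state this explicitly; the exact version simply replaces $\kappa_{\mathrm{target}}$ by $\kappa_{\mathrm{target}}+2+\kappa_{\mathrm{target}}^{-1}$.

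The special cases then follow directly. For $K=1$ we have $V_1=0<\tau_1$, so the leading-order matrix is singular. The true $\kappa$ is then governed by the $O(\varepsilon)$ correction, recovering the $\Theta(\varepsilon^{-2})$ behaviour of Proposition~\ref{prop:full_cond}, which is never below a fixed target as $\varepsilon\to0$. For $K=2$, $V_2=(x_1-x_2)^2/4=x_1^2(1-R^{-\beta_{\mathrm{eff}}})^2/4$, and solving the inequality in $R$ gives the threshold $R_{\min}$. For $K\ge3$ with endpoints fixed, the variance of points confined to $[x_K,x_1]$ is maximised by placing all mass at the endpoints (Popoviciu), while $T$ changes only boundedly; this is the near-optimality claim.

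The main obstacle is justifying the leading-order truncation uniformly. We must check two things: that the $O(\varepsilon\log N)$ corrections do not compete with $V_K$ when $V_K>0$ is fixed, which holds because $V_K$ is then $\Theta(1)$; and that the dominance of the $(A,B)$ block legitimately transfers the criterion to the full $J^TJ$.
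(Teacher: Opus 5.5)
Your skeleton matches the paper's proof. The paper uses the leading-order block $\Phi_2\begin{pmatrix}K&S_1\\ S_1&S_2\end{pmatrix}$ with $S_j=\sum_\ell x_\ell^{\,j}$, the approximation $\kappa\approx\mathrm{tr}^2/\det$, the identity $KS_2-S_1^2=K^2V_K$, the three cases, and a lift through the dominance assumption.

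\textbf{The gap.} Your claim that the Kaplan and Droppo--Elibol prefactors are ``absorbed by rescaling'' $(N_c,D_c)$ or $(N_C,D_C)$ does not go through. A condition number is not invariant under diagonal column scaling, so rescaling changes the very quantity you are bounding.
\begin{itemize}
\item \emph{Kaplan.} In the additive form the columns are $a\,N_i^{-\alpha_N}$ and $b\,k_\ell^{-\alpha_D}N_i^{-\alpha_D}$, with $a=\alpha_N N_c^{\alpha_N-1}$ and $b=\alpha_D D_c^{\alpha_D-1}$. At leading order this gives $\mathrm{tr}^2/\det=(a^2K+b^2S_2)^2/(a^2b^2K^2V_K)$. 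That is not $\tau_K$ in general; it is when $a=b$, as for Chinchilla.
\item \emph{Droppo--Elibol.} The chain-rule factor $\alpha v_{i\ell}^{\alpha-1}$ depends on $\ell$ through $D_{i\ell}=k_\ell N_i$. So the size sum does not factor out as a common $\Phi_2$, and $V_K$ becomes a ray-weighted variance. In addition, in the fitted form~\eqref{eqn:elibol_fit_form} the ray factor is $k_\ell^{-\alpha_D}$, not $k_\ell^{-\gamma_D}$.
\end{itemize}
The paper's proof has the same hole. It writes Chinchilla-form columns for all four laws ``with the substitutions of Table~\ref{tab:law_substitutions}'' and silently drops these factors. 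What either argument actually establishes is the stated $\tau_K$ for Chinchilla and unsaturated repeated-data. For the other two laws you must either carry $a,b$ and the ray weights into the threshold, or stipulate that $\kappa$ is computed in a normalized parametrization.

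\textbf{Where your version is sharper than the paper's.}
\begin{itemize}
\item The exact identity $\kappa+2+\kappa^{-1}=\mathrm{tr}^2/\det$ shows the biconditional is only leading-order in $1/\kappa_{\mathrm{target}}$, regardless of $\varepsilon$. Designs with $\mathrm{tr}^2/\det$ strictly between $\kappa_{\mathrm{target}}$ and $\kappa_{\mathrm{target}}+2+\kappa_{\mathrm{target}}^{-1}$ exist. So ``for any $\kappa_{\mathrm{target}}>0$'' genuinely needs your caveat.
\item Popoviciu makes the $K\ge3$ remark precise, but it yields only near-optimality. Unequal endpoint weights (repeated or near-endpoint rays) can lower $T$ enough to beat the balanced two-ray design slightly. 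Hence ``$K\ge3$ does not relax the spread requirement'' is not literally provable.
\end{itemize}

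\textbf{The lift to $\kappa(J^TJ)$.} Get the ``only if'' half from Cauchy interlacing, $\kappa_{A,B}\le\kappa(J^TJ)$, as the paper does. The relation $\kappa(J^TJ)\asymp\kappa_{A,B}$ alone cannot transfer a sharp threshold. The ``if'' half needs dominance in the strong form $\kappa(J^TJ)=\kappa_{A,B}(1+o(1))$.
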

\noindent\textit{Proof.}
Appendix~\ref{app:proof_prop_holdout_r2}; explicit
$V_K, \tau_K$ algebra and the two-ray reduction
are in Appendix~\ref{app:tpp_recipe}.

With $K \geq 2$ training rays, per-ray $(A,B)$
collinearity (Proposition~\ref{prop:full_cond}) is
cured at the full-matrix level iff $V_K \geq \tau_K$
(Proposition~\ref{prop:holdout_r2}).
Since~\eqref{eqn:tpp_diversity} can be evaluated
\emph{before} training using a literature estimate
of $\beta_{\mathrm{eff}}$ (the data-side
power-law exponent governing
$L \propto D^{-\beta_{\mathrm{eff}}}$ at leading
order; e.g.\ $\beta \approx 0.28$ for Chinchilla),
it is an a-priori design diagnostic.  Theorem~\ref{thm:holdout_regimes}
translates the conditioning into a holdout-RMSE
ordering.

\begin{theorem}[Holdout prediction regimes]
\label{thm:holdout_regimes}
Under Proposition~\ref{prop:holdout_r2},
assume $\kappa_{\mathrm{target}}$ is small enough
that $\kappa(J^T J) \leq \kappa_{\mathrm{target}}$
ensures numerically well-conditioned NLS and
faithful Gauss-Newton linearization
(Appendix~\ref{app:proof_holdout_r2}).
Let CO denote a collinear design ($K \geq 1$), NC a
non-collinear design with at least two distinct
training ratios and dense two-dimensional $(N, D)$
coverage, and let $\mathcal{H}$ denote a holdout
with a nonzero fraction of off-ratio points
($k' = D'/N'$ differs from every training ratio).
For $\mathrm{X} \in \{\mathrm{CO},\mathrm{NC}\}$,
$\mathrm{RMSE}_{\mathcal{H}}^{\mathrm{X}}$ is the
root-mean-square prediction error of $\hat{L}$ on
$\mathcal{H}$ under that training design, and
$R^{2\,\mathrm{X}}_{\mathcal{H}}$ the corresponding
coefficient of determination on $\mathcal{H}$. The
regimes below hold at leading order as
$\varepsilon \to 0$.

\noindent\textbf{Regime~A} \textup{(ill-conditioned,
$V_K < \tau_K$; for the CO ray design, includes all $K = 1$ and any
$K \geq 2$ with insufficient diversity):}
\begin{equation}
    \mathbb{E}[\mathrm{RMSE}_{\mathcal{H}}^{\mathrm{NC}}]
        < \mathbb{E}[\mathrm{RMSE}_{\mathcal{H}}^{\mathrm{CO}}],
    \label{eqn:regime_a_rmse}
\end{equation}
and thus $\mathbb{E}[R^{2\,\mathrm{NC}}_{\mathcal{H}}]
        > \mathbb{E}[R^{2\,\mathrm{CO}}_{\mathcal{H}}]$
(misspecification extension:
Appendix~\ref{app:holdout_misspec}).

\noindent\textbf{Regime~B} \textup{(well-conditioned,
$V_K \geq \tau_K$; requires $K \geq 2$):}\;
$\mathbb{E}[\mathrm{RMSE}_{\mathcal{H}}^{\mathrm{CO}}] /
 \mathbb{E}[\mathrm{RMSE}_{\mathcal{H}}^{\mathrm{NC}}]
 = \Theta(1)$, i.e., neither design has systematic
asymptotic RMSE dominance.
\end{theorem}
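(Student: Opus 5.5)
The plan is to reduce holdout RMSE to the covariance of the fitted parameters, propagated through the prediction gradient at the holdout points. Under the well-specified Gaussian model of Proposition~\ref{prop:holdout_r2}, and with $\kappa(J^TJ)\le\kappa_{\mathrm{target}}$ guaranteeing that the Gauss-Newton linearization is faithful, the estimator satisfies $\hat{\boldsymbol\theta}-\boldsymbol\theta^\ast\approx (J^TJ)^{-1}J^T\boldsymbol\eta$ to leading order. Its covariance is therefore $\sigma^2 (J^TJ)^{-1}$.

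For a holdout point $h=(N',D')$, write $\mathbf g_h=\nabla_{\boldsymbol\theta}\hat L(N',D';\boldsymbol\theta^\ast)$. The expected squared prediction error is then
\[
\sigma^2 + \sigma^2\,\mathbf g_h^T (J^TJ)^{-1}\mathbf g_h .
\]
Averaging over $\mathcal H$ and applying Jensen/concavity of the square root turns the comparison of $\mathbb E[\mathrm{RMSE}]$ into a comparison of $\frac{1}{n_{\mathcal H}}\sum_h \mathbf g_h^T (J^TJ)^{-1}\mathbf g_h$ between the two designs, up to $\Theta(1)$ factors. I will carry both regimes in this form.

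\textbf{Regime A.} Decompose $(J^TJ)^{-1}$ in its eigenbasis. By Proposition~\ref{prop:full_cond}, together with the two-sided statement in Proposition~\ref{prop:holdout_r2} when $V_K<\tau_K$, there is a sloppy eigenvector $\mathbf v$ with eigenvalue $\Theta(\varepsilon^2)$. This vector is, to leading order, the direction $(1,-c)$ in the $(A,B)$ coordinates, along which the training-ray Jacobian columns cancel. The key computation is to show that an off-ratio holdout gradient has an $\Omega(1)$ component along $\mathbf v$. In Chinchilla form, $\mathbf g_h\cdot\mathbf v\propto N'^{-\alpha}-c\,D'^{-\beta}$. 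Here $c$ is pinned to the training ratios (e.g.\ $c\approx k^{\beta}$ on a single ray), so the expression vanishes only when $D'/N'$ equals a training ratio, up to $O(\varepsilon)$ corrections. Hence on a nonzero fraction of holdout points the quadratic form contains a term $(\mathbf g_h\cdot\mathbf v)^2/\lambda_{\min}=\Omega(\varepsilon^{-2})$. This gives
\[
\mathbb E[\mathrm{RMSE}^{\mathrm{CO}}]^2=\sigma^2\,\Omega(\varepsilon^{-2}).
\]
For NC, dense two-dimensional coverage means the $(A,B)$ columns are not near-proportional. So $\lambda_{\min}(J_{\mathrm{NC}}^TJ_{\mathrm{NC}})=\Theta(1)$ uniformly in $\varepsilon$, and the NC RMSE is $\Theta(\sigma)$. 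The strict inequality then holds for $\varepsilon$ small. The $R^2$ statement follows because $R^2_{\mathcal H}=1-\mathrm{MSE}/\mathrm{Var}_{\mathcal H}(L)$ shares the same holdout denominator for both designs. The same argument is transferred to the other three laws via the column identities in Table~\ref{tab:jacobian_summary}.

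\textbf{Regime B.} Here $V_K\ge\tau_K$ gives $\kappa(J_{\mathrm{CO}}^TJ_{\mathrm{CO}})\le\kappa_{\mathrm{target}}$, and $\lambda_{\max}=\Theta(1)$ by Proposition~\ref{prop:full_cond}. Hence all eigenvalues are $\Theta(1)$, and both quadratic forms are $\Theta(1)\cdot\|\mathbf g_h\|^2$. The gradients $\mathbf g_h$ are bounded above and below on the compact holdout region. Both RMSEs are therefore $\Theta(\sigma)$, and their ratio is $\Theta(1)$.

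The main obstacle is the step in Regime A that lower-bounds the projection $|\mathbf g_h\cdot\mathbf v|$ for off-ratio points. For $K\ge2$ with insufficient diversity, the sloppy direction is a blend across rays rather than the exact $(1,-k^\beta)$ vector. I would handle this by perturbation: write $\mathbf v$ as the minimizer of the Rayleigh quotient built from $V_K$ and $\tau_K$ in Appendix~\ref{app:tpp_recipe}, and show that $\mathbf g_h\cdot\mathbf v$ is a nonconstant analytic function of $k'^{-\beta_{\mathrm{eff}}}$. Such a function has at most finitely many zeros, so a nonzero fraction of off-ratio holdout points keeps a projection bounded away from zero. A second delicate point is controlling bias from linearization and weak nonidentifiability in Regime A. I would appeal to the theorem's assumption on $\kappa_{\mathrm{target}}$ for NC, and only need a lower bound for CO, which nonlinearity can only worsen.
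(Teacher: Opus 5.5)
For $K=1$ and for Regime~B your argument is essentially the paper's. The shared ingredients are: Gauss--Newton linearization with covariance $\sigma^2(J^TJ)^{-1}$; a sloppy/stiff split of $\mathbf{g}_h^T(J^TJ)^{-1}\mathbf{g}_h$; the sloppy projection $\propto N'^{-\alpha}\bigl[1-(k/k')^{\beta}N'^{\varepsilon}\bigr]$ (the paper's sloppy leverage $\Lambda(\mathcal{H})>0$); an NC Gram matrix with all eigenvalues $\Theta(1)$; and the common holdout variance for $R^2$.

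The step that fails is your extension of Regime~A to $K\ge 2$. You assert that $V_K<\tau_K$ together with Proposition~\ref{prop:full_cond} yields an eigenvalue $\Theta(\varepsilon^2)$. With two or more distinct ratios, however, the leading-order $(A,B)$ block is $\Phi_2\bigl(\begin{smallmatrix}K & S_1\\ S_1 & S_2\end{smallmatrix}\bigr)$, where $S_j=\sum_\ell k_\ell^{-j\beta_{\mathrm{eff}}}$. Its determinant is $\Phi_2^2K^2V_K>0$, so $\lambda_{\min}\approx\Phi_2K^2V_K/(K+S_2)$ does not depend on $\varepsilon$.

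The near-proportionality hypothesis of Proposition~\ref{prop:full_cond} holds ray by ray, but stacking rays with different constants $k_\ell^{-\beta_{\mathrm{eff}}}$ destroys it globally. Likewise, $V_K<\tau_K$ only says that the $\varepsilon$-free quantity $(K+S_2)^2/(K^2V_K)$ exceeds $\kappa_{\mathrm{target}}$. Consequently there is no $\Omega(\varepsilon^{-2})$ term, and both designs have predictive variance $\Theta(\sigma^2)$ as $\varepsilon\to0$. Your ``strict inequality for small $\varepsilon$'' therefore has no content.

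The ordering instead becomes a comparison of design constants: CO's $\lambda_{\min}$ and off-ratio leverage against NC's spectrum and leverage. A condition-number threshold cannot settle this, because $\kappa$ is scale-free while predictive variance depends on absolute eigenvalues and projections. So the obstacle you flagged, the blended sloppy direction, is not the real one.

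The paper's proof sidesteps this by establishing Regime~A only for the single ray ($V_1=0$). It merely asserts the $K\ge 2$ case in the misspecification remark. To prove that case you would need an extra quantitative hypothesis comparing $\Phi_2K^2V_K/(K+S_2)$ and the off-ratio leverage with NC's smallest eigenvalue. Otherwise, restrict the $\varepsilon$-asymptotic claim to $K=1$.

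A smaller fix concerns Jensen. It gives only $\mathbb{E}[\mathrm{RMSE}]\le\sqrt{\mathbb{E}[\mathrm{MSE}]}$, which is the right direction for NC but not for the CO lower bound. Under the linearization, $\mathrm{MSE}$ is a PSD quadratic form $Q$ in mean-zero Gaussian noise, so $\mathbb{E}[Q^2]\le 3(\mathbb{E}Q)^2$. H\"older then gives $\mathbb{E}[\sqrt{Q}]\ge\sqrt{\mathbb{E}Q}/\sqrt{3}$. With an $\varepsilon^{-1}$ gap at $K=1$, these constant factors are harmless.

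Your ``up to $\Theta(1)$ factors'' framing is in fact more accurate than the paper's claim $\mathbb{E}[\mathrm{RMSE}]=\sqrt{\mathbb{E}[\mathrm{MSE}]}\,(1+o(1))$. That claim fails when a single sloppy Gaussian mode dominates the CO error; there the ratio is about $\sqrt{2/\pi}$, not $1$.
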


\noindent\textit{Proof.}
Appendix~\ref{app:proof_holdout_r2}
(Gauss-Newton linearization, spectral decomposition,
RMSE ordering).

\subsection{Applying to respective scaling laws}

Table~\ref{tab:jacobian_summary} collects the
collinear pair, Jacobian columns on a single ray
$D = k_\ell N$, and exponent gap for each law.
Proposition~\ref{prop:full_cond} then yields
$\kappa(J^T J) = \Theta(\varepsilon^{-2})$ in all
four cases; per-law derivations are in
Appendices~\ref{app:chinchilla_details},
\ref{app:dataconstrained_details},
\ref{app:kaplan_details}, and
\ref{app:elibol_details}. The resulting CI
inflation is \emph{statistical}
(Corollary~\ref{prop:ci_inflation}); per-law
factors appear in
Table~\ref{tab:severity_comparison}.
\textbf{\emph{First}}, Kaplan is the most degenerate
under single-TPP: its gap $0.019$ is $\sim 3\times$
smaller than Chinchilla's $0.06$, amplifying
$\varepsilon^{-2}$ by an order of magnitude
($\sim 2.7{\times}10^{3}$ vs.\ $\sim 278$) and CI
inflation by $\sim 3\times$ ($\sim 53\times$ vs.\
$\sim 17\times$) - matching its strong empirical
degeneracy.
\textbf{\emph{Second}}, Chinchilla and repeated-data
are problematic: small, near-equal exponents
($\alpha, \beta < 0.4$) yield CI inflation
$\sim 17\times$, leaving individual $A, B$
estimates noise-dominated.
\textbf{\emph{Third}}, Droppo-Elibol is fit-dependent:
for language data
$|\gamma_N - \gamma_D|$ is comparable to Chinchilla's
$0.06$, and the well-identified outer envelope
$(\alpha, L_\infty)$ does not relieve the
$(N_C, D_C)$ degeneracy.
The reduced model
(Definition~\ref{def:reduced_chinchilla_model})\ifjournal~and the design prescriptions
(Section~\ref{sec:prescriptions})\fi\ \ifjournal are\else is\fi\ therefore
most critical for Kaplan- and Chinchilla-class fits,
with Kaplan the most degenerate.

\subsection{IsoFLOP prediction under collinearity}
\label{sec:isoflop}

\emph{IsoFLOP} curves are contours of constant
training compute $C = 6ND$
\citep{hoffmann2022training}.  They cut across
the training ray and probe the sloppy parameter
direction that collinear training leaves
unconstrained, making them an informative
diagnostic.

\begin{definition}[Induced isoFLOP curves plot]
\label{def:induced_isoflop_plot}
Let $\hat{L}(N, D; \boldsymbol{\theta})$ be a
fitted scaling law and
$\mathcal{H} = \{(N_i, D_i, L_i)\}_{i=1}^{n_{\mathcal{H}}}$
a holdout set of size $n_{\mathcal{H}}$ with induced compute
budgets $C_i \coloneqq 6 N_i D_i$.  The
\emph{induced isoFLOP curves} are the family
\begin{equation}
    \{ N \mapsto
        \hat{L}(N, C_i/(6N); \boldsymbol{\theta})
        \,:\, N \in [N_{\min}, N_{\max}]
    \}_{i=1}^{n_{\mathcal{H}}},
    \label{eqn:induced_isoflop_curves}
\end{equation}
with $N_{\min} \coloneqq \min_j N_j$ and
$N_{\max} \coloneqq \max_j N_j$.  The \emph{induced
isoFLOP curves plot} draws this family on the
$(N, L)$-plane with each holdout point
$(N_i, L_i)$ overlaid on its curve.
\end{definition}

\begin{proposition}[RMSE invariance under isoFLOP reparameterization]
\label{prop:rmse_isoflop_invariance}
With $\hat{L}, \mathcal{H}, \{C_i\}$ as in
Definition~\ref{def:induced_isoflop_plot}, the
RMSE of $\hat{L}$ on $\mathcal{H}$ is invariant
under $(N, D) \mapsto (N, C)$: evaluating the
$i$-th induced isoFLOP curve at $N = N_i$ yields
the same error as evaluating $\hat{L}(N_i, D_i)$
directly (Appendix~\ref{app:proof_rmse_invariance}).
\end{proposition}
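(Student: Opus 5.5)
The plan is to unpack the definitions and show that the two computations coincide pointwise. After that, averaging gives the RMSE identity directly.

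First, fix a holdout point $(N_i, D_i, L_i) \in \mathcal{H}$ with $C_i = 6N_iD_i$. The $i$-th induced isoFLOP curve of Definition~\ref{def:induced_isoflop_plot} is the map $N \mapsto \hat{L}(N, C_i/(6N); \boldsymbol{\theta})$. Evaluating it at $N = N_i$ gives $\hat{L}(N_i, C_i/(6N_i); \boldsymbol{\theta})$. Since $N_i > 0$, we have $C_i/(6N_i) = 6N_iD_i/(6N_i) = D_i$. So the curve value equals $\hat{L}(N_i, D_i; \boldsymbol{\theta})$ exactly, and the per-point residual $L_i - \hat{L}(N_i, D_i;\boldsymbol{\theta})$ is the same under both parameterizations. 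We should also check that $N_i \in [N_{\min}, N_{\max}]$, which holds by construction because $N_{\min}$ and $N_{\max}$ are taken over the holdout sizes. The evaluation point therefore lies in the domain on which the curve is drawn.

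Second, note that $(N, D) \mapsto (N, C) = (N, 6ND)$ is a bijection of $\mathbb{R}_{>0}^2$ with inverse $(N, C) \mapsto (N, C/(6N))$. Writing $\tilde{L}(N, C;\boldsymbol{\theta}) \coloneqq \hat{L}(N, C/(6N);\boldsymbol{\theta})$, we get $\tilde{L} \circ \Phi = \hat{L}$, where $\Phi$ is this bijection. Hence the squared residuals agree term by term. Summing over $i = 1, \dots, n_{\mathcal{H}}$, dividing by $n_{\mathcal{H}}$, and taking square roots shows that the two RMSEs are equal. The same argument covers $R^2_{\mathcal{H}}$, since the observed $L_i$ and their mean are untouched.

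There is no real obstacle here. The only point needing care is that the reparameterization must be applied to the \emph{same} fitted $\boldsymbol{\theta}$, with no refitting in $(N, C)$ coordinates. The proposition concerns prediction error and says nothing about the conditioning of the fit. I would add a remark that the invariance holds for any positive compute constant (not only $6$), and for any law $\hat{L}$ in Table~\ref{tab:jacobian_summary}.
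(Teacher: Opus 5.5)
Your proposal is correct and matches the paper's proof: the key step is the substitution $C_i/(6N_i) = 6N_iD_i/(6N_i) = D_i$, which gives the pointwise identity $\hat{L}(N_i, C_i/(6N_i);\boldsymbol{\theta}) = \hat{L}(N_i, D_i;\boldsymbol{\theta})$, so every summand of the RMSE coincides. Your extra remarks (bijectivity of $(N,D)\mapsto(N,6ND)$, the extension to $R^2_{\mathcal{H}}$, and the observation that any positive FLOPs constant works) are harmless additions, and the paper makes the last point itself when it discusses $C \approx 6ND$ as a pure relabelling.
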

\section{Empirical evaluation}
\label{sec:empirical}

\noindent\textbf{Empirically evaluating dense TPP coverage.} To validate the formal statements of Section~\ref{sec:formal}, we
fit each scaling law under two training designs - collinear (CO) and non-collinear (NC) - and measure holdout error on a shared CO/NC split across five pre-training corpora: three common and two specialized
(see Appendix~\ref{app:dataset_selection}). All models are LLaMA-style
transformers trained with AdamW on a cosine schedule; full hyperparameters
are in Appendix~\ref{app:experimental_setup}.
Model sizes span $5.04$-$76.5$M parameters
across 14 log-spaced configurations
($N_{\max}/N_{\min} \approx 15$;
see Table~\ref{tab:model-architectures}).
Tokenization uses cl100k\_base
($V{=}100{,}277$), and $N$ counts all
trainable parameters (embeddings included)
per the Hoffmann-Chinchilla
convention~\citep{hoffmann2022training};
we apply this convention uniformly when
fitting all four laws - including
Kaplan~\citep{kaplan2020scaling}, which
originally defines $N$ on non-embedding
parameters - and our analysis does not
involve a FLOPs budget $C$.
We log cross-entropy loss per epoch.
\noindent\textbf{Experimental designs.}
\label{sec:designs}
Both designs share identical architectures, optimizers, and
schedules; the \emph{only} difference is how $(N,D)$ pairs are
sampled, and total run counts are comparable.
\textbf{Collinear (CO):}
Every run sets $D = kN$ for
$k \in \mathcal{K}_{\mathrm{train}} \coloneqq
\{1,\; 1.5,\; 1.9,\; 2,\; 2.5,\; 2.7,\; 3,\; 3.3,\;
3.5,\; 4,\; 4.5,\; 5\}$.
Although \COTPPs{} ratios are used, each defines a ray through
the origin; the configurations form a \emph{fan of rays} rather than
filling a two-dimensional region, producing the collinearity
diagnosed in Section~\ref{sec:formal}.
\textbf{Non-Collinear (NC):}
The same \Nsizes{} model sizes are crossed with \NCDsizes{}
dataset sizes ($\sim 10$M-$\sim 276$M tokens), yielding a
$\Nsizes \times \NCDsizes$ grid spanning a two-dimensional
region.  The implied ratio $k = D/N$
varies freely, decoupling the two axes and eliminating the rank
deficiency.  The grid is visualized in
Appendix~\ref{app:defining_grid}. We fit the four mentioned scaling laws via nonlinear least squares with 100 random restarts and a differential-evolution polish, repeated across 30 independent optimizer seeds (see Appendix~\ref{app:seed_protocol}).  For
Repeated-Data, each epoch's checkpoint is a separate
observation; for the remaining three we fit per-epoch losses
independently (epochs 1, 2, 3).  To verify robustness to numerical precision, we run a \textbf{BF16} round ($k \in \mathcal{K}_{\mathrm{train}}$,
mixed precision).  The NC advantage persists in this regime
(Appendix~\ref{app:bf16_results}). Additionally, we provide a preliminary high-TPP experiment (small-scale,
$k \in \mathcal{K}_{\mathrm{big}} \coloneqq
\{10,\, 11,\, 12,\, 13,\, 14,\, 15\}$,
holdouts up to $20$) with Wikipedia and Chinchilla, achieving a $63.3\%$ NC win rate
(CI $[45.5\%, 78.1\%]$; see Appendices~\ref{app:bigtpp_prelim} and~\ref{app:CI}).

\begin{table*}[!htbp]
\small
\centering
\setlength{\tabcolsep}{3pt}
\caption{$R^2$ and RMSE summary across training designs with different seed-to-seed optimizer setup. 95\% CI: confidence interval on the mean. Best per split in \textbf{bold}.}
\label{tab:summary_full}
\begin{tabular}{ll cccc cccc}
\toprule
& & \multicolumn{4}{c}{\textbf{$R^2$}} & \multicolumn{4}{c}{\textbf{RMSE}} \\
\cmidrule(lr){3-6} \cmidrule(lr){7-10}
\textbf{Split} & \textbf{Design} & Mean & 95\% CI & Median & Std & Mean & 95\% CI & Median & Std \\
\midrule
\multirow{2}{*}{Train ($\mathcal{D}$)}
 & CO  & \win{0.9848}  & \win{[0.985, 0.985]} & \win{0.9850} & \win{0.0047} & \win{0.1737} & \win{[0.173, 0.174]} & \win{0.1616} & \win{0.0430} \\
 & NC  & 0.9526  & [0.952, 0.953] & 0.9542 & 0.0172 & 0.2023 & [0.202, 0.203] & 0.1869 & 0.0443 \\
\midrule
\multirow{2}{*}{Holdout ($\mathcal{H}$)}
 & CO  & 0.8370  & [0.832, 0.842] & 0.8811 & 0.1129 & 0.2373 & [0.233, 0.241] & 0.2163 & 0.1229 \\
 & NC  & \win{0.9319}  & \win{[0.930, 0.934]} & \win{0.9392} & \win{0.0269} & \win{0.1561} & \win{[0.154, 0.158]} & \win{0.1242} & \win{0.0599} \\
\bottomrule
\end{tabular}
\end{table*}

\noindent\textbf{Evaluation splits.}
\label{sec:splits}
The collinear design reserves five TPP ratios
$k \in \mathcal{K}_{\mathrm{test}}
\coloneqq \{6,\, 6.2,\, 6.5,\, 6.7,\, 7\}$, forming a
collinear holdout $\mathcal{H}_{\mathrm{col}}$ on rays
just beyond the training fan.  The non-collinear design
reserves dataset sizes
$D \in [300, 401]\,$M tokens, forming
a non-collinear holdout $\mathcal{H}_{\mathrm{nc}}$ that
requires extrapolation into the $(N,D)$ plane beyond
the training fan.
We evaluate both designs on a single
\emph{unified holdout}
$\mathcal{H} = \mathcal{H}_{\mathrm{col}} \cup
\mathcal{H}_{\mathrm{nc}}$,
so that every comparison is head-to-head, with
test points from both geometries.  All
paper metrics and tables use this $\mathcal{H}$ when reporting holdout performance; per-split
breakdowns appear in Appendix~\ref{app:per_dataset}.

\begin{table*}[!htbp]
\small
\centering
\setlength{\tabcolsep}{4pt}
\caption{Win rate breakdown (NC vs.\ CO design) on holdout with seed-paired comparisons. Overall NC (non-colinear) win rate: \winrate(\winrateNumer/\winrateDenom). 95\% CI: \winrateCI.}
\label{tab:winrate}
\begin{tabular}{lr@{\hspace{1.5em}}lr@{\hspace{1.5em}}lr}
\toprule
\multicolumn{2}{c}{\textbf{Dataset}} & \multicolumn{2}{c}{\textbf{Scaling Law}} & \multicolumn{2}{c}{\textbf{Epoch}} \\
\cmidrule(lr){1-2} \cmidrule(lr){3-4} \cmidrule(lr){5-6}
C4 & 93.0\% & Chinchilla & 97.6\% & first & 99.8\% \\
Cosmopedia & 94.7\% & Droppo-Elibol & 98.0\% & second & 97.6\% \\
peS2o & 99.3\% & Kaplan & 95.6\% & final & 93.8\% \\
RedPajama & 100.0\% & Repeated-Data & 100.0\% &  &  \\
Wikipedia & 99.7\% &  &  &  &  \\
\bottomrule
\end{tabular}
\end{table*}

\noindent\textbf{TPP coverage tracking.}
\label{sec:tpp_convergence}
To quantify how design coverage affects fit quality, we
incrementally widen each design's data axis - progressively
including additional TPP ratios (CO) or dataset sizes (NC) - and
refit each scaling law at each step, recording holdout~$R^{2}$.
The coverage fraction reports the fraction of levels
included: $1.0$ means all of $\mathcal{K}_{\mathrm{train}}$
(CO) or all $12$ $D$~sizes (NC).
Because the two designs draw from distinct value sets, CO and NC
curves trace independent trajectories through the $(N,D)$ plane.
Setup details and paired convergence plots appear in
Appendices~\ref{app:tpp_convergence_setup}
and~\ref{app:surfaces}, respectively.

\begin{table*}[!htbp]
\small
\centering
\setlength{\tabcolsep}{3pt}
\caption{$R^2$ and RMSE summary for bf16 across training designs, averaged with seed-to-seed optimizer std (30 seeds). 95\% CI: confidence interval on the mean.
Best per split in \textbf{bold}.}
\label{tab:summary_bf16}
\begin{tabular}{ll cccc cccc}
\toprule
& & \multicolumn{4}{c}{\textbf{$R^2$}} & \multicolumn{4}{c}{\textbf{RMSE}} \\
\cmidrule(lr){3-6} \cmidrule(lr){7-10}
\textbf{Split} & \textbf{Design} & Mean & 95\% CI & Median & Std & Mean & 95\% CI & Median & Std \\
\midrule
\multirow{2}{*}{Train ($\mathcal{D}$)}
 & CO  & 0.9896  & [0.990, 0.990] & 0.9916 & 0.0029 & 0.1854 & [0.185, 0.185] & 0.1606 & 0.0353 \\
 & NC  & \win{0.9907}  & \win{[0.991, 0.991]} & \win{0.9915} & \win{0.0020} & \win{0.1457} & \win{[0.145, 0.146]} & \win{0.1428} & \win{0.0171} \\
\midrule
\multirow{2}{*}{Holdout ($\mathcal{H}$)}
 & CO  & 0.9412  & [0.940, 0.942] & 0.9473 & 0.0172 & 0.2546 & [0.252, 0.257] & 0.2402 & 0.0396 \\
 & NC  & \win{0.9657}  & \win{[0.965, 0.967]} & \win{0.9663} & \win{0.0068} & \win{0.1949} & \win{[0.192, 0.198]} & \win{0.1921} & \win{0.0203} \\
\bottomrule
\end{tabular}
\end{table*}

\noindent\textbf{Aggregate metrics.}
\label{sec:metrics}
Table~\ref{tab:summary_full} reports mean, median, and standard
deviation of $R^{2}$ and RMSE across all law-dataset combinations
for each design-split pair.
Table~\ref{tab:winrate} tallies pairwise wins (higher $R^{2}$ or
lower RMSE) for NC vs.\ CO, broken down by dataset, law, and
epoch mode.  Table~\ref{tab:summary_bf16} confirms the effect is
precision-invariant: under BF16 mixed-precision training on
Wikipedia, NC achieves a unified holdout $R^{2}$ of $0.966$ vs.\
CO's $0.941$.  The BF16 winrate (\winratebf, CI:
\winratebfCI) is comparable to the full-precision results,
presented in Appendix~\ref{app:bf16_results}
(Table~\ref{tab:winrate_bf16}).  Full per-dataset breakdowns for
FP and BF16 runs are in
Appendices~\ref{app:per_dataset} and~\ref{app:full_all_results}.

\noindent\textbf{Representative plots.}
Figure~\ref{fig:select-plots} complements the aggregate
statistics with a representative three-panel view:
fitted loss surface, holdout $R^{2}$ vs.\ TPP coverage,
and induced isoFLOP curves (more three-panel views in
Appendix~\ref{app:surfaces}). TPP coverage is the
fraction of $\mathcal{K}_{\mathrm{train}}$ for CO (or
available $D$ sizes for NC) included in the fit; see
Section~\ref{sec:tpp_convergence}.  The contrast is
stark: the Kaplan CO fit on RedPajama (first-epoch loss)
achieves a holdout $R^{2}$ of only $0.51$, while NC
reaches $0.91$.  The surface panel
(Figure~\ref{fig:select-surface}) shows CO residuals
growing toward the small-$N$ region, where
Proposition~\ref{prop:full_cond}'s sloppy direction
dominates predicted loss.  The convergence panel
(Figure~\ref{fig:select-convergence}) aligns with the
predictions of Theorem~\ref{thm:holdout_regimes} and
Corollary~\ref{prop:ci_inflation}:
\textbf{(i)}~CO sits strictly below NC at every TPP
coverage level because, at $\kappa_{\mathrm{target}} = 100$,
Kaplan's full CO
satisfies $V_K \approx 1.6 \times 10^{-3} \ll
\tau_K \approx 3.4 \times 10^{-2}$, while NC yields
$V_K \approx 9.3 \times 10^{-3},\;
\tau_K \approx 3.1 \times 10^{-2}$
(Proposition~\ref{prop:holdout_r2}); both sit in
Regime~A of Theorem~\ref{thm:holdout_regimes}, which
predicts NC > CO~\eqref{eqn:regime_a_rmse};
\textbf{(ii)}~CO's CI bars are $\sim 53\times$ wider
than NC's - matching the $\Theta(\varepsilon^{-1})$
inflation predicted by Corollary~\ref{prop:ci_inflation}
(numerically $\varepsilon^{-1} \approx 53$ at Kaplan's
$\varepsilon \approx 0.019$;
Table~\ref{tab:severity_comparison});
\textbf{(iii)}~CO's bars shrink monotonically with
coverage but never close the gap, because
$\kappa(J^T J) = \Theta(\varepsilon^{-2})$ (numerically
$\sim 2.8 \times 10^{3}$) is design-level, independent
of $n$ (Proposition~\ref{prop:full_cond});
\textbf{(iv)}~CO's central $R^{2}$ oscillates while
NC's rises smoothly because Kaplan's scale-coefficient
estimates $(\hat N_c, \hat D_c)$ wander along the sloppy
direction of Proposition~\ref{prop:full_cond}, whereas
NC's 2D coverage keeps $\kappa(J^T J) \leq
\kappa_{\mathrm{target}}$
(Theorem~\ref{thm:holdout_regimes}), bounding the
sloppy direction;
\textbf{(v)}~the isoFLOP panel
(Figure~\ref{fig:select-isoflop}) makes the same
separation visible at the budget level: CO's dashed
curves splay apart at high~$N$ along the sloppy
direction of Proposition~\ref{prop:full_cond}, while
NC's solid curves stay tight.

\begin{figure*}[!ht]
    \centering
    \begin{subfigure}[b]{0.37\textwidth}
        \centering
        \includegraphics[width=\linewidth]{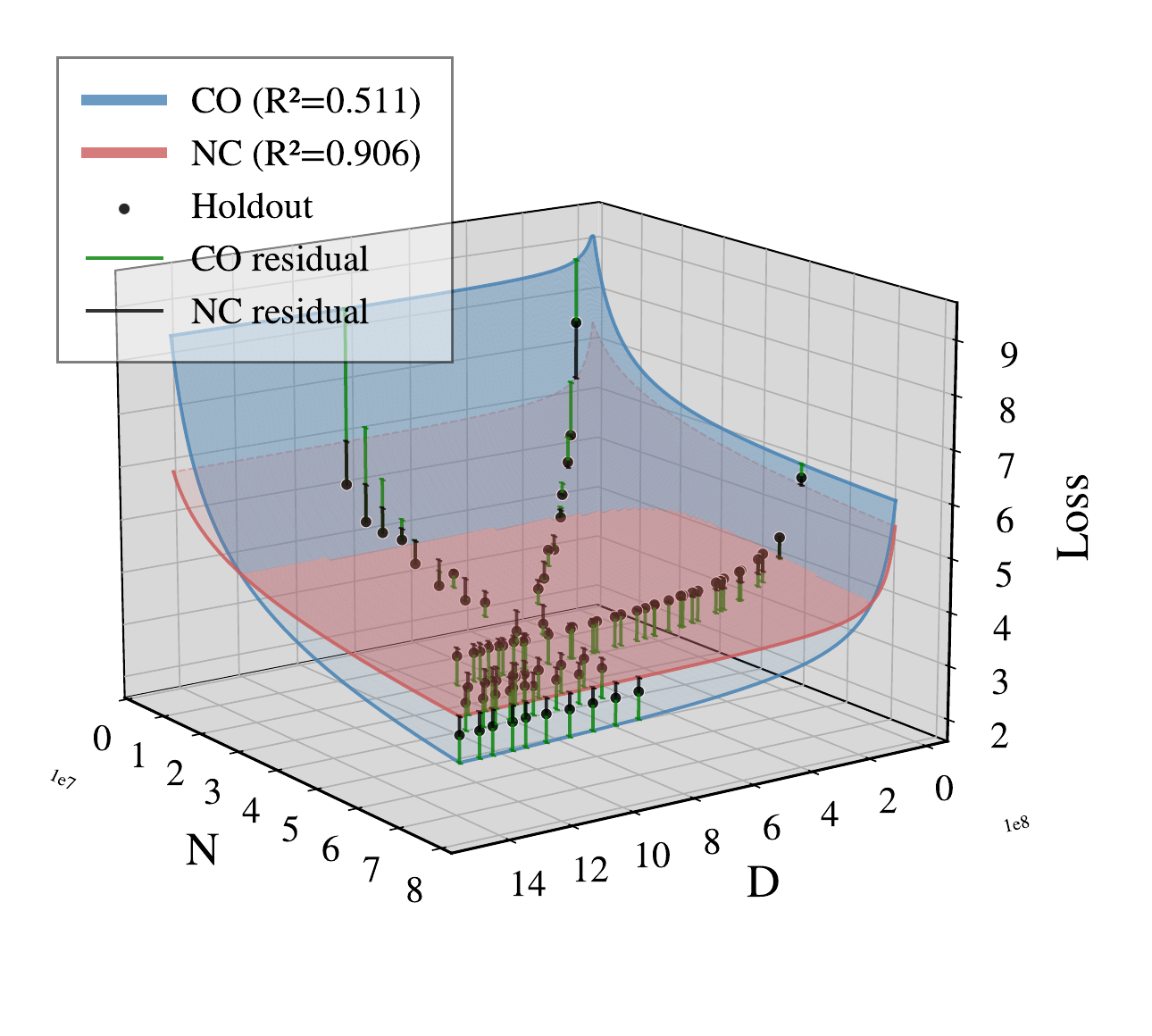}
        \caption{Loss surface.}
        \label{fig:select-surface}
    \end{subfigure}
    \hfill
    \begin{subfigure}[b]{0.30\textwidth}
        \centering
        \includegraphics[width=\linewidth]{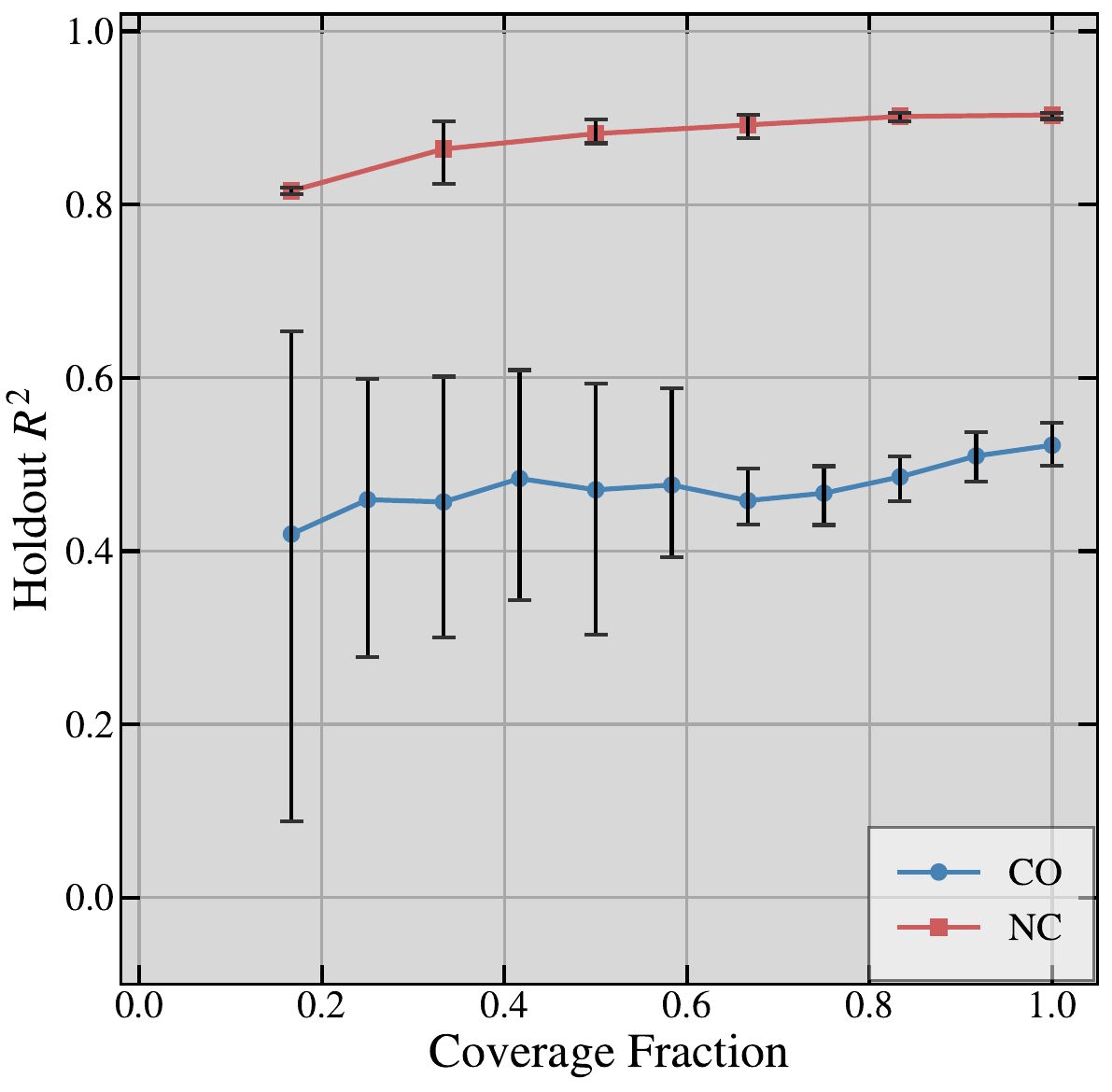}
        \caption{Holdout $R^{2}$ vs.\ TPP coverage.}
        \label{fig:select-convergence}
    \end{subfigure}
    \hfill
    \begin{subfigure}[b]{0.30\textwidth}
        \centering
        \includegraphics[width=\linewidth]{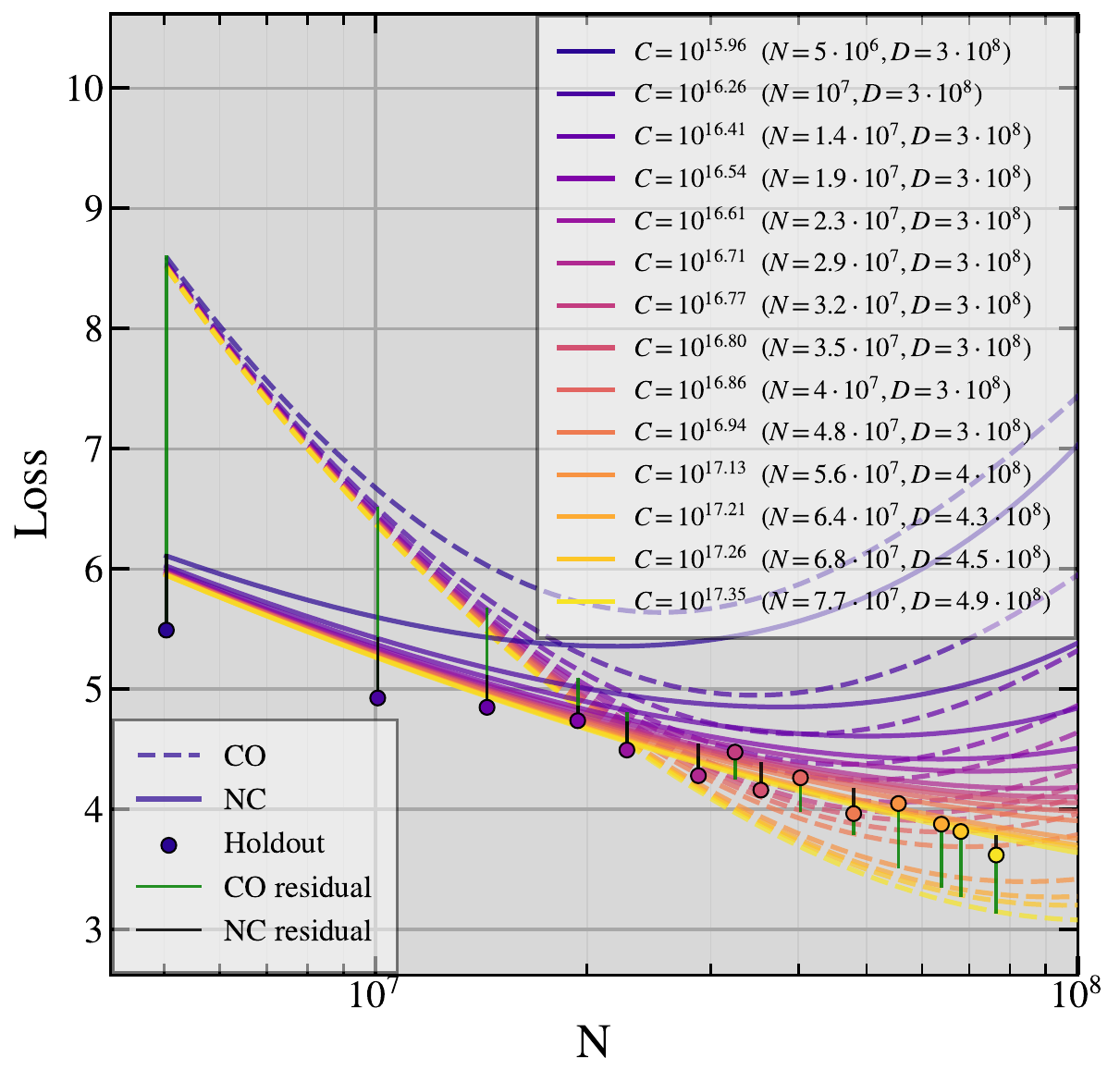}
        \caption{Induced isoFLOP curves.}
        \label{fig:select-isoflop}
    \end{subfigure}
    \caption{Kaplan law on RedPajama, first epoch
        (CO holdout $R^{2}\!=\!0.51$, NC $R^{2}\!=\!0.91$;
        CO blue, NC red; seed~$0$ in (a) and (c), median
        across $30$ seeds with 5th-95th percentile bands
        in (b)).
        \textbf{(a)}~Fitted loss surface
        $\hat{L}(N,D;\boldsymbol{\theta})$ over the
        $(N, D)$ plane with holdout points and residual
        bars overlaid.
        \textbf{(b)}~Holdout $R^{2}$ vs.\ TPP coverage.
        \textbf{(c)}~Induced isoFLOP curves
        (Definition~\ref{def:induced_isoflop_plot}), one
        per kept holdout point (highest~$D$ per
        unique~$N$), with residual bars at each $N_i$;
        curves and markers share a colormap ordered by
        $C_i = 6 N_i D_i$.}
    \label{fig:select-plots}
\end{figure*}


\noindent\textbf{Empirical evaluation of Theorem~\ref{thm:holdout_regimes} through budget-matched subset enumeration.}
\label{sec:subset_exhaustion}
Table~\ref{tab:regime_a_winrate} supports
Theorem~\ref{thm:holdout_regimes}, which predicts that within
Regime~A ($V_K < \tau_K$;~\eqref{eqn:regime_a_rmse}) an NC
design achieves smaller expected holdout RMSE than a
budget-matched CO design. We enumerate all $2^{12}-1 = 4{,}095$ non-empty subsets
$S \subseteq \{1,\ldots,12\}$ of available TPP ratios.
Each $\mathrm{CO}(S)$ (Definition~\ref{def:co_subset})
is paired with a bounding-box NC design of identical
cardinality (Appendix~\ref{app:regime_a_setup}) and
classified a priori via
$V_K < \tau_K(\kappa_{\mathrm{target}})$ using literature
$\beta_{\mathrm{eff}}$ (Definition~\ref{def:exponent_gap}).
Because the theorem doesn't fix a single
$\kappa_{\mathrm{target}}$, we average the NC-win rate
over $2{,}000$ values drawn log-uniformly from
$[1, 10^9]$. Each $\kappa_{\mathrm{target}}$ is
repeated under $22$ seeds that vary the bounding-box
construction and the random restarts of the fit
optimizer ($300$ restarts per fit; random $70\%$
subsample of $\mathcal{H}$). When Regime~A is empty
for a given $\kappa_{\mathrm{target}}$, that seed
is skipped. NC beats CO consistently within
Regime~A for all three laws: $68.5\%$ (Chinchilla),
$57.4\%$ (Droppo-Elibol), and $66.0\%$ (Kaplan)
averaged across corpora, all with CIs excluding $50\%$. The effect is strongest on the
web-crawled corpora C4, RedPajama, and Wikipedia (NC wins
$77$-$88\%$ for Chinchilla); the two specialized corpora (Cosmopedia and
peS2o) show systematically smaller margins, which we
interpret as a scope limitation in
Section~\ref{sec:discussion}.

\begin{table}[!ht]
\centering\small
\caption{Regime~A NC-win rate (\%) by law and dataset, first-epoch fits.
Each cell: fraction of paired comparisons where
$\mathrm{RMSE}_{\mathcal{H}}^{\mathrm{NC}} < \mathrm{RMSE}_{\mathcal{H}}^{\mathrm{CO}}$
within predicted Regime~A. \textbf{Dataset Avg.}: mean across corpora with $95\%$ bootstrap CI.}
\label{tab:regime_a_winrate}
\begin{tabular}{l c c c c c c}
\toprule
 & \multicolumn{3}{c}{Web-crawled} & \multicolumn{2}{c}{Specialized} & \\
\cmidrule(lr){2-4} \cmidrule(lr){5-6}
Law & C4 & RedPajama & Wikipedia
    & Cosmopedia & peS2o
    & \textbf{Dataset Avg.} \\
\midrule
Chinchilla
    & $77.1$
    & $78.6$
    & $88.0$
    & $51.8$
    & $47.1$
    & $\mathbf{68.5}\ [67.3, 69.8]$ \\
Droppo-Elibol
    & $62.2$
    & $67.2$
    & $65.9$
    & $45.5$
    & $46.2$
    & $\mathbf{57.4}\ [56.1, 58.7]$ \\
Kaplan
    & $76.1$
    & $79.9$
    & $72.9$
    & $50.9$
    & $50.3$
    & $\mathbf{66.0}\ [64.1, 67.9]$ \\
\bottomrule
\end{tabular}
\end{table}
\section{Discussion and conclusions}
\label{sec:discussion}

The formal statements of Section~\ref{sec:formal} establish that
collinear designs will produce deceptively confident fits that fail to
generalize.  The experiments of Section~\ref{sec:empirical} confirm
this statement quantitatively and reveal how severe the failure is in
practice.

\textbf{Collinear designs overfit, non-collinear designs
generalize}: on training data CO achieves higher $R^2$ than
NC in nearly every experiment
(Table~\ref{tab:summary_full}: aggregate $0.985$ vs.\
$0.953$), as a five-parameter fit along a 1D manifold
near-interpolates, but on held-out data the ranking reverses
- NC raises unified-holdout $R^2$ from $0.837$ to $0.932$
and cuts RMSE from $0.237$ to $0.156$, with seed-paired
per-trial RMSE gaps of $3$-$8\times$ that rule out an
averaging artifact, sitting below
Theorem~\ref{thm:holdout_regimes}'s
$\Theta(\varepsilon^{-1}) \approx 16.7\times$ ceiling at
Chinchilla exponents
(Eqs.~\eqref{eqn:mse_co_unified}-\eqref{eqn:mse_nc_any})
with the slack absorbed by the design-independent
misspecification
floor~\eqref{eqn:mse_ratio_misspec}.
\textbf{Scaling \COTPPs{} TPP ratios does not rescue
collinearity}: our CO design spans
$\mathcal{K}_{\mathrm{train}}$ ($\COTPPs$ ratios) yet NC
still wins \winrate of \winrateDenom seed-paired comparisons
(Table~\ref{tab:winrate}) because
$\kappa \propto \varepsilon^{-2}$ depends on the exponent
gap, not the number of rays
(Proposition~\ref{prop:full_cond}).
\textbf{The effect is precision-invariant}: a BF16
mixed-precision replication on Wikipedia gives
unified-holdout $R^2 = 0.966$ vs.\ CO's $0.941$
(Table~\ref{tab:summary_bf16}) and a seed-paired NC win rate
of \winratebf (\winratebfNumer/\winratebfDenom, $95\%$ CI
\winratebfCI; Table~\ref{tab:winrate_bf16}), consistent with
the FP32 Wikipedia win rate of \winrateFP{}
(Table~\ref{tab:winrate}).
\textbf{The pattern is consistent across corpora, laws, and
epochs}: Table~\ref{tab:winrate} reports NC win rates above
$93\%$ on every corpus (min $93.0\%$ C4, max $100\%$
RedPajama), above $95\%$ for every scaling law, and above
$93\%$ for every evaluation epoch.
The combined \textbf{takeaway} is that single-TPP fits do
not support coefficient-level interpretation - along
$D = kN$ rays only the combined effect of $N$ and $D$ is
well constrained, so claims such as ``model size contributes
$X$\% and data $Y$\% of the reducible loss'' require
training to span a two-dimensional region of $(N, D)$, and
even \COTPPs{} collinear rays are insufficient
(Table~\ref{tab:summary_full}: CO's training-$R^2$ advantage
flips to a $0.03$-$0.10$ holdout deficit); when the design
is collinear, the reduced parameterization
(Definition~\ref{def:reduced_chinchilla_model}) is the
appropriate reporting target.

\noindent\textbf{Limitations.}
\label{sec:limitations}
Theorem~\ref{thm:holdout_regimes} gives an ordering, not a
quantitative gap, so the observed seed-paired RMSE spread
is consistent with but not predicted by the theorem; under
misspecification, a design-independent approximation-bias
floor bounds the ratio~\eqref{eqn:mse_ratio_misspec}.
Corollary~\ref{prop:ci_inflation}'s closed-form CI inflation
assumes i.i.d.\ Gaussian residuals; checkpoint-level
temporal correlation and shared-corpus noise would
under-state, not invert, the pathology. Empirically, we
use small-scale models ($5.04$-$76.5$M parameters), a
single LLaMA-style architecture, English-only text, one
run per $(N,D)$, and a single fitter (multi-start L-BFGS-B
with DE polish;
Appendices~\ref{app:experimental_setup},~\ref{app:seed_protocol}).
Training hyperparameters were not swept per-$(N,D)$;
following~\citet{gadre2024language} we adopt a fixed
recipe applied identically to CO and NC, so the
head-to-head comparison is controlled even if absolute
losses are not optimal. The budget-matched validation
(Section~\ref{sec:subset_exhaustion},
Table~\ref{tab:regime_a_winrate}) softens to near-$50\%$
on Cosmopedia and peS2o, plausibly due to non-power-law
loss surfaces attenuating variance inflation (untested);
Regime~A transfers most cleanly to web-crawled and
reference text. Finally, the analysis targets pre-training
loss; downstream-task and cross-domain
extensions~\citep{bhagia2025establishingtaskscalinglaws}
inherit the Jacobian argument but are not validated here.

\noindent\textbf{Safeguards.}
\label{sec:safeguards}
All corpora (C4, RedPajama, Wikipedia, Cosmopedia, peS2o)
are public and pre-filtered; no new data collection.
Released checkpoints ($5.04$-$76.5$M;
Appendix~\ref{app:experimental_setup}) are diagnostic data
points, not deployable models, and the code targets
scaling-law fitting and small-scale training runs, not
serving production models. The paper's deflationary posture against
single-TPP coefficient claims offsets any marginal
acceleration from better-identified scaling laws.

\noindent\textbf{Broader impact.}
\label{sec:broader_impact}
Scaling laws guide multi-billion-dollar pre-training
decisions and inform claims about the relative importance
of capacity vs.\ data volume; under single-TPP designs
those claims are unidentifiable (Section~\ref{sec:formal},
Corollary~\ref{prop:ci_inflation}). Our diagnostics
(Proposition~\ref{prop:holdout_r2},
\eqref{eqn:tpp_diversity}) flag such fits, and the
framework reframes the Kaplan-Chinchilla replication
gap~\citep{besiroglu2024chinchilla, porian2024resolving}
as a predictable consequence of
$\Theta(\varepsilon^{-2})$ ill-conditioning - $A$'s CIs
inflate $\geq\!17\times$ at Chinchilla exponents and
$\geq\!53\times$ at Kaplan exponents
(Table~\ref{tab:severity_comparison}). Because the
degeneracy is purely Jacobian-geometric, the analysis
transfers to generative vision~\citep{henighan2020scaling,
alabdulmohsin2022revisiting},
time-series~\citep{edwards2024scaling, yao2024scaling},
and scientific ML; IsoFLOP-grid designs
(Section~\ref{sec:isoflop}) naturally avoid the pitfall.

\noindent\textbf{LLM usage.}
\label{sec:llm_usage}
All research questions, results, design, and analyses
originated with the authors. The primary non-trivial
use of LLMs was as a sounding board for candidate
proof steps: the authors drafted the structure of
each derivation, asked an LLM to propose intermediate
algebraic manipulations or identities, and then
independently verified, rewrote, or discarded each
suggestion before incorporation. Auxiliary uses
included exposition, \LaTeX{} typesetting, boilerplate
code, and prose editing, all reviewed by the authors.
LLMs did not generate any claims, theorems, or
experimental conclusions. We acknowledge Anthropic's
Claude Opus 4.7 for its assistance in this capacity.

\clearpage

\begin{ack}
Funded by ARO grant W911NF-24-1-0007 and NSF ACCESS grant CIS230318.
\end{ack}
\clearpage

\bibliography{src/neurips2026_conference}
\bibliographystyle{plainnat}

\clearpage
\appendix
\section{Appendix - notation reference}
\label{app:notation}

Table~\ref{tab:notation} collects all symbols used throughout
the paper and appendices.

\small
\setlength{\tabcolsep}{4pt}
\begin{longtable}{@{}p{0.22\linewidth}p{0.58\linewidth}p{0.16\linewidth}@{}}
\caption{Summary of notation.}
\label{tab:notation} \\
\toprule
\textbf{Symbol} & \textbf{Meaning} & \textbf{Defined in} \\
\midrule
\endfirsthead
\multicolumn{3}{l}{\emph{(Table~\ref{tab:notation} continued)}} \\
\toprule
\textbf{Symbol} & \textbf{Meaning} & \textbf{Defined in} \\
\midrule
\endhead
\midrule
\multicolumn{3}{r}{\emph{Continued on next page}} \\
\endfoot
\bottomrule
\endlastfoot
\multicolumn{3}{l}{\emph{Scaling-law variables}} \\
$N$ & Model parameter count & Eq.~\eqref{eqn:chinchilla} \\
$D$ & Training tokens (dataset size) & Eq.~\eqref{eqn:chinchilla} \\
$L(N,D)$ & Observed loss & Eq.~\eqref{eqn:chinchilla} \\
$\hat{L}(N,D;\boldsymbol{\theta})$ & Predicted loss (scaling law) & Def.~\ref{def:experimental_dataset} \\
$k_1 < \cdots < k_K$ & Ordered TPP ratios in a $K$-ray design & Def.~\ref{def:experimental_dataset} \\
$k$ & Tokens-per-parameter (TPP) ratio ($K=1$ case, $D = kN$) & Sec.~\ref{sec:formal} \\
$C$ & Training compute, $C = 6ND$ (Chinchilla FLOPs) & Sec.~\ref{sec:isoflop} \\
$C_i$ & Induced compute budget $6 N_i D_i$ for holdout point $i$ & Def.~\ref{def:induced_isoflop_plot} \\
$N_{\min},\,N_{\max}$ & Min/max model sizes used for induced isoFLOP curves & Def.~\ref{def:induced_isoflop_plot} \\
$V$ & Vocabulary size (cl100k\_base; empirical count $100{,}277$) & Sec.~\ref{sec:empirical}; App.~\ref{app:experimental_setup} \\
\midrule
\multicolumn{3}{l}{\emph{Chinchilla parameters}} \\
$A,\, B$ & Scale coefficients (model, data) & Eq.~\eqref{eqn:chinchilla} \\
$\alpha,\, \beta$ & Power-law exponents & Eq.~\eqref{eqn:chinchilla} \\
$E$ & Irreducible loss & Eq.~\eqref{eqn:chinchilla} \\
$\psi$ & Combined coefficient $A + Bk^{-\alpha}$ & Eq.~\eqref{eqn:reduced_chinchilla} \\
$\varepsilon$ & Exponent gap ($|\alpha{-}\beta|$ for Chinchilla) & Def.~\ref{def:exponent_gap} \\
\midrule
\multicolumn{3}{l}{\emph{Observation model and optimization}} \\
$\mathcal{A}$ & Training algorithm (architecture, optimizer, schedule) & Def.~\ref{def:experimental_dataset} \\
$\mathcal{D}$ & Scaling-law dataset $\{(N_i,D_i,L_i)\}_{i=1}^m$ & Def.~\ref{def:experimental_dataset} \\
$\mathcal{D}_{\mathrm{train}}$ & Training split of $\mathcal{D}$ & Def.~\ref{def:experimental_dataset} \\
$\mathcal{D}_{\mathrm{train}}^{\mathrm{CO}}$,\,
$\mathcal{D}_{\mathrm{train}}^{\mathrm{NC}}$ &
Full collinear and non-collinear training pools before subset enumeration &
Def.~\ref{def:co_subset}, Def.~\ref{def:nc_bbox} \\
$\mathcal{N}$ & Shared ordered model sizes $\{N_1,\ldots,N_n\}$ & Def.~\ref{def:experimental_dataset}; App.~\ref{app:regime_a_setup} \\
$m$ & Number of observations in $\mathcal{D}$ & Def.~\ref{def:experimental_dataset} \\
$\boldsymbol{\theta}$ & Scaling-law parameter vector & Def.~\ref{def:experimental_dataset} \\
$p$ & Number of parameters in $\boldsymbol{\theta}$ & Def.~\ref{def:experimental_dataset} \\
$n$ & Number of distinct model sizes & Def.~\ref{def:experimental_dataset} \\
$N_1 < \cdots < N_n$ & Ordered model sizes & Def.~\ref{def:experimental_dataset} \\
$K$ & Number of TPP ratios in collinear design & Def.~\ref{def:experimental_dataset} \\
$\kappa(\cdot)$ & Condition number ($\lambda_{\max}/\lambda_{\min}$) & Sec.~\ref{sec:gnsetup} \\
$r_i$ & Residual: $L_i - \hat{L}(N_i,D_i;\boldsymbol{\theta})$ & Sec.~\ref{sec:gnsetup} \\
$\mathbf{r}(\boldsymbol{\theta})$ & Stacked residual vector $(r_i)_{i=1}^m$ & Sec.~\ref{sec:gnsetup} \\
$S(\boldsymbol{\theta})$ & GN objective $\tfrac{1}{2}\|\mathbf{r}(\boldsymbol{\theta})\|^2$ & Sec.~\ref{sec:gnsetup} \\
$J$ & Jacobian of residuals & Sec.~\ref{sec:gnsetup} \\
$\Delta\boldsymbol{\theta}$ & Gauss-Newton update step & Sec.~\ref{sec:gnsetup} \\
$\kappa_{A,B}$ & Condition number of $(A,B)$ sub-block of $J^T J$; equals $\kappa(J^T J)$ at leading order under the global dominance assumption (Sec.~\ref{sec:gnsetup}) & Sec.~\ref{sec:gnsetup} \\
$\mathbf{j}_a,\,\mathbf{j}_b$ & Columns of $J$ (near-proportional pair) & Proposition~\ref{prop:full_cond} \\
$c$ & Proportionality scalar, $\mathbf{j}_b = c\,\mathbf{j}_a + \boldsymbol{\delta}$ & Proposition~\ref{prop:full_cond} \\
$\boldsymbol{\delta}$ & Near-proportionality perturbation, $\|\boldsymbol{\delta}\|=O(\varepsilon)$ & Proposition~\ref{prop:full_cond} \\
$\mathbf{j}_1,\,\mathbf{j}_2$ &
First two Jacobian columns in the column-ordering used in Table~\ref{tab:jacobian_summary}
(alias of the illustrative $(a,b)$ pair in Prop.~\ref{prop:full_cond}) &
Tab.~\ref{tab:jacobian_summary} \\
$\hat{\boldsymbol{\theta}}$ & Least-squares estimator & Lemma~\ref{lem:ls_iid_covariance} \\
$\mathbb{E}[\cdot]$ &
Expectation (Regime A/B inequalities and RMSE ratios) &
Thm.~\ref{thm:holdout_regimes} \\
$\sigma^2$ &
Idealized homoscedastic variance of observational noise in nonlinear least-squares (distinct from appendix-table $\sigma$) &
Lemma~\ref{lem:ls_iid_covariance}; Cor.~\ref{prop:ci_inflation} \\
$\sigma$ (appendix tables) &
Seed-to-seed std.\ dev.\ of Monte Carlo summaries across optimizer seeds &
App.~\ref{app:per_dataset}; App.~\ref{app:full_all_results} \\
$\mathrm{CI}_{0.95}(\cdot)$ & Nominal 95\% confidence interval half-width & Corollary~\ref{prop:ci_inflation} \\
\midrule
\multicolumn{3}{l}{\emph{Holdout prediction (Prop.~\ref{prop:holdout_r2};
    Theorem~\ref{thm:holdout_regimes})}} \\
$\mathcal{H}$ & General holdout set & Def.~\ref{def:experimental_dataset} \\
$n_{\mathcal{H}}$ & Holdout cardinality & Def.~\ref{def:experimental_dataset} \\
$k_K,\,k_1$ & Largest/smallest ordered TPP ratio in a design & Prop.~\ref{prop:holdout_r2} \\
$R$ & Two-ray TPP spread $k_2/k_1$ ($K=2$) & Prop.~\ref{prop:holdout_r2} \\
$\beta_{\mathrm{eff}}$ & Law-specific data-size exponent (Chinchilla/repeated-data: $\beta$; Kaplan: $\alpha_D$; Droppo-Elibol: $\gamma_D$) & Prop.~\ref{prop:holdout_r2} \\
$V_K$ &
Second central moment (variance) of
$\{k_\ell^{-\beta_{\mathrm{eff}}}\}_{\ell=1}^K$ across TPP rays;
$V_1=0$ &
Prop.~\ref{prop:holdout_r2} \\
$\tau_K$ & Diversity threshold in~\eqref{eqn:tpp_diversity}; at leading order $\kappa(J^T J) \leq \kappa_{\mathrm{target}} \Leftrightarrow V_K \geq \tau_K$ & Prop.~\ref{prop:holdout_r2} \\
$\kappa_K$ & Shorthand for $\kappa(J^T J)$ on a collinear $K$-ray design ($\kappa_K \leq \kappa_{\mathrm{target}} \Leftrightarrow V_K \geq \tau_K$ at leading order) & App.~\ref{app:tpp_recipe} \\
$\kappa_{\mathrm{target}}$ & Target condition number; $\tau_K$ in~\eqref{eqn:tpp_diversity} depends on it; assumed small for NLS/GN in Thm.~\ref{thm:holdout_regimes} & Prop.~\ref{prop:holdout_r2}; Thm.~\ref{thm:holdout_regimes} \\
$\mathrm{RMSE}_{\mathcal{H}}^{\mathrm{CO/NC}}$ & Expected holdout RMSE (superscript: design) & Thm.~\ref{thm:holdout_regimes} \\
$R^{2\,\mathrm{CO/NC}}_{\mathcal{H}}$ & Holdout $R^2$ under design CO or NC & Thm.~\ref{thm:holdout_regimes} \\
\midrule
\multicolumn{3}{l}{\emph{Experimental design (Sec.~\ref{sec:empirical})}} \\
$\mathcal{K}_{\mathrm{train}}$ & Full set of $12$ collinear training TPP ratios used in the CO design & Sec.~\ref{sec:designs} \\
$\mathcal{K}_{\mathrm{test}}$ & Set of $5$ collinear holdout TPP ratios beyond the training fan & Sec.~\ref{sec:splits} \\
$\mathcal{K}_{\mathrm{big}}$ & Set of $6$ collinear training TPP ratios in the preliminary high-TPP CO design & Sec.~\ref{sec:designs}; App.~\ref{app:bigtpp_prelim} \\
$\mathcal{H}_{\mathrm{col}}$ & Collinear holdout (empirical label) & Sec.~\ref{sec:splits} \\
$\mathcal{H}_{\mathrm{nc}}$ & Non-collinear holdout (empirical label) & Sec.~\ref{sec:splits} \\
$R^{2}_{\mathcal{H}}$ &
Unified-holdout $R^{2}$ on $\mathcal{H}$ &
Sec.~\ref{sec:empirical}; Tab.~\ref{tab:summary_full} \\
$R^{2}_{\mathcal{H}_{\mathrm{col}}}$,\,
$R^{2}_{\mathcal{H}_{\mathrm{nc}}}$ &
$R^{2}$ on CO-only vs.\ NC-only holdout splits &
App.~\ref{app:per_dataset} \\
$R^{2}_{\text{train}}, R^{2}_{\text{holdout}}$ & Coefficient of determination on train/holdout aggregates & Sec.~\ref{sec:empirical} \\
$d_{\text{model}},\,d_{\text{ff}},\,d_{\text{head}}$,\,
$n_{\text{layers}},\,n_{\text{heads}}$ &
Transformer shape hyperparameters (Table~\ref{tab:model-architectures}) &
App.~\ref{app:experimental_setup} \\
$\mathrm{RMSE}$ & Root mean squared error on a holdout split & Sec.~\ref{sec:empirical} \\
\midrule
\multicolumn{3}{l}{\emph{Alternative scaling-law notation}} \\
$N',\, D'$ & Effective quantities (repeated-data) & Def.~\ref{def:exponent_gap} \\
$N_c,\, D_c$ & Scale parameters (Kaplan) & Def.~\ref{def:exponent_gap} \\
$\alpha_N,\, \alpha_D$ & Separate exponents (Kaplan / Droppo-Elibol) & Def.~\ref{def:exponent_gap} \\
$N_C,\, D_C$ & Scale parameters (Droppo-Elibol) & Def.~\ref{def:exponent_gap} \\
$\alpha$ (Droppo-Elibol) &
Canonical composite exponent in that law &
Def.~\ref{def:exponent_gap}; Tab.~\ref{tab:jacobian_summary} \\
$\gamma_N,\, \gamma_D$ &
Exponent ratios $\alpha_N/\alpha$, $\alpha_D/\alpha$
(Droppo-Elibol) &
Def.~\ref{def:exponent_gap} \\
$L_\infty$ & Irreducible loss (Droppo-Elibol) & Sec.~\ref{sec:formal}; App.~\ref{app:elibol_details} \\
$R_D^*,\, R_N^*$ & Decay constants (repeated-data) & App.~\ref{app:dataconstrained_details} \\
\midrule
\multicolumn{3}{l}{\emph{Appendix-only: design-dependent quantities}} \\
$\Phi_q$ & Power-sum family $\sum_i N_i^{-q\alpha}$ (index $q$) & App.~\ref{app:chinchilla_details}; Def.~\ref{def:power_sums} \\
$\Phi_2^{(2\varepsilon)}$ & $\sum_i N_i^{-2\beta}$ (Chinchilla; $=\sum_i N_i^{-2\alpha+2\tilde\varepsilon}$, $\tilde\varepsilon=\alpha-\beta$) & App.~\ref{app:proof_ci_inflation} \\
$T_q$ & Log-weighted power sum (e.g.\ $T_2 = \sum_i N_i^{-2\alpha}\log N_i$) & App.~\ref{app:proof_ci_inflation} \\
$U_q$ & Log$^2$-weighted power sum (e.g.\ $U_2 = \sum_i N_i^{-2\alpha}(\log N_i)^2$) & App.~\ref{app:proof_ci_inflation} \\
$w_i$ & Power-law weights $N_i^{-2\alpha}/\Phi_2$ & App.~\ref{app:submodel} \\
$\sigma_w^2(\log N)$ & Weighted log-variance of model sizes & App.~\ref{app:submodel} \\
\midrule
\multicolumn{3}{l}{\emph{Appendix-only: experimental design}} \\
$R_{\min}$ & Minimum two-ray spread ($V_2 \geq \tau_2$); $K > 2$ requires $R \geq R_{\min}$ & App.~\ref{app:tpp_recipe} \\
$\kappa_1$ & Single-ray baseline $\kappa_K$ at $K=1$ (enters $R_{\min}$ via~\eqref{eqn:R_min}) & App.~\ref{app:tpp_recipe} \\
$R_{\mathrm{eff}}$ & Effective TPP spread ratio, $k_K/k_1$, in multi-ray designs & Journal-only design-guidelines appendix \\
$\eta_k$ & Singular-value ratio diagnostic, $\sigma_{\max}/\sigma_k$ & Journal-only design-guidelines appendix \\
$\Lambda(\mathcal{S})$ & Sloppy leverage of test set $\mathcal{S}$ & App.~\ref{app:proof_holdout_r2}, Def.~\ref{def:sloppy_leverage_r2} \\
$G$ &
Training-design scalar $G = \sigma_w^2(\log N)^{-1} > 0$
in the coarse CO $\mathrm{MSE}$ bound~\eqref{eqn:mse_co_unified} &
Eq.~\eqref{eqn:mse_co_unified} \\
$\eta$ & Learning rate in optimizer setup & App.~\ref{app:experimental_setup} \\
\midrule
\multicolumn{3}{l}{\emph{Appendix-only: subset enumeration (App.~\ref{app:regime_a_setup})}} \\
$\mathcal{L}$ & Scaling-law family (Chinchilla, Kaplan, Droppo-Elibol) & App.~\ref{app:regime_a_setup} \\
$\mu$ & Epoch/mode index in paired-comparison enumeration & Def.~\ref{def:paired_comparison} \\
$\mathbf{k}^{\mathrm{all}}$ & Full set of $12$ collinear TPP ratios per dataset & Def.~\ref{def:available_tpp} \\
$S$ & Subset of TPP-ratio indices, $S \subseteq \{1,\ldots,12\}$ & Def.~\ref{def:co_subset} \\
$\mathrm{CO}(S)$ & Collinear subset design induced by $S$ & Def.~\ref{def:co_subset} \\
$n_S$ & Cardinality of $\mathrm{CO}(S)$ & Def.~\ref{def:co_subset} \\
$K_S$ & Effective TPP-ray count of subset design, $K_S = |S|$ & Def.~\ref{def:co_subset} \\
$\mathcal{N}_{\mathrm{NC}},\,\mathcal{D}_{\mathrm{NC}}$ & Row/column axes of the NC grid in $(N,D)$ space & Def.~\ref{def:nc_bbox} \\
$M_N,\,M_D$ & NC-grid row/column counts, $|\mathcal{N}_{\mathrm{NC}}|$, $|\mathcal{D}_{\mathrm{NC}}|$ & Def.~\ref{def:nc_bbox} \\
$n^\star,\, \mathbf{k}_S$ & Target NC cardinality and target TPP ratios from $S$ & Def.~\ref{def:nc_bbox} \\
$\mathrm{NC}_{\Box}(n^\star,\mathbf{k}_S)$ & Bounding-box NC design of cardinality $n^\star$ & Def.~\ref{def:nc_bbox} \\
$\mathcal{C}(\mathcal{L},\mathcal{D},\mu,S)$ & Paired comparison tuple & Def.~\ref{def:paired_comparison} \\
$\mathcal{P}$ & Set of all paired comparisons & Def.~\ref{def:paired_comparison} \\
$\kappa^\star$ & Target condition number for empirical regime classification & Def.~\ref{def:win_rate} \\
$\mathcal{P}^A_{\mathcal{L},\mu}(\kappa^\star)$ & Predicted Regime~A set ($V_K < \tau_K$) at target $\kappa^\star$ & Def.~\ref{def:win_rate} \\
$\mathrm{WR}(\mathcal{P}')$ & NC win rate on a collection $\mathcal{P}' \subseteq \mathcal{P}$ & Def.~\ref{def:win_rate} \\
$\mathrm{WR}_A(\mathcal{L},\mu;\kappa^\star)$ & Regime~A NC win rate at target $\kappa^\star$ & Def.~\ref{def:win_rate} \\
$\mathbf{1}\{\cdot\}$ & Indicator used in NC win-rate definition & Eq.~\eqref{eqn:winrate} \\
$\kappa_{A,B}^{\mathrm{design}},\,\kappa_{\mathrm{full}}^{\mathrm{design}}$ & Measured $(A,B)$-block and full-matrix condition numbers (design $\in \{\mathrm{CO},\mathrm{NC}\}$) & Def.~\ref{def:measured_kappa} \\
\end{longtable}

\clearpage
\section{Appendix - proofs and detailed derivations}
\label{app:proofs}

The following proofs and derivations support the results in
Section~\ref{sec:formal}.
For Kaplan, the Jacobian columns are derived for the additive
simplification $(N_c/N)^{\alpha_N} + (D_c/D)^{\alpha_D}$ in the
proofs, while the empirical fit uses the original form.

\subsection{Appendix - Derivation of the Gauss-Newton normal equations}\label{app:gauss_newton_derivation}

This subsection provides a self-contained derivation of the
Gauss-Newton normal equations from
the nonlinear least-squares
objective (Section~\ref{sec:gnsetup}), following the
treatment in~\cite[Chap.~10]{nocedal2006numerical}.

\noindent\textbf{Step 1: Objective and gradient.}
Recall the objective:
\begin{equation}
    S(\boldsymbol{\theta})
        = \tfrac{1}{2}\,
          \|\mathbf{r}(\boldsymbol{\theta})\|^2
        = \tfrac{1}{2}\,
          \sum_{i=1}^{m} r_i(\boldsymbol{\theta})^2,
    \label{eqn:gn_objective}
\end{equation}
where $r_i(\boldsymbol{\theta})
= L_i - \hat{L}(N_i, D_i; \boldsymbol{\theta})$.
The gradient with respect to $\boldsymbol{\theta}$ is obtained
by the chain rule:
\begin{equation}
    \frac{\partial S}{\partial \theta_j}
        = \sum_{i=1}^{m} r_i(\boldsymbol{\theta})\;
          \frac{\partial r_i}{\partial \theta_j}
        = -\sum_{i=1}^{m} r_i(\boldsymbol{\theta})\;
          \frac{\partial \hat{L}_i}{\partial \theta_j},
    \label{eqn:gn_grad_component}
\end{equation}
since $\partial r_i / \partial \theta_j
= -\partial \hat{L}_i / \partial \theta_j$.
Define the residual Jacobian (equivalently, predictions satisfy
$\partial r_i/\partial \theta_j
= -\partial \hat{L}_i/\partial \theta_j$):
\begin{equation}
    J \in \mathbb{R}^{m \times p}, \qquad
    J_{ij}
        \coloneqq -\frac{\partial \hat{L}(N_i, D_i;
                   \boldsymbol{\theta})}
                  {\partial \theta_j}
        = \frac{\partial r_i}{\partial \theta_j}.
    \label{eqn:gn_jacobian_def}
\end{equation}
Thus $J_{ij} = \partial r_i / \partial \theta_j$, matching
Section~\ref{sec:gnsetup}.
With this convention, the gradient~\eqref{eqn:gn_grad_component}
can be written in matrix form as:
\begin{equation}
    \nabla_{\boldsymbol{\theta}} S
        = J^T \mathbf{r}(\boldsymbol{\theta}).
    \label{eqn:gn_grad_matrix}
\end{equation}

\noindent\textbf{Step 2: Linearisation of the residuals.}
At a current iterate $\boldsymbol{\theta}_k$, expand each
residual to first order in the step
$\Delta\boldsymbol{\theta}
= \boldsymbol{\theta} - \boldsymbol{\theta}_k$:
\begin{equation}
    r_i(\boldsymbol{\theta}_k + \Delta\boldsymbol{\theta})
        \approx r_i(\boldsymbol{\theta}_k)
        + \sum_{j=1}^{p}
          \frac{\partial r_i}{\partial \theta_j}
          \bigg|_{\boldsymbol{\theta}_k}
          \!\!\Delta\theta_j
        = r_{i,k} + (J_k\, \Delta\boldsymbol{\theta})_i,
    \label{eqn:gn_linearise}
\end{equation}
where $r_{i,k} \coloneqq r_i(\boldsymbol{\theta}_k)$ and
$J_k \coloneqq J(\boldsymbol{\theta}_k)$ is the Jacobian
evaluated at $\boldsymbol{\theta}_k$.
In vector notation:
\begin{equation}
    \mathbf{r}(\boldsymbol{\theta}_k
        + \Delta\boldsymbol{\theta})
        \approx \mathbf{r}_k
        + J_k\, \Delta\boldsymbol{\theta}.
    \label{eqn:gn_linearise_vec}
\end{equation}

\noindent\textbf{Step 3: Substitute into the objective.}
Substituting~\eqref{eqn:gn_linearise_vec}
into~\eqref{eqn:gn_objective}:
\begin{equation}
\begin{aligned}
    S(\boldsymbol{\theta}_k + \Delta\boldsymbol{\theta})
        &\approx \tfrac{1}{2}\,
          \|\mathbf{r}_k + J_k\, \Delta\boldsymbol{\theta}\|^2
    \nonumber \\[4pt]
        &= \tfrac{1}{2}\,
          \bigl(\mathbf{r}_k + J_k\, \Delta\boldsymbol{\theta}
          \bigr)^T
          \bigl(\mathbf{r}_k + J_k\, \Delta\boldsymbol{\theta}
          \bigr)
    \nonumber \\[4pt]
        &= \tfrac{1}{2}\, \mathbf{r}_k^T \mathbf{r}_k
          + \mathbf{r}_k^T J_k\, \Delta\boldsymbol{\theta}
          + \tfrac{1}{2}\, \Delta\boldsymbol{\theta}^T
            J_k^T J_k\, \Delta\boldsymbol{\theta}.
    \label{eqn:gn_quadratic}
\end{aligned}
\end{equation}
This is an \emph{exact} quadratic in
$\Delta\boldsymbol{\theta}$ (no higher-order terms remain
because the linearization has already been applied).
Denote this quadratic model as
$m_k(\Delta\boldsymbol{\theta})$.

\noindent\textbf{Step 4: Stationarity condition.}
At the minimizer of $m_k$, the gradient with respect to
$\Delta\boldsymbol{\theta}$ vanishes:
\begin{align}
    \nabla_{\Delta\boldsymbol{\theta}}\, m_k
        &= J_k^T \mathbf{r}_k
          + J_k^T J_k\, \Delta\boldsymbol{\theta}
        = \mathbf{0},
    \label{eqn:gn_stationarity} \\[4pt]
    (J_k^T J_k)\, \Delta\boldsymbol{\theta}
        &= -J_k^T \mathbf{r}_k.
    \label{eqn:gn_normal_derived}
\end{align}
This is the Gauss-Newton normal
equation.  When
$J_k^T J_k$ is non-singular, the unique solution is
$\Delta\boldsymbol{\theta}
= -(J_k^T J_k)^{-1} J_k^T \mathbf{r}_k$, and the
parameter update is
$\boldsymbol{\theta}_{k+1}
= \boldsymbol{\theta}_k + \Delta\boldsymbol{\theta}$.

\begin{remark}[Relationship to the full Hessian.]
The exact Hessian of $S$ is:
\begin{equation}
    H(\boldsymbol{\theta})
        = \nabla^2 S
        = J^T J + Q,
    \qquad
    Q \coloneqq \sum_{i=1}^{m} r_i\,
        \nabla^2 r_i.
    \label{eqn:gn_full_hessian}
\end{equation}
The Gauss-Newton method drops the second-order correction~$Q$,
replacing $H$ with the \emph{Gauss-Newton approximation}
$J^T J$.  This is justified when $\|\mathbf{r}\|$ is small
(so $Q \approx 0$) or when the model is approximately linear
in~$\boldsymbol{\theta}$ (so $\nabla^2 r_i \approx 0$).
For the Chinchilla and repeated-data scaling laws,
$\hat{L}$ is \emph{linear} in the scale coefficients
$A$ and $B$, so $\nabla^2_{A,B} r_i = 0$ exactly and
$Q_{A,B} = 0$: the Gauss-Newton and full Newton curvatures
\emph{coincide} in the $(A, B)$ sub-block regardless of
residual magnitudes.
\end{remark}

\begin{remark}[Why $J^T J$ controls identifiability.]
When $J^T J$ is well conditioned, the normal
equations~\eqref{eqn:gn_normal_derived} have a unique,
numerically stable solution.  When two columns of $J$ are
nearly proportional, $J^T J$ becomes nearly singular:
\begin{equation}
    \lambda_{\min}(J^T J) \to 0
    \quad \Longrightarrow \quad
    \|(J^T J)^{-1}\| \to \infty,
    \label{eqn:gn_blowup}
\end{equation}
and the step $\Delta\boldsymbol{\theta}$ becomes unbounded
in the direction of the near-null eigenvector.  Under
i.i.d.\ noise with variance $\sigma^2$, one has
\begin{equation}
    \operatorname{Cov}(\hat{\boldsymbol{\theta}})
        = \sigma^2\, (J^T J)^{-1}
    \label{eqn:gn_covariance}
\end{equation}
in the linear least-squares case
(Lemma~\ref{lem:ls_iid_covariance}), and the same
expression approximates the large-sample covariance for
nonlinear least squares
(Remark~\ref{rem:ls_cov_nonlinear}); thus the parameter
variances along the near-null direction
scale as $\sigma^2 / \lambda_{\min}(J^T J)$.  This is the
statistical manifestation of the ill-conditioning: any
parameter combination aligned with the sloppy direction
of $J^T J$ is effectively unconstrained by the data.
The remainder of Section~\ref{sec:formal} shows that,
on each single collinear ray $D = k_\ell N$,
the scale-coefficient columns of $J$ are
nearly proportional whenever
$\varepsilon \ll 1$ (the law-specific exponent gap of
Definition~\ref{def:exponent_gap}; e.g.,
$|\alpha-\beta|$ for Chinchilla), driving
$\lambda_{\min}(J^T J) = \Theta(\varepsilon^2)$ and hence
$\kappa(J^T J) = \Theta(\varepsilon^{-2})$.
With $K > 1$ rays, the bound applies ray-by-ray; the design
escapes ill-conditioning only when the TPP diversity
$V_K \geq \tau_K$ in~\eqref{eqn:tpp_diversity}
(Proposition~\ref{prop:holdout_r2}).
For $K = 2$, this requires a sufficient endpoint spread
$R = k_K/k_1$ (Eq.~\eqref{eqn:R_min} and
Table~\ref{tab:R_min_lookup}).
\end{remark}

\subsection{Appendix - Covariance under homoscedastic i.i.d.\ noise}\label{app:proof_ls_covariance}

\begin{lemma}[Parameter covariance in linear least squares]
\label{lem:ls_iid_covariance}
For any $\mathbf{M} \in \mathbb{R}^{m \times p}$ of full column rank
and $\boldsymbol{\theta}^\star \in \mathbb{R}^p$, suppose
\[
    L_i = (\mathbf{M}\boldsymbol{\theta}^\star)_i + \epsilon_i,
    \qquad i = 1, \dots, m,
\]
where $\{\epsilon_i\}_{i=1}^m$ are i.i.d.\ with mean zero and
variance~$\sigma^2$.
Let $\hat{\boldsymbol{\theta}}$ minimize
$\sum_{i=1}^{m} \bigl(L_i - (\mathbf{M}\boldsymbol{\theta})_i\bigr)^2$,
and let $J \in \mathbb{R}^{m \times p}$ be the Jacobian of the
residuals
$r_i(\boldsymbol{\theta})
\coloneqq L_i - (\mathbf{M}\boldsymbol{\theta})_i$
with respect to~$\boldsymbol{\theta}$ (holding $\mathbf{M}$ fixed).
Then $J = -\mathbf{M}$ and
\begin{equation}
    \operatorname{Cov}(\hat{\boldsymbol{\theta}})
        = \sigma^2\, (J^T J)^{-1}.
    \label{eqn:lem_ls_cov}
\end{equation}
\end{lemma}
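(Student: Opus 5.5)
The claim is the standard linear least-squares covariance identity, so I would proceed by direct computation in three steps.

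\emph{Step 1 (the Jacobian).} Since $r_i(\boldsymbol{\theta}) = L_i - \sum_j M_{ij}\theta_j$ is affine in $\boldsymbol{\theta}$, we get $\partial r_i/\partial\theta_j = -M_{ij}$. This gives $J = -\mathbf{M}$, a constant matrix, and hence $J^T J = \mathbf{M}^T\mathbf{M}$. Full column rank of $\mathbf{M}$ makes $\mathbf{M}^T\mathbf{M}$ symmetric positive definite, so it is invertible.

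\emph{Step 2 (closed form for the estimator).} The objective $\|\mathbf{L} - \mathbf{M}\boldsymbol{\theta}\|^2$ is a strictly convex quadratic, because its Hessian is $2\mathbf{M}^T\mathbf{M} \succ 0$. Its unique minimizer is therefore characterized by the normal equations. These are exactly \eqref{eqn:gn_normal_derived} with exact rather than approximate linearization, so they give $\hat{\boldsymbol{\theta}} = (\mathbf{M}^T\mathbf{M})^{-1}\mathbf{M}^T\mathbf{L}$. Substituting $\mathbf{L} = \mathbf{M}\boldsymbol{\theta}^\star + \boldsymbol{\epsilon}$ yields
\[
\hat{\boldsymbol{\theta}} = \boldsymbol{\theta}^\star + P\boldsymbol{\epsilon},
\qquad P \coloneqq (\mathbf{M}^T\mathbf{M})^{-1}\mathbf{M}^T .
\]

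\emph{Step 3 (the covariance).} The i.i.d.\ mean-zero, variance-$\sigma^2$ assumption gives $\mathbb{E}[\boldsymbol{\epsilon}] = \mathbf{0}$ and $\operatorname{Cov}(\boldsymbol{\epsilon}) = \sigma^2 I_m$. Only uncorrelatedness and homoscedasticity are actually needed; Gaussianity is not. Since $\hat{\boldsymbol{\theta}}$ is an affine function of $\boldsymbol{\epsilon}$, it is unbiased, and
\[
\operatorname{Cov}(\hat{\boldsymbol{\theta}}) = P(\sigma^2 I_m)P^T = \sigma^2 (\mathbf{M}^T\mathbf{M})^{-1}\mathbf{M}^T\mathbf{M}(\mathbf{M}^T\mathbf{M})^{-1} = \sigma^2(\mathbf{M}^T\mathbf{M})^{-1}.
\]
Rewriting $\mathbf{M}^T\mathbf{M} = J^T J$ via Step 1 gives \eqref{eqn:lem_ls_cov}.

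There is no real obstacle here. The only points needing care are the sign convention $J = -\mathbf{M}$, which cancels in $J^TJ$, and invoking full column rank both for uniqueness of the minimizer and for invertibility.
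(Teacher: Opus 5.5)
Your proof is correct and matches the paper's argument step for step. Both differentiate to get $J=-\mathbf{M}$, solve the normal equations to obtain $\hat{\boldsymbol{\theta}}=(\mathbf{M}^T\mathbf{M})^{-1}\mathbf{M}^T\mathbf{L}$, substitute $\mathbf{L}=\mathbf{M}\boldsymbol{\theta}^\star+\boldsymbol{\epsilon}$, and compute the covariance as $(\mathbf{M}^T\mathbf{M})^{-1}\mathbf{M}^T(\sigma^2 I_m)\mathbf{M}(\mathbf{M}^T\mathbf{M})^{-1}$; your two extra remarks (strict convexity for uniqueness of the minimizer, and needing only uncorrelated homoscedastic noise) are accurate points the paper leaves implicit.
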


\begin{proof}
Differentiating
$r_i(\boldsymbol{\theta})
= L_i - \sum_{j=1}^{p} M_{ij}\,\theta_j$
gives
$\partial r_i / \partial \theta_j = -M_{ij}$, hence
$J = -\mathbf{M}$ and $J^T J = \mathbf{M}^T \mathbf{M}$.
The normal equations yield
$\mathbf{M}^T \mathbf{M}\, \hat{\boldsymbol{\theta}}
= \mathbf{M}^T \mathbf{L}$ with
$\mathbf{L} \coloneqq (L_1, \dots, L_m)^T$, so
\[
    \hat{\boldsymbol{\theta}}
        = (\mathbf{M}^T \mathbf{M})^{-1}
          \mathbf{M}^T \mathbf{L}.
\]
Substituting
$\mathbf{L}
= \mathbf{M}\boldsymbol{\theta}^\star + \boldsymbol{\epsilon}$ with
$\boldsymbol{\epsilon} \coloneqq (\epsilon_1, \dots, \epsilon_m)^T$:
\[
    \hat{\boldsymbol{\theta}} - \boldsymbol{\theta}^\star
        = (\mathbf{M}^T \mathbf{M})^{-1}
          \mathbf{M}^T \boldsymbol{\epsilon}.
\]
Because $\mathbb{E}[\boldsymbol{\epsilon}] = \mathbf{0}$,
also $\mathbb{E}[\hat{\boldsymbol{\theta}}] = \boldsymbol{\theta}^\star$.
Because
$\operatorname{Cov}(\boldsymbol{\epsilon}) = \sigma^2 I_m$,
\begin{align*}
    \operatorname{Cov}(\hat{\boldsymbol{\theta}})
        &= (\mathbf{M}^T \mathbf{M})^{-1} \mathbf{M}^T
           (\sigma^2 I_m)\,
           \mathbf{M}\, (\mathbf{M}^T \mathbf{M})^{-1} \\
        &= \sigma^2\, (\mathbf{M}^T \mathbf{M})^{-1}
         = \sigma^2\, (J^T J)^{-1}.
         \qedhere
\end{align*}
\end{proof}

\begin{remark}[Nonlinear least squares]
\label{rem:ls_cov_nonlinear}
Lemma~\ref{lem:ls_iid_covariance} applies directly when the
predictor $\hat{L}(N_i, D_i; \boldsymbol{\theta})$ is affine
in~$\boldsymbol{\theta}$ on the design (e.g., fitting
$(A,B,E)$ in~\eqref{eqn:chinchilla} with
$(\alpha, \beta)$ held fixed).
For general nonlinear $\hat{L}$, the same matrix
$\sigma^2 (J^T J)^{-1}$ with
$J_{ij} = \partial r_i / \partial \theta_j$ evaluated at the
true parameter is the standard large-sample approximation to
$\operatorname{Cov}(\hat{\boldsymbol{\theta}})$ under
regularity conditions for nonlinear least
squares~\citep{bates1988nonlinear}.
\end{remark}

\subsection{Appendix - Proof of Proposition~\ref{prop:full_cond} (Full-Matrix Conditioning)}
\label{app:proof_full_cond}

\begin{proposition*}[Full-matrix conditioning - full statement;
    restated from Proposition~\ref{prop:full_cond}]
    Let $J \in \mathbb{R}^{m \times p}$ be the Jacobian of the
    residuals with columns
    $\mathbf{j}_1, \dots, \mathbf{j}_p \in \mathbb{R}^m$
    (i.e., $J = [\,\mathbf{j}_1 \mid \cdots \mid \mathbf{j}_p\,]$),
    so that $J_{ij} = (\mathbf{j}_j)_i$.
    Suppose that two columns, indexed by
    $a, b \in \{1, \dots, p\}$ with $a \neq b$, satisfy
    the near-proportionality condition
    $\mathbf{j}_b = c\,\mathbf{j}_a + \boldsymbol{\delta}$
    with $c \neq 0$, $\boldsymbol{\delta} \in \mathbb{R}^m$, and
    $\|\boldsymbol{\delta}\| = O(\varepsilon)$, where
    $\varepsilon > 0$ is the law-specific exponent gap.
    Define the $m \times 2$ matrix formed by extracting these
    two columns from $J$:
    \begin{equation}
        J_1 \coloneqq
            [\,\mathbf{j}_a \mid \mathbf{j}_b\,]
            \in \mathbb{R}^{m\times 2},
        \qquad\text{i.e.,}\quad
        (J_1)_{i1} = J_{ia},
        \;\;
        (J_1)_{i2} = J_{ib},
        \label{eqn:J1_def}
    \end{equation}
    Since $J_1^T J_1$ is a $2 \times 2$ real symmetric
    positive-semidefinite matrix (symmetry:
$(J_1^TJ_1)^T = J_1^TJ_1$; positive semidefiniteness:
$\mathbf{v}^T J_1^TJ_1\,\mathbf{v}
= \|J_1\mathbf{v}\|^2 \geq 0$ for all
$\mathbf{v} \in \mathbb{R}^2$), the spectral theorem
guarantees that it possesses an orthonormal eigenbasis.
Let $\mathbf{w} \in \mathbb{R}^{2}$ be a unit eigenvector
corresponding to its smallest eigenvalue:
\begin{equation}
    (J_1^T J_1)\,\mathbf{w}
        = \lambda_{\min}(J_1^T J_1)\,\mathbf{w},
    \qquad
    \|\mathbf{w}\| = 1.
    \label{eqn:w_eigenvector_def}
\end{equation}
    Construct the test vector
    $\tilde{\mathbf{w}} \in \mathbb{R}^p$ by placing the entries
    of $\mathbf{w}$ in positions $a$ and $b$ and zeros elsewhere:
    \begin{equation}
        \tilde{w}_j \coloneqq
        \begin{cases}
            w_1 & \text{if } j = a, \\
            w_2 & \text{if } j = b, \\
            0   & \text{otherwise}.
        \end{cases}
        \label{eqn:w_tilde_def}
    \end{equation}
    Then $\|\tilde{\mathbf{w}}\| = \|\mathbf{w}\| = 1$ and:
    \begin{equation}
        \lambda_{\min}(J^T J)
            \leq \tilde{\mathbf{w}}^T (J^T J)\,\tilde{\mathbf{w}}
             = \lambda_{\min}(J_1^T J_1)
             = O(\varepsilon^2),
        \label{eqn:rayleigh_lambdamin}
    \end{equation}
    and consequently, under the global dominance
    assumption (Section~\ref{sec:gnsetup}),
    \begin{equation}
        \kappa(J^T J) = \Theta(\varepsilon^{-2})
        \quad\text{with}\quad
        \lambda_{\max}(J^T J) = \Theta(1).
        \tag{\ref{eqn:rayleigh_bound}}
    \end{equation}
\end{proposition*}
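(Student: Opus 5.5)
The plan has three parts: the Rayleigh-quotient argument that the restated proposition already sets up for $\lambda_{\min}$, a matching lower bound from the dominance assumption, and a separate bound for $\lambda_{\max}$.

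\textbf{Step 1: the $2\times 2$ block.} First bound $\lambda_{\min}(J_1^T J_1)$ directly. Take the unnormalised test vector $\mathbf{v} = (-c, 1)^T$. Then $J_1\mathbf{v} = \mathbf{j}_b - c\,\mathbf{j}_a = \boldsymbol{\delta}$, so
\[
\frac{\mathbf{v}^T J_1^T J_1 \mathbf{v}}{\|\mathbf{v}\|^2} = \frac{\|\boldsymbol{\delta}\|^2}{1+c^2} = O(\varepsilon^2).
\]
By Courant--Fischer this gives $\lambda_{\min}(J_1^T J_1) = O(\varepsilon^2)$. The proportionality constant depends on $c$ but not on $\varepsilon$, and $c\neq 0$ stays fixed (for Chinchilla, $c$ is essentially $k^{-\beta}$; see Table~\ref{tab:jacobian_summary}).

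\textbf{Step 2: lift to $J^TJ$.} Embed the eigenvector $\mathbf{w}$ as $\tilde{\mathbf{w}}$ via~\eqref{eqn:w_tilde_def}. Then $J\tilde{\mathbf{w}} = w_1\mathbf{j}_a + w_2\mathbf{j}_b = J_1\mathbf{w}$, so
\[
\tilde{\mathbf{w}}^T J^T J\tilde{\mathbf{w}} = \mathbf{w}^T J_1^T J_1\mathbf{w} = \lambda_{\min}(J_1^T J_1).
\]
Since $\|\tilde{\mathbf{w}}\|=1$, the Rayleigh bound yields~\eqref{eqn:rayleigh_lambdamin}.

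\textbf{Step 3: $\lambda_{\max}$.} The lower bound is $\lambda_{\max}(J^TJ) \ge (J^TJ)_{aa} = \|\mathbf{j}_a\|^2$, which is a fixed positive quantity for the given design (e.g.\ $\sum_i N_i^{-2\alpha}$ for Chinchilla). The upper bound is $\lambda_{\max} \le \operatorname{tr}(J^TJ) = \sum_j \|\mathbf{j}_j\|^2$, which stays bounded as $\varepsilon\to 0$ because every column is a continuous function of the exponents at fixed design and parameters. Hence $\lambda_{\max} = \Theta(1)$.

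\textbf{Step 4: matching lower bound on $\lambda_{\min}$ (main obstacle).} The Rayleigh argument only gives an upper bound, and $\Theta$ needs $\lambda_{\min}(J^TJ) \gtrsim \varepsilon^2$. This does not follow from $\|\boldsymbol{\delta}\| = O(\varepsilon)$ alone: one also needs $\|\boldsymbol{\delta}\| = \Omega(\varepsilon)$, together with the requirement that no other near-dependency among columns is stronger. I would take this from the global dominance assumption of Section~\ref{sec:gnsetup}, which says the $(A,B)$ pair is the dominant source of ill-conditioning, so $\kappa(J^TJ)\asymp\kappa_{A,B}$.

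That reduces the problem to the $2\times2$ block. For it, write
\[
\det(J_1^TJ_1) = \|\mathbf{j}_a\|^2\|\mathbf{j}_b\|^2 - \langle \mathbf{j}_a,\mathbf{j}_b\rangle^2 = \|\mathbf{j}_a\|^2\,\|P^\perp_{\mathbf{j}_a}\boldsymbol{\delta}\|^2,
\]
where $P^\perp_{\mathbf{j}_a}$ is the projection orthogonal to $\mathbf{j}_a$. Then $\lambda_{\min}(J_1^TJ_1) = \det/\lambda_{\max} \asymp \|P^\perp\boldsymbol{\delta}\|^2$.

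The concrete case remaining is to show $\|P^\perp\boldsymbol{\delta}\| \asymp \varepsilon$. For the table's laws, $\boldsymbol{\delta}_i \approx c\,\varepsilon\, N_i^{-\alpha}\log N_i$ to first order. The component of this vector orthogonal to $(N_i^{-\alpha})_i$ has squared norm proportional to $\Phi_2\,\sigma_w^2(\log N)\,\varepsilon^2$, which is nonzero whenever there are at least two distinct model sizes.

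Combining Steps 3 and 4 gives $\kappa = \Theta(\varepsilon^{-2})$.
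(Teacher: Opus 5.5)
Your proposal is correct and follows the paper's proof in structure: the padded minimal eigenvector $\tilde{\mathbf{w}}$ transfers $\lambda_{\min}(J_1^TJ_1)$ to $J^TJ$ via Rayleigh--Ritz, $\lambda_{\max}(J^TJ)$ is bounded below by a diagonal entry (the paper uses $\operatorname{tr}(J_1^TJ_1)/2$ instead) and above by $\operatorname{tr}(J^TJ)$, and the matching bound rests on the global dominance assumption. The differences are local. You bound the $2\times 2$ block with the explicit test vector $(-c,1)^T$, which gives $\lambda_{\min}(J_1^TJ_1)\le\|\boldsymbol{\delta}\|^2/(1+c^2)$ without the paper's determinant/trace route via $\det G = s\|\boldsymbol{\delta}\|^2-\eta^2$; and you make the $\Omega(\varepsilon^2)$ lower bound explicit through $\det G=\|\mathbf{j}_a\|^2\|P^\perp_{\mathbf{j}_a}\boldsymbol{\delta}\|^2\propto\Phi_2^2\,\sigma_w^2(\log N)\,\varepsilon^2$, which the paper's proof of this proposition folds into the dominance assumption and carries out only in the per-law appendices via Lemma~\ref{lem:cs_gap}.
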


We proceed in three steps: (i)~show that Proposition~\ref{prop:full_cond}'s assumption $\mathbf{j}_b = c\,\mathbf{j}_a + \boldsymbol{\delta}$ forces
$\lambda_{\min}(J_1^TJ_1) = O(\varepsilon^2)$;
(ii)~use the Rayleigh quotient to transfer this to the full
$p \times p$ Gram matrix; (iii)~conclude the condition-number
bound.

\noindent\textbf{Step 1: $\lambda_{\min}(J_1^TJ_1) = O(\varepsilon^2)$.}

By the near proportionality assumption of Proposition~\ref{prop:full_cond},
$\mathbf{j}_b = c\,\mathbf{j}_a + \boldsymbol{\delta}$ with
$\|\boldsymbol{\delta}\| = O(\varepsilon)$.  The $2 \times 2$
Gram matrix of $J_1 = [\,\mathbf{j}_a \mid \mathbf{j}_b\,]$ is:
\begin{equation}
    G \coloneqq J_1^T J_1
        = \begin{pmatrix}
            \|\mathbf{j}_a\|^2 &
                \mathbf{j}_a^T \mathbf{j}_b \\[3pt]
            \mathbf{j}_b^T \mathbf{j}_a &
                \|\mathbf{j}_b\|^2
          \end{pmatrix}.
    \label{eqn:G_2x2}
\end{equation}
Substituting $\mathbf{j}_b = c\,\mathbf{j}_a
+ \boldsymbol{\delta}$:
\begin{align}
    \|\mathbf{j}_b\|^2
        &= c^2\,\|\mathbf{j}_a\|^2
          + 2c\,\mathbf{j}_a^T\boldsymbol{\delta}
          + \|\boldsymbol{\delta}\|^2,
    \label{eqn:jb_norm} \\[3pt]
    \mathbf{j}_a^T \mathbf{j}_b
        &= c\,\|\mathbf{j}_a\|^2
          + \mathbf{j}_a^T\boldsymbol{\delta}.
    \label{eqn:ja_jb_inner}
\end{align}
Write $s \coloneqq \|\mathbf{j}_a\|^2 > 0$ and
$\eta \coloneqq \mathbf{j}_a^T\boldsymbol{\delta}$ (so
$|\eta| \leq \|\mathbf{j}_a\|\,\|\boldsymbol{\delta}\|
= O(\varepsilon)$ by Cauchy-Schwarz).  Then:
\[
    G = \begin{pmatrix}
            s & cs + \eta \\[2pt]
            cs + \eta &
                c^2 s + 2c\eta + \|\boldsymbol{\delta}\|^2
        \end{pmatrix}.
\]
The determinant is:
\begin{align}
    \det(G)
        &= s\,(c^2 s + 2c\eta + \|\boldsymbol{\delta}\|^2)
          - (cs + \eta)^2
    \nonumber \\[3pt]
        &= s\, c^2 s + 2sc\eta + s\|\boldsymbol{\delta}\|^2
          - c^2 s^2 - 2cs\eta - \eta^2
    \nonumber \\[3pt]
        &= s\,\|\boldsymbol{\delta}\|^2 - \eta^2.
    \label{eqn:det_G_expanded}
\end{align}
By the Cauchy-Schwarz inequality applied to $\mathbf{j}_a$
and $\boldsymbol{\delta}$:
$\eta^2 = (\mathbf{j}_a^T\boldsymbol{\delta})^2
\leq \|\mathbf{j}_a\|^2\,\|\boldsymbol{\delta}\|^2
= s\,\|\boldsymbol{\delta}\|^2$,
with equality if and only if $\boldsymbol{\delta}$ is
proportional to $\mathbf{j}_a$ (which would make
$\mathbf{j}_b$ exactly proportional to $\mathbf{j}_a$ and
hence $\det(G) = 0$).  In general:
\begin{equation}
    0 \;\leq\; \det(G)
        \;=\; s\,\|\boldsymbol{\delta}\|^2 - \eta^2
        \;\leq\; s\,\|\boldsymbol{\delta}\|^2
        \;=\; O(\varepsilon^2).
    \label{eqn:det_G_bound}
\end{equation}
Since $G$ is a $2 \times 2$ positive semidefinite matrix with
$\operatorname{tr}(G)
= s + c^2 s + O(\varepsilon) = \Theta(1)$,
its eigenvalues $\lambda_+ \geq \lambda_- \geq 0$ satisfy
$\lambda_+ + \lambda_- = \operatorname{tr}(G) = \Theta(1)$
and $\lambda_+ \cdot \lambda_- = \det(G) = O(\varepsilon^2)$.
Therefore:
\begin{equation}
    \lambda_-(G)
        = \frac{\det(G)}{\lambda_+(G)}
        = \frac{O(\varepsilon^2)}{\Theta(1)}
        = O(\varepsilon^2).
    \label{eqn:lam_min_G}
\end{equation}

Let $\mathbf{w} \in \mathbb{R}^2$ be a unit eigenvector of $G$
corresponding to $\lambda_- = \lambda_{\min}(G)$, as defined
in~\eqref{eqn:w_eigenvector_def}.  Then:
\begin{equation}
    \mathbf{w}^T G\, \mathbf{w}
        = \lambda_{\min}(G)
        = O(\varepsilon^2).
    \label{eqn:w_quadratic}
\end{equation}

\noindent\textbf{Step 2: Rayleigh-quotient transfer to the full system.}

Partition $J = [\,J_1 \mid J_2\,]$ where
$J_1 \in \mathbb{R}^{m \times 2}$ consists of columns $a$ and $b$,
and $J_2 \in \mathbb{R}^{m \times (p-2)}$ contains all remaining
columns.  Construct the padded test vector
$\tilde{\mathbf{w}} \in \mathbb{R}^p$ as in the proposition
statement ($\tilde{w}_a = w_1$, $\tilde{w}_b = w_2$,
$\tilde{w}_j = 0$ otherwise), so that
$\|\tilde{\mathbf{w}}\| = \|\mathbf{w}\| = 1$.

The full Gram matrix has block structure:
\begin{equation}
    J^T J
        = \begin{pmatrix}
            J_1^T J_1 & J_1^T J_2 \\[2pt]
            J_2^T J_1 & J_2^T J_2
          \end{pmatrix}.
    \label{eqn:JTJ_block}
\end{equation}
Evaluating the quadratic form at $\tilde{\mathbf{w}}$:
\begin{align}
    \tilde{\mathbf{w}}^T (J^T J)\, \tilde{\mathbf{w}}
        &= \begin{pmatrix} \mathbf{w} \\ \mathbf{0} \end{pmatrix}^{\!T}
           \begin{pmatrix}
               J_1^T J_1 & J_1^T J_2 \\
               J_2^T J_1 & J_2^T J_2
           \end{pmatrix}
           \begin{pmatrix} \mathbf{w} \\ \mathbf{0} \end{pmatrix}
    \nonumber \\[4pt]
        &= \mathbf{w}^T (J_1^T J_1)\, \mathbf{w}
          + \underbrace{\mathbf{0}^T (J_2^T J_1)\, \mathbf{w}}_{=\,0}
          + \underbrace{\mathbf{w}^T (J_1^T J_2)\, \mathbf{0}}_{=\,0}
          + \underbrace{\mathbf{0}^T (J_2^T J_2)\, \mathbf{0}}_{=\,0}
    \nonumber \\[4pt]
        &= \mathbf{w}^T (J_1^T J_1)\, \mathbf{w}
         = \lambda_{\min}(J_1^T J_1)
         = O(\varepsilon^2).
    \label{eqn:rayleigh_expansion}
\end{align}
By the Rayleigh-Ritz principle,
$\lambda_{\min}(M)=\min_{\|\mathbf{v}\|=1}\mathbf{v}^T M\mathbf{v}$,
so every unit~$\mathbf{v}$ satisfies
$\lambda_{\min}(M)\leq \mathbf{v}^T M\mathbf{v}$:
\begin{equation}
    \lambda_{\min}(J^T J)
        \leq \tilde{\mathbf{w}}^T (J^T J)\, \tilde{\mathbf{w}}
        = O(\varepsilon^2).
    \label{eqn:rayleigh_final}
\end{equation}

\noindent\textbf{Step 3: Condition-number bound.}

For the upper bound on $\lambda_{\max}$, note that
$\lambda_{\max}(J^T J)$ is at least
$\lambda_{\max}(J_1^T J_1) \geq \operatorname{tr}(G)/2
= \Theta(1)$
(since $G$ has two eigenvalues summing to
$\operatorname{tr}(G) = \Theta(1)$, the larger is at least
half the trace).  More generally,
$\lambda_{\max}(J^T J) \leq \operatorname{tr}(J^T J)
= \|J\|_F^2 = \Theta(1)$ for any non-degenerate dataset
(all columns of $J$ have bounded norms).  Therefore
$\lambda_{\max}(J^T J) = \Theta(1)$, and:
\begin{equation}
    \kappa(J^T J)
        = \frac{\lambda_{\max}(J^T J)}
               {\lambda_{\min}(J^T J)}
        \geq \frac{\Theta(1)}{O(\varepsilon^2)}
        = \Omega(\varepsilon^{-2}).
    \label{eqn:kappa_final}
\end{equation}
The matching upper bound
$\kappa(J^T J) = O(\varepsilon^{-2})$ follows from the
global dominance assumption (Section~\ref{sec:gnsetup}):
since $(A,B)$ is the only source of small eigenvalues,
$\lambda_{\min}(J^T J) = \Theta(\varepsilon^2)$,
yielding
$\kappa(J^T J) = \Theta(\varepsilon^{-2})$.\qed

\subsection{Appendix - Cauchy-Schwarz gap identity}\label{app:proof_cs_gap}

\begin{lemma}[Cauchy-Schwarz gap identity]
\label{lem:cs_gap}
Let $\mathbf{f}, \mathbf{g} \in \mathbb{R}^m$ with
$g_i = c\, f_i\, h_i$ for a scalar $c \neq 0$ and
arbitrary $h_i > 0$.  Define the power-law weights
$w_i \coloneqq f_i^2 \!\big/ \sum_j f_j^2$ and the
weighted variance
$\sigma_w^2(h) \coloneqq
  \sum_i w_i\, h_i^2 - \bigl(\sum_i w_i\, h_i\bigr)^{\!2}$.
Then:
\begin{equation}
    \det\!\begin{pmatrix}
        \mathbf{f}^T\mathbf{f} \!\!\!\! &
            \mathbf{f}^T\mathbf{g} \\
        \mathbf{g}^T\mathbf{f} \!\!\!\! &
            \mathbf{g}^T\mathbf{g}
    \end{pmatrix}
    = c^2 \Bigl(\textstyle\sum_i f_i^2\Bigr)^{\!2}\;
      \sigma_w^2(h).
    \label{eqn:cs_gap}
\end{equation}
\end{lemma}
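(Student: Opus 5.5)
We compute the three Gram entries directly and then regroup them. Write $F \coloneqq \sum_i f_i^2 > 0$. Substituting $g_i = c f_i h_i$ gives
\[
\mathbf{f}^T\mathbf{f} = F,\qquad
\mathbf{f}^T\mathbf{g} = c\sum_i f_i^2 h_i = cF\sum_i w_i h_i,\qquad
\mathbf{g}^T\mathbf{g} = c^2\sum_i f_i^2 h_i^2 = c^2F\sum_i w_i h_i^2 .
\]
Each entry is a power-law-weighted moment of $h$, with weights $w_i = f_i^2/F$. The degenerate case $F=0$ is excluded, since otherwise the weights are undefined. We assume $\mathbf{f}\neq\mathbf{0}$, which holds for every Jacobian column in Table~\ref{tab:jacobian_summary} because $N_i^{-\alpha}>0$.

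Next we expand the $2\times 2$ determinant:
\[
\det = F\cdot c^2F\sum_i w_i h_i^2 - \Bigl(cF\sum_i w_i h_i\Bigr)^{2}
= c^2F^2\Bigl[\sum_i w_i h_i^2 - \Bigl(\sum_i w_i h_i\Bigr)^{2}\Bigr].
\]
The bracket is exactly $\sigma_w^2(h)$, which yields \eqref{eqn:cs_gap}.

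As a sanity check tying back to Step~1 of the proof of Proposition~\ref{prop:full_cond}, we note two consequences. First, $\sigma_w^2(h)\ge 0$ because $w_i\ge 0$ and $\sum_i w_i = 1$; this is Jensen's inequality, equivalently Cauchy--Schwarz, and recovers $\det(G)\ge 0$. Second, equality holds iff $h$ is constant on the support of $\mathbf{w}$, which is exactly the case $\boldsymbol{\delta}\parallel\mathbf{j}_a$.

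There is no real obstacle. The only point needing care is confirming that the normalization of $w_i$ makes the factor $F^2$ come out cleanly, and that the sign of $c$ is irrelevant because only $c^2$ appears. The positivity assumption $h_i>0$ is not needed for the identity itself. It matters only in the application, where $h_i = N_i^{\pm\varepsilon}$ (times constants absorbed into $c$) and $\sigma_w^2(h) = \Theta(\varepsilon^2)\,\sigma_w^2(\log N)$ follows by expanding $N_i^{\varepsilon} = 1+\varepsilon\log N_i + O(\varepsilon^2)$.
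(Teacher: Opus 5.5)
Your proposal is correct and follows essentially the same route as the paper: you expand the three Gram entries under $g_i = c f_i h_i$, then factor $c^2(\sum_i f_i^2)^2$ out of the determinant, which leaves exactly the weighted variance $\sigma_w^2(h)$. Your side remarks are accurate additions that the paper leaves implicit: the need for $\mathbf{f}\neq\mathbf{0}$, nonnegativity and the equality case, and the fact that $h_i>0$ is never used in the identity itself.
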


\begin{proof}
Expand:
$\mathbf{f}^T\mathbf{g} = c\sum_i f_i^2 h_i$,
$\mathbf{g}^T\mathbf{g} = c^2\sum_i f_i^2 h_i^2$,
so
$\det = (\sum_i f_i^2)(c^2\sum_i f_i^2 h_i^2) - c^2(\sum_i f_i^2 h_i)^2
      = c^2(\sum_i f_i^2)^2[\sum_i w_i h_i^2 - (\sum_i w_i h_i)^2]$.
\end{proof}

\noindent For $h_i = N_i^{\eta}$ with $|\eta|$ small, a Taylor
expansion gives:
\begin{equation}
    \sigma_w^2(N_i^{\eta})
        = \eta^2\,\sigma_w^2(\log N_i)
          + O(|\eta|^3),
    \label{eqn:taylor_var}
\end{equation}
For Chinchilla, take $\eta = \alpha - \beta$ and
$\varepsilon \coloneqq |\alpha-\beta|$
(Definition~\ref{def:exponent_gap}), so
$\eta^2 = \varepsilon^2$.
Thus the same bound applies to
$h_i = N_i^{\alpha-\beta}$,
yielding a non-asymptotic condition-number bound depending
on $\varepsilon$, the sample size $n$, and the spread of $\{N_i\}$.

\subsection{Appendix - Chinchilla scaling law: detailed analysis}\label{app:chinchilla_details}

The Chinchilla parameter vector is
$\boldsymbol{\theta} = [A, B, E, \alpha, \beta]^T$ ($p = 5$).
The following analysis holds on a single collinear ray
$D_i = k_\ell N_i$ (i.e., $K = 1$ or a single fixed
ratio $k_\ell$ from the design);
the scale-coefficient columns then satisfy:
\begin{equation}
    \frac{\partial \hat{L}_i}{\partial B}
        = \underbrace{k_\ell^{-\beta}}_{c} \, N_i^{-\alpha}
          \, N_i^{\alpha-\beta},
    \label{eqn:partial_linear_dependence}
\end{equation}
with $\varepsilon \coloneqq |\alpha - \beta|$.
The corresponding residual Jacobian entries are
$J_{iA} = -\partial \hat{L}_i / \partial A$ and
$J_{iB} = -\partial \hat{L}_i / \partial B$, but
$(J^T J)_{A,B}$ equals the Gram matrix of
$(\partial \hat{L}/\partial A,\,\partial \hat{L}/\partial B)$
because both columns pick up the same sign.
The determinant of the $(A,B)$ sub-block of $J^T J$ satisfies:
\begin{equation}
    \det\!\left((J^T J)_{A,B}\right)
        = O(\varepsilon^2),
    \label{eqn:vanishing_determinant}
\end{equation}
so $\kappa((J^T J)_{A,B}) = \Theta(\varepsilon^{-2})$.
Applying Proposition~\ref{prop:full_cond}
(under the $(A,B)$-dominance assumption of Section~\ref{sec:gnsetup}),
$\kappa(J^T J) = \Theta(\varepsilon^{-2})$ for the full $5\times 5$ system.
For the empirical gap $|\alpha - \beta| \approx 0.06$,
the condition number is on the order of $10^2$-$10^3$.

\noindent\textbf{Explicit finite-sample bound.}
This subsection provides the explicit finite-sample bound for the Chinchilla
law~\eqref{eqn:chinchilla} on a single collinear ray
($K = 1$, $D_i = k_\ell N_i$ for a fixed ratio~$k_\ell$).

Applying Lemma~\ref{lem:cs_gap} with
$f_i = N_i^{-\alpha}$, $c = k_\ell^{-\beta}$, and
$h_i = N_i^{\alpha-\beta}$, the sub-block determinant
admits the exact factorisation:
\begin{equation}
    \det\!\left((J^T J)_{A,B}\right)
        = k_\ell^{-2\beta}\,\Phi_2^{\,2}\;
          \sigma_w^2\!\bigl(N_i^{\alpha-\beta}\bigr),
    \label{eqn:chin_exact_det}
\end{equation}
where
$\Phi_2 \coloneqq \sum_{i=1}^{n} N_i^{-2\alpha}$
and the weights are
$w_i = N_i^{-2\alpha}/\Phi_2$.
Using the Taylor expansion~\eqref{eqn:taylor_var}:
\begin{equation}
    \det\!\left((J^T J)_{A,B}\right)
        \approx k_\ell^{-2\beta}\,\Phi_2^{\,2}\;
          \varepsilon^2\;\sigma_w^2(\log N),
    \label{eqn:chin_approx_det}
\end{equation}
with $\sigma_w^2(\log N)
= \sum_i w_i (\log N_i)^2
  - \bigl(\sum_i w_i \log N_i\bigr)^{\!2}$.
The condition number of the sub-block is therefore:
\begin{equation}
    \kappa\bigl((J^T J)_{A,B}\bigr)
        \approx
        \frac{\operatorname{tr}\!\bigl((J^T J)_{A,B}\bigr)^2}
             {k_\ell^{-2\beta}\,\Phi_2^{\,2}\;
              \varepsilon^2\;\sigma_w^2(\log N)}\,,
    \label{eqn:chin_kappa_explicit}
\end{equation}
which makes the three sources of ill-conditioning explicit:
(i) small exponent gap $\varepsilon \to 0$;
(ii) narrow model-size range, i.e.\ small
$\sigma_w^2(\log N)$ when the $N_i$ are clustered; and
(iii) finite sample effects through $\Phi_2$ and the weights $w_i$,
which concentrate on the \emph{smallest} models and reduce
the effective sample diversity.
For the empirical gap
$|\alpha - \beta| \approx 0.06$, equation~\eqref{eqn:chin_kappa_explicit} yields a condition number on
the order of $10^2$-$10^3$, rendering the system numerically
ill-conditioned.\qed

\subsection{Appendix - Full Jacobian and tight conditioning}\ifjournal\else\label{app:proof_tight_bound}\fi

\noindent\textbf{Full Jacobian analysis.}
The preceding argument concerns only the $A$ and $B$ columns.
We now list the five model sensitivities
$\partial\hat{L}_i/\partial\theta_j$ on a single collinear ray
$D_i = k_\ell N_i$; by~\eqref{eqn:gn_jacobian_def}, the residual Jacobian
$J$ is obtained by negating each of these columns, so
$J^T J$ equals the Gram matrix of the sensitivities below.  We claim the
$(E,\alpha,\beta)$ coordinates do not rescue the conditioning.  The complete set of
partial derivatives is:
\begin{align}
    \frac{\partial \hat{L}_i}{\partial A} &= N_i^{-\alpha},
    &\qquad
    \frac{\partial \hat{L}_i}{\partial B} &= k_\ell^{-\beta}\,
        N_i^{-\beta},
    \nonumber \\[4pt]
    \frac{\partial \hat{L}_i}{\partial E} &= 1,
    &\qquad
    \frac{\partial \hat{L}_i}{\partial \alpha}
        &= -A\, N_i^{-\alpha} \log N_i,
    \nonumber \\[4pt]
    \frac{\partial \hat{L}_i}{\partial \beta}
        &= -B\, k_\ell^{-\beta}\, N_i^{-\beta}
           (\log k_\ell + \log N_i).
    \label{eqn:chin_all_cols}
\end{align}
Observe that (writing $\mathbf{j}_u$ for the $m$-vector with entries
$\partial\hat{L}_i/\partial u$)
$\mathbf{j}_\alpha = -A\,\operatorname{diag}(\log N_i)\,\mathbf{j}_A$
and
$\mathbf{j}_\beta = -B\,(\log k_\ell)\,\mathbf{j}_B
    - B\,\operatorname{diag}(\log N_i)\,\mathbf{j}_B$.
Using $\mathbf{j}_B \approx c\,\mathbf{j}_A + O(\varepsilon)$ (from
\eqref{eqn:partial_linear_dependence}), both exponent columns
lie approximately in the two-dimensional subspace
$\operatorname{span}\{\mathbf{j}_A,\,
\operatorname{diag}(\log N_i)\,\mathbf{j}_A\}$.
In particular, if $\mathbf{j}_B = c\,\mathbf{j}_A$ exactly, then
$\mathbf{j}_\beta - (Bc/A)\,\mathbf{j}_\alpha
    = -Bc(\log k_\ell)\,\mathbf{j}_A$,
so $\mathbf{j}_\alpha$ and $\mathbf{j}_\beta$ are \emph{not} scalar
multiples whenever $\log k_\ell \neq 0$; more generally they share the same underlying
column space as the scale-coefficient pair and therefore do not
contribute an independent direction that overlaps the near-null
space of $(J^T J)_{A,B}$.
The $E$-column of sensitivities is $\mathbf{1}$, hence the $E$-column
of~$J$ is $-\mathbf{1}$; either way it is linearly
independent of the power-law columns but likewise does not break the
collinearity between the $A$/$B$ pair.

By Proposition~\ref{prop:full_cond}, the near-null eigenvector
$\mathbf{w}$ of $(J^T J)_{A,B}$, extended to
$\tilde{\mathbf{w}} \in \mathbb{R}^5$ with zeros in the
$(E,\alpha,\beta)$ coordinates matching~\eqref{eqn:w_tilde_def},
satisfies
$\tilde{\mathbf{w}}^T (J^T J)\,\tilde{\mathbf{w}} = O(\varepsilon^2)$,
hence
$\kappa(J^T J) = \Theta(\varepsilon^{-2})$ for the full
$5\times 5$ system under the dominance assumption in
Section~\ref{sec:gnsetup}.\qed

\ifjournal
\noindent\textbf{Numerical illustration of $\kappa$ values.}
Table~\ref{tab:eigenvalue_expansion} shows the five eigenvalues
of $G = J^T J$ for a representative 5-point design
($n = 5$, $k_\ell = 20$, $N \in \{10^7, 3{\times}10^7, 10^8,
3{\times}10^8, 10^9\}$) using Chinchilla-class parameters
($A = 406.4$, $B = 410.7$, $\alpha = 0.34$, $\beta = 0.28$,
$E = 1.69$, giving $\varepsilon = 0.06$).

\begin{table}[H]
\centering
\caption{Eigenvalues of $J^T J$ for a representative
Chinchilla-class design ($n = 5$, $\varepsilon = 0.06$).
All values are normalised by $\lambda_{\max}$.}
\label{tab:eigenvalue_expansion}
\begin{tabular}{lc}
\toprule
\textbf{Eigenvalue} & \textbf{Value} \\
\midrule
$\lambda_1 / \lambda_{\max}$ (sloppy)
    & $3.5 \times 10^{-3}$ \\[2pt]
$\lambda_2 / \lambda_{\max}$
    & $4.7 \times 10^{-2}$ \\[2pt]
$\lambda_3 / \lambda_{\max}$
    & $1.9 \times 10^{-1}$ \\[2pt]
$\lambda_4 / \lambda_{\max}$
    & $7.5 \times 10^{-1}$ \\[2pt]
$\lambda_5 / \lambda_{\max}$
    & $1.0$ \\[2pt]
\midrule
$\kappa(G)$ & $286$ \\
\bottomrule
\end{tabular}
\end{table}

\noindent
The sloppy eigenvalue $\lambda_1$ is three orders of magnitude
smaller than $\lambda_{\max}$, confirming the
$\Theta(\varepsilon^{-2})$ scaling for realistic parameter values.
\fi

\subsection{Appendix - Repeated-data scaling law: detailed analysis}\label{app:dataconstrained_details}

This subsection analyses the Jacobian of the repeated-data law on
a single collinear ray ($K = 1$, $D_i = k_\ell N_i$).
\citet{muennighoff2023scaling} retain the Chinchilla form but
replace $N$ and $D$ with \emph{effective} quantities $N'$ and $D'$
that decay under repetition:
\begin{equation}
    L(N, D) = \frac{A}{N'^\alpha} + \frac{B}{D'^\beta} + E,
    \label{eqn:dataconstrained}
\end{equation}
with
$D' = U_D + U_D R_D^*(1 - e^{-R_D / R_D^*})$,
$U_D = \min\{D_C, D\}$, $R_D = (D / U_D) - 1$, and analogously
$N' = U_N + U_N R_N^*(1 - e^{-R_N / R_N^*})$,
$R_N = (N/U_N) - 1$, with $U_N$ Chinchilla-optimal for $U_D$.
When $R_D = R_N = 0$ (single epoch, no excess parameters),
$D' = D$ and $N' = N$, recovering Chinchilla exactly.

Under $D_i = k_\ell N_i$, both $D_i'$ and $N_i'$ become
deterministic functions of $N_i$; write $D_i' = \phi(N_i)$ and
$N_i' = \psi(N_i)$. The scale-coefficient sensitivities are
$\partial\hat{L}_i/\partial A = \psi(N_i)^{-\alpha}$ and
$\partial\hat{L}_i/\partial B = \phi(N_i)^{-\beta}$, so
Lemma~\ref{lem:cs_gap} applies to the scale-coefficient pair with
$f_i = \psi(N_i)^{-\alpha}$, $c = k_\ell^{-\beta}$, and
$h_i = k_\ell^{\beta}\, \phi(N_i)^{-\beta}\, \psi(N_i)^{\alpha}$,
yielding
$\det\!\left((J^T J)_{A,B}\right)
    = c^2\, \Phi_2^{\,2}\, \sigma_w^2(h)$
with $\Phi_2 = \sum_i \psi(N_i)^{-2\alpha}$ and
$w_i = \psi(N_i)^{-2\alpha}/\Phi_2$. The conditioning of the
$(A,B)$ sub-block therefore depends on the variance of $h_i$ across
the ray, which differs sharply between two regimes.

\noindent\textbf{Unsaturated regime ($R_D = R_N = 0$).}
Here $\phi(N) = k_\ell N$ and $\psi(N) = N$, so
$h_i = N_i^{\alpha - \beta}$. The Taylor expansion
$\sigma_w^2(N_i^{\,\alpha-\beta})
    \approx \varepsilon^2\, \sigma_w^2(\log N_i)$
(with $\varepsilon = |\alpha - \beta|$) gives
$\det((J^T J)_{A,B}) = O(\varepsilon^2)$, so under the
$(A,B)$-dominance assumption of Section~\ref{sec:gnsetup},
$\kappa(J^T J) = \Theta(\varepsilon^{-2})$ for the full
$5 \times 5$ system - the law inherits Chinchilla's
ill-conditioning verbatim.

\noindent\textbf{Saturated regime ($R_D, R_N \neq 0$).}
Once the ratio $\phi(N_i)/\psi(N_i)$ varies systematically with
$N_i$, $h_i$ ceases to be a pure power of $N_i$: the saturation
curves of $\phi$ and $\psi$ inject an
$\varepsilon$-independent contribution into $\log h_i$.
Concretely, $\sigma_w^2(h) = \Omega(1)$ as $\varepsilon \to 0$,
so the Taylor expansion that produced the $\varepsilon^2$ factor
no longer applies. The scale-coefficient columns are no longer
near-proportional, $\det((J^T J)_{A,B}) = \Omega(1)$, and the
design is generically well-conditioned with
$\kappa(J^T J) = \Theta(1)$ - the saturation curvature itself
breaks the degeneracy.

The repeated-data law therefore inherits the
$\Theta(\varepsilon^{-2})$ collinearity of Chinchilla \emph{only}
in the unsaturated regime. The $(A,B)$ sub-block of $Q$ is
identically zero (as for Chinchilla) because $\hat{L}$ is linear
in $A$ and $B$. Empirically, in our seed-paired comparisons the
repeated-data law tracks the Chinchilla pattern (CO holdout
$R^2$ degrades and NC consistently wins), consistent with our
multi-epoch fits sitting close to the unsaturated regime; we
flag the saturated regime as a setting in which the formal
guarantee does not apply.\qed

\subsection{Appendix - Kaplan scaling law: detailed analysis}\label{app:kaplan_details}
This subsection expands on Section~\ref{sec:formal}
for a single collinear ray ($K=1$, $D_i = k_\ell N_i$).
The Kaplan law as fit in our experiments
(matching the convention of, e.g.,
\citet{porian2024resolving}) takes the additive form
\begin{equation}
    \hat{L}(N, D)
        = N_c^{\alpha_N}\, N^{-\alpha_N}
        + D_c^{\alpha_D}\, D^{-\alpha_D},
    \label{eqn:kaplan_form}
\end{equation}
with $p=4$ parameters $\boldsymbol{\theta} = [N_c, D_c, \alpha_N, \alpha_D]^T$.
The $u^{\alpha_D}$ compositional form of \citet{kaplan2020scaling}
is recovered by substituting
$\tilde{N}_c \coloneqq N_c^{\alpha_N/\alpha_D}$,
$\tilde{D}_c \coloneqq D_c^{\alpha_D}$,
and absorbing the outer exponent;
the additive form is the parameterization under which
identifiability is most directly comparable to Chinchilla,
and the form fit by our optimizer
(Appendix~\ref{app:seed_protocol}).

\noindent\textbf{Explicit finite-sample bound.}
Under $D_i = k_\ell N_i$, the scale-coefficient columns are
$\partial \hat{L}_i/\partial N_c = \alpha_N N_c^{\alpha_N - 1} N_i^{-\alpha_N}$
and
$\partial \hat{L}_i/\partial D_c = \alpha_D D_c^{\alpha_D - 1} k_\ell^{-\alpha_D} N_i^{-\alpha_D}$.
Lemma~\ref{lem:cs_gap} applies with
$f_i = \alpha_N N_c^{\alpha_N - 1} N_i^{-\alpha_N}$,
$c = (\alpha_D / \alpha_N)\,
     (D_c^{\alpha_D - 1} / N_c^{\alpha_N - 1})\,
     k_\ell^{-\alpha_D}$,
and $h_i = N_i^{\tilde\varepsilon}$ with
$\tilde\varepsilon \coloneqq \alpha_D - \alpha_N$
and $\varepsilon \coloneqq |\tilde\varepsilon|$, giving
$\det((J^T J)_{N_c,D_c})
  \approx c^2 \Phi_2^{\,2}\,\varepsilon^2\,\sigma_w^2(\log N)$
with $\Phi_2 \coloneqq \sum_i f_i^2 / \alpha_N^2$
and weights $w_i \propto f_i^2$.
The interpretation mirrors the Chinchilla case: ill-conditioning
worsens when the model sizes $\{N_i\}$ are clustered or the
exponent gap $|\alpha_D - \alpha_N|$ is small.

\noindent\textbf{Full Jacobian analysis.}
Write $\mathbf{j}_u$ for the sensitivity vector
$(\partial\hat{L}_i/\partial u)_i$.
We derive the two remaining columns,
$\mathbf{j}_{\alpha_N}$ and $\mathbf{j}_{\alpha_D}$.
Differentiating~\eqref{eqn:kaplan_form} via
$N_c^{\alpha_N} = \exp(\alpha_N \log N_c)$ and the
analogous identity for $D_c^{\alpha_D}$:
\begin{align}
    \frac{\partial \hat{L}}{\partial \alpha_N}
        &= N_c^{\alpha_N}\, N^{-\alpha_N}
           (\log N_c - \log N),
    \label{eqn:kaplan_dalphaN} \\[4pt]
    \frac{\partial \hat{L}}{\partial \alpha_D}
        &= D_c^{\alpha_D}\, D^{-\alpha_D}
           (\log D_c - \log D).
    \label{eqn:kaplan_dalphaD}
\end{align}
The $\alpha_N$-column~\eqref{eqn:kaplan_dalphaN} equals
$\mathbf{j}_{N_c}$ scaled element-wise by the
observation-dependent factor
$(N_c/\alpha_N)(\log N_c - \log N_i)$;
likewise the $\alpha_D$-column~\eqref{eqn:kaplan_dalphaD}
equals $\mathbf{j}_{D_c}$ scaled element-wise by
$(D_c/\alpha_D)(\log D_c - \log D_i)$.
Both therefore lie in the subspace spanned by the
scale-coefficient columns and their log-modulated variants,
rather than introducing genuinely new directions
(cf.\ the Droppo-Elibol analysis in
Appendix~\ref{app:elibol_details}).

Applying Proposition~\ref{prop:full_cond} with $p=4$:
the extended near-null vector $\tilde{\mathbf{w}}\in\mathbb{R}^4$
padding $\mathbf{w}$ with zeros in the
$(\alpha_N,\alpha_D)$ coordinates as in~\eqref{eqn:w_tilde_def}
satisfies
$\tilde{\mathbf{w}}^T(J^T J)\tilde{\mathbf{w}} = O(\varepsilon^2)$,
hence, under scale-coefficient dominance
(Section~\ref{sec:gnsetup}),
$\kappa(J^T J) = \Theta(\varepsilon^{-2})$ for the full
$4 \times 4$ system.\qed

\subsection{Appendix - Droppo \& Elibol scaling law: detailed analysis}\label{app:elibol_details}
We use the form fit in our experiments and recorded in
Table~\ref{tab:jacobian_summary},
\begin{equation}
    \hat{L}(N, D)
        = \bigl[
            L_\infty^{1/\alpha}
            + N_C^{\alpha_N} N^{-\alpha_N}
            + D_C^{\alpha_D} D^{-\alpha_D}
        \bigr]^{\alpha},
    \label{eqn:elibol_fit_form}
\end{equation}
\citet{droppo2021scaling} motivate this compositional structure;
we fit~\eqref{eqn:elibol_fit_form} as in
Table~\ref{tab:jacobian_summary}, with $p = 6$ parameters
$(L_\infty, N_C, D_C, \alpha_N, \alpha_D, \alpha)$.
Definition~\ref{def:exponent_gap} writes
$\gamma_N = \alpha_N/\alpha$ and
$\gamma_D = \alpha_D/\alpha$.
Substituting $D = k_\ell N$ (single collinear ray, $K=1$),
\begin{equation}
    \hat{L}(N, k_\ell N)
        = \left[
            L_\infty^{1/\alpha}
            + N^{-\alpha_N}\!\left(
                N_C^{\alpha_N}
                + D_C^{\alpha_D}\, k_\ell^{-\alpha_D}\,
                  N^{\alpha_N - \alpha_D}
              \right)
        \right]^{\alpha}.
    \label{eqn:elibol_merged}
\end{equation}
In particular, when $\alpha_N = \alpha_D$ this reduces to
$[\,L_\infty^{1/\alpha}
  + C_{\mathrm{total}}\, N^{-\alpha_N}\,]^{\alpha}$ with
$C_{\mathrm{total}}
  = N_C^{\alpha_N} + D_C^{\alpha_D}\, k_\ell^{-\alpha_D}$.
When $|\alpha_N - \alpha_D|$ is small, Taylor expanding
$N^{\alpha_N - \alpha_D}$ about $\alpha_N = \alpha_D$ recovers the
same $O(\varepsilon)$ near-proportionality as in the Chinchilla case,
with $\varepsilon = |\alpha_N - \alpha_D|$ at leading order
(and $|\gamma_N - \gamma_D| = \varepsilon/\alpha$ in
Definition~\ref{def:exponent_gap} when $\alpha = \Theta(1)$).
The irreducible loss $L_\infty$ remains independently identifiable.

\noindent\textbf{Explicit finite-sample bound.}
After absorbing the shared chain-rule prefactor
$\alpha\, v_i^{\alpha-1}$,
Lemma~\ref{lem:cs_gap} applies with
$f_i \propto \alpha_N\, N_C^{\alpha_N - 1}\, N_i^{-\alpha_N}$,
the scalar
$\lambda = (\alpha_D / \alpha_N)\,
(D_C^{\alpha_D - 1} / N_C^{\alpha_N - 1})\,
k_\ell^{-\alpha_D}$,
and $h_i = N_i^{\tilde{\varepsilon}}$ with
$\tilde{\varepsilon} \coloneqq \alpha_N - \alpha_D$
and $\varepsilon \coloneqq |\tilde{\varepsilon}|$, yielding
\begin{equation}
  \det\!\left((J^T J)_{N_C,D_C}\right)
  \approx \lambda^2 \Phi_2^{\,2}\,
  \varepsilon^2\,\sigma_w^2(\log N),
    \label{eqn:elibol_det_approx}
\end{equation}
with weights $w_i \propto f_i^2$.
As in the Chinchilla and Kaplan cases,
ill-conditioning worsens when the training configurations
are clustered in $\log N$ or when $|\alpha_N - \alpha_D|$ is small.

\noindent\textbf{Full Jacobian analysis.}
Write $\mathbf{j}_u$ for the sensitivity vector
$(\partial\hat{L}_i/\partial u)_i$.
We derive the four remaining columns
$\mathbf{j}_{L_\infty}$,
$\mathbf{j}_{\alpha_N}$, $\mathbf{j}_{\alpha_D}$, and
$\mathbf{j}_\alpha$, and show they do not break the near-singularity.
Recall $\hat{L} = v^\alpha$ with
\[
  v = L_\infty^{1/\alpha}
        + N_C^{\alpha_N} N^{-\alpha_N}
        + D_C^{\alpha_D} D^{-\alpha_D}.
\]

\noindent\textit{Irreducible loss:}
the chain rule gives
$\mathbf{j}_{L_\infty}
  = \alpha\, v^{\alpha-1}\, \partial v/\partial L_\infty
  = L_\infty^{1/\alpha - 1}\, (v_i^{\alpha-1})_i$, i.e.\
the constant scalar $L_\infty^{1/\alpha - 1}$ times the
observation-dependent vector $(v_i^{\alpha-1})_i$ - so
$\mathbf{j}_{L_\infty}$ is \emph{not} a constant vector,
and only the scalar $\partial v/\partial L_\infty$ is
$i$-independent.
Linear independence with $(\mathbf{j}_{N_C}, \mathbf{j}_{D_C})$
follows because $v_i^{\alpha-1}$ depends on $(N_i, D_i)$ through
the sum $L_\infty^{1/\alpha}
  + N_C^{\alpha_N} N_i^{-\alpha_N}
  + D_C^{\alpha_D} D_i^{-\alpha_D}$ raised to $\alpha-1$ and is
therefore not a scalar multiple of $N_i^{-\alpha_N}$ or
$D_i^{-\alpha_D}$, so $\mathbf{j}_{L_\infty}$ does not share
the $(N_C,D_C)$ near-collinearity.

\noindent\textit{Exponent columns:}
\begin{align}
    \frac{\partial \hat{L}}{\partial \alpha_N}
        &= \alpha\, v^{\alpha - 1}\,
           N_C^{\alpha_N}\, N^{-\alpha_N}
           (\log N_C - \log N),
    \label{eqn:elibol_dalphaN} \\[4pt]
    \frac{\partial \hat{L}}{\partial \alpha_D}
        &= \alpha\, v^{\alpha - 1}\,
           D_C^{\alpha_D}\, D^{-\alpha_D}
           (\log D_C - \log D),
    \label{eqn:elibol_dalphaD} \\[4pt]
    \frac{\partial \hat{L}}{\partial \alpha}
        &= v^\alpha \log v
           - \frac{1}{\alpha}\, v^{\alpha - 1}
             L_\infty^{1/\alpha} \log L_\infty.
    \label{eqn:elibol_dalpha}
\end{align}
Equation~\eqref{eqn:elibol_dalpha} uses
$\partial v/\partial \alpha
  = - (1/\alpha^2)\, L_\infty^{1/\alpha} \log L_\infty$
because only the $L_\infty^{1/\alpha}$ summand depends on~$\alpha$
when $(\alpha_N,\alpha_D)$ are free.
The $\alpha_N$-column~\eqref{eqn:elibol_dalphaN} equals
$\mathbf{j}_{N_C}$ scaled element-wise by
$(N_C/\alpha_N)(\log N_C - \log N_i)$;
likewise the $\alpha_D$-column~\eqref{eqn:elibol_dalphaD}
equals $\mathbf{j}_{D_C}$ scaled element-wise by
$(D_C/\alpha_D)(\log D_C - \log D_i)$.
Both therefore lie in the subspace spanned by the
scale-coefficient columns and their log-modulated variants, rather than introducing
genuinely new directions.
The $\alpha$-column~\eqref{eqn:elibol_dalpha} combines
$\log v$ with the $L_\infty$ term and does
not contribute an independent direction that overlaps the near-null
space of $(J^T J)_{N_C,D_C}$.

Applying Proposition~\ref{prop:full_cond} with $p=6$, extend the
near-null $\mathbf{w}\in\mathbb{R}^2$ for the $(N_C,D_C)$ pair to
$\tilde{\mathbf{w}}\in\mathbb{R}^6$ by padding with zeros in the other
four coordinates as in~\eqref{eqn:w_tilde_def}; then
$\tilde{\mathbf{w}}^T(J^T J)\tilde{\mathbf{w}} = O(\varepsilon^2)$,
hence, under scale-coefficient dominance
(Section~\ref{sec:gnsetup}),
$\kappa(J^T J) = \Theta(\varepsilon^{-2})$ for the full
$6 \times 6$ system.  The ill-conditioning arises exclusively from the
model-capacity and data-size terms.\qed

\subsection{Appendix - Identified submodels on a single collinear ray \texorpdfstring{($K=1$)}{(K=1)}}
\label{app:submodel}

This subsection provides the formal statements supporting
Section~\ref{sec:gnsetup}.
Throughout, we fix $K=1$, i.e.\ all training points share
a single TPP ratio $k_\ell$, so $D_i = k_\ell N_i$.
The reduced model
$L(N) = \psi\,N^{-\alpha} + E$ with
$\psi \coloneqq A + B\,k_\ell^{-\alpha}$
(equation~\eqref{eqn:reduced_chinchilla}) is defined in the
main text.
We next define the power sums~$\Phi_q$, weights~$w_i$, and
$\sigma_w^2(\log N)$
(Definitions~\ref{def:power_sums}, \ref{def:weights},
and~\ref{def:weighted_log_variance}).
Proposition~\ref{prop:reduced_id} shows that the $(\psi,\alpha)$ Gram
block of the reduced Jacobian has determinant
$\psi^2 \Phi_2^{\,2} \sigma_w^2(\log N)$
(via Lemma~\ref{lem:cs_gap} in Appendix~\ref{app:proof_cs_gap}), positive
for any design with at least two distinct model sizes and independent
of~$\varepsilon$, and Corollary~\ref{cor:kappa_ratio} records an
$\Theta(\varepsilon^{-2})$ condition-number gain relative to the
full model.

Let $N_1 < \cdots < N_n$ be the distinct model sizes (as in the
observation model, Section~\ref{sec:formal}).

\begin{definition}[Power sums]
\label{def:power_sums}
For $q > 0$, the \emph{power sums} are
$\Phi_q \coloneqq \sum_{i=1}^{n} N_i^{-q\alpha}$.
In particular,
$\Phi_2 \coloneqq \sum_{i=1}^{n} N_i^{-2\alpha}$.
\end{definition}

\begin{definition}[Weights]
\label{def:weights}
Given~$\Phi_2$ from Definition~\ref{def:power_sums}, the
\emph{weights} are
$w_i \coloneqq N_i^{-2\alpha}/\Phi_2$ for $i = 1,\ldots,n$.
\end{definition}

\begin{definition}[Weighted log-variance]
\label{def:weighted_log_variance}
Given weights~$(w_i)$ from Definition~\ref{def:weights}, the
\emph{weighted log-variance} is
$\sigma_w^2(\log N) \coloneqq
    \sum_{i=1}^{n} w_i (\log N_i)^2
    - \bigl(\sum_{i=1}^{n} w_i \log N_i\bigr)^{\!2}$.
\end{definition}

\begin{proposition}[Identification of the reduced model]
\label{prop:reduced_id}
The Gram determinant of the $(\psi, \alpha)$ sub-block of
$J_{\mathrm{red}}^T J_{\mathrm{red}}$ is
\begin{equation}
    \det\!\bigl((J_{\mathrm{red}}^T J_{\mathrm{red}})_{\psi,\alpha}\bigr)
        = \psi^2\, \Phi_2^{\,2}\; \sigma_w^2(\log N),
    \label{eqn:reduced_det}
\end{equation}
where $\sigma_w^2(\log N)$ is as in
Definition~\ref{def:weighted_log_variance}.
The determinant is \textbf{positive} for any design with at
least two distinct
model sizes and is \textbf{independent of the exponent gap}
$\varepsilon$.  The full $3 \times 3$ matrix
$J_{\mathrm{red}}^T J_{\mathrm{red}}$ is non-singular.
\end{proposition}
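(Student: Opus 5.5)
The plan is to write down the reduced Jacobian explicitly and apply Lemma~\ref{lem:cs_gap} to the $(\psi,\alpha)$ pair. For the reduced model $\hat L(N;\psi,\alpha,E)=\psi N^{-\alpha}+E$ on the $n$ training points $N_1<\cdots<N_n$, the sensitivities are
\[
\partial\hat L_i/\partial\psi = N_i^{-\alpha},\qquad
\partial\hat L_i/\partial\alpha = -\psi\,N_i^{-\alpha}\log N_i,\qquad
\partial\hat L_i/\partial E = 1 .
\]
As in Appendix~\ref{app:chinchilla_details}, the residual Jacobian negates every column, so $J_{\mathrm{red}}^TJ_{\mathrm{red}}$ is the Gram matrix of these sensitivity vectors.

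\textbf{Step 1: the $(\psi,\alpha)$ determinant.} I will take $f_i=N_i^{-\alpha}$, $c=-\psi$ and $h_i=\log N_i$ in Lemma~\ref{lem:cs_gap}. The lemma assumes $h_i>0$, but its proof is pure algebra and never uses positivity. Alternatively, I can shift to $h_i=\log N_i - \log N_1 + 1>0$; the Gram determinant is unchanged because the shift adds a multiple of the $\psi$-column to the $\alpha$-column. Either way, the lemma gives
\[
\det = \psi^2\,\Phi_2^{\,2}\,\sigma_w^2(\log N),
\]
with $w_i=N_i^{-2\alpha}/\Phi_2$ as in Definitions~\ref{def:power_sums}--\ref{def:weighted_log_variance}. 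This is exactly \eqref{eqn:reduced_det}. No $\beta$ appears anywhere, so the expression is independent of $\varepsilon$.

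\textbf{Step 2: positivity.} The weights $w_i$ are strictly positive and sum to one, so $\sigma_w^2(\log N)$ is the variance of $\log N$ under a distribution with full support on $\{N_i\}$. It vanishes only if all $\log N_i$ are equal. With at least two distinct sizes it is therefore strictly positive. Since $\psi>0$ and $\Phi_2>0$, the determinant is positive.

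\textbf{Step 3: nonsingularity of the $3\times3$ matrix.} A Gram matrix is nonsingular iff its columns are linearly independent. So I need to rule out $a N_i^{-\alpha} + b N_i^{-\alpha}\log N_i + e = 0$ for all $i$ with $(a,b,e)\neq 0$. The function $g(x)=(a+b\log x)x^{-\alpha}+e$ is a nontrivial combination of $x^{-\alpha}$, $x^{-\alpha}\log x$ and $1$, which are linearly independent functions on $(0,\infty)$. Its derivative is $x^{-\alpha-1}\bigl(b-\alpha(a+b\log x)\bigr)$, which has at most one zero. By Rolle's theorem, $g$ then has at most two zeros unless it is identically zero, and it cannot be identically zero because $(a,b,e)\neq0$.

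This is the main obstacle. The argument needs at least \emph{three} distinct model sizes, whereas the statement says two suffice. For $n=2$ the three columns live in $\mathbb{R}^2$ and cannot be independent. I therefore expect to state the full $3\times3$ claim under $n\ge3$, which our grid of \Nsizes{} sizes satisfies. The determinant claim of Steps~1--2 needs only two sizes.
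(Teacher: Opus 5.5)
Your Step~1 is exactly the paper's proof: it applies Lemma~\ref{lem:cs_gap} with $f_i=N_i^{-\alpha}$, $c=-\psi$, $h_i=\log N_i$ and notes that no $\beta$ (hence no $\varepsilon$) appears. The paper only asserts positivity and the $3\times3$ nonsingularity, so your Steps~2--3 supply arguments it omits, and your caveat is correct: $J_{\mathrm{red}}$ has only $n$ rows, so $J_{\mathrm{red}}^TJ_{\mathrm{red}}$ is singular for $n=2$ and the $3\times3$ claim needs $n\ge3$ distinct sizes (in your Rolle argument, handle the case $a=b=0$ separately: there $g'\equiv0$, but $g\equiv e\neq0$ has no zeros).
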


\begin{corollary}[Condition-number improvement]
\label{cor:kappa_ratio}
In the setting of Proposition~\ref{prop:reduced_id},
$\kappa_{\mathrm{full}} / \kappa_{\mathrm{red}}
    = \Theta(\varepsilon^{-2})$.
For $|\alpha - \beta| \approx 0.06$, this is a factor of
$10^{2}$-$10^{3}$.
\end{corollary}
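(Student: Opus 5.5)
The plan is to write the ratio as $\kappa_{\mathrm{full}}/\kappa_{\mathrm{red}}$ and to control numerator and denominator separately, each as a function of $\varepsilon$ with the design $\{N_i\}$, the ray $k_\ell$ and the remaining parameters $(A,B,E,\alpha)$ held fixed. The numerator needs no new work. On a single ray, Appendix~\ref{app:chinchilla_details} together with Proposition~\ref{prop:full_cond}, under the $(A,B)$-dominance assumption of Section~\ref{sec:gnsetup}, already gives $\kappa_{\mathrm{full}}=\kappa(J^TJ)=\Theta(\varepsilon^{-2})$. It therefore suffices to show $\kappa_{\mathrm{red}}=\Theta(1)$, meaning bounded above and below by positive constants independent of $\varepsilon$.

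For the denominator, I would first write the reduced sensitivity columns explicitly:
\[
\mathbf{j}_\psi=(N_i^{-\alpha})_i,\qquad
\mathbf{j}_\alpha=(-\psi N_i^{-\alpha}\log N_i)_i,\qquad
\mathbf{j}_E=\mathbf{1}.
\]
None of these involves $\beta$, so $G_{\mathrm{red}}:=J_{\mathrm{red}}^TJ_{\mathrm{red}}$ is a fixed $3\times3$ matrix as $\varepsilon\to0$. The only $\varepsilon$-dependence would enter through $\psi=A+Bk_\ell^{-\alpha}$, and $\psi$ is fixed and bounded away from zero because $A,B>0$.

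Proposition~\ref{prop:reduced_id} gives that $G_{\mathrm{red}}$ is nonsingular. Its $(\psi,\alpha)$ block has determinant $\psi^2\Phi_2^2\sigma_w^2(\log N)>0$ whenever at least two model sizes are distinct. Hence $\lambda_{\min}(G_{\mathrm{red}})>0$ is a constant. Likewise $\lambda_{\max}(G_{\mathrm{red}})\le\operatorname{tr}(G_{\mathrm{red}})=\|J_{\mathrm{red}}\|_F^2$ is a finite constant, and it is bounded below by $\|\mathbf{j}_E\|^2=n$. Consequently $\kappa_{\mathrm{red}}=\Theta(1)$, and dividing gives $\kappa_{\mathrm{full}}/\kappa_{\mathrm{red}}=\Theta(\varepsilon^{-2})$.

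For the numerical remark I would plug in $\varepsilon=0.06$, which gives $\varepsilon^{-2}\approx278$. The $\Theta(1)$ prefactor, namely $\operatorname{tr}^2/(k_\ell^{-2\beta}\Phi_2^2\sigma_w^2(\log N))$ from \eqref{eqn:chin_kappa_explicit} divided by $\kappa_{\mathrm{red}}$, is of order $1$ to $10$ on realistic designs. That places the ratio in $10^2$ to $10^3$.

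The main obstacle is making the $\Theta(1)$ claim for $\kappa_{\mathrm{red}}$ meaningful uniformly in $\varepsilon$. The reduced model is exact only at $\beta=\alpha$; for $\varepsilon>0$ it is the natural identified reparametrisation, evaluated at the $\varepsilon\to0$ limit point. I would handle this by continuity. $G_{\mathrm{red}}$ depends continuously on $(\psi,\alpha)$, so its eigenvalues do as well, and it is nonsingular at the limit point. Its eigenvalues therefore stay in a compact subset of $(0,\infty)$ for all sufficiently small $\varepsilon$, even if one chooses to let $\alpha$ or $\psi$ drift with $\varepsilon$, for example by defining $\psi$ with $k_\ell^{-\beta}$ instead of $k_\ell^{-\alpha}$. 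This uniform bound is the step I would check most carefully.
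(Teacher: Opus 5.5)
Your argument is correct within the paper's framework, but it is organised differently from the paper's proof.

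\textbf{The paper's route.} The paper works only with $2\times2$ determinants. It sets the full $(A,B)$-block determinant $k_\ell^{-2\beta}\Phi_2^{2}\varepsilon^2\sigma_w^2(\log N)$ from \eqref{eqn:chin_approx_det} against the reduced $(\psi,\alpha)$-block determinant $\psi^2\Phi_2^{2}\sigma_w^2(\log N)$ from \eqref{eqn:reduced_det}. It then cancels the common design factor $\Phi_2^{2}\sigma_w^2(\log N)$ and reads off $\varepsilon^{-2}$.

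\textbf{Your route.} You bound the two condition numbers separately. You import $\kappa_{\mathrm{full}}=\Theta(\varepsilon^{-2})$ from Proposition~\ref{prop:full_cond} plus the dominance assumption. You get $\kappa_{\mathrm{red}}=\Theta(1)$ because $J_{\mathrm{red}}$ contains no $\beta$ and has a nonsingular Gram matrix, with continuity handling a drifting $\psi$ or $\alpha$.

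\textbf{What each buys.} Your route is the more faithful one to the statement as written, in two ways. First, it compares condition numbers rather than determinants. The paper's step implicitly equates a determinant ratio with an inverse $\kappa$ ratio, which ignores the unequal traces, the non-cancelling prefactors $k_\ell^{-2\beta}$ versus $\psi^2$, and the $E$ column. Second, it makes clear that the $\Theta$ is an $\varepsilon\to0$ statement with the design and the other parameters frozen. What the paper's route buys is the structural observation that the model-size grid enters both determinants identically, so the improvement is attributable to the exponent gap alone. In your version that information is hidden in the constant.

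\textbf{The numerical remark.} Because the constant is hidden, your claim that the leftover prefactor is ``of order $1$ to $10$'' is unsupported and should be checked on a concrete design. In raw units it need not be near $1$. First, $\|\mathbf{j}_E\|^2=n$ dwarfs $\|\mathbf{j}_\psi\|^2=\Phi_2$. Second, over a narrow $\log N$ range, $\mathbf{j}_\psi$ and $\mathbf{j}_\alpha$ are themselves nearly collinear. So $\kappa_{\mathrm{red}}$ can be large even though it is $\varepsilon$-independent.

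\textbf{One small slip.} Positivity of the $(\psi,\alpha)$-block determinant does not by itself give $\lambda_{\min}(J_{\mathrm{red}}^TJ_{\mathrm{red}})>0$. You need the $3\times3$ nonsingularity asserted in Proposition~\ref{prop:reduced_id}. That nonsingularity actually requires at least three distinct model sizes, not two: with only two, $J_{\mathrm{red}}$ has at most two distinct rows.
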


\subsubsection{Identified submodels: proofs}
\label{app:submodel_proofs}

\begin{proof}[Proof of Proposition~\ref{prop:reduced_id}]
Apply Lemma~\ref{lem:cs_gap} with
$f_i = N_i^{-\alpha}$ (the $\psi$-sensitivity
$\partial\hat{L}_i/\partial\psi$),
$g_i = c\, f_i\, h_i$ where $c = -\psi$ and $h_i = \log N_i$
(so that $g_i = \partial\hat{L}_i/\partial\alpha$).  The Gram determinant
is $c^2\,(\sum f_i^2)^2\,\sigma_w^2(h)
    = \psi^2\, \Phi_2^{\,2}\, \sigma_w^2(\log N)$.
No $\varepsilon$ appears because the reduced model contains a
single power-law term.
\end{proof}

\begin{proof}[Proof of Corollary~\ref{cor:kappa_ratio}]
From equation~\eqref{eqn:chin_approx_det}, the sub-block
determinant of the full model scales as
$\varepsilon^2\, \sigma_w^2(\log N)$, while from
equation~\eqref{eqn:reduced_det} the corresponding determinant
of the reduced model is proportional to $\sigma_w^2(\log N)$
alone.  The $\sigma_w^2(\log N)$ factors cancel in the ratio,
leaving a factor of $\varepsilon^{-2}$.
\end{proof}

\subsection{Appendix - Proof of Corollary~\ref{prop:ci_inflation} (Confidence Interval Inflation)}
    \label{app:proof_ci_inflation}

    \begin{corollary*}[Confidence interval inflation (full statement)]
        Under i.i.d.\ Gaussian noise with variance~$\sigma^2$ and
        a single collinear ray $D = k_\ell N$ ($K=1$), the
        least-squares estimator $\hat{A}$ in the full Chinchilla
        model~\eqref{eqn:chinchilla} satisfies
        \begin{equation}
            \operatorname{Var}(\hat{A})
                \;\geq\;
                \frac{\sigma^2\;
                      \Phi_2^{(2\varepsilon)}}{
                      \Phi_2^{\,2}\;\sigma_w^2(N_i^{\tilde{\varepsilon}})},
            \label{eqn:var_A_lower}
        \end{equation}
        where
        $\Phi_2^{(2\varepsilon)} \coloneqq
        \sum_{i=1}^n N_i^{-2\beta}$
        \,(equivalently
        $\sum_i N_i^{-2\alpha + 2\tilde{\varepsilon}}$ with
        $\tilde{\varepsilon} \coloneqq \alpha-\beta$ and
        $|\tilde{\varepsilon}| = \varepsilon$ from
        Definition~\ref{def:exponent_gap}),
        and $\sigma_w^2(N_i^{\tilde{\varepsilon}})$ uses the same
        weights~$(w_i)$ as $\sigma_w^2(\log N)$
        (Definitions~\ref{def:weights}-\ref{def:weighted_log_variance}),
        with equality when $E$, $\alpha$, and $\beta$ are known.
        To leading order in~$\varepsilon$:
        \begin{equation}
            \operatorname{Var}(\hat{A})
                \;\geq\;
                \frac{\sigma^2}{
                      \Phi_2\;\varepsilon^2\;\sigma_w^2(\log N)}
                \;\bigl(1 + O(\varepsilon)\bigr).
            \label{eqn:var_A_leading}
        \end{equation}
        The corresponding variance of $\hat{\psi}$ in the reduced
        model~\eqref{eqn:reduced_chinchilla} satisfies
        \begin{equation}
            \operatorname{Var}(\hat{\psi})
                \;=\;
                \sigma^2\;\frac{G_{\alpha\alpha}^{\mathrm{eff}}}{
                      \bigl(\Phi_2 - \Phi_1^2/n\bigr)\,
                      G_{\alpha\alpha}^{\mathrm{eff}}
                      - \bigl(G_{\psi\alpha}^{\mathrm{eff}}\bigr)^2},
            \label{eqn:var_psi}
        \end{equation}
        where $G^{\mathrm{eff}}$ is the $(\psi, \alpha)$ Schur complement
        after profiling~$E$
        (see below for the explicit entries); all quantities are
        bounded and independent of~$\varepsilon$.
        Combining~\eqref{eqn:var_A_lower}-\eqref{eqn:var_psi} with
        Gaussian interval half-widths yields the simplified asymptotic
        ratio~\eqref{eqn:ci_ratio} recorded in
        Corollary~\ref{prop:ci_inflation}.
    \end{corollary*}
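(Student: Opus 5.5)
The plan is to work entirely with the linearized covariance $\operatorname{Cov}(\hat{\boldsymbol{\theta}}) = \sigma^2 (J^TJ)^{-1}$ (Lemma~\ref{lem:ls_iid_covariance}, Remark~\ref{rem:ls_cov_nonlinear}) and compare the $(A,A)$ entry in the full model with the $(\psi,\psi)$ entry in the reduced model.

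For the lower bound~\eqref{eqn:var_A_lower} I would use the standard monotonicity fact for inverses of Gram matrices: profiling out additional nuisance parameters can only increase the marginal variance. Concretely, writing $J^TJ$ in block form with the $(A,B)$ block first, $[(J^TJ)^{-1}]_{AA} \geq [((J^TJ)_{A,B})^{-1}]_{AA}$. This follows from the Schur-complement formula, since the Schur complement of the nuisance block is dominated by $(J^TJ)_{A,B}$ in the Loewner order, and equality holds when $E,\alpha,\beta$ are known. The $2\times2$ inverse gives $[((J^TJ)_{A,B})^{-1}]_{AA} = \|\mathbf{j}_B\|^2/\det((J^TJ)_{A,B})$. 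Here $\|\mathbf{j}_B\|^2 = k_\ell^{-2\beta}\sum_i N_i^{-2\beta} = k_\ell^{-2\beta}\Phi_2^{(2\varepsilon)}$, and the determinant is given exactly by~\eqref{eqn:chin_exact_det}. The $k_\ell^{-2\beta}$ factors cancel, which yields~\eqref{eqn:var_A_lower}. For~\eqref{eqn:var_A_leading} I would substitute $\Phi_2^{(2\varepsilon)} = \Phi_2(1+O(\varepsilon))$, since $N_i^{2\tilde\varepsilon} = 1 + O(\varepsilon)$ on the bounded log-range of the design, and apply the Taylor expansion~\eqref{eqn:taylor_var}.

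For~\eqref{eqn:var_psi} I would write the reduced Jacobian columns $(\mathbf{j}_\psi,\mathbf{j}_\alpha,\mathbf{j}_E) = (N_i^{-\alpha}, -\psi N_i^{-\alpha}\log N_i, 1)$ and form the $3\times3$ Gram matrix. Profiling out $E$ means taking the Schur complement with respect to the $E$ entry $n$, which subtracts the rank-one term $\frac{1}{n}\mathbf{v}\mathbf{v}^T$ with $\mathbf{v}=(\Phi_1, -\psi T_1)$, where $T_1 = \sum_i N_i^{-\alpha}\log N_i$. This gives $G^{\mathrm{eff}}_{\psi\psi} = \Phi_2-\Phi_1^2/n$, and $G^{\mathrm{eff}}_{\psi\alpha}, G^{\mathrm{eff}}_{\alpha\alpha}$ are defined analogously. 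The $(\psi,\psi)$ entry of the inverse of this $2\times 2$ matrix is exactly~\eqref{eqn:var_psi}. The denominator is positive by Proposition~\ref{prop:reduced_id}, combined with the nonsingularity of the $3\times3$ matrix and the Schur determinant identity. None of these entries involves $\beta$ or $\varepsilon$, so $\operatorname{Var}(\hat\psi)=\Theta(1)$ as $\varepsilon\to0$.

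Finally, Gaussian half-widths are $z_{0.975}$ times the standard deviation, so the ratio of CIs equals $\sqrt{\operatorname{Var}(\hat A)/\operatorname{Var}(\hat\psi)}$. The lower bound gives $\Omega(\varepsilon^{-1})$. The matching upper bound follows from the global dominance assumption of Section~\ref{sec:gnsetup}: the exponent and $E$ columns are not themselves $\varepsilon$-close to the sloppy direction, so the Schur correction changes $[(J^TJ)^{-1}]_{AA}$ only by a bounded factor, and hence $\operatorname{Var}(\hat A)=\Theta(\varepsilon^{-2})$.

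I expect this last step to be the main obstacle, because it rests on the dominance assumption rather than on an explicit computation. I would try to justify it by showing that the smallest eigenvalue of the nuisance Schur complement stays $\Theta(1)$ after projecting out $\operatorname{span}\{\mathbf{j}_A,\mathbf{j}_B\}$. However, $\mathbf{j}_\alpha$ and $\mathbf{j}_\beta$ nearly share a log-modulated direction, so I anticipate having to appeal to the assumption as stated rather than prove it outright.
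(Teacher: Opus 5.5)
Your lower bound \eqref{eqn:var_A_lower}--\eqref{eqn:var_A_leading}, your computation of $\operatorname{Var}(\hat\psi)$, and your CI-ratio step follow the paper's proof essentially step for step. Loewner monotonicity of the Schur complement reduces $[(J^TJ)^{-1}]_{AA}$ to $[G_{AB}^{-1}]_{AA}=d/\det(G_{AB})$. The factors $k_\ell^{-2\beta}$ cancel, and the Taylor expansion \eqref{eqn:taylor_var} gives the leading-order form. Profiling $E$ by a rank-one Schur update then reproduces the paper's $G^{\mathrm{eff}}$ entries. You are right that the first step of the ratio is an equality, not the paper's ``$\geq$''. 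Up to this point you obtain exactly what the paper's appendix derives, namely $\mathrm{CI}_{0.95}(A)/\mathrm{CI}_{0.95}(\psi)\geq\varepsilon^{-1}\,\Theta(1)$. The paper's two-sided $\Theta$ rests only on the global dominance assumption of Section~\ref{sec:gnsetup}.

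\textbf{The upper bound fails.} Your matching upper bound is false when $\alpha,\beta$ are estimated, so no proof of it can exist. Your worry about $\mathbf{j}_\alpha$ and $\mathbf{j}_\beta$ is exactly the issue. The perturbation $\mathbf{j}_B-c\,\mathbf{j}_A\approx c\tilde\varepsilon\,N^{-\alpha}\log N$ is, to first order, a multiple of $\mathbf{j}_\alpha$, and $\mathbf{j}_\beta$ absorbs the next order.

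Concretely, take $y=(y_A,y_B,y_E,y_\alpha,y_\beta)=\bigl(-1,\;k_\ell^{\beta}(1+\tfrac{\tilde\varepsilon}{2}\log k_\ell),\;0,\;\tfrac{\tilde\varepsilon}{2A},\;\tfrac{\tilde\varepsilon k_\ell^{\beta}}{2B}\bigr)$. Use the sensitivities \eqref{eqn:chin_all_cols} and write $u_i=\tilde\varepsilon\log N_i$. Then
\[
\sum_j y_j\,\frac{\partial\hat L_i}{\partial\theta_j}
= N_i^{-\alpha}\Bigl[(e^{u_i}-1)-\tfrac{u_i}{2}(e^{u_i}+1)\Bigr]
= -\tfrac{\tilde\varepsilon^{3}}{12}\,N_i^{-\alpha}(\log N_i)^{3}+O(\varepsilon^{4}),
\]
so $\|Jy\|=O(\varepsilon^3)$ while $y_A=-1$. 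Let $e_A$ be the $A$-th coordinate vector. For $M\succ0$ we have $e_A^TM^{-1}e_A\geq (e_A^Ty)^2/(y^TMy)$, which gives $[(J^TJ)^{-1}]_{AA}=\Omega(\varepsilon^{-6})$. Compare this with $[G_{AB}^{-1}]_{AA}=\Theta(\varepsilon^{-2})$: the Schur correction inflates the variance by $\Omega(\varepsilon^{-4})$, not by a bounded factor. The CI ratio with all five parameters fitted is therefore $\Omega(\varepsilon^{-3})$.

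Structurally, up to bounded column operations the four columns are $N^{-\alpha}$, $N^{-\alpha}\log N$, $N^{-\beta}$, $N^{-\beta}\log N$. This is a confluent system whose smallest singular value is of order $\varepsilon^3$. The paper's own identity $\mathbf{j}_\beta-(Bc/A)\mathbf{j}_\alpha=-Bc(\log k_\ell)\mathbf{j}_A$ at $\varepsilon=0$ already exhibits a second null direction. So the dominance assumption does not hold for this Jacobian, and invoking it would import a false hypothesis.

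\textbf{What you can claim.} The ratio is $\Theta(\varepsilon^{-1})$ when $\alpha,\beta$ are held fixed. This covers both the equality case and the case with $E$ free, since $N^{-\alpha}\log N\notin\operatorname{span}\{\mathbf{1},N^{-\alpha}\}$ once there are three distinct model sizes. When the exponents are fitted, \eqref{eqn:ci_ratio} holds only as a lower bound. Drop the upper-bound step, or restrict the two-sided rate to known exponents, rather than appealing to dominance.
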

    
    We derive the variance bounds for $\hat{A}$ (full model) and
    $\hat{\psi}$ (reduced model) under i.i.d.\ Gaussian noise
    $L_i = \hat{L}(N_i, D_i; \boldsymbol{\theta}^*) + \epsilon_i$,
    $\epsilon_i \sim \mathcal{N}(0, \sigma^2)$ with
    $\mathbb{E}[\epsilon_i]=0$.
    
    \noindent\textbf{Variance of $\hat{A}$ in the full model.}
    Under Gaussian noise the covariance of the OLS estimator is
    $\operatorname{Cov}(\hat{\boldsymbol{\theta}})
    = \sigma^2\,(J^T J)^{-1}$, so
    $\operatorname{Var}(\hat{A}) = \sigma^2\,
    \bigl[(J^T J)^{-1}\bigr]_{AA}$.
    
    Partition $J^T J$ conformably as
    \[
        J^T J =
        \begin{pmatrix}
            G_{AB} & G_{12} \\
            G_{21} & G_{22}
        \end{pmatrix},
    \]
    where $G_{AB} \coloneqq (J^T J)_{A,B} \in \mathbb{R}^{2\times 2}$
    is the $(A, B)$ sub-block and $G_{22}$ contains the remaining
    $(E, \alpha, \beta)$ entries.  By the Schur-complement formula for
    the inverse of a partitioned matrix:
    \begin{equation}
        \bigl[(J^T J)^{-1}\bigr]_{(A,B)}
            = \bigl(G_{AB} - G_{12}\, G_{22}^{-1}\, G_{21}\bigr)^{-1}.
        \label{eqn:schur_full}
    \end{equation}
    Since $G_{12}\, G_{22}^{-1}\, G_{21}$ is positive semidefinite,
    the Schur complement satisfies
    $S_{AB} \coloneqq G_{AB} - G_{12}\, G_{22}^{-1}\, G_{21}
    \preceq G_{AB}$ (L\"owner order), and therefore
    \begin{equation}
        \bigl[(J^T J)^{-1}\bigr]_{(A,B)}
            = S_{AB}^{-1}
            \succeq G_{AB}^{-1}.
        \label{eqn:schur_ineq}
    \end{equation}
    In particular, the $(1,1)$ entry obeys
    $\bigl[(J^T J)^{-1}\bigr]_{AA}
    \geq \bigl[G_{AB}^{-1}\bigr]_{AA}$.
    
    \noindent
    For the $2 \times 2$ matrix
    $G_{AB} = \bigl(\begin{smallmatrix}
        a & b \\ b & d
    \end{smallmatrix}\bigr)$
    with $a = \mathbf{j}_A^T \mathbf{j}_A = \Phi_2$,
    $d = \mathbf{j}_B^T \mathbf{j}_B = k_\ell^{-2\beta}\, \Phi_2^{(2\varepsilon)}$,
    and $\det(G_{AB}) = k_\ell^{-2\beta}\, \Phi_2^{\,2}\,
    \sigma_w^2(N_i^{\tilde{\varepsilon}})$
    (equation~\eqref{eqn:chin_exact_det}):
    \begin{equation}
        \bigl[G_{AB}^{-1}\bigr]_{AA}
            = \frac{d}{\det(G_{AB})}
            = \frac{k_\ell^{-2\beta}\, \Phi_2^{(2\varepsilon)}}{
                  k_\ell^{-2\beta}\, \Phi_2^{\,2}\;
                  \sigma_w^2(N_i^{\tilde{\varepsilon}})}
            = \frac{\Phi_2^{(2\varepsilon)}}{
                  \Phi_2^{\,2}\;
                  \sigma_w^2(N_i^{\tilde{\varepsilon}})}.
        \label{eqn:GAB_inv_AA}
    \end{equation}
    Hence
    \[
        \operatorname{Var}(\hat{A})
            = \sigma^2\, \bigl[(J^T J)^{-1}\bigr]_{AA}
            \geq \sigma^2\, \bigl[G_{AB}^{-1}\bigr]_{AA}
            = \frac{\sigma^2\, \Phi_2^{(2\varepsilon)}}{
                  \Phi_2^{\,2}\; \sigma_w^2(N_i^{\tilde{\varepsilon}})},
    \]
    establishing~\eqref{eqn:var_A_lower}.
    
    \noindent
    To obtain the leading-order form~\eqref{eqn:var_A_leading}, note
    that $\Phi_2^{(2\varepsilon)} = \sum_i N_i^{-2\beta}
    = \Phi_2 + 2\tilde{\varepsilon}\, T_2 + O(\varepsilon^2)$ where
    $T_2 = \sum_i N_i^{-2\alpha} \log N_i$, so
    $\Phi_2^{(2\varepsilon)} / \Phi_2 = 1 + O(\varepsilon)$.  Combined
    with the Taylor expansion
    $\sigma_w^2(N_i^{\tilde{\varepsilon}})
    = \tilde{\varepsilon}^2\, \sigma_w^2(\log N_i)
    + O(|\tilde{\varepsilon}|^3)$
    (equation~\eqref{eqn:taylor_var}):
    \[
        \frac{\Phi_2^{(2\varepsilon)}}{
              \Phi_2^{\,2}\; \sigma_w^2(N_i^{\tilde{\varepsilon}})}
        = \frac{1 + O(\varepsilon)}{
              \Phi_2\; \varepsilon^2\; \sigma_w^2(\log N)}
        = \frac{1}{\Phi_2\; \varepsilon^2\; \sigma_w^2(\log N)}
          \bigl(1 + O(\varepsilon)\bigr). \qedhere
    \]
    
    \noindent\textbf{Variance of $\hat{\psi}$ in the reduced model.}
    The reduced model~\eqref{eqn:reduced_chinchilla} has parameter
    vector $\boldsymbol{\theta}_{\mathrm{red}} = (\psi, \alpha, E)^T$
    and Jacobian $J_{\mathrm{red}} \in \mathbb{R}^{n \times 3}$
    with columns
    \begin{equation}
        \mathbf{j}_\psi = \bigl(N_i^{-\alpha}\bigr)_{i=1}^n,
        \quad
        \mathbf{j}_\alpha = \bigl(-\psi\, N_i^{-\alpha}\log N_i\bigr)_{i=1}^n,
        \quad
        \mathbf{j}_E = \mathbf{1}_n.
        \label{eqn:jred_cols}
    \end{equation}
    Define
    $\Phi_1 \coloneqq \sum_i N_i^{-\alpha}$,
    $T_1 \coloneqq \sum_i N_i^{-\alpha} \log N_i$,
    $U_2 \coloneqq \sum_i N_i^{-2\alpha} (\log N_i)^2$.
    The $3\times 3$ Gram matrix is
    \[
        G_{\mathrm{red}} =
        \begin{pmatrix}
            \Phi_2          & -\psi\, T_2      & \Phi_1 \\
            -\psi\, T_2     & \psi^2\, U_2     & -\psi\, T_1 \\
            \Phi_1          & -\psi\, T_1      & n
        \end{pmatrix}.
    \]
    Since $\operatorname{Var}(\hat{\psi})
    = \sigma^2\, [G_{\mathrm{red}}^{-1}]_{\psi\psi}$, we compute the
    $(\psi, \alpha)$ Schur complement after profiling~$E$:
    \begin{equation}
    \begin{aligned}
        G_{\psi\psi}^{\mathrm{eff}}
            &= \Phi_2 - \Phi_1^2 / n, \nonumber\\
        G_{\psi\alpha}^{\mathrm{eff}}
            &= -\psi\,(T_2 - \Phi_1\, T_1 / n), \nonumber\\
        G_{\alpha\alpha}^{\mathrm{eff}}
            &= \psi^2\,(U_2 - T_1^2 / n).
        \label{eqn:Geff_entries}
    \end{aligned}
    \end{equation}
    The first entry $G_{\psi\psi}^{\mathrm{eff}}$ is the sample variance of
    $\{N_i^{-\alpha}\}$ (times~$n$); the other entries are analogous
    centered moments.  The profiled $2 \times 2$ determinant is
    \[
        \det(G^{\mathrm{eff}})
            = G_{\psi\psi}^{\mathrm{eff}}\;
              G_{\alpha\alpha}^{\mathrm{eff}}
              - \bigl(G_{\psi\alpha}^{\mathrm{eff}}\bigr)^2,
    \]
    and
    \begin{equation}
        \operatorname{Var}(\hat{\psi})
            = \sigma^2\;
              \frac{G_{\alpha\alpha}^{\mathrm{eff}}}{
                    \det(G^{\mathrm{eff}})}.
        \label{eqn:var_psi_exact}
    \end{equation}
    No power of~$\varepsilon$ appears anywhere in
    $G^{\mathrm{eff}}$: the entries depend only on
    $\{N_i\}$, $\alpha$, $\psi$, and~$n$, all of which are
    $O(1)$ with respect to the exponent gap.  Hence
    $\operatorname{Var}(\hat{\psi})$ is bounded and
    $\varepsilon$-independent, confirming~\eqref{eqn:var_psi}.
    
    \noindent\textbf{Confidence-interval ratio.}
    Under Gaussianity, the $95\%$ interval half-width for any
    scalar parameter~$\theta_j$ is
    $1.96\,\sqrt{\operatorname{Var}(\hat{\theta}_j)}$.
    Therefore:
    \begin{equation}
    \begin{aligned}
        \frac{\mathrm{CI}_{0.95}(A)}{\mathrm{CI}_{0.95}(\psi)}
            &\geq
            \sqrt{
              \frac{\operatorname{Var}(\hat{A})}{
                    \operatorname{Var}(\hat{\psi})}}
        \nonumber\\[4pt]
            &\geq
            \sqrt{
              \frac{\Phi_2^{(2\varepsilon)}}{
                    \Phi_2^{\,2}\;
                    \sigma_w^2(N_i^{\tilde{\varepsilon}})}
              \;\cdot\;
              \frac{\det(G^{\mathrm{eff}})}{
                    G_{\alpha\alpha}^{\mathrm{eff}}}}
        \nonumber\\[4pt]
            &=
            \frac{1}{\varepsilon}\;
            \sqrt{
              \frac{\Phi_2^{(2\varepsilon)}}{\Phi_2}}
            \;\cdot\;
            \underbrace{
              \sqrt{
                \frac{\det(G^{\mathrm{eff}})}{
                      \Phi_2\;
                      \sigma_w^2(\log N)\;
                      G_{\alpha\alpha}^{\mathrm{eff}}}}
            }_{\Theta(1)}
            \;\cdot\;
            (1 + O(\varepsilon)),
        \label{eqn:ci_ratio_proof}
    \end{aligned}
    \end{equation}
    where in the last step we used
    $\sigma_w^2(N_i^{\tilde{\varepsilon}})
    = \varepsilon^2\, \sigma_w^2(\log N)(1 + O(\varepsilon))$ and
    $\Phi_2^{(2\varepsilon)}/\Phi_2 = 1 + O(\varepsilon)$.
    The $\Theta(1)$ factor depends on the design $\{N_i\}$ but
    \emph{not} on~$\varepsilon$; for typical designs spanning
    1-2 decades in~$N$, it is $O(1)$. \qed

\subsection{Appendix - Proof of Proposition~\ref{prop:holdout_r2} (TPP diversity and full-matrix conditioning)}
\label{app:proof_prop_holdout_r2}

We give a self-contained proof.
Throughout, $\beta_{\mathrm{eff}}$ denotes the
law-specific data-size exponent
(Table~\ref{tab:law_substitutions}).
For any $K \geq 1$ distinct TPP ratios
$k_1 < \cdots < k_K$, let each ratio contribute
$n_\ell \geq 1$ observations at model sizes
$\{N_{i}\}$ (the same set for each ray, without loss
of generality for the leading-order analysis).
The training Jacobian is evaluated at the collinear
design $D_{i\ell} = k_\ell N_i$.

\noindent\textbf{Step 1: Jacobian columns for the scale-coefficient pair.}
For any of the four scaling laws in
Definition~\ref{def:exponent_gap}, the two
scale-coefficient columns of $J$ on observation
$(N_i, k_\ell N_i)$ are (cf.\
Appendix~\ref{app:chinchilla_details},
eqn.~\eqref{eqn:partial_linear_dependence} for
Chinchilla; analogous expressions hold for the other
three laws with the substitutions of
Table~\ref{tab:law_substitutions}):
\begin{equation}
    (j_A)_{i\ell} = N_i^{-\alpha},
    \qquad
    (j_B)_{i\ell}
        = k_\ell^{-\beta_{\mathrm{eff}}}\, N_i^{-\alpha}\,
          N_i^{\tilde{\varepsilon}},
    \label{eqn:jAB_Kray_proof}
\end{equation}
where $\tilde{\varepsilon}$ is the \emph{signed} native gap in each
law (e.g.\ $\alpha-\beta$ for Chinchilla and repeated-data,
$\alpha_N-\alpha_D$ for Kaplan,
$\gamma_N-\gamma_D$ for Droppo-Elibol), so
$|\tilde{\varepsilon}| = \varepsilon$ in
Definition~\ref{def:exponent_gap}, and the displayed~$\alpha$ is the
model-side exponent paired with~$\beta_{\mathrm{eff}}$ as in
Table~\ref{tab:law_substitutions}.

\noindent\textbf{Step 2: $(A,B)$ Gram sub-block.}
Define $\Phi_2 \coloneqq \sum_{i=1}^{n} N_i^{-2\alpha}$
(Definition~\ref{def:power_sums}).
The four entries of the $2 \times 2$ Gram sub-block
$G_{A,B} = J_{A,B}^T J_{A,B}$ are obtained by
summing over all $m = nK$ observations.
Working at leading order in $\varepsilon$ (i.e.,
$N_i^{\tilde{\varepsilon}} = 1 + O(\varepsilon)$,
$N_i^{2\tilde{\varepsilon}} = 1 + O(\varepsilon)$):
\begin{align}
    (G_{A,B})_{11}
        &= \sum_{\ell=1}^{K} \sum_{i=1}^{n}
           N_i^{-2\alpha}
         = K\,\Phi_2,
    \label{eqn:G11_proof} \\[4pt]
    (G_{A,B})_{12}
        &= \sum_{\ell=1}^{K} k_\ell^{-\beta_{\mathrm{eff}}}
           \sum_{i=1}^{n}
           N_i^{-2\alpha}\, N_i^{\tilde{\varepsilon}}
         = \Phi_2 \sum_{\ell=1}^{K}
           k_\ell^{-\beta_{\mathrm{eff}}}
           + O(\varepsilon),
    \label{eqn:G12_proof} \\[4pt]
    (G_{A,B})_{22}
        &= \sum_{\ell=1}^{K} k_\ell^{-2\beta_{\mathrm{eff}}}
           \sum_{i=1}^{n}
           N_i^{-2\alpha}\, N_i^{2\tilde{\varepsilon}}
         = \Phi_2 \sum_{\ell=1}^{K}
           k_\ell^{-2\beta_{\mathrm{eff}}}
           + O(\varepsilon).
    \label{eqn:G22_proof}
\end{align}
Write
$S_1 \coloneqq \sum_{\ell=1}^{K} k_\ell^{-\beta_{\mathrm{eff}}}$
and
$S_2 \coloneqq \sum_{\ell=1}^{K} k_\ell^{-2\beta_{\mathrm{eff}}}$.
Then at leading order:
\begin{equation}
    G_{A,B}
        \;=\; \Phi_2
        \begin{pmatrix}
            K   & S_1 \\[2pt]
            S_1 & S_2
        \end{pmatrix}
        + O(\varepsilon).
    \label{eqn:GAB_matrix_proof}
\end{equation}

\noindent\textbf{Step 3: Trace, determinant, and condition number.}
From~\eqref{eqn:GAB_matrix_proof}:
\begin{align}
    \mathrm{tr}(G_{A,B})
        &= \Phi_2\,(K + S_2)
           + O(\varepsilon),
    \label{eqn:tr_GAB_proof} \\[4pt]
    \det(G_{A,B})
        &= \Phi_2^2\,(K\,S_2 - S_1^2)
           + O(\varepsilon).
    \label{eqn:det_GAB_proof}
\end{align}
Note that $K\,S_2 - S_1^2 = K^2 V_K$, where $V_K$
is the sample variance of $\{k_\ell^{-\beta_{\mathrm{eff}}}\}$
defined in the proposition statement
(eqn.~\eqref{eqn:tpp_diversity}):
\begin{equation}
    V_K
        = \frac{1}{K}\sum_{\ell=1}^{K}
          k_\ell^{-2\beta_{\mathrm{eff}}}
          - \Bigl(\frac{1}{K}\sum_{\ell=1}^{K}
                k_\ell^{-\beta_{\mathrm{eff}}}\Bigr)^{\!2}
        = \frac{K\,S_2 - S_1^2}{K^2}.
    \label{eqn:VK_identity_proof}
\end{equation}
For a $2 \times 2$ positive-semidefinite matrix with
eigenvalues $\lambda_+ \geq \lambda_- \geq 0$,
$\kappa = \lambda_+/\lambda_-$ and the identities
$\lambda_+ \cdot \lambda_- = \det$,
$\lambda_+ + \lambda_- = \mathrm{tr}$ give
$\lambda_\pm
    = \tfrac{1}{2}\bigl(\mathrm{tr}
      \pm \sqrt{\mathrm{tr}^2 - 4\det}\bigr)$, hence
$\kappa
    = \frac{\mathrm{tr}^2}{2\det} - 1
      + \frac{\mathrm{tr}^2}{2\det}
        \sqrt{1 - 4\det/\mathrm{tr}^2}$.
At leading order, when $\det \ll \mathrm{tr}^2$ (the
regime of interest), $\sqrt{1 - 4\det/\mathrm{tr}^2}
    \approx 1 - 2\det/\mathrm{tr}^2$, so
$\kappa \approx \mathrm{tr}^2/\det - 2 \approx \mathrm{tr}^2/\det$.
Therefore:
\begin{equation}
    \kappa_{A,B}
        \;\approx\;
        \frac{\mathrm{tr}(G_{A,B})^2}
             {\det(G_{A,B})}
        \;=\;
        \frac{\Phi_2^2\,(K + S_2)^2}
             {\Phi_2^2\, K^2\, V_K}
        \;=\;
        \frac{(K + S_2)^2}{K^2\, V_K}\,.
    \label{eqn:kappa_AB_proof}
\end{equation}
The factors of $\Phi_2^2$ cancel exactly.

\noindent\textbf{Step 4: The biconditional.}
For any $\kappa_{\mathrm{target}} > 0$,
from~\eqref{eqn:kappa_AB_proof}:
\begin{equation}
    \kappa_{A,B} \leq \kappa_{\mathrm{target}}
    \quad\Longleftrightarrow\quad
    \frac{(K + S_2)^2}{K^2\, V_K}
        \leq \kappa_{\mathrm{target}}
    \quad\Longleftrightarrow\quad
    V_K
        \geq
        \underbrace{
            \frac{(K + S_2)^2}
                 {K^2\,\kappa_{\mathrm{target}}}
        }_{\tau_K}.
    \label{eqn:iff_proof}
\end{equation}
The second equivalence is valid because
$V_K \geq 0$, $(K+S_2)^2 > 0$, and
$\kappa_{\mathrm{target}} > 0$;
the inequality is reversed by dividing both sides
by the positive quantity
$K^2 V_K \cdot \kappa_{\mathrm{target}}^{-1}$
and rearranging.  This
establishes~\eqref{eqn:tpp_diversity}.

\noindent\textbf{Step 5: The three cases.}

\noindent
\textbf{Case $K = 1$.}
When $K = 1$, there is a single TPP ratio $k_1$, so
$S_1 = k_1^{-\beta_{\mathrm{eff}}}$,
$S_2 = k_1^{-2\beta_{\mathrm{eff}}}$, and
$V_1 = k_1^{-2\beta_{\mathrm{eff}}} - (k_1^{-\beta_{\mathrm{eff}}})^2 = 0$.
The threshold satisfies
$\tau_1 = (1 + k_1^{-2\beta_{\mathrm{eff}}})^2 / \kappa_{\mathrm{target}} > 0$
for every $\kappa_{\mathrm{target}} > 0$.
Therefore $V_1 = 0 < \tau_1$ and the
condition~\eqref{eqn:tpp_diversity} can never hold.
Equivalently, $\det(G_{A,B}) = O(\varepsilon^2)$
(the next-order term from
$N_i^{\tilde{\varepsilon}} \neq 1$; see
eqns.~\eqref{eqn:chin_exact_det}-\eqref{eqn:chin_approx_det}),
confirming the ill-conditioning
$\kappa_{A,B} = \Theta(\varepsilon^{-2})$ established
by Proposition~\ref{prop:full_cond}.

\noindent
\textbf{Case $K = 2$.}
Set $k_1 < k_2$ with ratio $R \coloneqq k_2/k_1$.
Then:
\begin{equation}
\begin{aligned}
    S_1 &= k_1^{-\beta_{\mathrm{eff}}}(1 + R^{-\beta_{\mathrm{eff}}}),
    \qquad
    S_2 = k_1^{-2\beta_{\mathrm{eff}}}(1 + R^{-2\beta_{\mathrm{eff}}}),
    \nonumber \\[3pt]
    V_2 &= \tfrac{1}{2} S_2
          - \bigl(\tfrac{1}{2} S_1\bigr)^2
         = \tfrac{1}{4}\, k_1^{-2\beta_{\mathrm{eff}}}\,
           (1 - R^{-\beta_{\mathrm{eff}}})^2.
    \label{eqn:V2_proof}
\end{aligned}
\end{equation}
Substituting into~\eqref{eqn:iff_proof} and
simplifying (dividing both sides by
$k_1^{-2\beta_{\mathrm{eff}}}/4$ and rearranging):
the condition $V_2 \geq \tau_2$ reduces to
$R \geq R_{\min}$ where $R_{\min}$ is
determined by~\eqref{eqn:R_min}
(Appendix~\ref{app:tpp_recipe}).

\noindent
\textbf{Case $K > 2$.}
For $K>2$, there is no single closed-form analog of
Eq.~\eqref{eqn:R_min}: feasibility is determined directly by
\eqref{eqn:iff_proof}.  Since
$V_K = \operatorname{Var}\!\bigl(k_\ell^{-\beta_{\mathrm{eff}}}\bigr)$,
interior rays at fixed endpoints generally reduce this variance,
while $\tau_K$ changes with $K$ through $(K+S_2)^2/K^2$.
Hence the practical criterion is to evaluate
$V_K \ge \tau_K$ for the proposed set $\{k_\ell\}_{\ell=1}^K$.
Since interior rays sit between the endpoint values of
$\{k_\ell^{-\beta_{\mathrm{eff}}}\}$ and pull the sample
toward its mean, $V_K$ at fixed endpoint spread $R = k_K/k_1$
is maximized by the two-ray configuration with mass at the
endpoints; the threshold $\tau_K$ tightens at a comparable
rate via the $K^2$ denominator, so $K > 2$ generally requires
$R \geq R_{\min}$ - the additional rays serve residual
diagnostics and robustness rather than relaxing the spread
requirement (Appendix~\ref{app:tpp_recipe}).

\noindent\textbf{Step 6: Lifting to $\kappa(J^T J)$.}
By Cauchy interlacing,
$\kappa_{A,B} \leq \kappa(J^T J)$, so
$\kappa(J^T J) \leq \kappa_{\mathrm{target}}$
implies $\kappa_{A,B} \leq \kappa_{\mathrm{target}}$
and hence $V_K \geq \tau_K$
(the ``only if'' direction lifts immediately).
Conversely, when $V_K \geq \tau_K$ the sub-block is
well-conditioned
($\kappa_{A,B} \leq \kappa_{\mathrm{target}}$), and by
the global dominance assumption
(Section~\ref{sec:gnsetup}), the remaining eigenvalues
of $J^T J$ are $\Theta(1)$.  Therefore
$\lambda_{\min}(J^T J) = \lambda_{\min}(G_{A,B})
    (1 + O(\varepsilon))$
and $\lambda_{\max}(J^T J) = \Theta(1)$, giving
$\kappa(J^T J) = \kappa_{A,B}(1 + O(\varepsilon))
    \leq \kappa_{\mathrm{target}}$
at leading order (the ``if'' direction).
\qed

\subsection{Appendix - Proof of Theorem~\ref{thm:holdout_regimes} (Holdout prediction regimes)}
\label{app:proof_holdout_r2}

\noindent\textbf{Scope.}
Theorem~\ref{thm:holdout_regimes} assumes a
small $\kappa_{\mathrm{target}}$ so that
$\kappa(J^T J) \leq \kappa_{\mathrm{target}}$
matches numerically well-conditioned NLS and
faithful Gauss-Newton linearization.
We treat Regime~A via the $K=1$ collinear
(CO) ray $D_i=k_\ell N_i$
($V_K=0$, so \eqref{eqn:tpp_diversity} fails)
versus dense non-collinear (NC) $(N,D)$
coverage (all eigenvalues of $J_T^TJ_T$ are
$\Theta(1)$).
We derive the RMSE ordering on a general holdout
$\mathcal{H}$ containing at least one
off-manifold point, then establish Regime~B
($\kappa(J^T J) \leq \kappa_{\mathrm{target}}$,
equivalently $V_K \geq \tau_K$;
Proposition~\ref{prop:holdout_r2}).
The Regime~B bound extends to any $K \geq 2$
collinear rays with sufficient TPP diversity
(Appendix~\ref{app:tpp_recipe}).

\noindent\textbf{Setup and linearization.}
Both designs fit the same scaling law
$\hat{L}(N,D;\boldsymbol{\theta})$ by nonlinear least squares
(Sec.~\ref{sec:gnsetup}).
Well-specified training losses satisfy
$L_i = \hat{L}(N_i,D_i;\boldsymbol{\theta}^\star) + \eta_i$ with
$\mathbb{E}[\eta_i]=0$ and
$\mathrm{Var}(\eta_i)=\sigma^2$, i.i.d.\ across~$i$
(Propositions~\ref{prop:ci_inflation} and~\ref{prop:holdout_r2}).
Under Gauss-Newton linearization, the parameter estimation error
from training set~$\mathcal{D}_T$ satisfies
\begin{equation}
    \delta\boldsymbol{\theta}
        = (J_T^T J_T)^{-1} J_T^T \boldsymbol{\eta}
          + O(\|\boldsymbol{\eta}\|^2),
    \label{eqn:delta_theta_r2}
\end{equation}
where $J_T$ is the training Jacobian and
$\boldsymbol{\eta} \sim \mathcal{N}(\mathbf{0}, \sigma^2 I)$.
The prediction error at test point $(N',D')$ is, to first order:
\begin{equation}
    \hat{L}(N',D';\hat{\boldsymbol{\theta}}) - L^*(N',D')
        \approx \mathbf{j}_{(N',D')}^T \delta\boldsymbol{\theta},
    \label{eqn:pred_err_r2}
\end{equation}
For any test set
$\mathcal{S} = \{(N'_i,D'_i)\}_{i=1}^{|\mathcal{S}|}$, the
leading-order unconditional expected squared
prediction error is
\citep[\S2.8.3, eq.~(2.8.7), p.~118]{bjorck1996numerical}:
\begin{align}
    \mathbb{E}\bigl[(\hat{L}(N',D';\hat{\boldsymbol{\theta}})
        - L^*(N',D'))^2\bigr]
        &= \sigma^2\,
          \mathbf{j}^T (J_T^T J_T)^{-1} \mathbf{j}
          + O(\sigma^4),
    \label{eqn:pred_var_r2} \\[4pt]
    \mathbb{E}[\mathrm{MSE}_{\mathcal{S}}]
        &= \frac{\sigma^2}{|\mathcal{S}|}
          \sum_{i=1}^{|\mathcal{S}|}
          \mathbf{j}_i^T (J_T^T J_T)^{-1} \mathbf{j}_i.
    \label{eqn:mse_general_r2}
\end{align}
where $L^*(N',D')
    = \mathbb{E}[L\,|\,N',D']$
under the same homoscedastic model.

\noindent\textbf{Spectral decomposition of the CO Gram matrix.}
By Proposition~\ref{prop:full_cond} and the perturbative analysis
(Appendix~\ref{app:proof_tight_bound}), the eigenvalues of
$J_T^T J_T$ under the collinear design are:
\begin{align}
    \lambda_1^{\mathrm{CO}}
        &= c^2\, \Phi_2\, \varepsilon^2\,
           \sigma_w^2(\log N)\,(1+c^2)^{-1}
           + O(\varepsilon^3),
    \label{eqn:sloppy_ev_r2} \\[4pt]
    \lambda_j^{\mathrm{CO}}
        &= \Theta(1), \qquad j = 2,\ldots,p.
    \label{eqn:stiff_ev_r2}
\end{align}
where $c = k^{-\beta}$.  The $\Phi_2$ (rather
than $\Phi_2^{\,2}$) factor follows from
$\lambda_- = \det(G_{A,B})/\operatorname{tr}(G_{A,B})$
with $\det(G_{A,B}) = c^2\Phi_2^{\,2}\varepsilon^2
\sigma_w^2(\log N)$ from
Eq.~\eqref{eqn:chin_approx_det} and
$\operatorname{tr}(G_{A,B}) = \Phi_2(1+c^2)
+ O(\varepsilon)$, so that the
single-design point parameter variance
$\sigma^2/\lambda_1 = \Theta(\sigma^2/n)$ obeys
the standard OLS rate
(Lemma~\ref{lem:ls_iid_covariance}).  The sloppy eigenvector at
$\varepsilon = 0$ is
$\mathbf{v}_1^{(0)}
    = (1+k^{2\beta})^{-1/2}\,[1,\,-k^{\beta},\,0,\,\ldots,\,0]^T$
in the parameter basis.
The predictive variance decomposes as:
\begin{equation}
    \mathbf{j}^T (J_T^T J_T)^{-1} \mathbf{j}
        = \frac{(\mathbf{j}^T \mathbf{v}_1)^2}{\lambda_1}
          + \underbrace{
              \sum_{j=2}^{p}
              \frac{(\mathbf{j}^T \mathbf{v}_j)^2}{\lambda_j}
            }_{=:\, S_{\mathrm{stiff}}(\mathbf{j})\,=\, O(1)}.
    \label{eqn:pred_var_decomp_r2}
\end{equation}

\noindent\textbf{Sloppy projection and leverage.}
For the Chinchilla law at test point $(N',D')$ with $k' = D'/N'$,
the projection onto the sloppy eigenvector is
(Appendix~\ref{app:chinchilla_details}):
\begin{equation}
    \mathbf{j}^T \mathbf{v}_1
        \approx \frac{N'^{-\alpha}}{\sqrt{1+k^{2\beta}}}
        \bigl[1 - (k/k')^{\beta}\, N'^{\varepsilon}\bigr].
    \label{eqn:sloppy_proj_r2}
\end{equation}

\begin{definition}[Sloppy leverage]
\label{def:sloppy_leverage_r2}
For test set
$\mathcal{S} = \{(N'_i,D'_i)\}_{i=1}^{|\mathcal{S}|}$
evaluated against a model trained at TPP~$k$,
with $\beta_{\mathrm{eff}}$ as in
Proposition~\ref{prop:holdout_r2}:
\begin{equation}
    \Lambda(\mathcal{S})
        \coloneqq
        \frac{1}{|\mathcal{S}|\,\Phi_2}
        \sum_{i=1}^{|\mathcal{S}|}
        N_i'^{-2\alpha}\,
        \bigl[1 - (k/k'_i)^{\beta_{\mathrm{eff}}}\bigr]^2,
    \label{eqn:sloppy_lev_r2}
\end{equation}
where $k'_i = D'_i/N'_i$.
\end{definition}

\noindent
Substituting~\eqref{eqn:sloppy_proj_r2}
into~\eqref{eqn:pred_var_decomp_r2} and summing
over~$\mathcal{S}$:
\begin{equation}
    \mathbb{E}[\mathrm{MSE}_{\mathcal{S}}^{\mathrm{CO}}]
        = \sigma^2\,\varepsilon^{-2}\,
          \Lambda(\mathcal{S})\, G
          + \sigma^2\, \bar{S}_{\mathrm{stiff}},
    \label{eqn:mse_co_unified}
\end{equation}
where
$G = \sigma_w^2(\log N)^{-1} > 0$
depends only on the training design, and
$\bar{S}_{\mathrm{stiff}}
    = |\mathcal{S}|^{-1}
      \sum_i S_{\mathrm{stiff}}(\mathbf{j}_i)
    = O(1)$.
Indeed, with
$(1+c^2)/(1+k^{2\beta}) = c^2$
(since $c^2 k^{2\beta} = 1$), the per-point sloppy
contribution
$(\mathbf{j}_i^T \mathbf{v}_1)^2/\lambda_1
    = N_i'^{-2\alpha}\,\Delta(k_i', N_i')^2 /
    [(1+k^{2\beta})\,\lambda_1]$
simplifies via Eq.~\eqref{eqn:sloppy_ev_r2} to
$N_i'^{-2\alpha}\,\Delta_i^2 /
[\Phi_2\,\varepsilon^2\,\sigma_w^2(\log N)]$, and
averaging over~$\mathcal{S}$ absorbs the $1/\Phi_2$
into Definition~\ref{def:sloppy_leverage_r2} of
$\Lambda(\mathcal{S})$, leaving
$G = \sigma_w^2(\log N)^{-1}$.
With $\Lambda(\mathcal{S}) = \Theta(1/\Phi_2)
= \Theta(1/n)$, the prefactor
$\sigma^2\,\varepsilon^{-2}\,\Lambda(\mathcal{S})\,G$
recovers the standard $\Theta(\sigma^2/n)$ OLS
rate inflated by the sloppy factor
$\varepsilon^{-2}$.

\noindent\textbf{NC Gram matrix: all eigenvalues $\Theta(1)$.}
Under a non-collinear design with dense two-dimensional
coverage, the training set contains observations at multiple
distinct TPP ratios satisfying $V_K \geq \tau_K$.
The Jacobian columns for the scale coefficients are:
$(\mathbf{j}_A)_i = N_i^{-\alpha}$ and
$(\mathbf{j}_B)_i = D_i^{-\beta}$.
Under the collinear constraint $D_i = k N_i$, these satisfy the
near-proportionality
$\mathbf{j}_B = k^{-\beta} \mathbf{j}_A + O(\varepsilon)$.
Under the NC design, observations sample diverse $k'_i = D_i/N_i$,
so
\begin{equation}
    \|\mathbf{j}_B - c\,\mathbf{j}_A\|^2
        = \sum_i \bigl(D_i^{-\beta} - c\, N_i^{-\alpha}\bigr)^2
    \label{eqn:nc_no_prop}
\end{equation}
is $\Theta(1)$ for any scalar~$c$, since the ratio
$D_i^{-\beta}/N_i^{-\alpha}$ varies across training points.
The near-proportionality condition in Proposition~\ref{prop:full_cond}
fails, so Proposition~\ref{prop:full_cond} does not apply:
$\lambda_{\min}(J_T^T J_T) = \Theta(1)$.
The condition-number factor $(1 - R^{-\beta})^{-2}$ replaces
$\varepsilon^{-2}$; for $R \geq 5$ this is $O(1)$.

The predictive variance at any test point is therefore:
\begin{equation}
    \mathbf{j}^T (J_T^T J_T)^{-1} \mathbf{j}
        = \sum_{j=1}^{p}
          \frac{(\mathbf{j}^T \mathbf{v}_j^{\mathrm{NC}})^2}
               {\lambda_j^{\mathrm{NC}}}
        = \frac{O(1)}{\Theta(1)} = O(1),
    \label{eqn:nc_pred_var_r2}
\end{equation}
regardless of test geometry.  Hence:
\begin{equation}
    \mathbb{E}[\mathrm{MSE}_{\mathcal{S}}^{\mathrm{NC}}]
        = O(\sigma^2)
    \label{eqn:mse_nc_any}
\end{equation}
for any test set~$\mathcal{S}$.

\noindent\textbf{General holdout RMSE ordering.}
Let $\mathcal{H}$ be a holdout set containing at
least one point with $k'_i \neq k$ (off-manifold).
We first establish the strict ordering for
$\mathbb{E}[\mathrm{MSE}_{\mathcal{H}}^{\mathrm{X}}]$
defined in~\eqref{eqn:mse_general_r2}
(under holdout noise with the same variance~$\sigma^2$ as in the main text).
Theorem~\ref{thm:holdout_regimes} is stated for
$\mathbb{E}[\mathrm{RMSE}_{\mathcal{H}}^{\mathrm{X}}]$;
under the Gauss--Newton linearization each summand
$(\hat{L}_i - L_i)^2$ in $\mathrm{MSE}_{\mathcal{H}}$
is a quadratic form in
$\boldsymbol{\eta}\sim\mathcal{N}(\mathbf{0},\sigma^2 I)$
centered at~$\mathbf{0}$, so
$\mathbb{E}[\mathrm{RMSE}_{\mathcal{H}}^{\mathrm{X}}]
 = \sqrt{\mathbb{E}[\mathrm{MSE}_{\mathcal{H}}^{\mathrm{X}}]}\,
   (1 + o(1))$
to leading order in~$\sigma$ and strict inequality on
$\mathbb{E}[\mathrm{MSE}_{\mathcal{H}}^{\mathrm{X}}]$ for each
design~$\mathrm{X}\in\{\mathrm{CO},\mathrm{NC}\}$ therefore yields
the same strict inequality on
$\mathbb{E}[\mathrm{RMSE}_{\mathcal{H}}^{\mathrm{X}}]$.
By~\eqref{eqn:mse_co_unified}, the CO expected MSE
decomposes into a sloppy term and a stiff term:
\begin{equation}
    \mathbb{E}[\mathrm{MSE}_{\mathcal{H}}^{\mathrm{CO}}]
        = \sigma^2\,\varepsilon^{-2}\,
          \Lambda(\mathcal{H})\, G
          + \sigma^2\, \bar{S}_{\mathrm{stiff}}.
    \label{eqn:mse_co_general}
\end{equation}
Since at least one holdout point has $k'_i \neq k$,
$[1-(k/k'_i)^{\beta_{\mathrm{eff}}}]^2 > 0$ for
that point, so
$\Lambda(\mathcal{H}) > 0$
by Definition~\ref{def:sloppy_leverage_r2}.
The sloppy term therefore contributes a strictly
positive $\Theta(\sigma^2 \varepsilon^{-2})$ term,
giving
$\mathbb{E}[\mathrm{MSE}_{\mathcal{H}}^{\mathrm{CO}}]
    \geq \Theta(\sigma^2 \varepsilon^{-2})$.

Since
$\mathbb{E}[\mathrm{MSE}_{\mathcal{H}}^{\mathrm{NC}}]
    = O(\sigma^2)$
by~\eqref{eqn:mse_nc_any}, and $\varepsilon^{-2} \gg 1$:
\begin{equation}
    \mathbb{E}[\mathrm{MSE}_{\mathcal{H}}^{\mathrm{NC}}]
        < \mathbb{E}[\mathrm{MSE}_{\mathcal{H}}^{\mathrm{CO}}],
    \quad\text{hence}\quad
    \mathbb{E}[\mathrm{RMSE}_{\mathcal{H}}^{\mathrm{NC}}]
        < \mathbb{E}[\mathrm{RMSE}_{\mathcal{H}}^{\mathrm{CO}}].
    \label{eqn:general_strict}
\end{equation}
Writing the holdout coefficient of determination as
$R^2_{\mathcal{H}} = 1 - \mathrm{MSE}_{\mathcal{H}} /
\mathrm{Var}(\mathcal{H})$ with
$\mathrm{Var}(\mathcal{H})$ fixed across designs,
$\mathbb{E}[\mathrm{MSE}_{\mathcal{H}}^{\mathrm{NC}}]
    < \mathbb{E}[\mathrm{MSE}_{\mathcal{H}}^{\mathrm{CO}}]$
implies
$\mathbb{E}[R^{2\,\mathrm{NC}}_{\mathcal{H}}]
    > \mathbb{E}[R^{2\,\mathrm{CO}}_{\mathcal{H}}]$
at leading order
(ignoring $O(\sigma^4)$ ratio--of--expectations corrections).

\noindent\textbf{Misspecification extension.}
\label{app:holdout_misspec}
Under model misspecification with
design-independent bias $B^2 > 0$, the strict
ordering of
Theorem~\ref{thm:holdout_regimes} is preserved.
For $K=1$, Proposition~\ref{prop:holdout_r2} gives
$V_1 = 0 < \tau_1$, so~\eqref{eqn:tpp_diversity} never
holds; in the well-specified case $B^2 = 0$,
$\mathbb{E}[\mathrm{MSE}_{\mathcal{H}}^{\mathrm{CO}}]
    / \mathbb{E}[\mathrm{MSE}_{\mathcal{H}}^{\mathrm{NC}}]
    = \Theta(\varepsilon^{-2})$,
and for $B^2 > 0$ the same rate appears in the
estimation-variance pieces in~\eqref{eqn:total_mse_misspec}.
The total expected-MSE ratio takes the form
\begin{equation}
    \frac{\mathbb{E}[\mathrm{MSE}_{\mathcal{H}}^{\mathrm{CO}}]}
         {\mathbb{E}[\mathrm{MSE}_{\mathcal{H}}^{\mathrm{NC}}]}
    \;=\;
    \frac{B^2 + \Theta(\sigma^2\,\varepsilon^{-2})}
         {B^2 + \Theta(\sigma^2)},
    \label{eqn:mse_ratio_misspec}
\end{equation}
which interpolates between $\Theta(\varepsilon^{-2})$ when
$B^2 \ll \sigma^2$ and $1 + o(1)$ when
$B^2 \gg \sigma^2\,\varepsilon^{-2}$.
For $K \geq 2$ still in Regime~A
($V_K < \tau_K$ in~\eqref{eqn:tpp_diversity}),
the ordering is preserved
but the exact rate depends on how far $V_K$ falls
short of~$\tau_K$ (equivalently, at leading order,
on $\tau_K/V_K$ in the same sense as
$\kappa(J^T J)/\kappa_{\mathrm{target}}$).

\noindent\textit{Derivation.}
Suppose the true data-generating process is
$L_i = f(N_i,D_i) + \eta_i$ where $f$ does not lie in the
model family $\{\hat{L}(\cdot;\boldsymbol{\theta})\}$.
Let $\boldsymbol{\theta}^*$ denote the pseudo-true parameters
minimizing the population risk.
The total MSE at a test point decomposes as:
\begin{equation}
    \mathbb{E}[(\hat{L} - f)^2]
        = \underbrace{
            (\hat{L}(\cdot;\boldsymbol{\theta}^*) - f)^2
          }_{=:\, b^2 \text{ (approximation bias)}}
        + \underbrace{
            \sigma^2\, \mathbf{j}^T (J_T^T J_T)^{-1} \mathbf{j}
          }_{\text{estimation variance}},
    \label{eqn:bias_var_decomp}
\end{equation}
where we assume that the cross-term
$\mathbb{E}\bigl[\delta\boldsymbol{\theta}^T \mathbf{j}\,
(\hat{L}(\cdot;\boldsymbol{\theta}^*) - f)\bigr]$
vanishes to leading order under the Gauss-Newton linearization
(since $\mathbb{E}[\delta\boldsymbol{\theta}] = 0$
when $\boldsymbol{\theta}^*$ is a critical point of the
population risk restricted to the model family).

Averaging over a test set and writing
$B^2 = |\mathcal{S}|^{-1} \sum_i b_i^2$ for the mean
squared approximation bias:
\begin{equation}
    \mathbb{E}[\mathrm{MSE}_{\mathcal{S}}^{X}]
        = B_{\mathcal{S}}^2
          + \sigma^2\, V_{\mathcal{S}}^{X},
    \label{eqn:total_mse_misspec}
\end{equation}
where $V_{\mathcal{S}}^{X}$ is the mean predictive variance
from~\eqref{eqn:mse_general_r2} for design~$X$.

For a general holdout $\mathcal{H}$ with
$\Lambda(\mathcal{H}) > 0$:
$V_{\mathcal{H}}^{\mathrm{CO}} = \Theta(\varepsilon^{-2})$
and $V_{\mathcal{H}}^{\mathrm{NC}} = O(1)$
by~\eqref{eqn:mse_co_general} and~\eqref{eqn:mse_nc_any}.
Therefore the ratio agrees with~\eqref{eqn:mse_ratio_misspec}.
This ratio is strictly greater than~$1$ for all $B^2 \geq 0$
(since the numerator exceeds the denominator by
$\Theta(\sigma^2 \varepsilon^{-2}) - \Theta(\sigma^2) > 0$),
so the strict ordering
$\mathbb{E}[\mathrm{RMSE}_{\mathcal{H}}^{\mathrm{NC}}]
    < \mathbb{E}[\mathrm{RMSE}_{\mathcal{H}}^{\mathrm{CO}}]$
is preserved under misspecification.

\noindent\textbf{Regime~B: well-conditioned ($\kappa(J^T J) \leq \kappa_{\mathrm{target}}$).}
When $V_K \geq \tau_K$ (equivalently,
$\kappa(J^T J) \leq \kappa_{\mathrm{target}}$) and
observations are well-balanced across at least two
TPP groups, the CO design satisfies the same
non-proportionality condition~\eqref{eqn:nc_no_prop}
as the NC design: the training TPPs are sufficiently
diverse that $\|\mathbf{j}_B - c\,\mathbf{j}_A\| = \Theta(1)$
for all~$c$.
All eigenvalues of $J_T^T J_T$ are then $\Theta(1)$,
and~\eqref{eqn:mse_co_unified} reduces to
$\mathbb{E}[\mathrm{RMSE}_{\mathcal{H}}^{\mathrm{CO}}]
    = O(\sigma)$
on any holdout, matching the NC scaling.
Since $\mathbb{E}[\mathrm{RMSE}_{\mathcal{H}}^{\mathrm{NC}}]
    = O(\sigma)$ by~\eqref{eqn:mse_nc_any}, the
expected-RMSE ratio is:
\begin{equation}
    \frac{\mathbb{E}[\mathrm{RMSE}_{\mathcal{H}}^{\mathrm{CO}}]}
         {\mathbb{E}[\mathrm{RMSE}_{\mathcal{H}}^{\mathrm{NC}}]}
        = \Theta(1),
    \label{eqn:regime_b_ratio}
\end{equation}
with the $\Theta(1)$ constant depending on the stiff-mode
prefactors of each design. \qed

\subsection{Appendix - Proof of Proposition~\ref{prop:rmse_isoflop_invariance} (RMSE Invariance Under IsoFLOP Reparametrisation)}
\label{app:proof_rmse_invariance}

By Definition~\ref{def:induced_isoflop_plot},
$C_i = 6\,N_i\,D_i$ for each $i \in \{1, \ldots, n_{\mathcal{H}}\}$.
Substituting into the isoFLOP evaluation:
\begin{equation}
    \frac{C_i}{6\,N_i}
        \;=\; \frac{6\,N_i\,D_i}{6\,N_i}
        \;=\; D_i.
    \label{eqn:trivial_sub}
\end{equation}
Therefore, for every holdout index~$i$:
\begin{equation}
    \hat{L}\!\bigl(N_i,\; C_i/(6N_i);\;\boldsymbol{\theta}\bigr)
        \;=\;
        \hat{L}(N_i,\, D_i;\, \boldsymbol{\theta}).
    \label{eqn:pointwise_identity}
\end{equation}
Each summand in the RMSE is identical under both
parametrisations; hence the realized RMSE on~$\mathcal{H}$
coincides and $\mathbb{E}[\mathrm{RMSE}_{\mathcal{H}}]$
agrees under either evaluation map. \qed
\subsection{Appendix - General recipe for TPP line selection}
\label{app:tpp_recipe}

We now
generalize from the two-TPP rule-of-thumb to a recipe
for selecting the number $K$, placement $\{k_1, \dots, k_K\}$,
and per-ratio allocation $\{n_1, \dots, n_K\}$ of TPP lines
given three inputs:
\begin{itemize}
    \item a target condition number $\kappa_{\mathrm{target}}$
        (e.g., $10^{2}$),
    \item the expected exponent gap $\varepsilon$
        (e.g., $0.06$ for Chinchilla),
    \item a total observation budget $m$ (total number of
        $(N, D)$ configurations).
\end{itemize}

\noindent\textbf{Law-specific exponent substitutions.}
The formula below uses two law-specific quantities:
the exponent gap $\varepsilon$ and the effective
data-size exponent $\beta_{\mathrm{eff}}$.
Table~\ref{tab:law_substitutions} gives the correct
substitution for each of the four scaling laws.

\begin{table}[H]
\centering
\caption{Exponent substitutions for each scaling law.
$\beta_{\mathrm{eff}}$ is the data-size exponent
governing the collinearity factor $(1-R^{-\beta_{\mathrm{eff}}})^{-2}$
in~\eqref{eqn:R_min}; it is always the exponent on the
token/data-count term of the collinear Jacobian pair.}
\label{tab:law_substitutions}
\begin{tabular}{llll}
\toprule
\textbf{Law} & \textbf{Collinear pair} &
    $\varepsilon$ & $\beta_{\mathrm{eff}}$ \\
\midrule
Chinchilla & $(A,\,B)$ &
    $|\alpha - \beta|$ & $\beta$ \\
Repeated-data & $(A,\,B)$ &
    $|\alpha - \beta|$ & $\beta$ \\
Kaplan & $(N_c,\,D_c)$ &
    $|\alpha_D - \alpha_N|$ & $\alpha_D$ \\
Droppo-Elibol & $(N_C,\,D_C)$ &
    $|\gamma_N - \gamma_D|$ & $\gamma_D$ \\
\bottomrule
\end{tabular}
\end{table}

\noindent\textbf{General diversity condition ($K \geq 2$).}
For $K \geq 2$ rays with ratios $k_1 < \cdots < k_K$,
the leading-order Gram sub-block of the two scale-coefficient
columns (Table~\ref{tab:law_substitutions}; Chinchilla $(A,B)$,
Kaplan $(N_c,D_c)$, etc.)\ satisfies
$\mathrm{tr}(G_{A,B}) \approx \Phi_2(K + \sum_\ell k_\ell^{-2\beta_{\mathrm{eff}}})$
and $\det(G_{A,B}) \approx K^2\Phi_2^2\,V_K$,
where $\Phi_2 \coloneqq \sum_i N_i^{-2\alpha}$ with $\alpha$
the model-side exponent in the Jacobian column paired with
$\beta_{\mathrm{eff}}$ (Table~\ref{tab:law_substitutions};
Chinchilla $\alpha$, Kaplan $\alpha_N$, Droppo-Elibol $\gamma_N$)
and the TPP diversity is
\begin{equation}
    V_K \;\coloneqq\;
        \frac{1}{K}\sum_\ell k_\ell^{-2\beta_{\mathrm{eff}}}
          - \Bigl(\frac{1}{K}\sum_\ell
                k_\ell^{-\beta_{\mathrm{eff}}}\Bigr)^{\!2}.
    \label{eqn:tpp_diversity_app}
\end{equation}
The condition number is therefore
$\kappa_{A,B} \approx \mathrm{tr}^2/\det =
\Phi_2^2(K+\sum_\ell k_\ell^{-2\beta_{\mathrm{eff}}})^2 /
(K^2\Phi_2^2 V_K)$; the $\Phi_2^2$ factors cancel.
The condition $\kappa_{A,B} \leq \kappa_{\mathrm{target}}$
is therefore equivalent to
\begin{equation}
    V_K
        \;\geq\;
        \tau_K \;\coloneqq\;
        \frac{(K + \sum_\ell k_\ell^{-2\beta_{\mathrm{eff}}})^2}
             {K^2\,\kappa_{\mathrm{target}}}.
    \label{eqn:div_condition}
\end{equation}
For $K=2$ with $k_2 = Rk_1$,
$V_2 = k_1^{-2\beta_{\mathrm{eff}}}(1-R^{-\beta_{\mathrm{eff}}})^2/4$,
and~\eqref{eqn:div_condition} reduces to the explicit
spread condition below.

\noindent\textbf{Step 1: determine minimum $R$ (for $K=2$).}
Under a two-TPP collinear design the collinearity
factor is $(1 - R^{-\beta_{\mathrm{eff}}})^{-2}$.
We write $\kappa_K$ for $\kappa(J^T J)$ on a collinear
training design with $K \geq 1$ TPP ratios
(Sec.~\ref{sec:gnsetup}; Proposition~\ref{prop:holdout_r2}).
For $K \geq 2$, well-conditioning is equivalent to
$V_K \geq \tau_K$~\eqref{eqn:div_condition}.
The closed-form spread bound below uses the single-ray
baseline $\kappa_1 \coloneqq \kappa_K\bigr|_{K=1}$
(when $V_1=0$): at leading order in small~$\varepsilon$,
with $\{N_i\}$ and the single-ray ratio~$k_\ell$ fixed,
$\kappa_1 = \Theta(\varepsilon^{-2})$ up to a
design-dependent factor - evaluate $\kappa_1$ from the
fitted Jacobian (indicative magnitudes in
Table~\ref{tab:severity_comparison}).

To achieve $\kappa_{A,B} \leq \kappa_{\mathrm{target}}$,
the TPP spacing must satisfy:
\begin{equation}
    R \;\geq\;
    \left(1 - \varepsilon\,\sqrt{\kappa_1 / \kappa_{\mathrm{target}}}\right)
        ^{-1/\beta_{\mathrm{eff}}},
    \label{eqn:R_min}
\end{equation}
which is feasible iff
$\varepsilon\,\sqrt{\kappa_1/\kappa_{\mathrm{target}}} < 1$;
when this fails, $\kappa_{\mathrm{target}}$ is unachievable at
$K = 2$ on the prevailing single-ray baseline and the
practitioner must either widen the model-size grid (which
reduces $\kappa_1$ via $\sigma_w^2(\log N)$) or move to
$K \geq 3$ with the additional rays placed near the
endpoints rather than the interior
(see~\eqref{eqn:div_condition}; the bound below).
Substituting the leading-order baseline
$\kappa_1 \approx c\,\varepsilon^{-2}$
with $c \in [10, 30]$ for typical model-size spreads,
\eqref{eqn:R_min} simplifies to
$R \geq (1 - \sqrt{c/\kappa_{\mathrm{target}}})^{-1/\beta_{\mathrm{eff}}}$,
which makes manifest that the two-TPP $R_{\min}$ depends on
$(\kappa_{\mathrm{target}}, \beta_{\mathrm{eff}})$ at leading
order and not on $\varepsilon$ - consistent with the
$\varepsilon$-independence of the two-TPP collinearity factor
$(1 - R^{-\beta_{\mathrm{eff}}})^{-2}$ at leading order
once $\kappa_1 \asymp \Theta(\varepsilon^{-2})$.

For $K > 2$ at the same endpoint spread $R = k_K/k_1$,
$V_K$ generally \emph{decreases} when interior rays are added:
an interior $k_\ell^{-\beta_{\mathrm{eff}}}$ sits between the
endpoint values and pulls the sample toward its mean, reducing
variance. The threshold $\tau_K$ tightens via the $K^2$
denominator at a comparable rate, so the two effects largely
cancel and $K = 2$ with mass concentrated at the endpoints is
near-optimal for conditioning at fixed total budget;
$K \geq 3$ is recommended for residual diagnostics or
outlier robustness (Step~2), not to relax the spread
requirement. Verify any specific $K$-ray design directly
via~\eqref{eqn:div_condition}.

\begin{table}[H]
\centering
\caption{Minimum TPP ratio $R_{\min} = k_K/k_1$
to achieve $\kappa_{A,B} \leq \kappa_{\mathrm{target}}$
under a two-TPP design.
Use the law-specific $\varepsilon$ and
$\beta_{\mathrm{eff}}$ from
Table~\ref{tab:law_substitutions}.}
\label{tab:R_min_lookup}
\begin{tabular}{ccccc}
\toprule
$\varepsilon$ & $\kappa_{\mathrm{target}}$ &
    $\beta=0.40$ & $\beta=0.35$ & $\beta=0.28$ \\
\midrule
0.10 & $10^{2}$ & 2.8 & 3.2 & 4.4 \\
0.08 & $10^{2}$ & 3.4 & 3.9 & 5.5 \\
0.06 & $10^{2}$ & 4.2 & 4.7 & 7.1 \\
0.06 & $10^{1}$ & 7.2 & 8.5 & 15 \\
0.04 & $10^{2}$ & 5.8 & 6.8 & 11 \\
0.03 & $10^{2}$ & 7.0 & 7.8 & 14 \\
\bottomrule
\end{tabular}
\end{table}

\noindent\textbf{Sensitivity to exponent values and noise.}
The threshold $R_{\min}$ depends on $\varepsilon$ and $\beta$ but
\emph{not} on the noise level $\sigma^2$.  The condition number
$\kappa$ is a property of the Jacobian $J^T J$, which is
determined by the design and the functional form-not by the
residual variance.  Noise affects \emph{parameter variance}
($\operatorname{Var}(\hat{\boldsymbol{\theta}}) \propto
\sigma^2 (J^T J)^{-1}$) but not the design threshold for a given
$\kappa_{\mathrm{target}}$.  To achieve a target \emph{coefficient
precision} (e.g., a desired width of the confidence interval for
$A$ or $B$), one must increase the number of observations $m$ when
noise is high; the required $R$ stays the same.

Sensitivity to exponents is moderate: \emph{smaller}
$\beta_{\mathrm{eff}}$ increases $R_{\min}$
($R^{-\beta_{\mathrm{eff}}}$ decays more slowly, so more TPP
spread is needed).
The lookup table (Table~\ref{tab:R_min_lookup}) varies both
$\varepsilon$ and~$\beta$ because $\kappa_1$ in~\eqref{eqn:R_min}
retains design-dependent prefactors beyond the leading
$\Theta(\varepsilon^{-2})$ rate.
At fixed $(\kappa_{\mathrm{target}},\beta_{\mathrm{eff}})$,
substituting $\kappa_1 \approx c\,\varepsilon^{-2}$
into~\eqref{eqn:R_min} removes explicit~$\varepsilon$
dependence at first order (lines above), while finite spreads and
$\sigma_w^2(\log N)$ move the numerical entries.
For Chinchilla-like priors
($\varepsilon \in [0.04, 0.08]$, $\beta \in [0.28, 0.40]$),
$R_{\min}$ ranges from $\approx 3$ to $\approx 11$ for
$\kappa_{\mathrm{target}} = 10^{2}$.  When priors are very
uncertain, use the most conservative combination (smallest
$\varepsilon$, smallest $\beta$) from the table.

\noindent\textbf{Sensitivity to prior exponent errors.}
Underestimating $\varepsilon$ (guessing a larger exponent gap than
the truth) is the risky direction: the design then yields
$\kappa_{A,B}$ above the target.  From Table~\ref{tab:R_min_lookup},
a 30\% underestimate of $\varepsilon$ (e.g., guessing $0.06$ when
true is $0.04$) increases $R_{\min}$ by $\approx 45$\%
(from $4.7$ to $6.8$ for $\beta_{\mathrm{eff}} = 0.35$).
Underestimating $\beta_{\mathrm{eff}}$ has a smaller effect
(the $R^{-\beta_{\mathrm{eff}}}$ term is less sensitive).  We
recommend adding a 20-50\% margin to $R_{\min}$ when priors are
rough, or using the most conservative table entry.

\noindent\textbf{Adaptive adjustment of $R$.}
We recommend an adaptive scheme when the design is built
sequentially: (1)~use
conservative priors ($\varepsilon$, $\beta$) to choose initial
$R$; (2)~after fitting on the first batch, obtain updated
$(\hat{\alpha}, \hat{\beta})$; (3)~recompute $R_{\min}$ from
Table~\ref{tab:R_min_lookup} using the fitted exponents;
(4)~if $\kappa_{A,B}$ still exceeds $\kappa_{\mathrm{target}}$,
add augmentation runs at a second TPP ratio $k'$ satisfying the
updated $R_{\min}$ (or increase the spread of existing TPP
lines if the design is not yet fixed).  This refines the design
as exponent estimates sharpen, avoiding both over-investment
(overly large $R$ when priors were pessimistic) and
under-design (insufficient $R$ when priors were optimistic).

\noindent\textbf{Simple rule.}
Given rough priors $\alpha_0$, $\beta_0$ and target
$\kappa_{\mathrm{target}}$:
\begin{enumerate}
    \item Set $\varepsilon = |\alpha_0 - \beta_0|$; if unknown,
        use $\varepsilon = 0.05$ (typical for Chinchilla-like
        laws).
    \item Look up $R_{\min}$ in Table~\ref{tab:R_min_lookup} for
        the column closest to $\beta_0$; interpolate linearly if
        needed.
    \item Round up: $R \geq \lceil R_{\min} \rceil$ (e.g.,
        $R_{\min} = 4.7 \Rightarrow R \geq 5$).
\end{enumerate}
When $\varepsilon$ or $\beta$ falls between table rows, use the
\emph{more conservative} (larger) $R_{\min}$.

\noindent\textbf{Step 2: decide $K$ (number of TPP lines).}
Two TPP lines ($K = 2$) are sufficient for identifiability
(the degeneracy is one-dimensional).  Additional lines $K > 2$ at the same endpoint spread provide
no first-order improvement in conditioning (Step~1 above), but
offer two practical benefits: (a)~they enable \emph{lack-of-fit} tests
(degrees of freedom for model diagnostics), and
(b)~they reduce sensitivity to outlier runs at any single ratio.
A useful rule:
\begin{itemize}
    \item $K = 2$: minimum for identifiability;
    \item $K = 3$: recommended when the budget permits, to enable
        residual diagnostics;
    \item $K \geq 4$: useful for meta-analytic settings or when
        the functional form is uncertain.
\end{itemize}

\noindent\textbf{Step 3: place the TPP ratios.}
For $K = 2$, place $k_1$ and $k_2$ as far apart as feasible
(subject to $R \geq R_{\min}$ from Step~1); endpoint placement
maximizes $V_K$ at fixed support.
For $K \geq 3$, $V_K$ is maximized by collapsing the additional
rays onto the endpoints (since interior values pull the sample
toward its mean), so the conditioning-optimal $K \geq 3$
placement reduces effectively to the $K = 2$ endpoint design.
When intermediate ratios are desired for diagnostic value
(lack-of-fit, residual checks across TPP), a log-uniform grid
over $[k_1, k_K]$ is a natural choice:
\begin{equation}
    k_j = k_1\, R^{\,(j-1)/(K-1)},
    \qquad j = 1, \dots, K,
    \label{eqn:tpp_placement}
\end{equation}
at the cost of a modest reduction in $V_K$ relative to the
endpoint-only design; verify the resulting $V_K \geq \tau_K$
via~\eqref{eqn:div_condition}.

\noindent\textbf{Step 4: allocate runs across ratios.}
Given a total budget of $m$ observations split into $K$ groups of
$m_j$ observations each ($\sum_j m_j = m$), the D-optimal allocation
for the scale-coefficient sub-block is approximately
\emph{balanced}: $m_j \approx m/K$.
The intuition is that the sloppy direction receives information
from the \emph{contrast} between TPP groups, and this contrast
is maximized when each group has equal weight.
When the groups have different model-size ranges, slight
rebalancing toward the group with smaller spread is beneficial.

Within each TPP group, the model sizes should span as wide a
range as possible and be approximately log-uniformly spaced:
\begin{equation}
    N_{j,\ell} = N_{\min}\, \bigl(N_{\max}/N_{\min}\bigr)^{(\ell-1)/(m_j-1)},
    \qquad \ell = 1, \dots, m_j.
    \label{eqn:N_placement}
\end{equation}

\noindent\textbf{Step 5: verify conditioning.}
Before committing to expensive large-scale runs, compute the
\emph{expected} Jacobian $J$ at a plausible parameter vector
$\boldsymbol{\theta}_0$ (e.g., from a published fit) and evaluate
the condition number $\kappa$ of the $2\times 2$ Gram block for
the law's scale-coefficient pair
(Table~\ref{tab:law_substitutions}; Chinchilla's
$\kappa_{A,B}$, Kaplan $(N_c,D_c)$ block, etc.)
for the proposed design.  If it exceeds $\kappa_{\mathrm{target}}$, increase $R$ (TPP
spread) or widen the model-size grid (raising
$\sigma_w^2(\log N)$ and thereby reducing $\kappa_1$) and
repeat. Replicating runs at fixed $(N, D)$ does not change
$\kappa_{A,B}$ at leading order - replication scales
$J^T J$ uniformly and leaves the condition number invariant;
it improves coefficient \emph{precision}
($\operatorname{Var}(\hat{\boldsymbol{\theta}}) \propto \sigma^2(J^T J)^{-1}$)
but not identifiability.

\noindent\textbf{Worked example.}
Target $\kappa_{\mathrm{target}} = 10^{2}$, Chinchilla
($\varepsilon = 0.06$, $\beta = 0.35$, $p = 5$).
\begin{enumerate}
    \item $R_{\min} \approx 5$ (from the table above).
    \item $K = 2$ (minimum budget).
    \item $k_1 = 20$ (Chinchilla baseline), $k_2 = 100$
        ($R = 5$).
    \item Budget $n = 20$ runs: $n_1 = n_2 = 10$.
    \item Model sizes $N \in [10^{7}, 10^{9}]$,
        log-uniformly spaced within each group.
    \item Expected $\kappa_{A,B} \approx 50$-well within
        the target.
\end{enumerate}
With $K = 3$ and the same budget ($n_j \approx 7$ each),
placing $k_3 = 45$ (geometric mean) provides a lack-of-fit
check at the cost of a modest conditioning loss
($\kappa_{A,B} \approx 60$ vs.\ $50$ at $K = 2$),
consistent with the interior-ray $V_K$ reduction discussed
above.

\subsection{Appendix - Interaction-term scaling laws}\label{app:interaction}

The four laws analyzed in the main text are
\emph{additive} in the
$N$- and $D$-contributions.  Adding an explicit $N$-$D$ interaction
$F\, N^{-\gamma_N} D^{-\gamma_D}$ (as in
Farseer~\citep{li2025farseer}) does not alleviate the degeneracy:
under $D = k_\ell N$ the interaction column becomes a third
near-proportional power of $N$, so the $(A, B, F)$ sub-block has
\emph{two} near-zero eigenvalues and the problem is strictly worse.
We derive this formally below.

\noindent\textbf{Representative model.}
Consider a Chinchilla-style law augmented with a multiplicative
interaction (with $p = 8$ parameters
$\boldsymbol{\theta} = [A, B, F, E, \alpha, \beta, \gamma_N, \gamma_D]^T$
or fewer if some exponents are tied):
\begin{align}
    \hat{L}(N, D) &= A\, N^{-\alpha} + B\, D^{-\beta}
        + F\, N^{-\gamma_N}\, D^{-\gamma_D} + E,
    \label{eqn:interaction_law} \\[6pt]
    \mathbf{j}_{A,i} &= -N_i^{-\alpha}, \\
    \mathbf{j}_{B,i} &= -k_\ell^{-\beta}\, N_i^{-\beta}
        = -k_\ell^{-\beta}\, N_i^{-\alpha}\, N_i^{\varepsilon_1},
    \label{eqn:jB_interaction} \\
    \mathbf{j}_{F,i} &= -k_\ell^{-\gamma_D}\, N_i^{-(\gamma_N + \gamma_D)}
        = -k_\ell^{-\gamma_D}\, N_i^{-\alpha}\, N_i^{\varepsilon_2}.
    \label{eqn:jF_interaction}
\end{align}
The interaction
coefficient~$F$ and its associated exponents
$(\gamma_N, \gamma_D)$ introduce a third
``capacity $\times$ data'' channel.
We use residual Jacobian columns
$J_{i\theta} = -\partial \hat{L}_i / \partial \theta$
as in~\eqref{eqn:gn_jacobian_def}.
Substituting $D_i = k_\ell N_i$, the three scale-coefficient columns
are the last three rows of~\eqref{eqn:interaction_law}.
where $\varepsilon_1 = \alpha - \beta$ and
$\varepsilon_2 = \alpha - (\gamma_N + \gamma_D)$.
All three columns share the base function $N_i^{-\alpha}$,
differing only by perturbations $N_i^{\varepsilon_1}$ and
$N_i^{\varepsilon_2}$.

\noindent\textbf{Near-proportionality of the Jacobian columns.}
Writing
$N_i^{\varepsilon_j} = 1 + \varepsilon_j \log N_i + O(\varepsilon_j^2)$
for small $|\varepsilon_j|$, we obtain
$\mathbf{j}_B = c_1\,\mathbf{j}_A + \boldsymbol{\delta}_1$ and
$\mathbf{j}_F = c_2\,\mathbf{j}_A + \boldsymbol{\delta}_2$, where
$c_1 = k_\ell^{-\beta}$, $c_2 = k_\ell^{-\gamma_D}$, and
\begin{equation}
    \delta_{1,i} = c_1\, N_i^{-\alpha}\, (N_i^{\varepsilon_1} - 1)
        = O(\varepsilon_1),\qquad
    \delta_{2,i} = c_2\, N_i^{-\alpha}\, (N_i^{\varepsilon_2} - 1)
        = O(\varepsilon_2).
\end{equation}
Thus $\|\boldsymbol{\delta}_1\| = O(\varepsilon_1)$ and
$\|\boldsymbol{\delta}_2\| = O(\varepsilon_2)$ - the same
near-proportionality structure as the additive case
(Section~\ref{app:chinchilla_details}), but now with \emph{three}
columns all proportional to $\mathbf{j}_A$ up to
$O\bigl(\max\{|\varepsilon_1|,|\varepsilon_2|\}\bigr)$
perturbations.  Under $D = k_\ell N$, the interaction column
$\mathbf{j}_F$ is therefore nearly proportional to both
$\mathbf{j}_A$ and $\mathbf{j}_B$, generalizing the additive
degeneracy to coupled laws.

\noindent\textbf{$3 \times 3$ sub-block analysis.}
The Gram matrix of the $(A, B, F)$ sub-block has entries:
\begin{equation}
    G_{jk} = \sum_{i=1}^{n} \mathbf{j}_{j,i}\,\mathbf{j}_{k,i},
    \qquad j,k \in \{A, B, F\}.
    \label{eqn:gram_ABF}
\end{equation}
Write $\mathbf{j}_B = c_1\,\mathbf{j}_A \circ \mathbf{h}_1$ and
$\mathbf{j}_F = c_2\,\mathbf{j}_A \circ \mathbf{h}_2$ where
$\circ$ denotes element-wise product,
$h_{1,i} = N_i^{\varepsilon_1}$,
$h_{2,i} = N_i^{\varepsilon_2}$,
$c_1 = k_\ell^{-\beta}$, $c_2 = k_\ell^{-\gamma_D}$.
When $\varepsilon_1, \varepsilon_2 \to 0$,
$\mathbf{h}_1, \mathbf{h}_2 \to \mathbf{1}$ and the three
columns become proportional, giving
$\operatorname{rank}(G) = 1$.  More precisely:
\begin{align}
    \det(G_{A,B,F}) &= O(\varepsilon_1^2\, \varepsilon_2^2)
        + O(\varepsilon_1^2\,(\varepsilon_1 - \varepsilon_2)^2),
    \label{eqn:det_ABF} \\[4pt]
    \varepsilon_{\mathrm{gap}}
        &\coloneqq \min\bigl(|\alpha-\beta|,\,
        |\alpha-(\gamma_N{+}\gamma_D)|,\,
        |(\gamma_N{+}\gamma_D)-\beta|\bigr).
    \label{eqn:interaction_eps_min}
\end{align}
The $3 \times 3$ Gram determinant is controlled by the
\emph{pairwise} and \emph{triple} volume elements among the
three nearly-proportional vectors.
The two smallest eigenvalues
of $G_{A,B,F}$ are $O(\varepsilon_{\mathrm{gap}}^2)$ at leading order.

\noindent\textbf{Condition number.}
The $3\times 3$ block inherits the Gram-determinant
scaling~\eqref{eqn:det_ABF};
combining with eigenvalue trace-determinant relations for positive
semidefinite matrices (as in the proof of
Proposition~\ref{prop:full_cond}) shows that, under the usual
block-dominance assumption for the scale coefficients,
$\kappa(J^T J)$ grows at least as
$\Omega\bigl(\varepsilon_{\mathrm{gap}}^{-2}\bigr)$
for $\varepsilon_{\mathrm{gap}}$ from~\eqref{eqn:interaction_eps_min},
and can be \emph{larger} than the
pure $(A,B)$ Chinchilla rate when multiple exponent gaps are small.
Thus the problem is \emph{worse} than the additive case: the
interaction term introduces a \emph{second} sloppy direction
(the null space of the $3\times 3$ sub-block is two-dimensional
when all three exponent gaps vanish), requiring two additional
constraints (e.g., two distinct TPP ratios and a third quantity)
for full identifiability.

\noindent\textbf{Implications.}
Adding interaction terms does not cure collinearity - it
exacerbates it.  Under $D = k_\ell N$, the interaction coefficient
$F$ is confounded with both $A$ and $B$: the data cannot
distinguish whether a given loss level comes from model
capacity, data volume, or their interaction.  The
design-of-experiments prescription is unchanged: diverse TPP
ratios are essential, and with an interaction term even more
diversity may be needed ($K \geq 3$ TPP ratios to resolve
the two-dimensional degeneracy).

\section{Appendix - experimental setup}
\label{app:experimental_setup}

\subsection{Appendix - Transformer architecture}

All models use the HuggingFace \texttt{LlamaForCausalLM}
implementation with RoPE positional embeddings, RMSNorm, and
SwiGLU activations.

\subsection{Appendix - Architecture configurations}

\begin{table}[H]
\centering
\caption{Model architecture configurations. All models use the LLaMA architecture
with rotary position embeddings (RoPE), SwiGLU activation, RMSNorm, no bias terms,
a context length of 256, and vocabulary size of 100{,}277 (cl100k\_base).}
\label{tab:model-architectures}
\begin{tabular}{r r r r r r}
\toprule
\textbf{Params} & $\boldsymbol{n_{\text{layers}}}$ & $\boldsymbol{d_{\text{model}}}$ & $\boldsymbol{d_{\text{ff}}}$ & $\boldsymbol{n_{\text{heads}}}$ & $\boldsymbol{d_{\text{head}}}$ \\
\midrule
5{,}035{,}656   & 24 & 24  & 96   & 12 & 2   \\
10{,}070{,}160  & 12 & 48  & 192  & 1  & 48  \\
14{,}411{,}456  & 24 & 64  & 256  & 32 & 2   \\
19{,}400{,}928  & 1  & 96  & 384  & 1  & 96  \\
22{,}796{,}832  & 24 & 96  & 384  & 24 & 4   \\
28{,}819{,}840  & 12 & 128 & 512  & 4  & 32  \\
32{,}498{,}720  & 1  & 160 & 640  & 16 & 10  \\
35{,}368{,}160  & 8  & 160 & 640  & 4  & 40  \\
40{,}277{,}184  & 3  & 192 & 768  & 12 & 16  \\
47{,}949{,}888  & 16 & 192 & 768  & 1  & 192 \\
55{,}538{,}432  & 4  & 256 & 1024 & 32 & 8   \\
63{,}931{,}136  & 12 & 256 & 1024 & 32 & 8   \\
68{,}127{,}488  & 16 & 256 & 1024 & 32 & 8   \\
76{,}520{,}192  & 24 & 256 & 1024 & 1  & 256 \\
\bottomrule
\end{tabular}
\end{table}

Model sizes were specified as target parameter counts and mapped to
concrete architectures via nearest-neighbor matching on a grid of
transformer hyperparameters (layer count, hidden dimension, head count),
with $d_{\text{ff}} = 4 \times d_{\text{model}}$. Because the grid is
discrete, realized parameter counts differ slightly from targets;
the $\Nsizes$ configurations span
$N \in [5.04,\, 76.5]\,$M
($N_{\max}/N_{\min} \approx 15.2$).
Table~\ref{tab:model-architectures} reports the exact parameter counts
as verified from trained checkpoints. Following
\citet{hoffmann2022training}, $N$ includes all trainable parameters
(embedding and unembedding layers included). An earlier version of the architecture-search procedure produced
configurations with odd per-head dimension $d_{\text{head}} = d_{\text{model}}
/ n_{\text{heads}}$, which crashed at training time because rotary positional
encoding's \texttt{rotate\_half} operation requires $d_{\text{head}}$ to be
even. The search was patched to filter to head counts producing even
$d_{\text{head}}$, which is why $n_{\text{heads}}$ varies across architectures
in Table~\ref{tab:model-architectures}.

\noindent\textbf{Architecture heterogeneity, embeddings, and the role of $C \approx 6ND$.}
The grid spans $n_{\text{layers}} \in \{1, 3, 4, 8, 12, 16, 24\}$,
deliberately wider than a constant-shape sweep so that the fits are
not informed only by smooth depth-vs-width co-variation; the four
scaling laws under study are functions of $(N, D)$ only and are
agnostic to the internal architectural decomposition.  Two
configurations are shallow ($19.4$M and $32.5$M, both
single-layer); for these the cl100k\_base vocabulary
($V = 100{,}277$) makes the input/output embedding contribution
$2 V d_{\text{model}}$ dominate~$N$ (e.g.\
$2 \cdot 100{,}277 \cdot 96 \approx 19.3$M of the $19.4$M total),
so the Kaplan FLOPs approximation $C \approx 6ND$ - calibrated for
deep, embedding-light transformers - is loose.  Two consequences,
neither of which compromises the formal results:
\begin{itemize}
    \item \emph{Conditioning and CI inflation}
        (Proposition~\ref{prop:full_cond},
        Corollary~\ref{prop:ci_inflation},
        Theorem~\ref{thm:holdout_regimes}) are statements about the
        Jacobian geometry of the regression problem
        $\hat L(N, D; \boldsymbol{\theta})$ as a function of the
        empirical pairs $(N_i, D_i, L_i)$.  They do not assume that
        the underlying loss is a smooth power law, that depth is
        held fixed, or that any specific FLOPs identity holds; the
        Hoffmann/Chinchilla convention of taking $N$ as
        \emph{all} trainable parameters is preserved end-to-end.
        Architectural heterogeneity and any resulting departures of
        the empirical $L_i$ from a clean power-law surface are
        absorbed by the misspecification extension
        (Eq.~\eqref{eqn:mse_ratio_misspec}), which preserves the
        CO/NC ordering under a shared design-independent
        approximation-bias floor.
    \item \emph{IsoFLOP visualization} uses $C_i \coloneqq 6 N_i D_i$
        (Definition~\ref{def:induced_isoflop_plot}) purely as a
        relabelling: Proposition~\ref{prop:rmse_isoflop_invariance}
        and its proof in Appendix~\ref{app:proof_rmse_invariance}
        show that
        $\hat L(N_i, C_i/(6N_i); \boldsymbol{\theta})
        = \hat L(N_i, D_i; \boldsymbol{\theta})$ as an algebraic
        identity - any constant or model-dependent rescaling of the
        FLOPs/token coefficient leaves the holdout RMSE unchanged.
        Embedding dominance therefore shifts where shallow points
        sit on the $C$-axis but does not affect any reported metric.
\end{itemize}
The shallow rows survive in the grid because dropping them would
collapse the $(N, D)$ design at the small-$N$ end and weaken the
TPP-coverage contrast that the formal analysis targets; treating
them as standard $(N, D, L)$ observations under the
Hoffmann convention is consistent with how prior fits handle the
same regime. Figure~\ref{fig:nlayers_dist} shows the empirical
$n_{\text{layers}}$ distribution across all $1{,}933$ trained
models: the bulk of the runs sit at $n_{\text{layers}} \geq 4$
where $C \approx 6ND$ is well-calibrated, and the embedding-dominated
single-layer configurations contribute only a small minority of
points, so any $C \approx 6ND$ mismatch is concentrated in a
sparse tail rather than driving the aggregate fit.

\begin{figure}[H]
    \centering
    \includegraphics[width=0.5\linewidth]{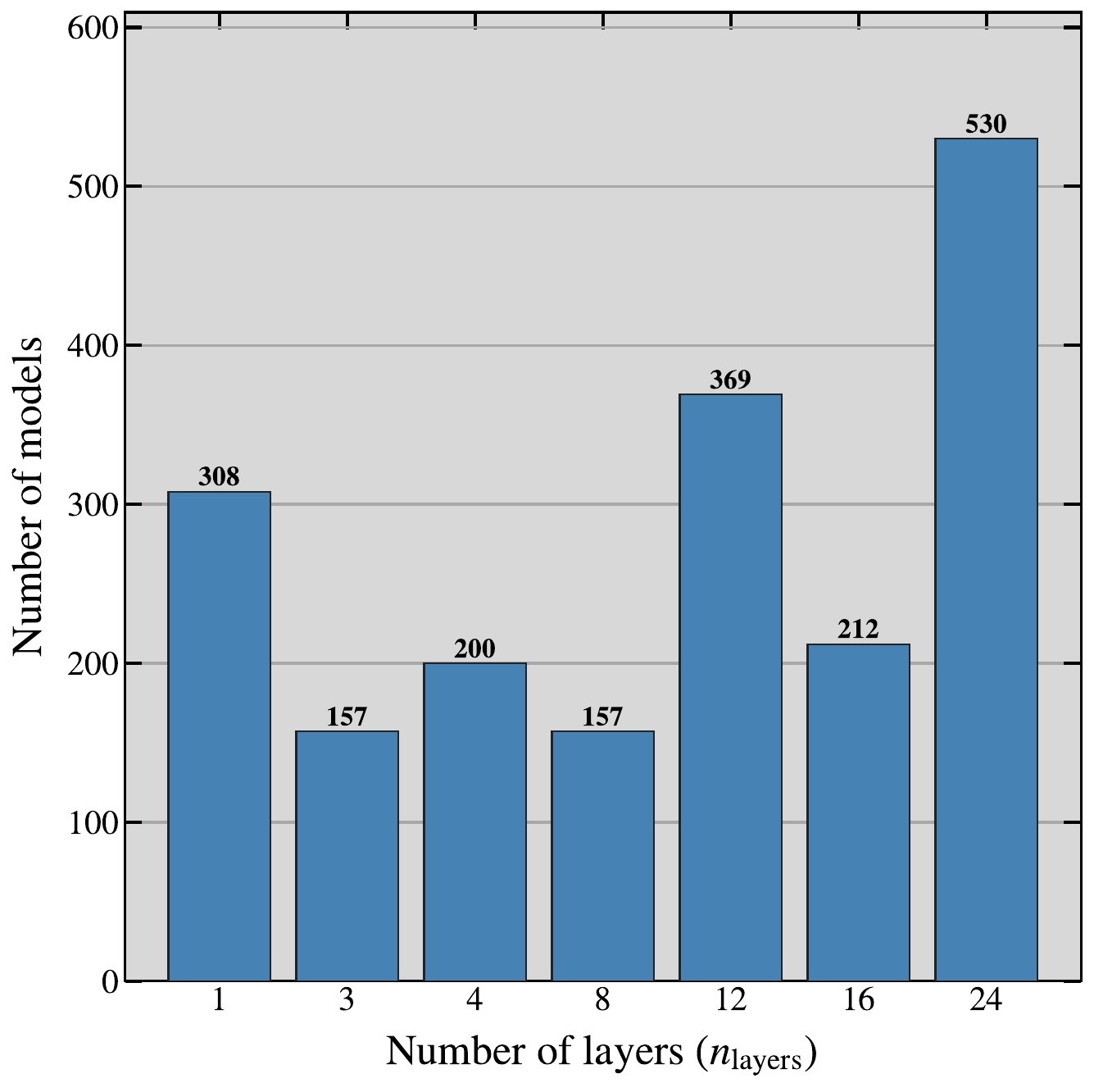}
    \caption{$n_{\text{layers}}$ distribution, across 1933 trained models.}
    \label{fig:nlayers_dist}
\end{figure}

\subsection{Appendix - Hyperparameter selection}
Table~\ref{tab:hyperparams} summarizes the training hyperparameters
shared across all experimental runs. 

\begin{table}[H]
\centering
\caption{Training hyper-parameters shared across all runs.}
\label{tab:hyperparams}
\begin{tabular}{l l}
\toprule
\textbf{Hyperparameter} & \textbf{Value} \\
\midrule
Optimizer              & AdamW ($\beta_1{=}0.9$, $\beta_2{=}0.999$, $\epsilon{=}10^{-8}$) \\
Gradient clipping      & 1.0 \\
Weight decay           & 0.1 \\
Batch size             & 96 per GPU\\
Max sequence length    & 256 \\
Epochs                 & 3 \\
Warmup                 & Linear, first 3\% of steps (start factor 0.01) \\
LR schedule            & Cosine annealing to 10\% of peak LR \\
Seed & 42 \\
\bottomrule
\end{tabular}
\end{table}

\subsubsection{Learning rate}
We selected $\eta = 3 \times 10^{-4}$ for Wikipedia and
RedPajama, $\eta = 1 \times 10^{-3}$ for peS2o, Cosmopedia, and C4.

\subsection{Appendix - Loss extraction}

We record the converged training loss as the mean of the final 100 batch losses per epoch, to account for batch-to-batch noise.

\subsection{Appendix - Dataset selection}\label{app:dataset_selection}

We evaluate scaling law fits across five datasets spanning distinct
data domains: Cosmopedia~\citep{cosmopedia}
(synthetic educational text), Wikipedia~\citep{wikidump} (encyclopedic articles),
peS2o~\citep{pes2o} (academic papers and abstracts),
RedPajama-V2~\citep{redpajama} (filtered web text), and C4~\citep{raffel2020exploring} (web text).

\subsubsection{Train/evaluation splits}

Natural language corpora often exhibit systematic ordering
biases; Wikipedia sorts by page creation date, peS2o groups
abstracts before full papers, and RedPajama clusters by WARC
file origin. A naive sequential (left/right) split therefore
produces train and evaluation sets with different underlying
distributions. \emph{All five text corpora used in this paper
(C4, Wikipedia, peS2o, RedPajama, and Cosmopedia) are split
by the same procedure}: we randomly permute all document
indices with a fixed seed and take contiguous slices,
guaranteeing zero overlap while eliminating ordering
artifacts. No corpus uses a sequential / left-right /
chronological split, and no corpus is treated as a special
case. Table~\ref{tab:ngram_kl} confirms this approach.

\begin{table}[H]
\centering
\caption{Symmetric KL divergence and cosine similarity between train and validation $n$-gram distributions ($n=1,\ldots,3$). Both train and validation distributions have 600M tokens, drawn from the random-permutation split described above; the same split procedure is applied to every corpus, including Cosmopedia.}
\label{tab:ngram_kl}
\small
\begin{tabular}{llrrr}
\toprule
\textbf{Dataset} & \textbf{Metric} & $n=1$ & $n=2$ & $n=3$ \\
\midrule
\multirow{2}{*}{Wikipedia} & Sym.~KL & 0.0006 & 0.4446 & 3.3681 \\
 & Cosine & 0.999998 & 0.999952 & 0.999543 \\
\midrule
\multirow{2}{*}{RedPajama} & Sym.~KL & 0.0016 & 0.5904 & 4.1139 \\
 & Cosine & 0.999965 & 0.997919 & 0.947550 \\
\midrule
\multirow{2}{*}{Cosmopedia} & Sym.~KL & 0.0002 & 0.2852 & 2.1356 \\
 & Cosine & 1.000000 & 0.999994 & 0.999933 \\
\midrule
\multirow{2}{*}{peS2o} & Sym.~KL & 0.0007 & 0.4211 & 3.2188 \\
 & Cosine & 0.999998 & 0.999977 & 0.999717 \\
\midrule
\multirow{2}{*}{C4} & Sym.~KL & 0.0013 & 0.8733 & 5.2464 \\
 & Cosine & 0.999998 & 0.999953 & 0.998718 \\
\bottomrule
\end{tabular}
\end{table}

\subsubsection{Pre-processing}
Our training data was pre-tokenized with cl100k\_base (tiktoken), truncated
or padded to 256 tokens if an article did not reach 256.

\subsection{Appendix - Training conditions and hardware}\label{hardware}
All pre-training used the NCSA DeltaAI GH200 partition (Access
Allocation CIS230318) with four GH200 GPUs per node via PyTorch
Distributed Data Parallel, totaling approximately 3,624 GPU-hours
(151 GPU-days) across five primary corpora, BF16, and large-TPP
settings. Scaling-law fitting, seed-variance analysis
(Tables~\ref{tab:summary_full}-\ref{tab:winrate}, Appendix~\ref{app:full_all_results}-\ref{app:coefficients}), and visualization used a dual-socket
AMD EPYC 9755 node (256 cores, 512 threads, 1.5\,TB RAM) with 230
parallel workers (${\sim}5$ wall-clock hours). The exhaustive
subset enumeration (Table~\ref{tab:regime_a_winrate}, 22
independent seeds) used 480 workers on the same hardware
(${\sim}24$ wall-clock hours, ${\sim}65$ min per seed). Total
storage footprint is approximately 2.2\,TB (1.5\,TB model
checkpoints and training metrics, 0.5\,TB pre-tokenized datasets,
0.2\,TB analysis artifacts and redundancy). Full-project compute
was higher than the final reported runs because we also executed
pilot and failed sweeps; those additional costs will be disclosed
in the NeurIPS compute-reporting form. All training uses
worldsize=4 (PyTorch DDP); the LR schedule length depends on
worldsize through the DistributedSampler batch count, so
reproducing identical checkpoints requires exactly 4 GPUs.
Training is bitwise reproducible within a fixed CUDA driver
version (verified by repeated runs on NVIDIA driver 590.48).
Slight cross-driver numerical drift of $O(10^{-3})$ in the
training loss over three epochs has been observed following a
cluster driver update; we flag it as a minor reproducibility
issue. The 30-seed CIs in Tables~\ref{tab:summary_full}
and~\ref{tab:winrate} are computed over L-BFGS-B optimizer
restarts on \emph{fixed} training data
(Appendix~\ref{app:CI}), so they quantify curve-fitting
variance rather than absorbing a systematic shift in the
data-generating process. A common driver-induced shift in the
training losses, however, does not compromise our conclusions
for two reasons. \textbf{(i)}~All reported metrics are
\emph{paired} CO-vs-NC comparisons on the \emph{same} pre-trained
loss table per seed and dataset (Tables~\ref{tab:summary_full},
\ref{tab:winrate}; Appendix~\ref{app:CI}); a uniform shift
$\Delta L$ in the underlying losses adds the same offset to the
fitted irreducible-loss/intercept terms in both CO and NC fits,
so the head-to-head difference
$\mathrm{RMSE}^{\mathrm{NC}}_\mathcal{H}
- \mathrm{RMSE}^{\mathrm{CO}}_\mathcal{H}$ and the
full-precision and BF16 win-rate summaries
(\winrate{} and \winratebf, respectively) are first-order
invariant to $\Delta L$. \textbf{(ii)}~The drift magnitude
$O(10^{-3})$ is two-to-three orders of magnitude smaller than
both the seed-to-seed L-BFGS-B spread on CO
($53\times$ CI inflation under our $\varepsilon$;
Corollary~\ref{prop:ci_inflation}) and the absolute NC-vs-CO
RMSE gaps reported in Tables~\ref{tab:summary_full}
and~\ref{tab:summary_bf16}, so even if the shift propagated
unevenly between designs the residual contamination of the
win-rate would be well below the gap being estimated.

\subsection{Appendix - L-BFGS-B optimizer bounds}\label{app:optimizer_bounds}

All scaling laws are fit via multi-start L-BFGS-B with a differential-evolution global fallback. Parameter bounds are identical across all designs, datasets, and random seeds. For the Kaplan law, $N_c, D_c \in [10^3,\,10^{14}]$ and $\alpha_N, \alpha_D \in [0.01,\,2.0]$. For the Chinchilla law, $E \in [0,\,10]$, $A, B \in [10^{-2},\,10^{10}]$, and $\alpha, \beta \in [0.01,\,2.0]$. For the Droppo-Elibol law, $L_\infty \in [10^{-6},\,0.99\cdot\min(L)]$, $N_C, D_C \in [10^3,\,10^{14}]$, and $\alpha_N, \alpha_D, \alpha \in [0.01,\,2.0]$. For the Repeated-Data law  $A, B \in [10^{-2},\,10^{12}]$, $\alpha, \beta \in [0.01,\,2.0]$, $E \in [0,\,10]$, and the half-life parameters $R^*_D, R^*_N \in [\RhalfLow,\,\RhalfHigh]$; the Repeated-data law is fit by Huber loss with $\delta = 0.5$, while the other three laws use squared-error loss.

\noindent\textbf{Active lower bound on $R^*_N$.} In the
Repeated-Data fit (Tables~\ref{tab:epoch-params-r2}
and~\ref{tab:epoch-params-bf16-r2}), the parameter-side
half-life $R^*_N$ frequently terminates at exactly
$\RhalfLow$, the lower bound declared above. This is
expected and not a fitting failure: our experimental grid
fixes $N$ per run and only $D$ is repeated (epochs $1,2,3$),
so the data carries strong signal about the data-side
half-life $R^*_D$ but is essentially uninformative about
$R^*_N$, whose objective contribution becomes flat at small
values. The L-BFGS-B optimizer therefore drifts to the
lower bound; the bound itself is the documented value
$\RhalfLow$ and applies uniformly across all designs,
datasets, and seeds. The reported $R^2$ on holdout is
unaffected because the loss surface is flat in $R^*_N$ at
that point. Reporting the boundary-hitting fits as-is
(rather than censoring or post-processing them) is what
allows the diagnostic to be visible in the tables.

\subsection{Appendix - Confidence intervals and variability bands}\label{app:CI}

Throughout the paper we characterize optimizer-induced variability by
re-fitting each scaling law $N_{\text{seeds}}=30$ times per (dataset, law,
epoch mode, design, coverage step) configuration, varying only the optimizer
seed that controls the L-BFGS-B random-restart sequence and the
differential-evolution polish. The training data and holdout splits are held
fixed across seeds, so the resulting $R^2$ distribution isolates
optimizer-induced variance from data-induced variance. 

\noindent\textbf{The Importance of seed-to-seed variance.} This choice is the empirical counterpart of Corollary~\ref{prop:ci_inflation}. The
corollary predicts that under collinearity parameter CIs inflate as
$\Theta(\varepsilon^{-1})$; that parameter-space uncertainty propagates to
prediction uncertainty on holdout, manifesting as wide seed-to-seed spread
in holdout~$R^2$. A single CO seed can fit a particular holdout well by
luck; the Gauss-Newton step along the sloppy $(A,B)$ direction happens to
land near the truth on that draw. The formal analysis predicts,
and what the seed-to-seed CI measures, is that this spread is systematically
larger for CO than for NC. The CI gap between designs in
Tables~\ref{tab:summary_full},~\ref{tab:summary_bf16} is the predicted
parameter-CI inflation made empirical, observed one layer downstream in
prediction error rather than in $A$ directly.

\noindent\textbf{Coverage-fraction plots.} The central marker is the \emph{median}
$R^2$ across the $30$ seeds; error bars span the $5$th-$95$th percentile of
that distribution. These are empirical percentile bands of the per-seed
$R^2$ values, not Gaussian confidence intervals on the mean.

\noindent\textbf{Summary tables (e.g.\ Tables~\ref{tab:summary_full}, \ref{tab:summary_bf16}).} Reported $R^2$ \emph{Mean} is the grand mean of the
per-combo seed means; reported $R^2$ \emph{Median} is the median of those
combo means; reported $R^2$ \emph{Std} is their across-combo standard
deviation. The $95\%$ CI is an optimizer-noise interval centered on $R^2$
Mean with half-width
\[
  \Delta = 1.96 \cdot \frac{\overline{\sigma_{\text{seed}}}}{\sqrt{N_{\text{seeds}}}},
\]
where $\overline{\sigma_{\text{seed}}}$ is the average within-combo
seed-to-seed standard deviation. RMSE columns follow the same convention. The
CI quantifies how tightly seed-to-seed optimizer noise pins down a typical
combo's mean and is independent of the Median column.

\noindent\textbf{Subset analysis (Table~\ref{tab:regime_a_winrate}).} For the
budget-matched subset enumeration we use $22$ optimizer seeds rather than
$30$ due to compute restrictions; reported NC win rates are accompanied by
Wilson $95\%$ binomial CIs over enumerated subsets.

\section{Appendix - detailed results}
\label{app:per_dataset}

\subsection{Appendix - Collinear and non-collinear experimental grid definitions}\label{app:defining_grid}

Figures~\ref{grid:ncgrid} and~\ref{grid:tpp} visualize the collinear and non-collinear experimental setup, as a grid. Red cells are holdout models; blue cells are training models. Each dataset comprises $\runsperDset$ runs ($\COtrainN$ collinear training $+$ $\COholdN$ collinear holdout $+$ $\NCtrainN$ non-collinear training $+$ $\NCholdN$ non-collinear holdout). Four of the five corpora completed all runs; C4 has $321$ ($3$ collinear holdout jobs crashed).

\begin{figure}[H]
\centering
\begin{tikzpicture}[scale=0.38]

  \definecolor{trainblue}{RGB}{173,216,230}
  \definecolor{holdoutred}{RGB}{220,80,80}

  \foreach \y in {0,...,13}{
    \foreach \x in {0,...,16}{
      \pgfmathtruncatemacro{\isholdout}{%
        (\x == 12) || (\y == 10) || (\x > 12 && \y > 10) ? 1 : 0}
      \pgfmathtruncatemacro{\istrain}{%
        (\x < 12) && (\y < 10) ? 1 : 0}
      \ifnum\isholdout=1
        \fill[holdoutred] (\x,\y) rectangle ++(1,1);
      \else
        \ifnum\istrain=1
          \fill[trainblue] (\x,\y) rectangle ++(1,1);
        \fi
      \fi
      \draw[gray!40, thin] (\x,\y) rectangle ++(1,1);
    }
  }

  \draw[black, thick] (0,0) rectangle (12,10);

  \draw[black, thick] (12,10) rectangle (17,14);

  \foreach \label [count=\x from 0] in {
    10.1,35.1,60.2,80.3,100.4,130.3,150.7,180.6,200.8,225.0,250.9,275.9,300.4,325.4,350.4,375.8,400.7%
  }{
    \node[below, rotate=90, anchor=east, font=\scriptsize] at (\x+0.5, 0) {\label M};
  }

  \foreach \label [count=\y from 0] in {
    5.0,10.1,14.4,19.4,22.8,28.8,32.5,35.4,40.3,47.9,55.5,63.9,68.1,76.5%
  }{
    \node[left, font=\scriptsize] at (0, \y+0.5) {\label M};
  }

  \node[below=1.6cm, font=\small] at (8.5, 0) {Tokens ($D$)};
  \node[left=1.3cm, rotate=90, anchor=south, font=\small] at (0, 7) {Parameters ($N$)};

  \fill[trainblue] (0,-3.2) rectangle ++(0.5,0.5);
  \draw[gray!40, thin] (0,-3.2) rectangle ++(0.5,0.5);
  \node[right, font=\scriptsize] at (0.6,-2.95) {Training};

  \fill[holdoutred] (4,-3.2) rectangle ++(0.5,0.5);
  \draw[gray!40, thin] (4,-3.2) rectangle ++(0.5,0.5);
  \node[right, font=\scriptsize] at (4.6,-2.95) {Holdout};

  \fill[white] (8,-3.2) rectangle ++(0.5,0.5);
  \draw[gray!40, thin] (8,-3.2) rectangle ++(0.5,0.5);
  \node[right, font=\scriptsize] at (8.6,-2.95) {Unused};

  \draw[black, thick] (11.5,-3.2) rectangle ++(0.5,0.5);
  \node[right, font=\scriptsize] at (12.1,-2.95) {Region boundary};

\end{tikzpicture}
\caption{Token-parameter point (TPP) grid used in our experimental design. Each cell represents a unique $(N, D)$ training configuration. The holdout set (red) forms an L-shaped boundary along the largest parameter count in the training region ($N = 55.5$M) and at $D = 300.4$M tokens, plus the $4 \times 5$ extrapolation block at the largest token and parameter counts. White cells denote configurations not included in either set. The bordered lower-left region contains the $\NCtrainN$ in-distribution training configurations.}
\label{grid:ncgrid}
\end{figure}

\begin{figure}[H]
\centering
\begin{tikzpicture}[scale=0.38]

  \definecolor{trainblue}{RGB}{173,216,230}
  \definecolor{holdoutred}{RGB}{220,80,80}

  \foreach \y in {0,...,13}{
    \foreach \x in {0,...,16}{
      \pgfmathtruncatemacro{\isholdout}{%
        (\x == 12) || (\y == 10) || (\x > 12 && \y > 10) ? 1 : 0}
      \pgfmathtruncatemacro{\istrain}{%
        (\x < 12) && (\y < 10) ? 1 : 0}
      \ifnum\isholdout=1
        \fill[holdoutred] (\x,\y) rectangle ++(1,1);
      \else
        \ifnum\istrain=1
          \fill[trainblue] (\x,\y) rectangle ++(1,1);
        \fi
      \fi
      \draw[gray!40, thin] (\x,\y) rectangle ++(1,1);
    }
  }

  \draw[black, thick] (0,0) rectangle (12,10);

  \draw[black, thick] (12,10) rectangle (17,14);

  \foreach \label [count=\x from 0] in {
    1.0,1.5,1.9,2.0,2.5,2.7,3.0,3.3,3.5,4.0,4.5,5.0,6.0,6.2,6.5,6.7,7.0%
  }{
    \node[below, rotate=90, anchor=east, font=\scriptsize] at (\x+0.5, 0) {\label};
  }

  \foreach \label [count=\y from 0] in {
    5.0,10.1,14.4,19.4,22.8,28.8,32.5,35.4,40.3,47.9,55.5,63.9,68.1,76.5%
  }{
    \node[left, font=\scriptsize] at (0, \y+0.5) {\label M};
  }

  \node[below=1.6cm, font=\small] at (8.5, 0) {Tokens per Parameter ($k$)};
  \node[left=1.3cm, rotate=90, anchor=south, font=\small] at (0, 7) {Parameters ($N$)};

  \fill[trainblue] (0,-3.2) rectangle ++(0.5,0.5);
  \draw[gray!40, thin] (0,-3.2) rectangle ++(0.5,0.5);
  \node[right, font=\scriptsize] at (0.6,-2.95) {Training};

  \fill[holdoutred] (4,-3.2) rectangle ++(0.5,0.5);
  \draw[gray!40, thin] (4,-3.2) rectangle ++(0.5,0.5);
  \node[right, font=\scriptsize] at (4.6,-2.95) {Holdout};

  \fill[white] (8,-3.2) rectangle ++(0.5,0.5);
  \draw[gray!40, thin] (8,-3.2) rectangle ++(0.5,0.5);
  \node[right, font=\scriptsize] at (8.6,-2.95) {Unused};

  \draw[black, thick] (11.5,-3.2) rectangle ++(0.5,0.5);
  \node[right, font=\scriptsize] at (12.1,-2.95) {Region boundary};

\end{tikzpicture}
\caption{Token-parameter point (TPP) grid for the collinear (CO) experimental design. Each cell represents a unique $(N, k)$ configuration where the token count is $D = k \cdot N$. The holdout set (red) forms an L-shaped boundary at the largest in-distribution parameter count ($N = 55.5$M) and at $k = 6$, plus the $4 \times 5$ extrapolation block. White cells denote configurations not included in either set. The bordered lower-left region contains the $\COtrainN$ in-distribution training configurations.}
\label{grid:tpp}
\end{figure}

\subsection{Appendix - Table~\ref{tab:summary_full} holdout breakdown}\label{app:full_metric_breakdown}

The full metrics breakdown shows collinear and non-collinear holdout
results for the main experiments below, and winrates broken down.
\begin{table}[H]
\small
\centering
\setlength{\tabcolsep}{3pt}
\caption{$R^2$ and RMSE summary across training designs with different seed-to-seed optimizer setup. 95\% CI: confidence interval on the mean. Best per split in \textbf{bold}.}
\label{tab:summary}
\begin{tabular}{ll cccc cccc}
\toprule
& & \multicolumn{4}{c}{\textbf{$R^2$}} & \multicolumn{4}{c}{\textbf{RMSE}} \\
\cmidrule(lr){3-6} \cmidrule(lr){7-10}
\textbf{Split} & \textbf{Design} & Mean & 95\% CI & Median & Std & Mean & 95\% CI & Median & Std \\
\midrule
\multirow{2}{*}{Train ($\mathcal{D}$)}
 & CO  & \win{0.9848}  & \win{[0.985, 0.985]} & \win{0.9850} & \win{0.0047} & \win{0.1737} & \win{[0.173, 0.174]} & \win{0.1616} & \win{0.0430} \\
 & NC  & 0.9526  & [0.952, 0.953] & 0.9542 & 0.0172 & 0.2023 & [0.202, 0.203] & 0.1869 & 0.0443 \\
\midrule
\multirow{2}{*}{Holdout ($\mathcal{H}$)}
 & CO  & 0.8370  & [0.832, 0.842] & 0.8811 & 0.1129 & 0.2373 & [0.233, 0.241] & 0.2163 & 0.1229 \\
 & NC  & \win{0.9319}  & \win{[0.930, 0.934]} & \win{0.9392} & \win{0.0269} & \win{0.1561} & \win{[0.154, 0.158]} & \win{0.1242} & \win{0.0599} \\
\midrule
\multirow{2}{*}{Holdout ($\mathcal{H}_{\mathrm{col}}$)}
 & CO  & 0.9113  & [0.907, 0.915] & 0.9336 & 0.0514 & 0.1885 & [0.185, 0.192] & 0.1645 & 0.0903 \\
 & NC  & \win{0.9437}  & \win{[0.942, 0.945]} & \win{0.9439} & \win{0.0182} & \win{0.1560} & \win{[0.154, 0.158]} & \win{0.1182} & \win{0.0631} \\
\midrule
\multirow{2}{*}{Holdout ($\mathcal{H}_{\mathrm{nc}}$)}
 & CO  & 0.6984  & [0.688, 0.708] & 0.7490 & 0.2730 & 0.2757 & [0.271, 0.281] & 0.2445 & 0.1516 \\
 & NC  & \win{0.9092}  & \win{[0.907, 0.911]} & \win{0.9297} & \win{0.0492} & \win{0.1560} & \win{[0.154, 0.158]} & \win{0.1275} & \win{0.0569} \\
\bottomrule
\end{tabular}
\end{table}
\begin{table}[H]
\centering
\caption{Win rate breakdown by holdout split definition.}
\label{tab:winrate_appendix}
\begin{tabular}{lcccc}
\toprule
\textbf{Holdout Split} & \textbf{NC Wins} & \textbf{CO Wins} & \textbf{Total} & \textbf{NC \%} \\
\midrule
$\mathcal{H}$ & \winrateNumer & \winrateLosses & \winrateDenom & \winrate \\
$\mathcal{H}_{\mathrm{col}}$ & 1082 & 418 & \winrateDenom & 72.1\% \\
$\mathcal{H}_{\mathrm{nc}}$ & 1480 & 20 & \winrateDenom & 98.7\% \\
\midrule
  $\mathcal{H}$ / C4 & 279 & 21 & 300 & 93.0\% \\
  $\mathcal{H}$ / Cosmopedia & 284 & 16 & 300 & 94.7\% \\
  $\mathcal{H}$ / peS2o & 298 & 2 & 300 & 99.3\% \\
  $\mathcal{H}$ / RedPajama & 300 & 0 & 300 & 100.0\% \\
  $\mathcal{H}$ / Wikipedia & 299 & 1 & 300 & 99.7\% \\
  $\mathcal{H}_{\mathrm{col}}$ / C4 & 252 & 48 & 300 & 84.0\% \\
  $\mathcal{H}_{\mathrm{col}}$ / Cosmopedia & 239 & 61 & 300 & 79.7\% \\
  $\mathcal{H}_{\mathrm{col}}$ / peS2o & 295 & 5 & 300 & 98.3\% \\
  $\mathcal{H}_{\mathrm{col}}$ / RedPajama & 151 & 149 & 300 & 50.3\% \\
  $\mathcal{H}_{\mathrm{col}}$ / Wikipedia & 145 & 155 & 300 & 48.3\% \\
  $\mathcal{H}_{\mathrm{nc}}$ / C4 & 292 & 8 & 300 & 97.3\% \\
  $\mathcal{H}_{\mathrm{nc}}$ / Cosmopedia & 289 & 11 & 300 & 96.3\% \\
  $\mathcal{H}_{\mathrm{nc}}$ / peS2o & 299 & 1 & 300 & 99.7\% \\
  $\mathcal{H}_{\mathrm{nc}}$ / RedPajama & 300 & 0 & 300 & 100.0\% \\
  $\mathcal{H}_{\mathrm{nc}}$ / Wikipedia & 300 & 0 & 300 & 100.0\% \\
\bottomrule
\end{tabular}
\end{table}


\subsection{Appendix - Experimental setup for empirical validation of Theorem 1}\label{app:regime_a_setup}

We formalize the empirical protocol used to test
Theorem~\ref{thm:holdout_regimes}'s Regime~A prediction at matched
training cardinality. The protocol is instantiated independently for
each scaling law $\mathcal{L} \in \{\text{Chinchilla},\, \text{Kaplan},\,
\text{Droppo–Elibol}\}$ (Definition~\ref{def:exponent_gap}), each
training corpus, and each training checkpoint mode
$\mu \in \{\text{first},\, \text{second},\, \text{final}\}$. Throughout this
section we reuse the symbols $(N, D, L, \mathcal{D}, K, k_1, \ldots, k_K,
\mathcal{H}, \boldsymbol{\theta})$ from
Definition~\ref{def:experimental_dataset} without redefinition; the
quantities introduced below are the new objects specific to subset
enumeration.

\begin{definition}[Available TPP ratios]
\label{def:available_tpp}
For each corpus and mode $\mu$, let
$\mathbf{k}^{\text{all}} = (k_1 < \cdots < k_{12})$ denote the ordered
set of $12$ TPP ratios at which the collinear training pool
$\mathcal{D}_{\mathrm{train}}^{\mathrm{CO}}$
(Definition~\ref{def:experimental_dataset}) was collected (so $K = 12$
for the full pool). Each ratio $k_\ell$ indexes a ray
$\{(N, k_\ell N) : N \in \mathcal{N}\}$ in the $(N, D)$ plane, where
$\mathcal{N} = \{N_1, \ldots, N_n\}$ is the shared set of model sizes
of Definition~\ref{def:experimental_dataset}.
\end{definition}

\begin{definition}[Collinear (CO) subset design]
\label{def:co_subset}
Given any non-empty index set $S \subseteq \{1, \ldots, \COTPPs\}$, the
induced \emph{collinear subset design} is the union of training runs on
the selected rays:
\begin{equation}
    \mathrm{CO}(S)
    \;\coloneqq\;
    \bigl\{(N_i, D_i, L_i) \in \mathcal{D}_{\mathrm{train}}^{\mathrm{CO}}
      \;:\; D_i / N_i = k_\ell \text{ for some } \ell \in S \bigr\},
\end{equation}
with cardinality $n_S \coloneqq |\mathrm{CO}(S)|$ and effective ray
count $K_S \coloneqq |S|$ (the design's $K$ in the sense of
Definition~\ref{def:experimental_dataset}). Enumeration over all
non-empty $S$ yields $2^{12} - 1 = 4{,}095$ CO subset designs per
(corpus, mode) pair.
\end{definition}

\begin{definition}[Bounding-box non-collinear (NC) design]
\label{def:nc_bbox}
Let $\mathcal{D}_{\mathrm{train}}^{\mathrm{NC}}$ be the non-collinear
training pool covering the rectangular grid
$\mathcal{N}_{\mathrm{NC}} \times \mathcal{D}_{\mathrm{NC}}$, with
$M_N \coloneqq |\mathcal{N}_{\mathrm{NC}}|$ rows and
$M_D \coloneqq |\mathcal{D}_{\mathrm{NC}}|$ columns. Index the grid
cells $(i, j) \in \{0, \ldots, M_N - 1\} \times \{0, \ldots, M_D - 1\}$.
Given a target cardinality $n^\star \in \mathbb{N}$ and target TPP
ratios $\mathbf{k}_S = \{k_\ell : \ell \in S\}$ from a CO subset $S$,
the \emph{bounding-box NC design}
$\mathrm{NC}_{\Box}(n^\star, \mathbf{k}_S)$ is constructed by the
following deterministic algorithm
(seeded by $\mathrm{hash}(\text{corpus}, \mu, S)$):

\begin{enumerate}
    \item \textbf{Centre.} Initialise a $2 \times 2$ bounding box
        $[i_{\mathrm{lo}}, i_{\mathrm{hi}}] \times [j_{\mathrm{lo}},
        j_{\mathrm{hi}}]$ centered on $(\lfloor M_N/2 \rfloor,
        \lfloor M_D/2 \rfloor)$ (with a random $\pm 1$ offset when
        $M_N$ or $M_D$ is even).
    \item \textbf{Initial absorption.} Include all training runs
        whose $(N, D)$ falls inside the initial bounding box. If
        $|\mathrm{NC}_{\Box}| > n^\star$, subsample to exactly
        $n^\star$ using the priority rule below.
    \item \textbf{Adaptive growth.} While
        $|\mathrm{NC}_{\Box}| < n^\star$ and the box has not spanned
        the full grid:
        \begin{enumerate}
            \item Enumerate the two candidate expansions: expand the
                $N$-dimension by one row on both edges, or expand the
                $D$-dimension by one column on both edges.
            \item Score each candidate by $(s_1, s_2, \xi)$, where
                $s_1$ counts newly-covered CO-target TPP bins, $s_2$
                counts newly-covered non-target TPP bins (induced by
                the $(N, D)$ combinations on the boundary), and
                $\xi \sim \mathrm{Uniform}[0, 1]$.
            \item Select the candidate maximizing $(s_1, s_2, \xi)$
                lexicographically. If the resulting ring fits within
                $n^\star - |\mathrm{NC}_{\Box}|$ slots, absorb it
                entirely; otherwise absorb a priority subsample and
                terminate.
        \end{enumerate}
    \item \textbf{Priority subsample rule.} When a ring must be
        trimmed to $r$ training points, partition its runs by TPP bin
        into three priority buckets: (a) runs at a target TPP not yet
        covered, (b) runs at a non-target TPP not yet covered, (c)
        runs at an already-covered TPP. Shuffle each bucket with the
        seeded RNG and take the first $r$ from their concatenation.
\end{enumerate}

By construction, $|\mathrm{NC}_{\Box}(n^\star, \mathbf{k}_S)| =
n^\star$ whenever the NC grid has enough runs, and
$\mathrm{NC}_{\Box}$ is a connected rectangular region of
$\mathcal{N}_{\mathrm{NC}} \times \mathcal{D}_{\mathrm{NC}}$.
\end{definition}

\begin{definition}[Paired comparison]
\label{def:paired_comparison}
For any scaling law $\mathcal{L}$, corpus $\mathcal{D}$, mode $\mu$, and
CO subset $S$, let $n^\star = n_S$ (Definition~\ref{def:co_subset}).
Define:
\begin{align*}
    \mathrm{CO}(S) &\text{ — training set from the selected rays.} \\
    \mathrm{NC}_{\Box}(n^\star, \mathbf{k}_S)
        &\text{ — budget-matched bounding-box NC training set.} \\
    \hat{\boldsymbol{\theta}}^{\mathrm{CO}}, \hat{\boldsymbol{\theta}}^{\mathrm{NC}}
        &\text{ — least-squares fits of } \mathcal{L} \text{ on each design.}
\end{align*}
Both fits use the same optimization protocol and evaluate on the same
unified holdout $\mathcal{H}$ (Definition~\ref{def:experimental_dataset}).
The \emph{paired comparison} is the tuple
\begin{equation}
    \mathcal{C}(\mathcal{L}, \mathcal{D}, \mu, S)
    \;\coloneqq\;
    \bigl(
        \mathrm{RMSE}_{\mathcal{H}}^{\mathrm{CO}},\;
        \mathrm{RMSE}_{\mathcal{H}}^{\mathrm{NC}},\;
        \hat{\boldsymbol{\theta}}^{\mathrm{CO}},\;
        \hat{\boldsymbol{\theta}}^{\mathrm{NC}}
    \bigr),
\end{equation}
with $\mathrm{RMSE}_{\mathcal{H}}^{\mathrm{CO}}$ and
$\mathrm{RMSE}_{\mathcal{H}}^{\mathrm{NC}}$ as in
Theorem~\ref{thm:holdout_regimes}. The set of all paired comparisons is
\begin{equation}
    \begin{aligned}
        \mathcal{P}
        \;=\;
        \bigl\{
            \mathcal{C}(\mathcal{L}, \mathcal{D}, \mu, S)
            \;:\;&\;
            \mathcal{L} \in \{\text{Ch.},\, \text{Ka.},\, \text{DE}\}, \\
            &\mathcal{D} \in \{\text{C4}, \ldots, \text{Wiki}\}, \\
            &\mu \in \{\text{1st, 2nd, 3rd}\}, \\
            &S \subseteq \{1, \ldots, 12\}
        \bigr\},
    \end{aligned}
\end{equation}
with $|\mathcal{P}| = 3 \cdot 5 \cdot 3 \cdot 4095 = 184{,}275$.
\end{definition}

\noindent\textbf{Note on the budget match.}
$\mathrm{NC}_{\Box}(n^\star, \mathbf{k}_S)$ contains exactly $n^\star$
training points \emph{by construction}, matching $\mathrm{CO}(S)$
precisely. Any observed RMSE advantage of $\mathrm{NC}_{\Box}$ over
$\mathrm{CO}$ therefore cannot arise from a training-data size
difference; it must come from the \emph{geometric} difference between
ray-based and 2D designs — the object of
Proposition~\ref{prop:full_cond} and
Theorem~\ref{thm:holdout_regimes}.

\begin{definition}[NC win rate and Regime~A NC win rate]
\label{def:win_rate}
For any collection $\mathcal{P}' \subseteq \mathcal{P}$, the
\emph{NC win rate} on $\mathcal{P}'$ is
\begin{equation}
    \mathrm{WR}(\mathcal{P}')
    \;\coloneqq\;
    \frac{1}{|\mathcal{P}'|}
    \sum_{\mathcal{C} \in \mathcal{P}'}
        \mathbf{1}\!\left\{
            \mathrm{RMSE}_{\mathcal{H}}^{\mathrm{NC}}(\mathcal{C})
            <
            \mathrm{RMSE}_{\mathcal{H}}^{\mathrm{CO}}(\mathcal{C})
        \right\},
    \label{eqn:winrate}
\end{equation}
where $\mathbf{1}\{\cdot\}$ is the standard indicator (1 if the
inequality holds, 0 otherwise) and ties are broken in favor of CO
(negligible at our sample sizes).

Given a scaling law $\mathcal{L}$, mode $\mu$, and \textbf{target
condition number} $\kappa^\star > 0$ (in the sense of
$\kappa_{\mathrm{target}}$ from Proposition~\ref{prop:holdout_r2}), the
\emph{predicted Regime~A set at $\kappa^\star$} is
\begin{equation}
    \mathcal{P}^{A}_{\mathcal{L}, \mu}(\kappa^\star)
    \;\coloneqq\;
    \bigl\{
        \mathcal{C}(\mathcal{L}, \mathcal{D}, \mu, S) \in \mathcal{P}
        \;:\;
        V_{K_S}(S; \beta_{\mathrm{eff}}^{\mathcal{L}})
        \;<\;
        \tau_{K_S}(S; \beta_{\mathrm{eff}}^{\mathcal{L}}, \kappa^\star)
    \bigr\},
    \label{eqn:regime_a_set}
\end{equation}
where $V_{K}(\cdot;\beta)$, $\tau_{K}(\cdot;\beta,\kappa^\star)$, and
$\beta_{\mathrm{eff}}^{\mathcal{L}}$ are exactly the quantities of
Proposition~\ref{prop:holdout_r2} and Definition~\ref{def:exponent_gap},
evaluated on the rays selected by $S$ (so $K_S = |S|$). We use the
\textbf{literature value} of $\beta_{\mathrm{eff}}^{\mathcal{L}}$
(Table~\ref{tab:law_substitutions}) so that regime classification can
be performed \emph{a priori} without fitting. The
\emph{Regime~A NC win rate} at target $\kappa^\star$ is
\begin{equation}
    \mathrm{WR}_A(\mathcal{L}, \mu; \kappa^\star)
    \;\coloneqq\;
    \mathrm{WR}\!\bigl(
        \mathcal{P}^{A}_{\mathcal{L}, \mu}(\kappa^\star)
    \bigr).
\end{equation}
\end{definition}

\begin{definition}[Measured condition numbers]
\label{def:measured_kappa}
For a paired comparison $\mathcal{C}$ with fitted parameters
$\hat{\boldsymbol{\theta}}^{\mathrm{CO}},
\hat{\boldsymbol{\theta}}^{\mathrm{NC}}$, let $J^{\mathrm{CO}}$
(resp.\ $J^{\mathrm{NC}}$) denote the residual Jacobian of
Section~\ref{sec:gnsetup} for $\mathcal{L}$, evaluated at
$\hat{\boldsymbol{\theta}}^{\mathrm{CO}}$ on $\mathrm{CO}(S)$
(resp.\ at $\hat{\boldsymbol{\theta}}^{\mathrm{NC}}$ on
$\mathrm{NC}_{\Box}$). We report two condition-number diagnostics,
both extending $\kappa_{A,B}$ and $\kappa(J^T J)$ from
Section~\ref{sec:gnsetup} with a design superscript:
\begin{align}
    \kappa_{A,B}^{\mathrm{design}}
    &\;\coloneqq\;
    \kappa\!\Bigl(\bigl[(J^{\mathrm{design}})^{\top}
        J^{\mathrm{design}}\bigr]_{A, B}\Bigr),
    \quad \text{(2$\times$2 sub-block on the $(A, B)$ pair)},
    \label{eqn:kappa_AB}\\
    \kappa_{\mathrm{full}}^{\mathrm{design}}
    &\;\coloneqq\;
    \kappa\!\bigl((J^{\mathrm{design}})^{\top}
        J^{\mathrm{design}}\bigr),
    \quad \text{(full $p \times p$ matrix)},
\end{align}
for $\mathrm{design} \in \{\mathrm{CO}, \mathrm{NC}\}$. The 2$\times$2
block is the quantity Proposition~\ref{prop:full_cond} bounds under
the global dominance assumption of Section~\ref{sec:gnsetup}. The
full-matrix $\kappa$ is scale-sensitive (columns differ in magnitude
by up to $20$ orders, e.g., $A \sim 10^6$ vs $\alpha \sim 10^{-1}$)
but its \emph{rank} across paired comparisons remains informative.
\end{definition}

\subsection{Appendix - BF16 results}\label{app:bf16_results}

Tables~\ref{tab:summary_bf16},~\ref{tab:winrate_bf16},~\ref{tab:winrate_appendix_bf16} report
summary statistics and pairwise win rates for the BF16
mixed-precision experiments described in
Section~\ref{sec:designs}. These runs use the same $\Nsizes$ model sizes $N$, the same $\NCDsizes$ NC dataset sizes $D$, and the same $\COTPPs$ collinear training TPP ratios $\mathcal{K}_{\mathrm{train}}$ as the FP32 main experiment, but are restricted to a single corpus (Wikipedia) due to compute. They are not to be confused with the preliminary high-TPP (BigTPP) experiment of Appendix~\ref{app:bigtpp_prelim}, which uses the disjoint set $\mathcal{K}_{\mathrm{big}} = \{10, 11, 12, 13, 14, 15\}$ in FP32.

\begin{table}[H]
\small
\centering
\setlength{\tabcolsep}{3pt}
\caption{$R^2$ and RMSE summary for BF16 across training designs with seed-to-seed optimizer variance (30 seeds). 95\% CI: confidence interval on the mean.
Best per split in \textbf{bold}.}
\label{tab:summary_bf16_app}
\begin{tabular}{ll cccc cccc}
\toprule
& & \multicolumn{4}{c}{\textbf{$R^2$}} & \multicolumn{4}{c}{\textbf{RMSE}} \\
\cmidrule(lr){3-6} \cmidrule(lr){7-10}
\textbf{Split} & \textbf{Design} & Mean & 95\% CI & Median & Std & Mean & 95\% CI & Median & Std \\
\midrule
\multirow{2}{*}{Train ($\mathcal{D}$)}
 & CO  & 0.9896  & [0.990, 0.990] & 0.9916 & 0.0029 & 0.1854 & [0.185, 0.185] & 0.1606 & 0.0353 \\
 & NC  & \win{0.9907}  & \win{[0.991, 0.991]} & \win{0.9915} & \win{0.0020} & \win{0.1457} & \win{[0.145, 0.146]} & \win{0.1428} & \win{0.0171} \\
\midrule
\multirow{2}{*}{Holdout ($\mathcal{H}$)}
 & CO  & 0.9412  & [0.940, 0.942] & 0.9473 & 0.0172 & 0.2546 & [0.252, 0.257] & 0.2402 & 0.0396 \\
 & NC  & \win{0.9657}  & \win{[0.965, 0.967]} & \win{0.9663} & \win{0.0068} & \win{0.1949} & \win{[0.192, 0.198]} & \win{0.1921} & \win{0.0203} \\
\midrule
\multirow{2}{*}{Holdout($\mathcal{H}_{\mathrm{col}}$)}
 & CO  & 0.9650  & [0.964, 0.966] & 0.9693 & 0.0089 & 0.2194 & [0.218, 0.221] & 0.2063 & 0.0279 \\
 & NC  & \win{0.9734}  & \win{[0.973, 0.974]} & \win{0.9741} & \win{0.0037} & \win{0.1919} & \win{[0.190, 0.194]} & \win{0.1897} & \win{0.0136} \\
\midrule
\multirow{2}{*}{Holdout($\mathcal{H}_{\mathrm{nc}}$)}
 & CO  & 0.9153  & [0.913, 0.917] & 0.9190 & 0.0254 & 0.2753 & [0.272, 0.279] & 0.2673 & 0.0477 \\
 & NC  & \win{0.9568}  & \win{[0.956, 0.958]} & \win{0.9582} & \win{0.0106} & \win{0.1964} & \win{[0.193, 0.199]} & \win{0.1906} & \win{0.0266} \\
\bottomrule
\end{tabular}
\end{table}
\begin{table}[H]
\small
\centering
\setlength{\tabcolsep}{4pt}
\caption{Win rate breakdown (NC vs.\ CO design) on holdout for BF16 with seed-paired comparisons. Overall NC win rate: \winratebf (\winratebfNumer/\winratebfDenom). $95\%$ CI: \winratebfCI.}
\label{tab:winrate_bf16}
\begin{tabular}{lr@{\hspace{1.5em}}lr}
\toprule
\multicolumn{2}{c}{\textbf{Law}} & \multicolumn{2}{c}{\textbf{Epoch}} \\
\cmidrule(lr){1-2} \cmidrule(lr){3-4}
Chinchilla & 97.8\% & first &  100.0\%\\
Droppo-Elibol & 97.8\% & second & 96.7\% \\
Kaplan & 100.0\% & final & 98.9\% \\
Repeated-Data & 100.0\% &  &  \\
\bottomrule
\end{tabular}
\end{table}
\begin{table}[H]
\centering
\caption{Win rate breakdown by holdout split definition for bf16.}
\label{tab:winrate_appendix_bf16}
\begin{tabular}{lcccc}
\toprule
\textbf{Holdout Split} & \textbf{NC Wins} & \textbf{CO Wins} & \textbf{Total} & \textbf{NC \%} \\
\midrule
$\mathcal{H}$ & 296 & 4 & 300 & 98.7\% \\
$\mathcal{H}_{\mathrm{col}}$ & 222 & 78 & 300 & 74.0\% \\
$\mathcal{H}_{\mathrm{nc}}$ & 299 & 1 & 300 & 99.7\% \\
\bottomrule
\end{tabular}
\end{table}

The following tables expand on the summary in
Section~\ref{sec:empirical} and Table~\ref{tab:summary}.  We evaluate
Chinchilla, repeated-data~\citep{muennighoff2023scaling}, Kaplan, and Droppo \& Elibol, on Wikipedia articles only, due to compute restrictions.

\subsection{Appendix - TPP convergence fraction detailed setup}\label{app:tpp_convergence_setup}

We construct an incremental coverage analysis to measure how scaling law fit quality evolves as the experimental design space is progressively expanded. This analysis reuses the runs from our main experiment rather than conducting independent trials, so the results should be interpreted as a post-hoc decomposition of existing data rather than a controlled ablation. For the collinear (CO) design, we sort the $\COTPPs$ designed TPP levels in ascending order and add them one at a time: step~1 trains on runs from only the lowest TPP, step~2 adds the second-lowest, and so on until all $\COTPPs$ levels are included. Each TPP level contributes $\COtrainN/\COTPPs = 10$ training-eligible runs (one per training model size after the L-shaped holdout removal), growing the training set from $10$ to $\COtrainN$ runs across $\COTPPs$ steps. For the non-collinear (NC) design, we expand coverage by pinching inward from the extremes of the $D$ range: step~1 includes runs at both the smallest and largest $D$, step~2 adds the second-smallest and second-largest, and so on, adding two $D$ levels per step. This pairing ensures that even the earliest step spans the full $D$ range. At each step, all four scaling laws are fit on the current subset and evaluated against fixed holdout sets, so changes in $R^2$ reflect only the effect of broader coverage. The TPP bins are scaled to match the epoch mode under evaluation ($\times 1$ for first-epoch, $\times 2$ for second-epoch, $\times 3$ for final-epoch), aligning the groupings with the effective tokens-per-parameter that each fitter sees.

A coverage fraction of 1.0 means all $\COTPPs$ TPP levels (for CO) or all $\NCDsizes$ $D$ levels (for NC) are included in the training set - this is the same configuration used in the main results tables, and the $R^2$ values at full coverage correspond exactly to those reported there. The CO and NC coverage fractions are not directly comparable: they traverse different axes of the design space using non-corresponding sets of values. Furthermore, the observed $R^2$ trajectory may depend not only on the number of levels included but also on their spacing; our designed TPP values $(1.0,\; 1.5,\; 1.9,\; 2.0,\; 2.5,\; 2.7,\; 3.0,\; 3.3,\; 3.5,\; 4.0,\; 4.5,\; 5.0)$ are unevenly distributed, with denser coverage at the low end, so a different spacing (e.g., uniformly from 1 to 5) could yield a different convergence profile. The primary takeaway is qualitative - that collinear designs require substantially more coverage to approach their asymptotic $R^2$, while non-collinear designs converge rapidly with minimal coverage - rather than a precise quantitative comparison between the two trajectories.



\subsection{Appendix - Optimizer seed protocol}\label{app:seed_protocol}

All scaling-law parameters are estimated via nonlinear least
squares (L-BFGS-B with differential evolution polish) using
multiple random restarts seeded by a deterministic sequence.
The \texttt{seed} column in released parquet files and CSVs
stores the seed \emph{index}; the actual RNG seed passed to
the optimizer is computed from this index as described below.
All model pre-training uses a fixed seed of $42$ for weight
initialization and data shuffling; the seeds documented here
govern only the scaling-law curve fitting. We use a multitude of random hashing functions for optimizer seed starts.

\noindent\textbf{Tables~\ref{tab:summary_full}-\ref{tab:winrate} and convergence-fraction plots.}
Both the full-coverage analysis (Tables~\ref{tab:summary_full}-\ref{tab:winrate})
and the seed-variance convergence fraction plots are produced by identical seed logic: $30$
optimizer seeds at $100$ random restarts each, base seed
$b = 0$, stride $1$:
\[
  s_i = i, \qquad i \in \{0,\, 1,\, 2,\, \ldots,\, 29\}.
\]
Each seed index $i$ is passed directly as the RNG seed to the
optimizer.  The full-coverage stage fits all four scaling laws
$\times$ five datasets $\times$ three epoch modes $\times$ two
designs at each seed, producing \winrateDenom paired comparisons
(Table~\ref{tab:winrate}).  The seed-variance stage fits the
same configurations at each incremental coverage step,
recording the distribution of holdout $R^2$ across the $30$
seeds at every step.

\noindent\textbf{Tables~\ref{tab:summary_bf16}-\ref{tab:winrate_bf16} (BF16 replication).}
The BF16 analysis uses $30$ optimizer seeds at
$100$ random restarts each, base seed $b = 0$, stride $137$:
\[
  s_i = 137 \cdot i, \qquad i \in \{0,\, 1,\, 2,\, \ldots,\, 29\},
\]
yielding the sequence
$\{0,\; 137,\; 274,\; 411,\; \ldots,\; 3{,}973\}$.
The wider stride avoids accidental overlap with the main
analysis seeds; the results are statistically equivalent to
any other choice of $30$ independent seeds.

\noindent\textbf{Table~\ref{tab:regime_a_winrate} (subset enumeration).}
The budget-matched subset enumeration uses $22$ independent
global seeds $g \in \{0,\, 1,\, \ldots,\, 21\}$, each with
$300$ random restarts per fit.  The actual optimizer seed for
global seed $g$ is:
\[
  s_g = 42 + 100{,}003 \cdot g.
\]
Table~\ref{tab:subset_seeds} lists all $22$ values.  The NC
bounding-box construction uses a separate deterministic seed
derived via BLAKE2b hash of the dataset name, epoch mode, and
subset mask, XORed with $g \cdot \mathtt{0x9E3779B1}$ (a
32-bit golden-ratio constant ensuring uniform dispersion).  The
holdout subsample seed is intentionally \emph{not} varied across
$g$: it is fixed at
$7{,}777 + \mathrm{hash}(\text{dataset},\, \text{mode})
\bmod 10^5$
so that the holdout composition remains constant across all
$22$ runs.

\noindent\textbf{Preliminary Large TPP Seeds.}
Our preliminary large TPP results use the same stride as in our BF16 results, using a stride of 137.

\begin{table}[H]
\centering\small
\caption{Fit seeds for each of the $22$ subset-enumeration runs
($s_g = 42 + 100{,}003 \cdot g$).}
\label{tab:subset_seeds}
\begin{tabular}{rc@{\hspace{2em}}rc@{\hspace{2em}}rc}
\toprule
$g$ & $s_g$ & $g$ & $s_g$ & $g$ & $s_g$ \\
\midrule
 0 &            42 &  8 &    800{,}066 & 16 & 1{,}600{,}090 \\
 1 &    100{,}045 &  9 &    900{,}069 & 17 & 1{,}700{,}093 \\
 2 &    200{,}048 & 10 & 1{,}000{,}072 & 18 & 1{,}800{,}096 \\
 3 &    300{,}051 & 11 & 1{,}100{,}075 & 19 & 1{,}900{,}099 \\
 4 &    400{,}054 & 12 & 1{,}200{,}078 & 20 & 2{,}000{,}102 \\
 5 &    500{,}057 & 13 & 1{,}300{,}081 & 21 & 2{,}100{,}105 \\
 6 &    600{,}060 & 14 & 1{,}400{,}084 &    &               \\
 7 &    700{,}063 & 15 & 1{,}500{,}087 &    &               \\
\bottomrule
\end{tabular}
\end{table}

\begin{table}[H]
\centering\small
\caption{Seed protocol summary.  Index $i$ ranges over optimizer
seeds within a stage; $g$ ranges over independent subset-enumeration
runs.}
\label{tab:seed_summary}
\begin{tabular}{l c c c l}
\toprule
\textbf{Stage} & \textbf{Count} & \textbf{Restarts} & \textbf{Tables/Figures} & \textbf{RNG seed} \\
\midrule
Full coverage       & 30 & 100 & \ref{tab:summary_full}, \ref{tab:winrate}            & $i$ \\
Seed variance       & 30 & 100 & Convergence plots                                     & $i$ \\
BF16 replication    & 30 & 100 & \ref{tab:summary_bf16}, \ref{tab:winrate_bf16}        & $137i$ \\
Subset enumeration  & 22 & 300 & \ref{tab:regime_a_winrate}                             & $42 + 100{,}003g$ \\
BigTPP prelim     & 30 & 100 & \ref{tab:winrate_bigtpp_prelim}                       & $137i$ \\
\bottomrule
\end{tabular}
\end{table}

\subsection{Appendix - Preliminary large TPP experiment}\label{app:bigtpp_prelim}

To check whether the
NC advantage persists at higher TPP, we ran a single-corpus
preliminary experiment on Wikipedia. Due to compute restrictions, the grid is smaller in size and available fits; we use Wikipedia and fit Chinchilla to this experiment exclusively.

\noindent\textbf{Training grid.}
Both designs share ten model sizes
\[
N \in \{5.04, 10.07, 14.41, 19.40, 22.80,\,
28.82, 32.50, 35.37, 40.28, 47.95\}\text{M}.
\]
CO sweeps $k \in \mathcal{K}_{\mathrm{big}}
\coloneqq \{10,\, 11,\, 12,\, 13,\, 14,\, 15\}$.
NC sweeps $D \in \{100, 225, 400, 550, 700, 825\}$.

\noindent\textbf{Holdout.}
Four out-of-grid sizes
$N_{\mathrm{holdout}} \in \{54.57, 60.88, 64.04, 70.36\}$\,M, all
above the training maximum. For holdout model size $55$M, the holdout
model used TPP values $10,11,12,13,14,15,16,18,20$. For non-collinear,
$55$M models used
$D \in \{100, 225, 400, 550, 700, 825, 900, 1100, 1400\}$. At the three largest
holdout sizes, $\mathcal{H}_{\mathrm{col}}$ only includes
TPP\,$\in\{16, 18, 20\}$ and $\mathcal{H}_{\mathrm{nc}}$ additionally
includes $D \in \{900, 1100, 1400\}$\,M.

\noindent\textbf{Fitting.}
Chinchilla, $30$ seeds (stride $137$, matching
Appendix~\ref{app:bf16_results}), $100$ restarts with differential
evolution polish. We report seed-paired NC win rate
(Table~\ref{tab:winrate_bigtpp_prelim}) with Wilson 95\% CI.
A full BigTPP replication is left to future work.

\begin{table}[H]
\small
\centering
\caption{Preliminary BigTPP NC win rate (Chinchilla, first epoch). Compute-limited preliminary experiment; not a full result conclusion.}
\label{tab:winrate_bigtpp_prelim}
\begin{tabular}{lccc}
\toprule
\textbf{Law} & \textbf{NC Wins} & \textbf{Total} & \textbf{NC \% [95\% CI]} \\
\midrule
Chinchilla & 19 & 30 & 63.3\% [45.5\%, 78.1\%] \\
\bottomrule
\end{tabular}
\end{table}

\subsection{Appendix - Three-panel plots of scaling law fitted parameters}
\label{app:surfaces}

Figures~\ref{fig:dump-01}-\ref{fig:dump-bf16-05} continue the
three-panel plots from Section~\ref{sec:empirical}. The BF16
additional runs did not undergo the same increasing coverage
fraction, as seen in Section~\ref{sec:tpp_convergence}.

\begin{figure*}[!ht]
    \centering
    \begin{subfigure}[b]{0.37\textwidth}
        \centering
        \includegraphics[width=\linewidth]{src/updated_figures/01_surface_Red_Pajama_Kaplan_First.pdf}
        \caption{Surface fit.}
        \label{fig:01-surface}
    \end{subfigure}%
    \hfill
    \begin{subfigure}[b]{0.30\textwidth}
        \centering
        \includegraphics[width=\linewidth]{src/updated_figures/01_convergence_Red_Pajama_Kaplan_First.pdf}
        \caption{Coverage fraction analysis.}
        \label{fig:01-convergence}
    \end{subfigure}%
    \hfill
    \begin{subfigure}[b]{0.30\textwidth}
        \centering
        \includegraphics[width=\linewidth]{src/updated_figures/01_isoflop_Red_Pajama_Kaplan_First.pdf}
        \caption{IsoFLOP curves.}
        \label{fig:01-isoflop}
    \end{subfigure}
    \caption{Kaplan law on RedPajama, first epoch.}
    \label{fig:dump-01}
\end{figure*}

\begin{figure*}[!ht]
    \centering
    \begin{subfigure}[b]{0.37\textwidth}
        \centering
        \includegraphics[width=\linewidth]{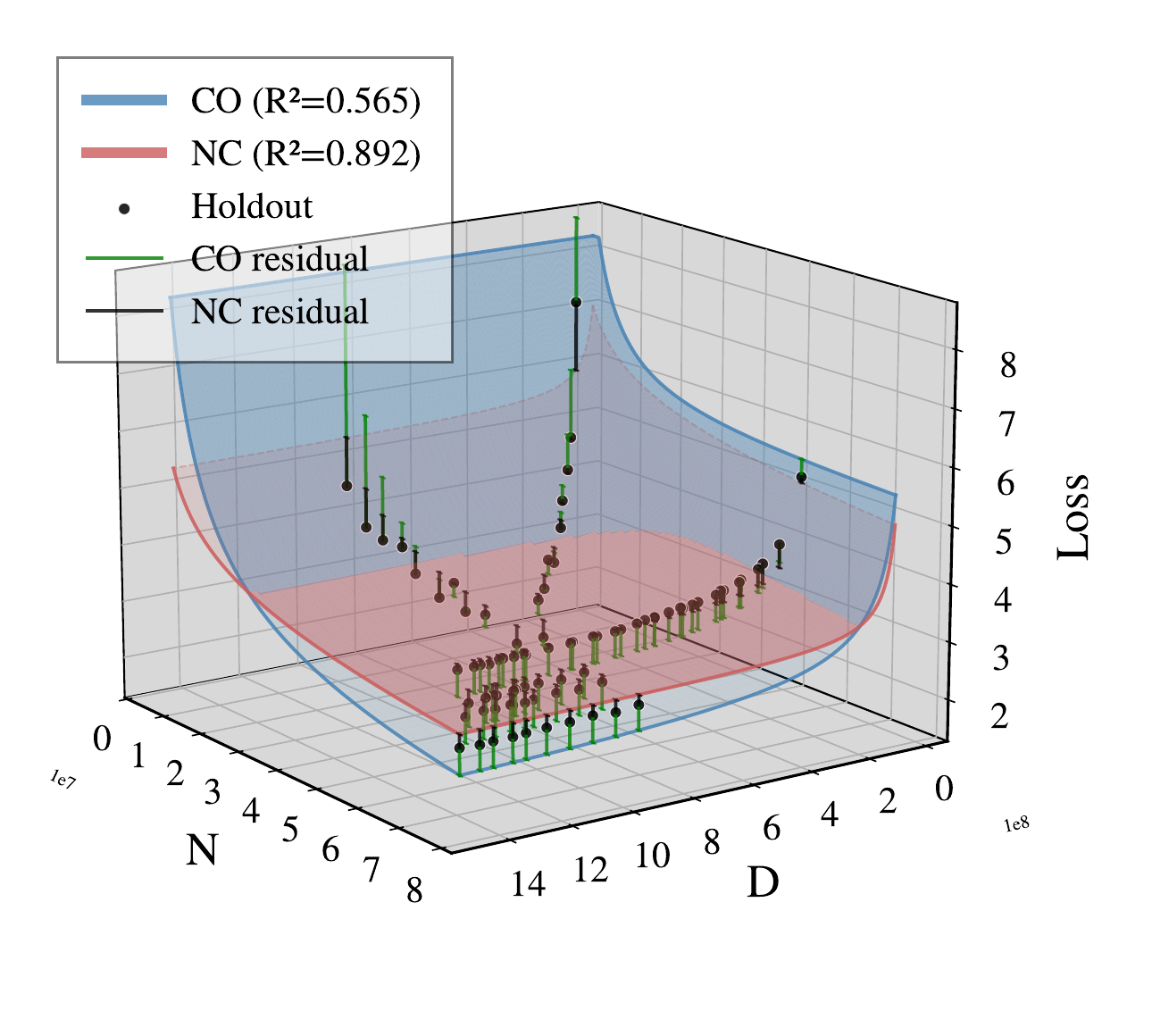}
        \caption{Surface fit.}
        \label{fig:02-surface}
    \end{subfigure}%
    \hfill
    \begin{subfigure}[b]{0.30\textwidth}
        \centering
        \includegraphics[width=\linewidth]{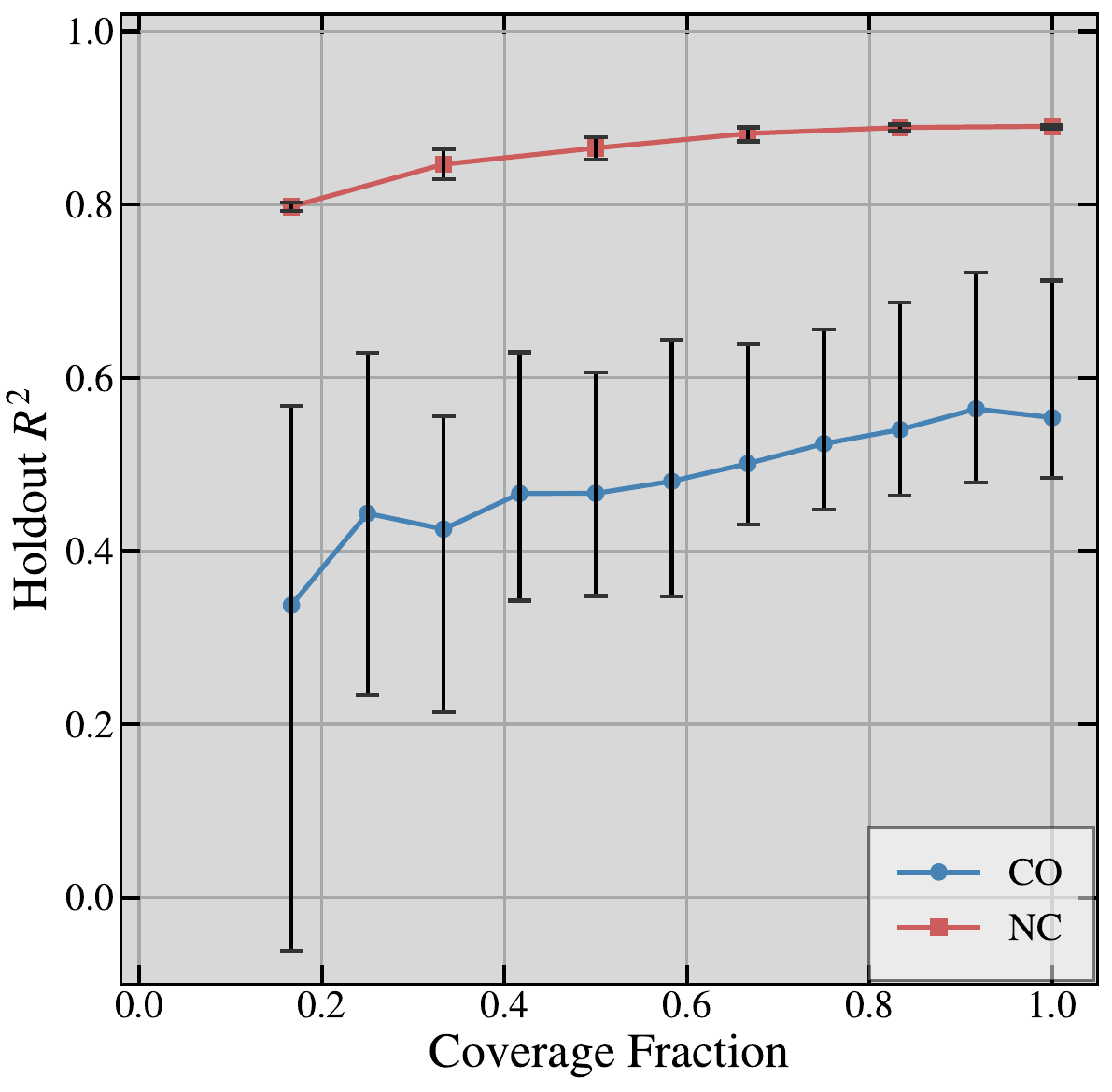}
        \caption{Coverage fraction analysis.}
        \label{fig:02-convergence}
    \end{subfigure}%
    \hfill
    \begin{subfigure}[b]{0.30\textwidth}
        \centering
        \includegraphics[width=\linewidth]{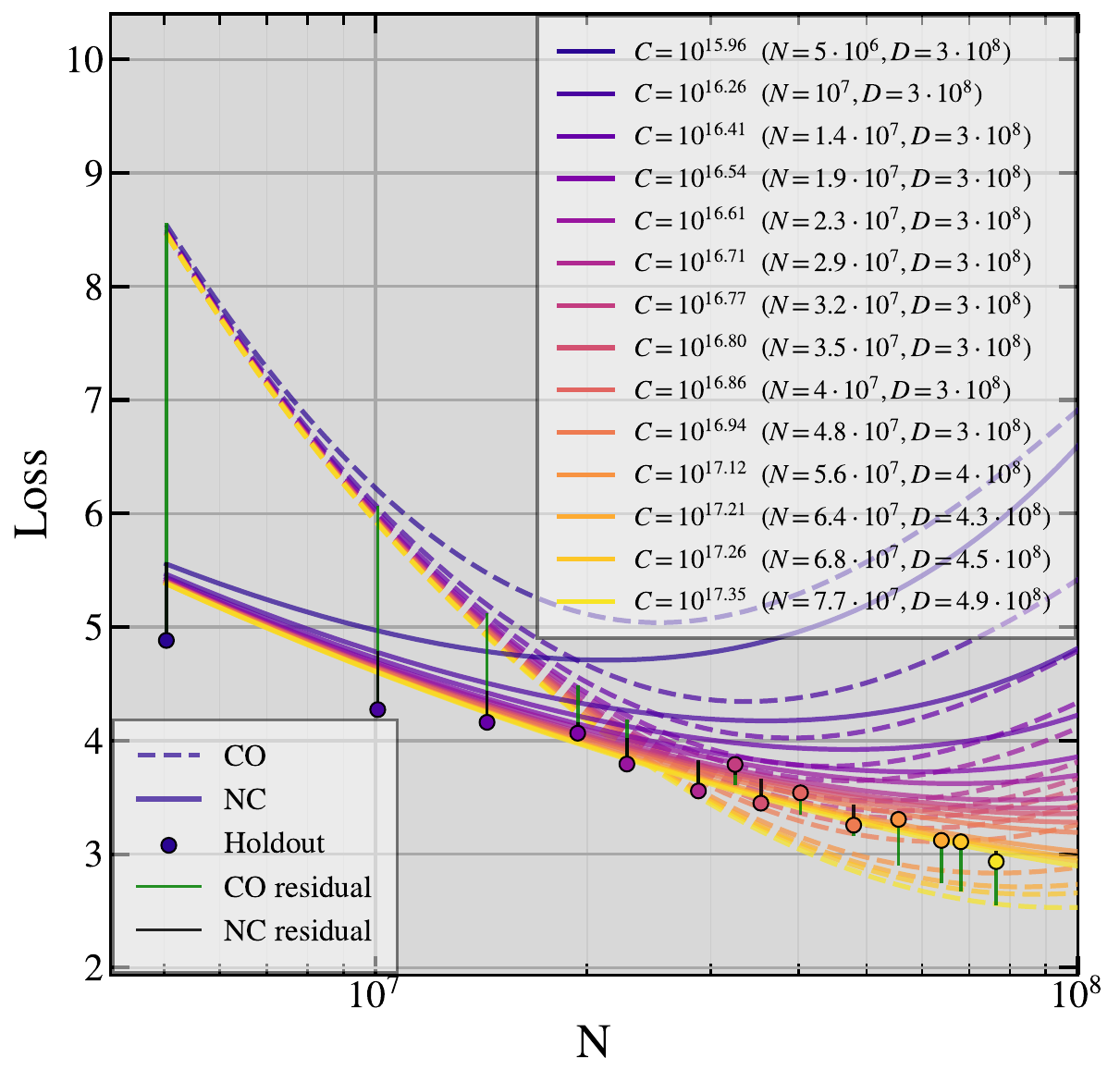}
        \caption{IsoFLOP curves.}
        \label{fig:02-isoflop}
    \end{subfigure}
    \caption{Kaplan law on Wikipedia, first epoch.}
    \label{fig:dump-02}
\end{figure*}

\begin{figure*}[!ht]
    \centering
    \begin{subfigure}[b]{0.37\textwidth}
        \centering
        \includegraphics[width=\linewidth]{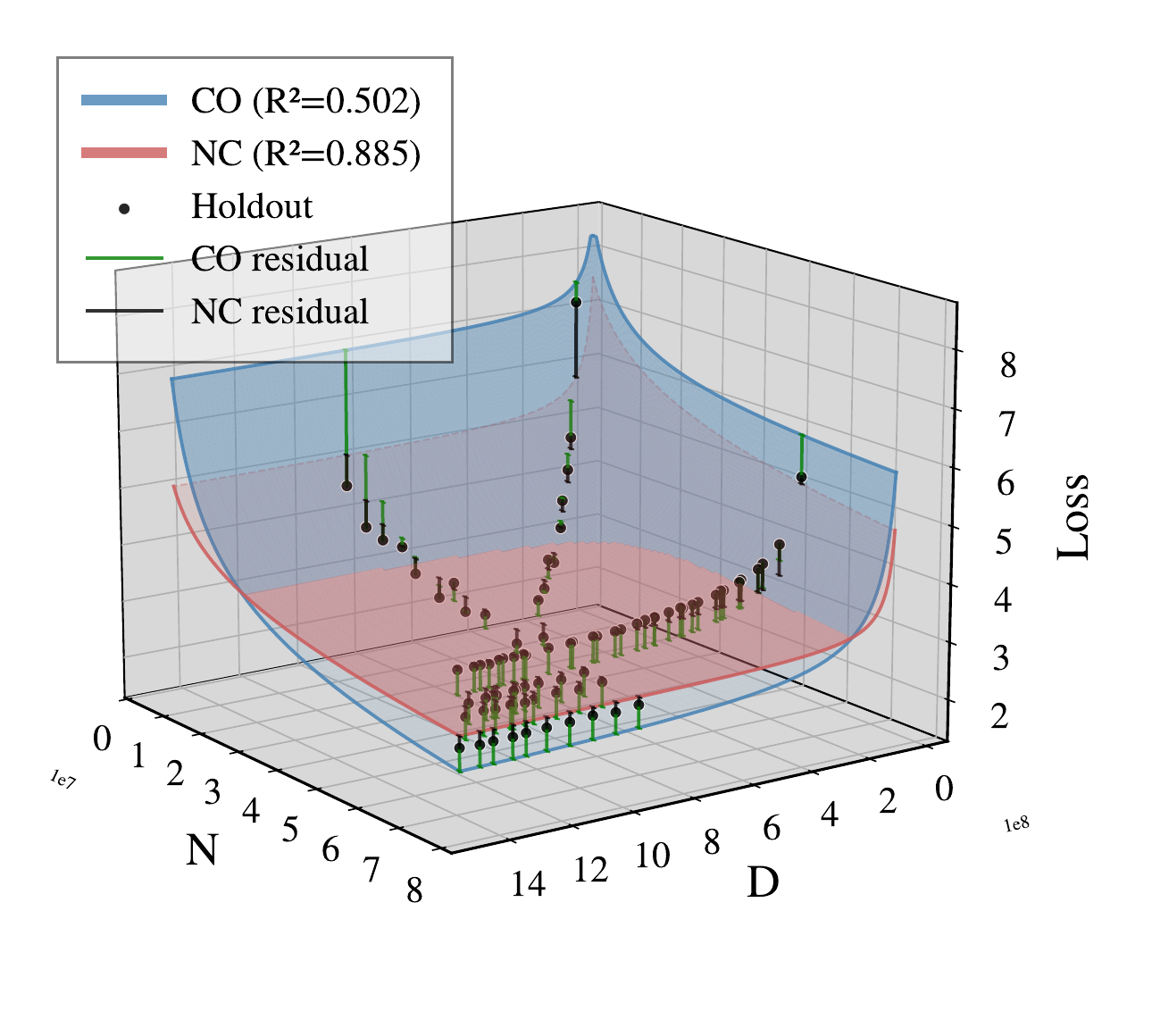}
        \caption{Surface fit.}
        \label{fig:03-surface}
    \end{subfigure}%
    \hfill
    \begin{subfigure}[b]{0.30\textwidth}
        \centering
        \includegraphics[width=\linewidth]{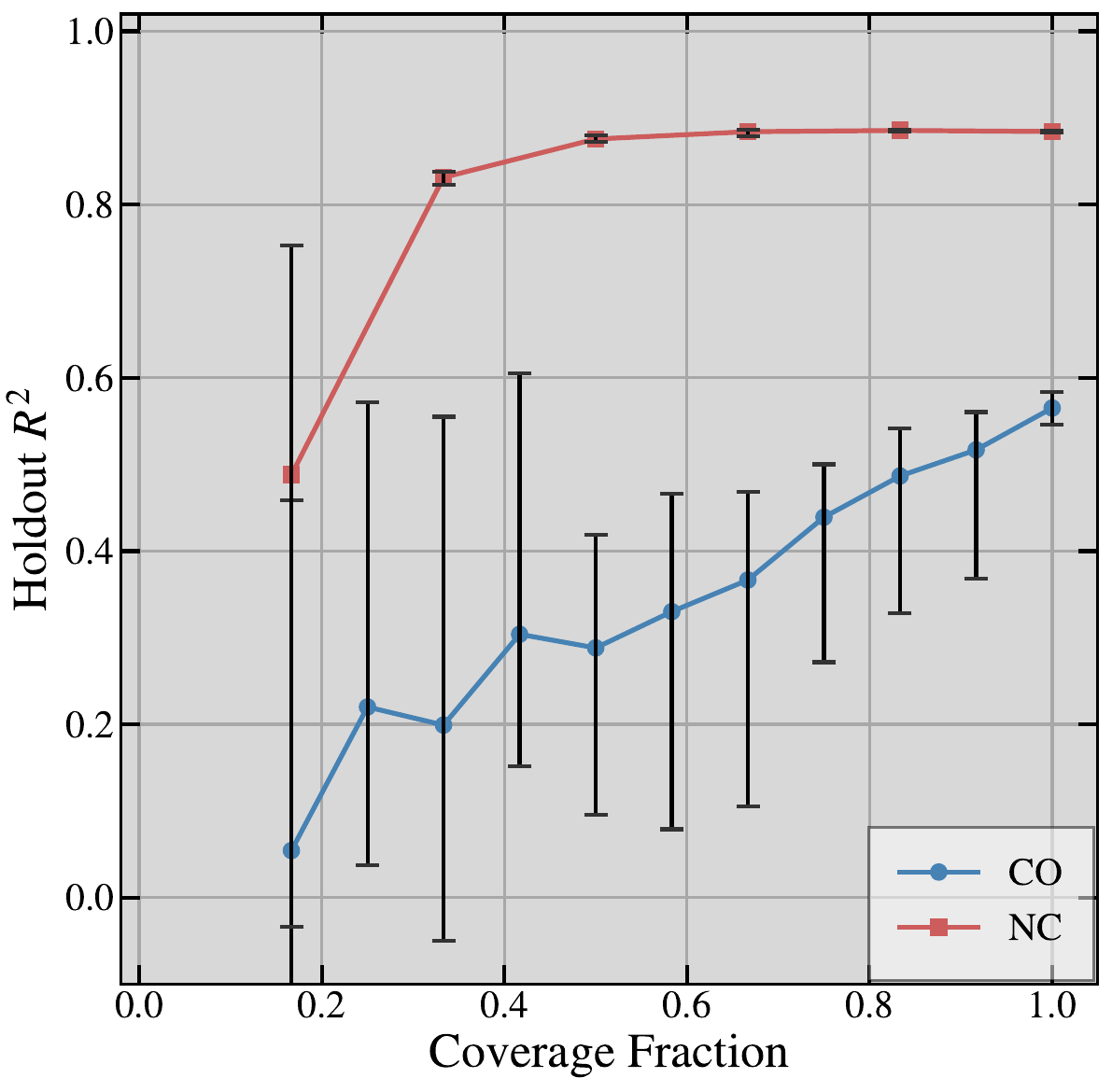}
        \caption{Coverage fraction analysis.}
        \label{fig:03-convergence}
    \end{subfigure}%
    \hfill
    \begin{subfigure}[b]{0.30\textwidth}
        \centering
        \includegraphics[width=\linewidth]{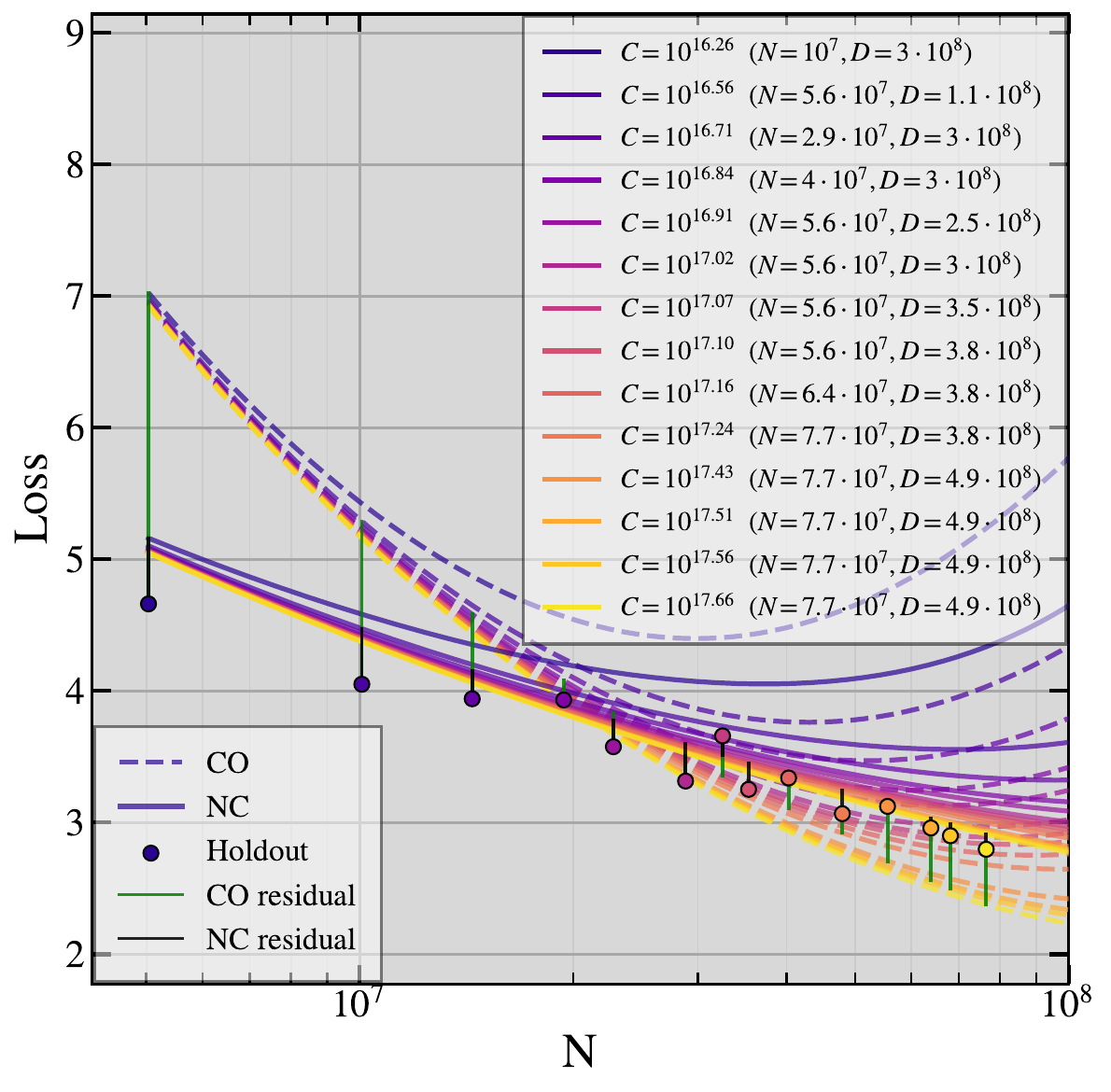}
        \caption{IsoFLOP curves.}
        \label{fig:03-isoflop}
    \end{subfigure}
    \caption{Kaplan law on Wikipedia, second epoch.}
    \label{fig:dump-03}
\end{figure*}

\begin{figure*}[!ht]
    \centering
    \begin{subfigure}[b]{0.37\textwidth}
        \centering
        \includegraphics[width=\linewidth]{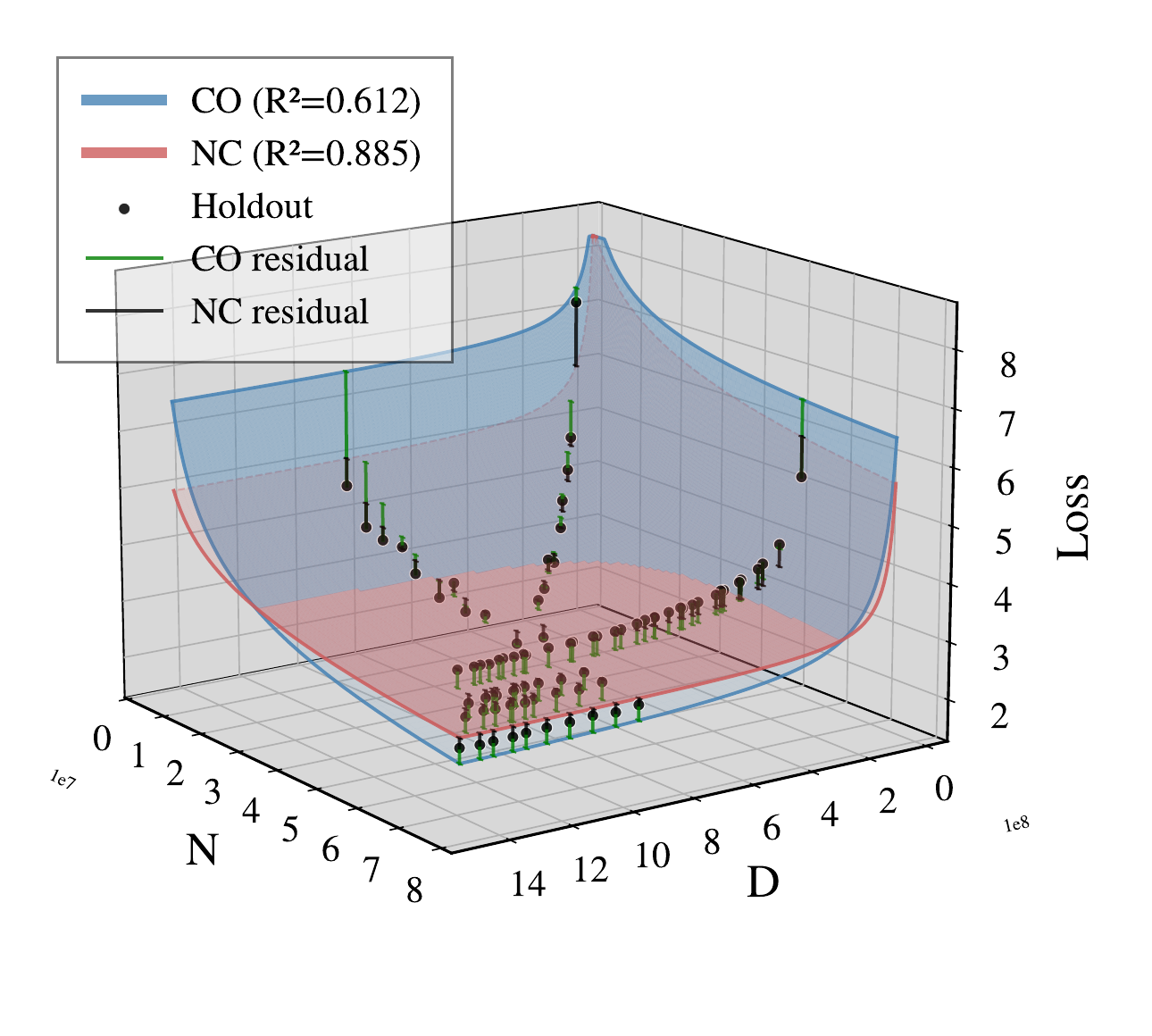}
        \caption{Surface fit.}
        \label{fig:04-surface}
    \end{subfigure}%
    \hfill
    \begin{subfigure}[b]{0.30\textwidth}
        \centering
        \includegraphics[width=\linewidth]{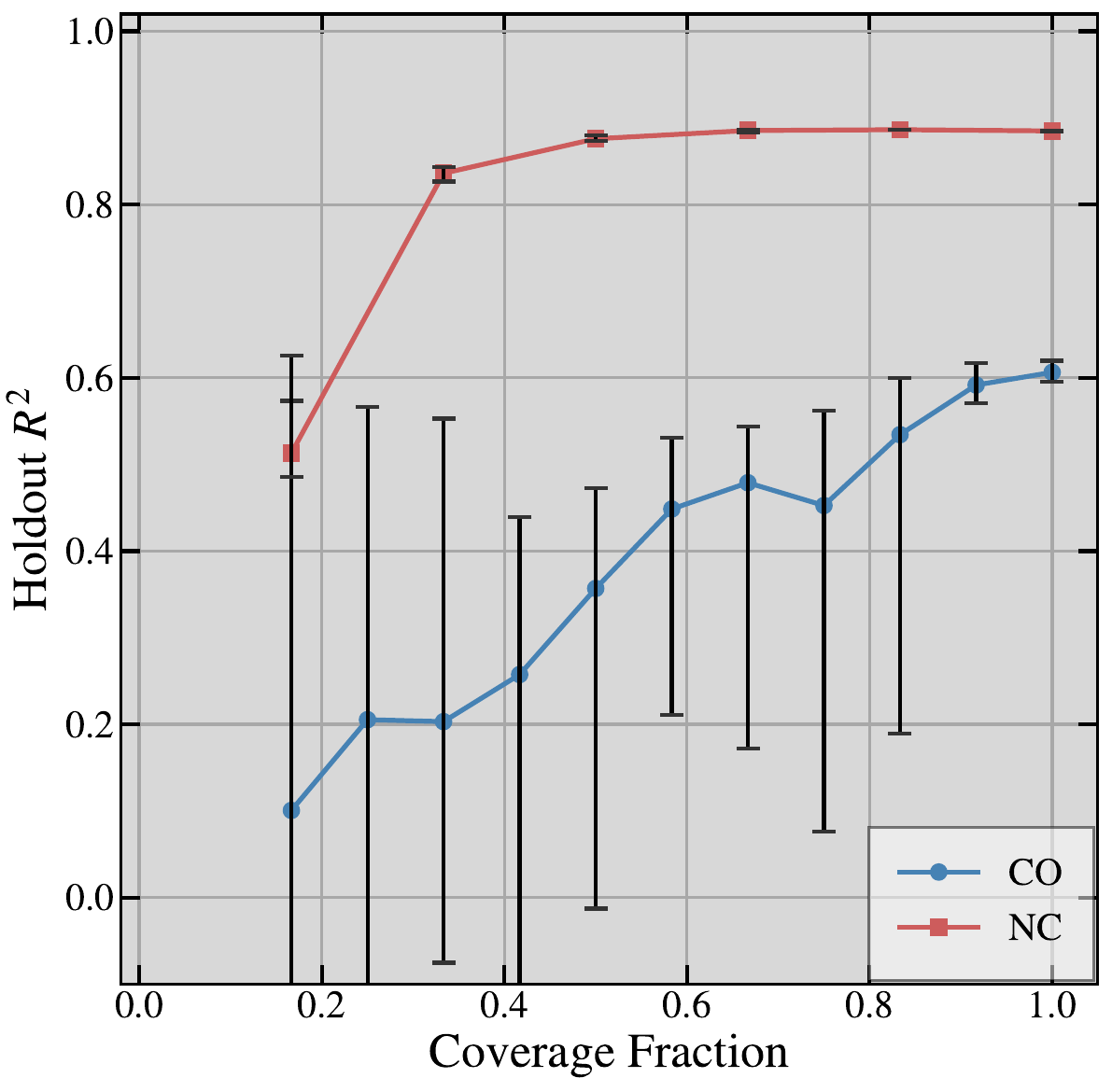}
        \caption{Coverage fraction analysis.}
        \label{fig:04-convergence}
    \end{subfigure}%
    \hfill
    \begin{subfigure}[b]{0.30\textwidth}
        \centering
        \includegraphics[width=\linewidth]{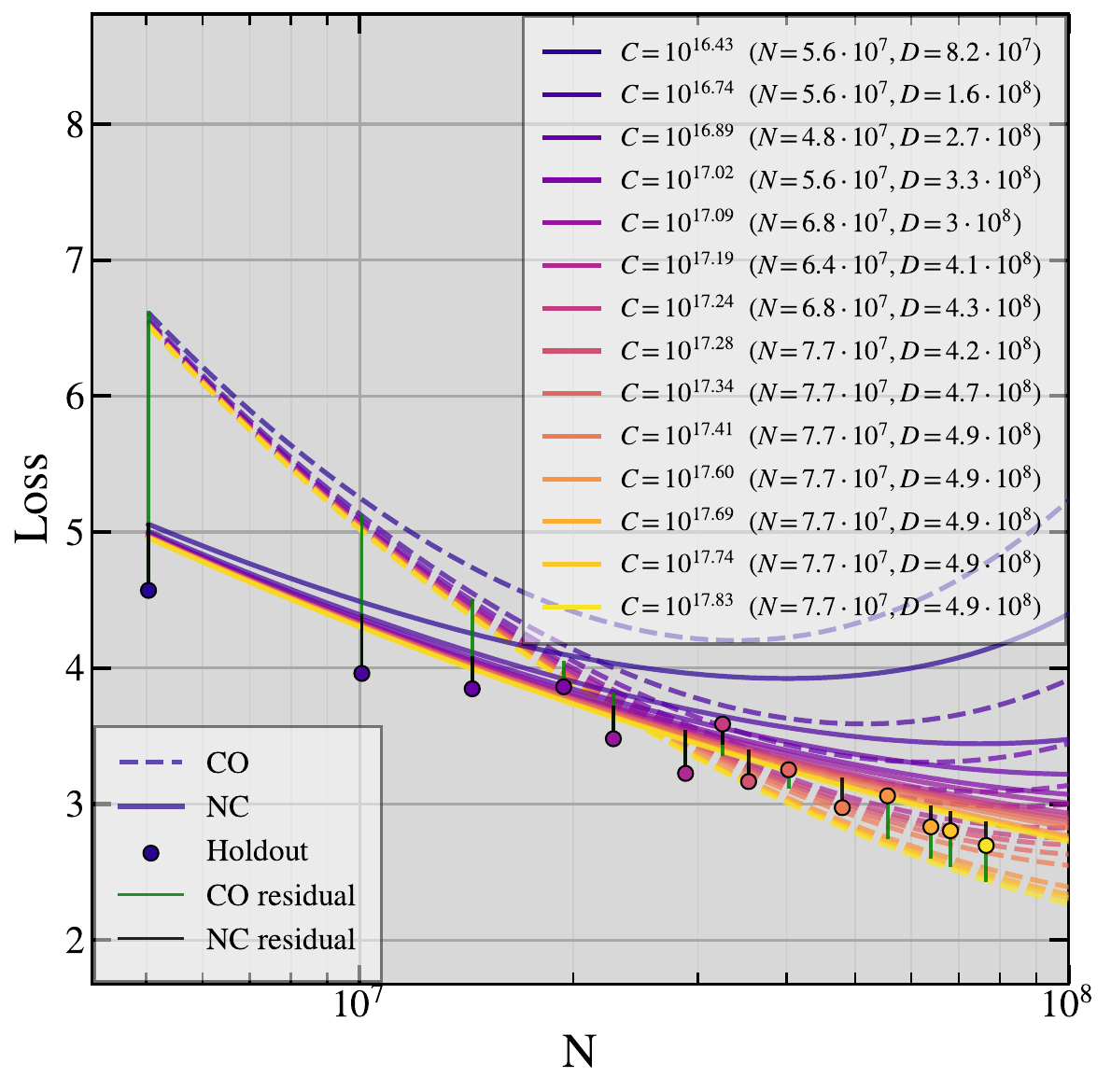}
        \caption{IsoFLOP curves.}
        \label{fig:04-isoflop}
    \end{subfigure}
    \caption{Kaplan law on Wikipedia, final epoch.}
    \label{fig:dump-04}
\end{figure*}

\begin{figure*}[!ht]
    \centering
    \begin{subfigure}[b]{0.37\textwidth}
        \centering
        \includegraphics[width=\linewidth]{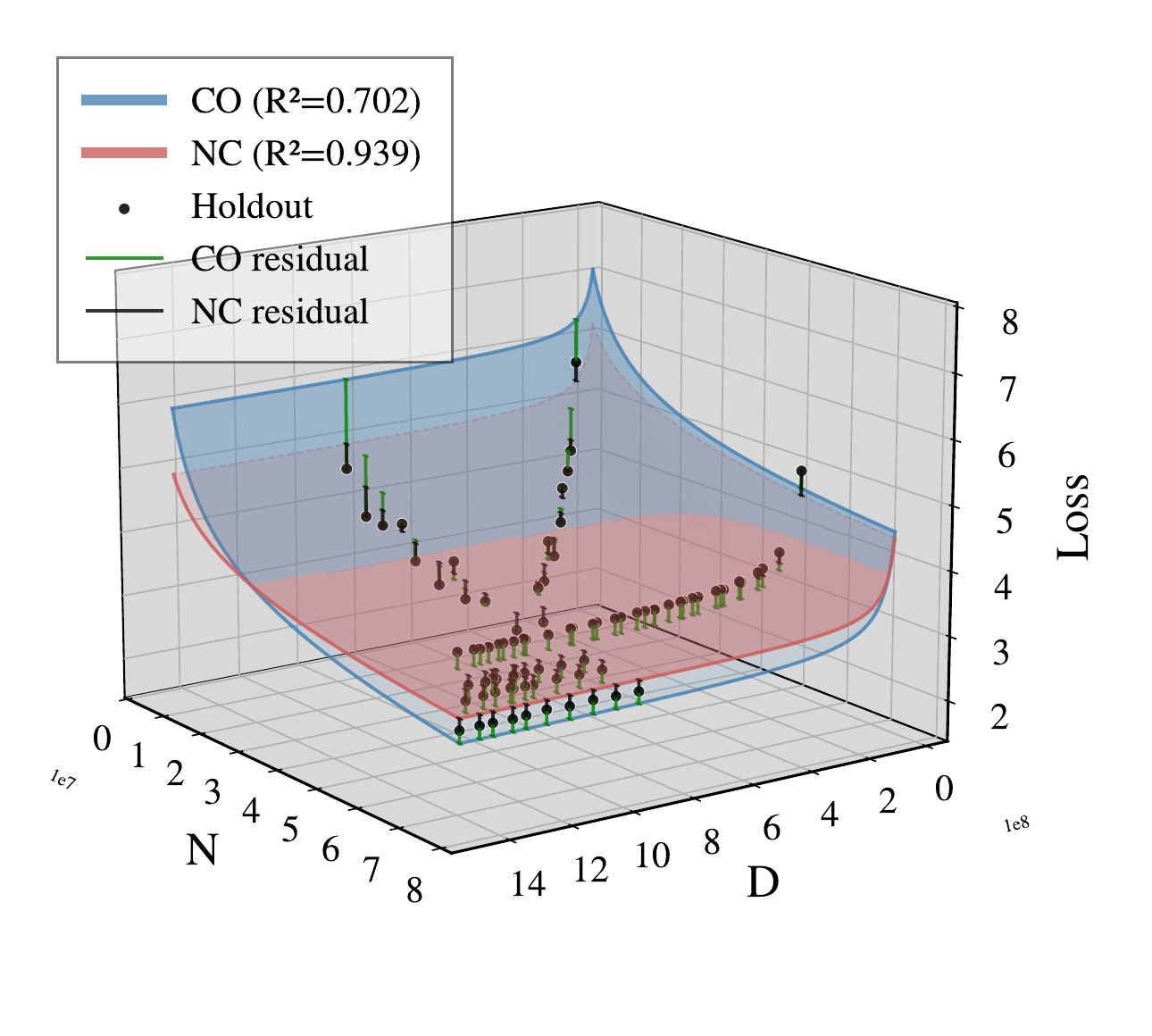}
        \caption{Surface fit.}
        \label{fig:05-surface}
    \end{subfigure}%
    \hfill
    \begin{subfigure}[b]{0.30\textwidth}
        \centering
        \includegraphics[width=\linewidth]{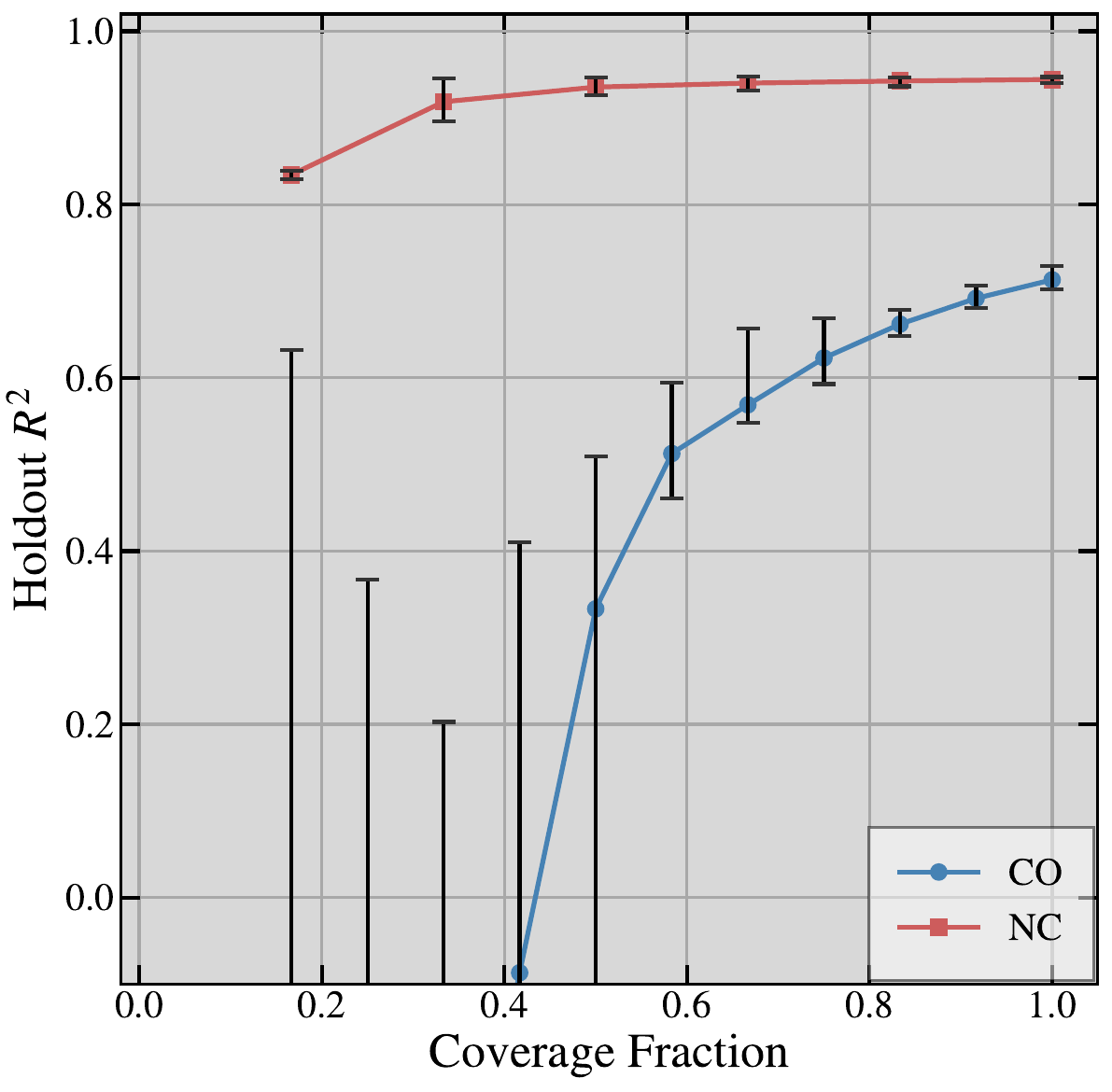}
        \caption{Coverage fraction analysis.}
        \label{fig:05-convergence}
    \end{subfigure}%
    \hfill
    \begin{subfigure}[b]{0.30\textwidth}
        \centering
        \includegraphics[width=\linewidth]{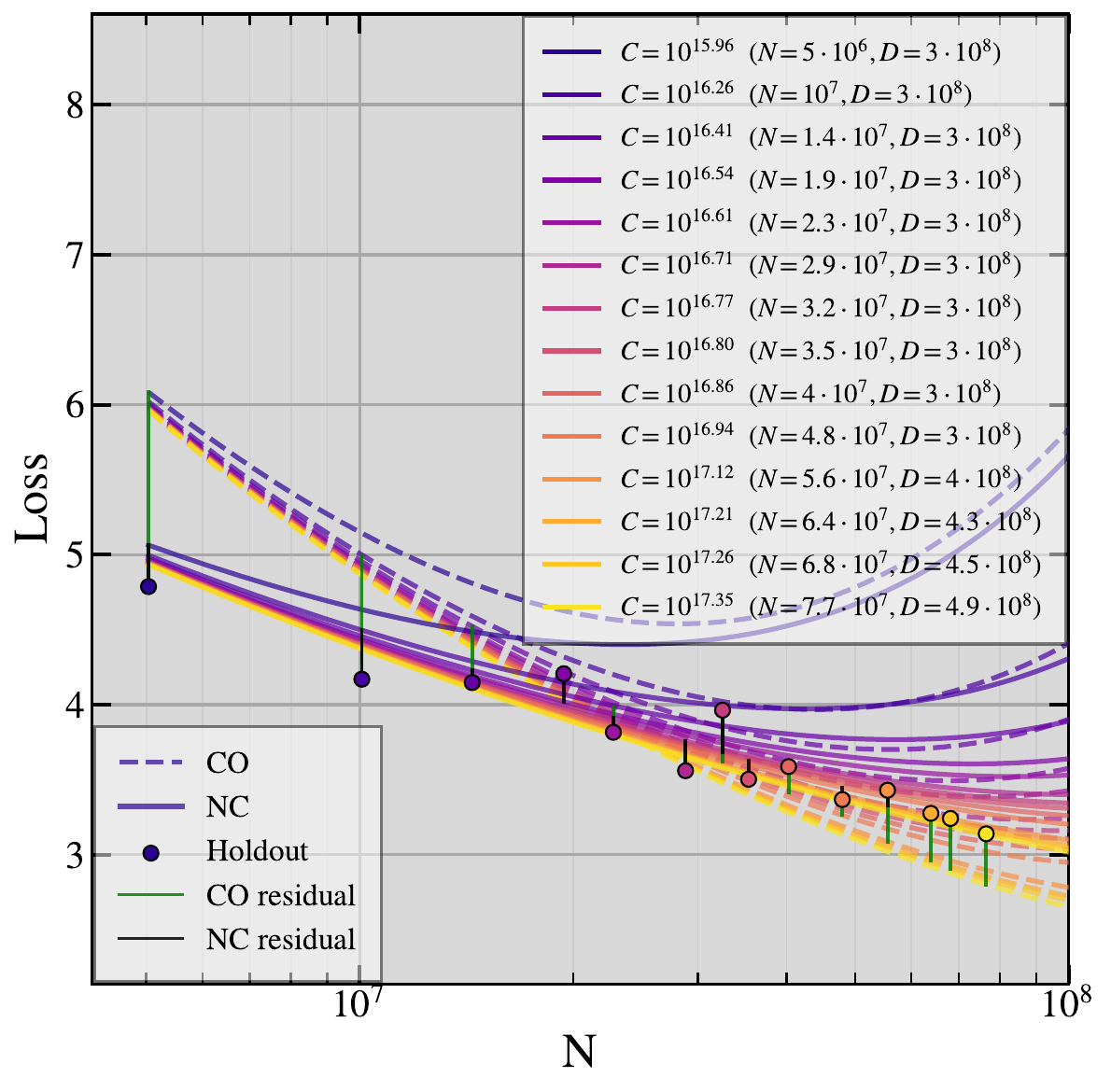}
        \caption{IsoFLOP curves.}
        \label{fig:05-isoflop}
    \end{subfigure}
    \caption{Kaplan law on peS2o, first epoch.}
    \label{fig:dump-05}
\end{figure*}

\begin{figure*}[!ht]
    \centering
    \begin{subfigure}[b]{0.37\textwidth}
        \centering
        \includegraphics[width=\linewidth]{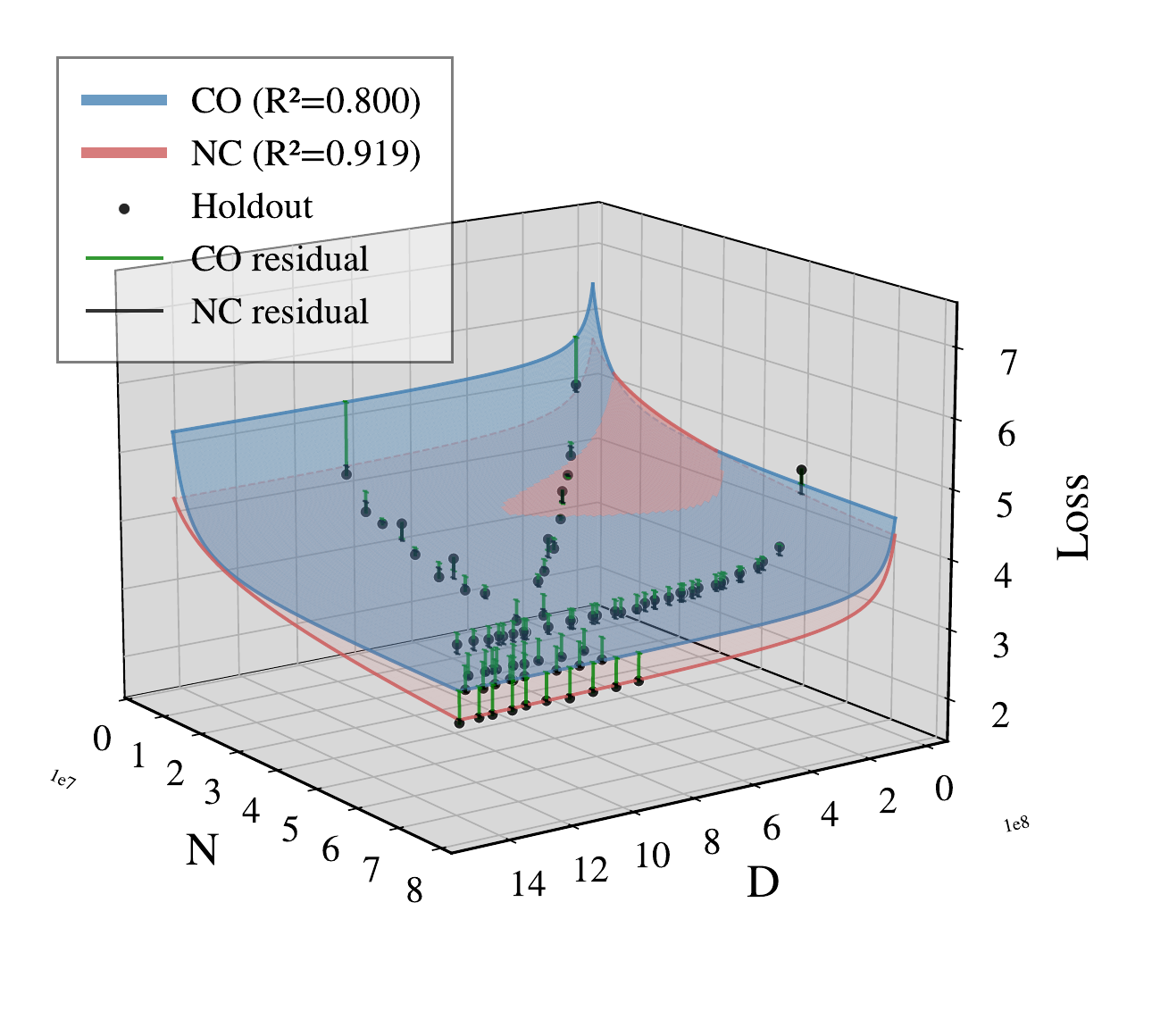}
        \caption{Surface fit.}
        \label{fig:06-surface}
    \end{subfigure}%
    \hfill
    \begin{subfigure}[b]{0.30\textwidth}
        \centering
        \includegraphics[width=\linewidth]{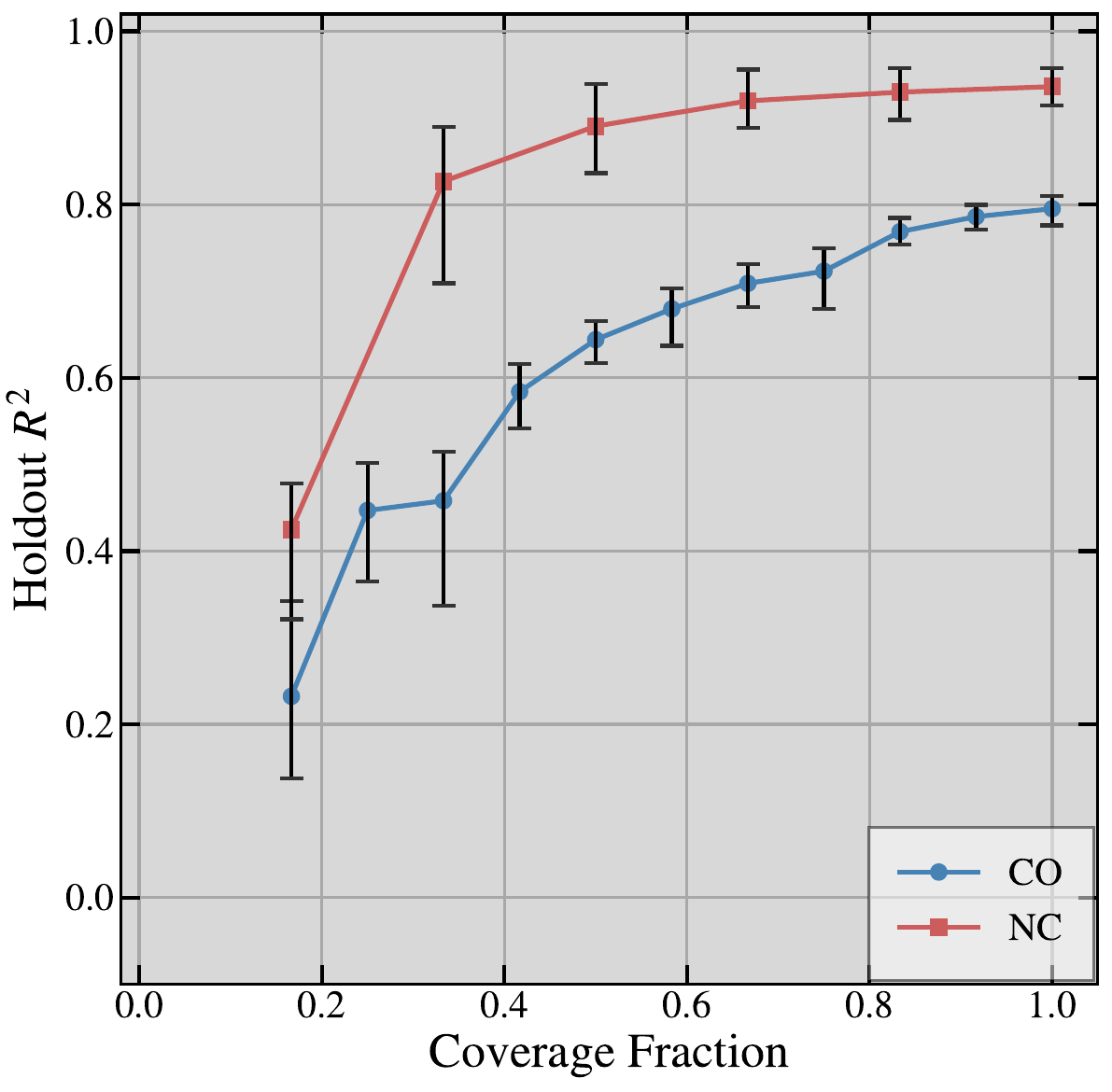}
        \caption{Coverage fraction analysis.}
        \label{fig:06-convergence}
    \end{subfigure}%
    \hfill
    \begin{subfigure}[b]{0.30\textwidth}
        \centering
        \includegraphics[width=\linewidth]{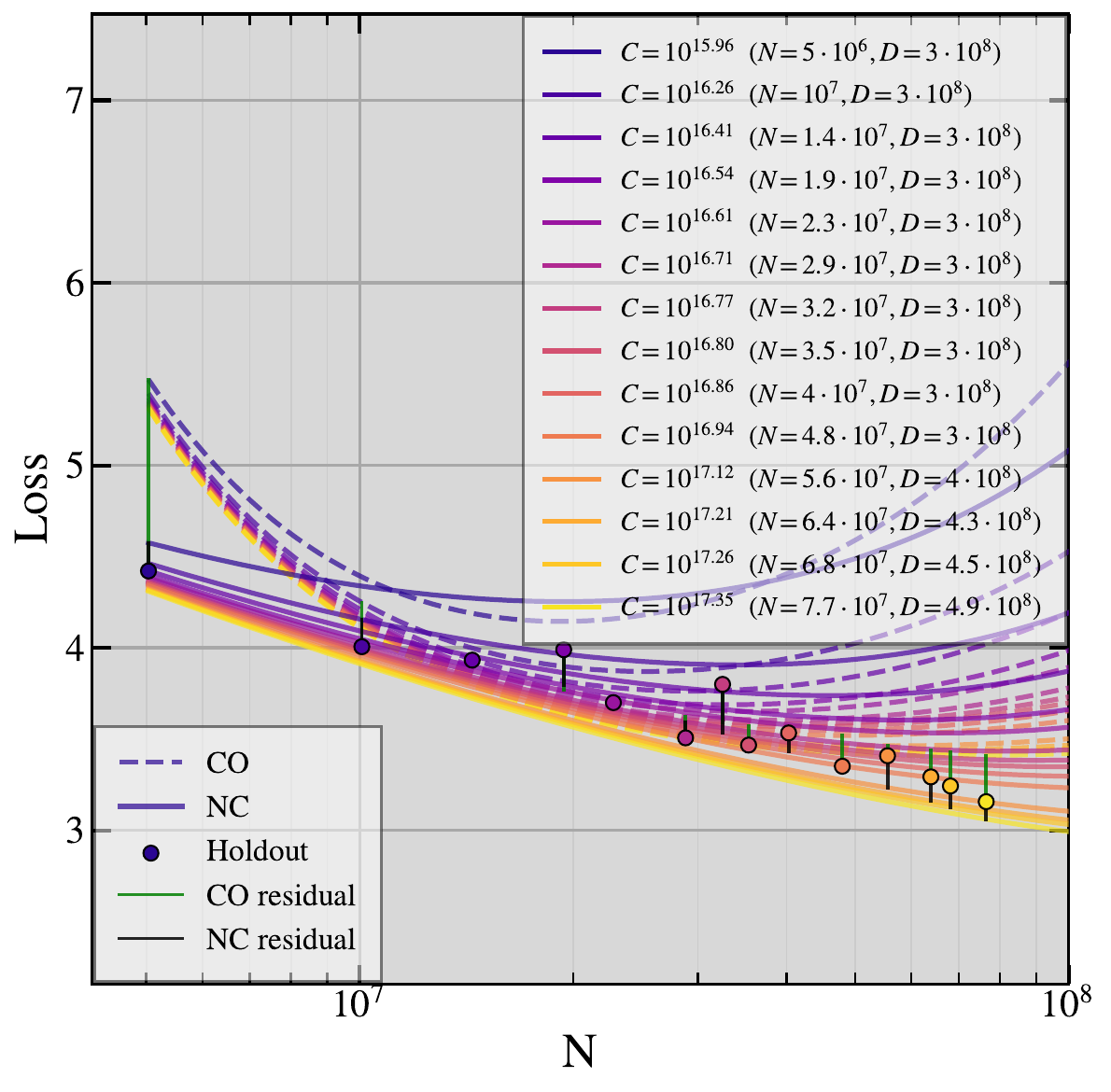}
        \caption{IsoFLOP curves.}
        \label{fig:06-isoflop}
    \end{subfigure}
    \caption{Chinchilla law on C4, first epoch.}
    \label{fig:dump-06}
\end{figure*}

\begin{figure*}[!ht]
    \centering
    \begin{subfigure}[b]{0.37\textwidth}
        \centering
        \includegraphics[width=\linewidth]{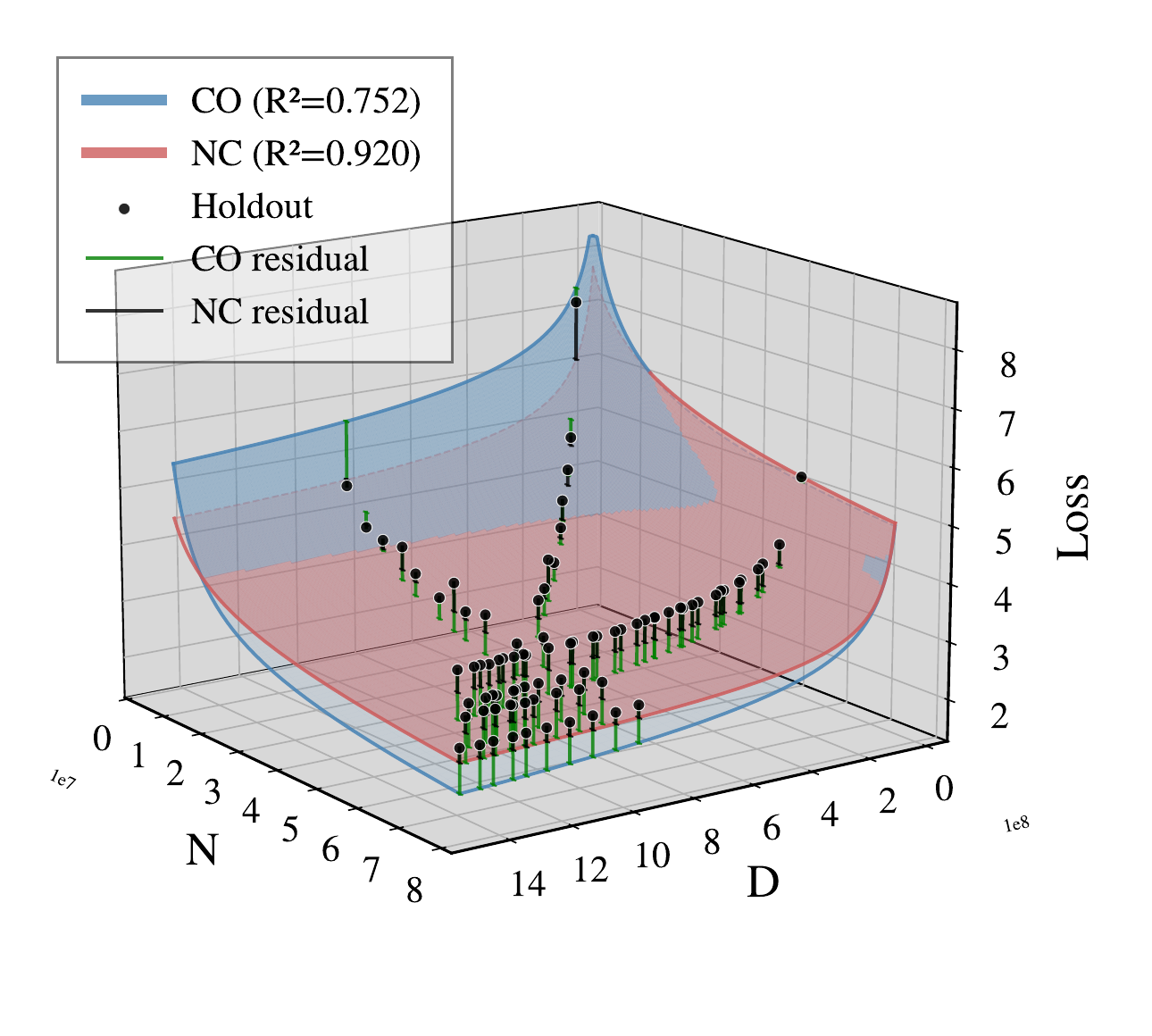}
        \caption{Surface fit.}
        \label{fig:app-select-surface}
    \end{subfigure}%
    \hfill
    \begin{subfigure}[b]{0.30\textwidth}
        \centering
        \includegraphics[width=\linewidth]{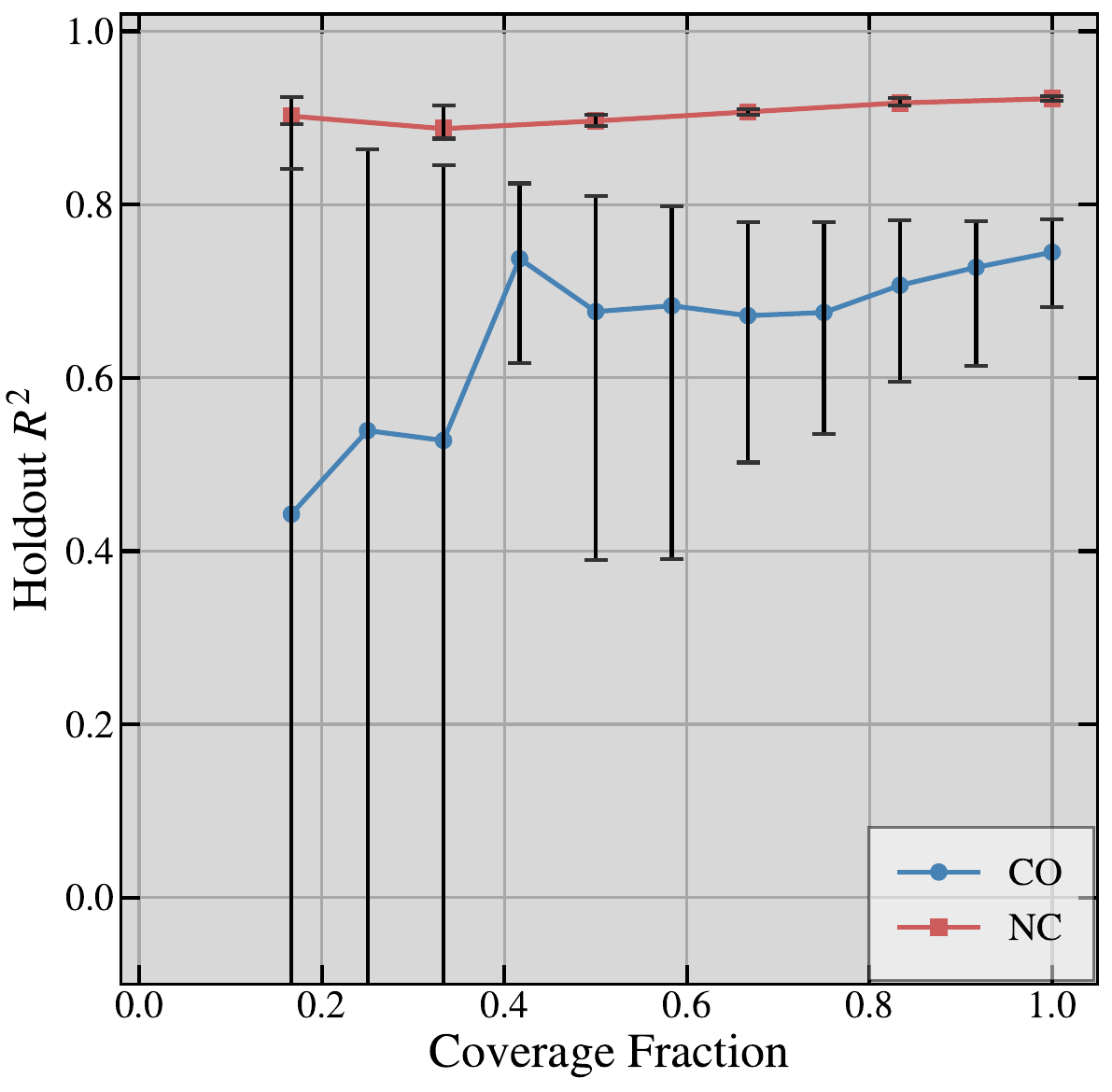}
        \caption{Coverage fraction analysis.}
        \label{fig:app-select-convergence}
    \end{subfigure}%
    \hfill
    \begin{subfigure}[b]{0.30\textwidth}
        \centering
        \includegraphics[width=\linewidth]{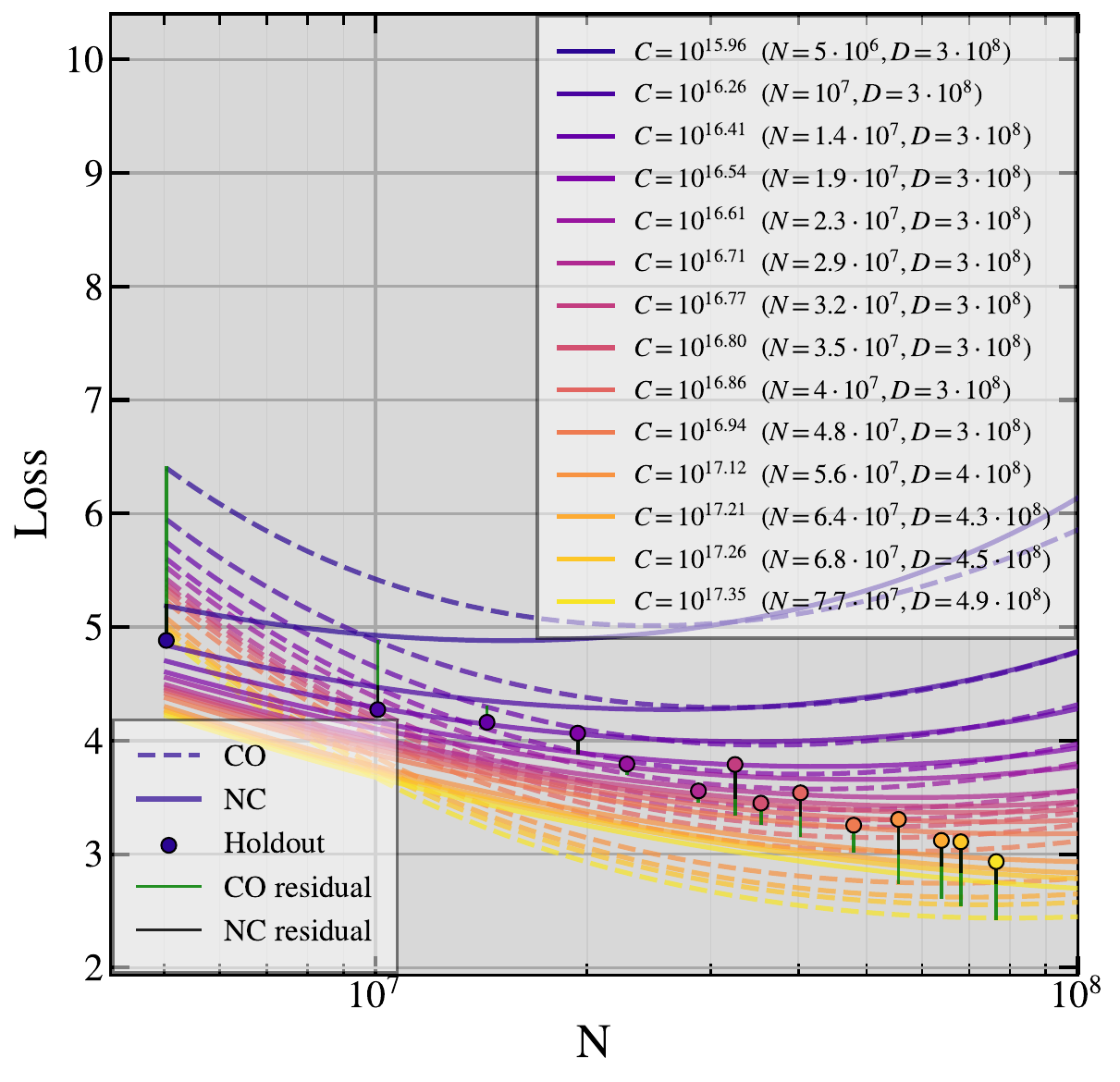}
        \caption{IsoFLOP curves.}
        \label{fig:app-select-isoflop}
    \end{subfigure}
    \caption{Droppo-Elibol law on Wikipedia, first epoch.}
    \label{fig:app-select-surfaces}
\end{figure*}

\begin{figure*}[!ht]
    \centering
    \begin{subfigure}[b]{0.37\textwidth}
        \centering
        \includegraphics[width=\linewidth]{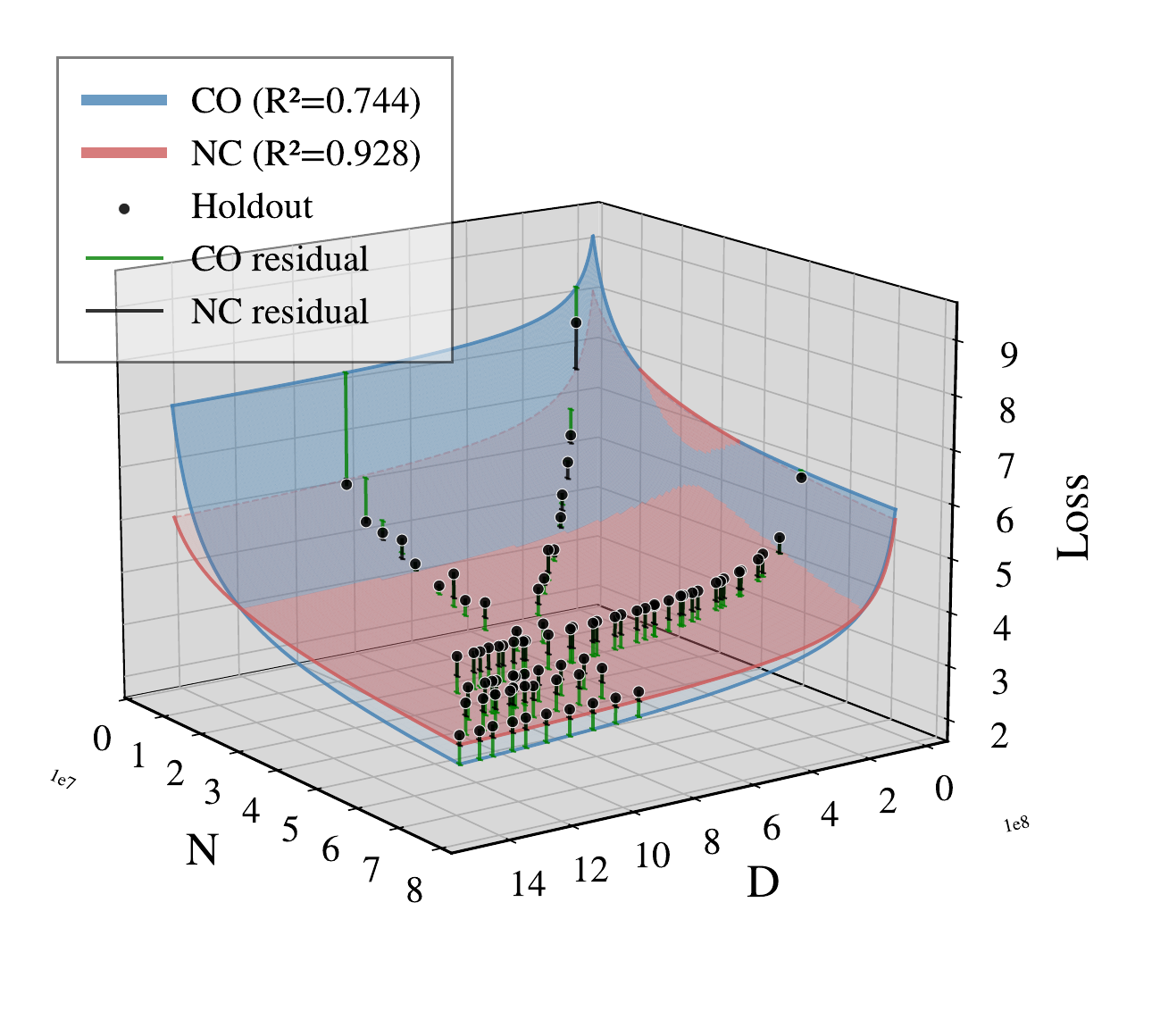}
        \caption{Surface fit.}
        \label{fig:08-surface}
    \end{subfigure}%
    \hfill
    \begin{subfigure}[b]{0.30\textwidth}
        \centering
        \includegraphics[width=\linewidth]{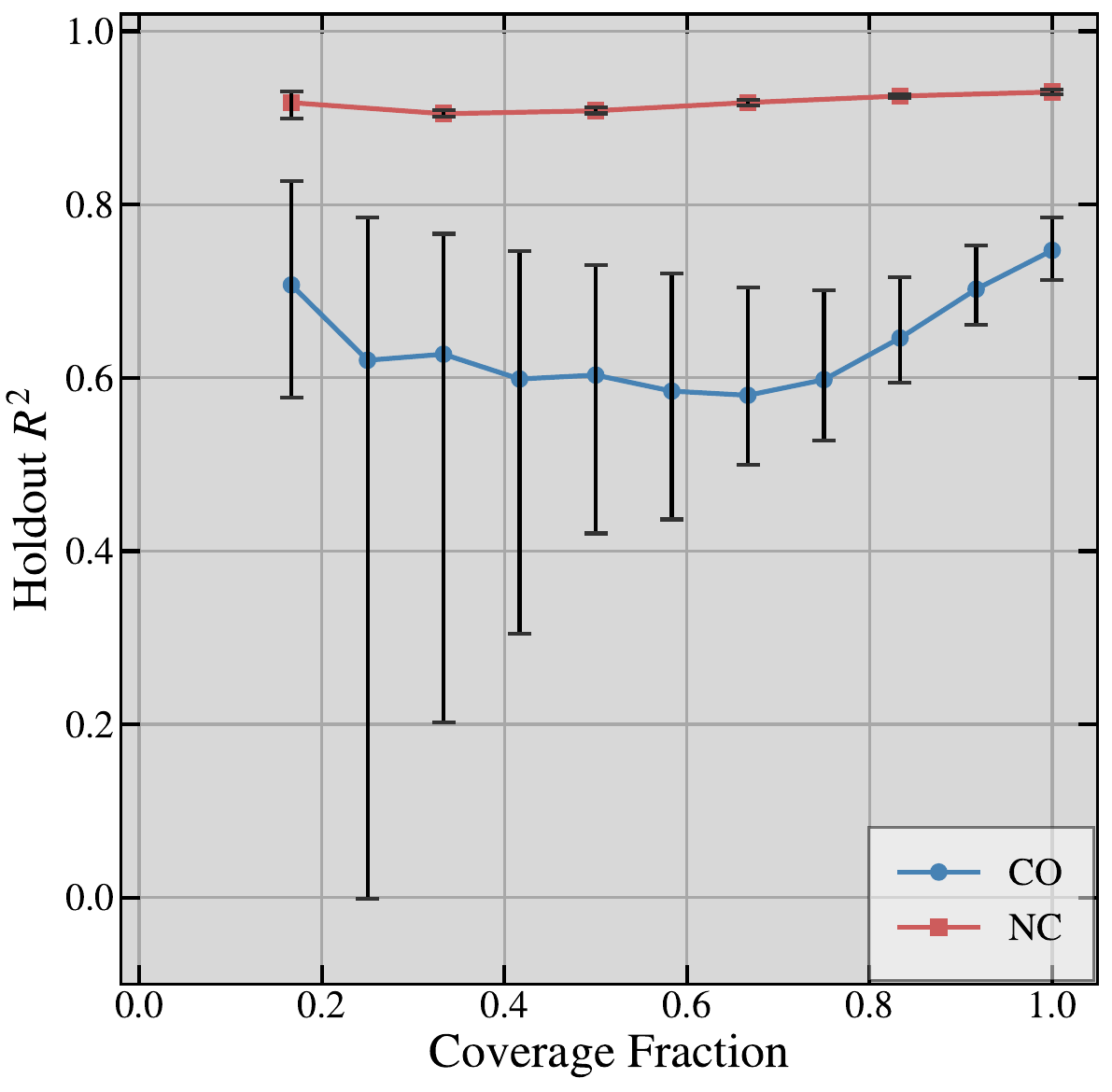}
        \caption{Coverage fraction analysis.}
        \label{fig:08-convergence}
    \end{subfigure}%
    \hfill
    \begin{subfigure}[b]{0.30\textwidth}
        \centering
        \includegraphics[width=\linewidth]{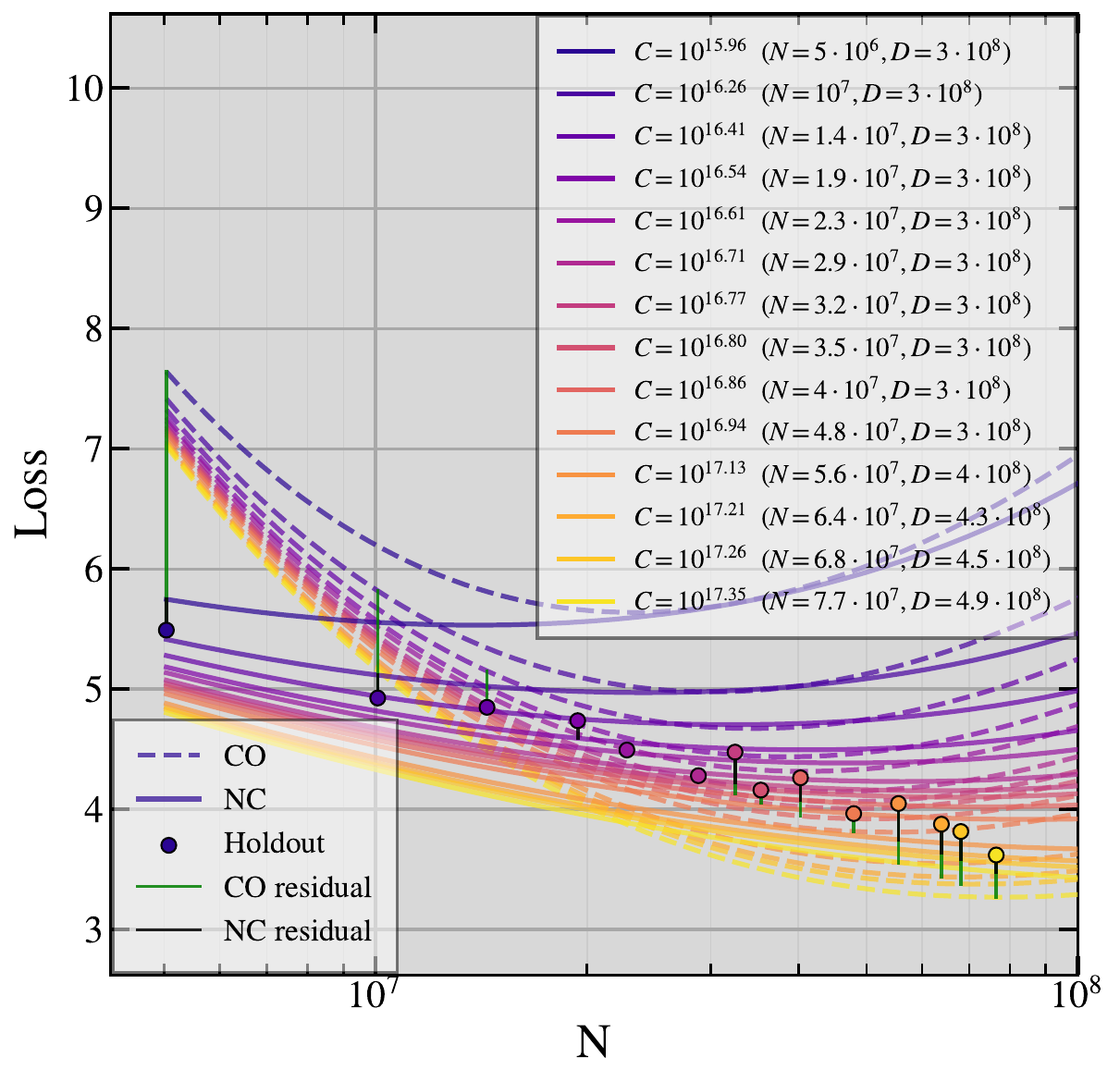}
        \caption{IsoFLOP curves.}
        \label{fig:08-isoflop}
    \end{subfigure}
    \caption{Droppo-Elibol law on RedPajama, first epoch.}
    \label{fig:dump-08}
\end{figure*}

\begin{figure*}[!ht]
    \centering
    \begin{subfigure}[b]{0.37\textwidth}
        \centering
        \includegraphics[width=\linewidth]{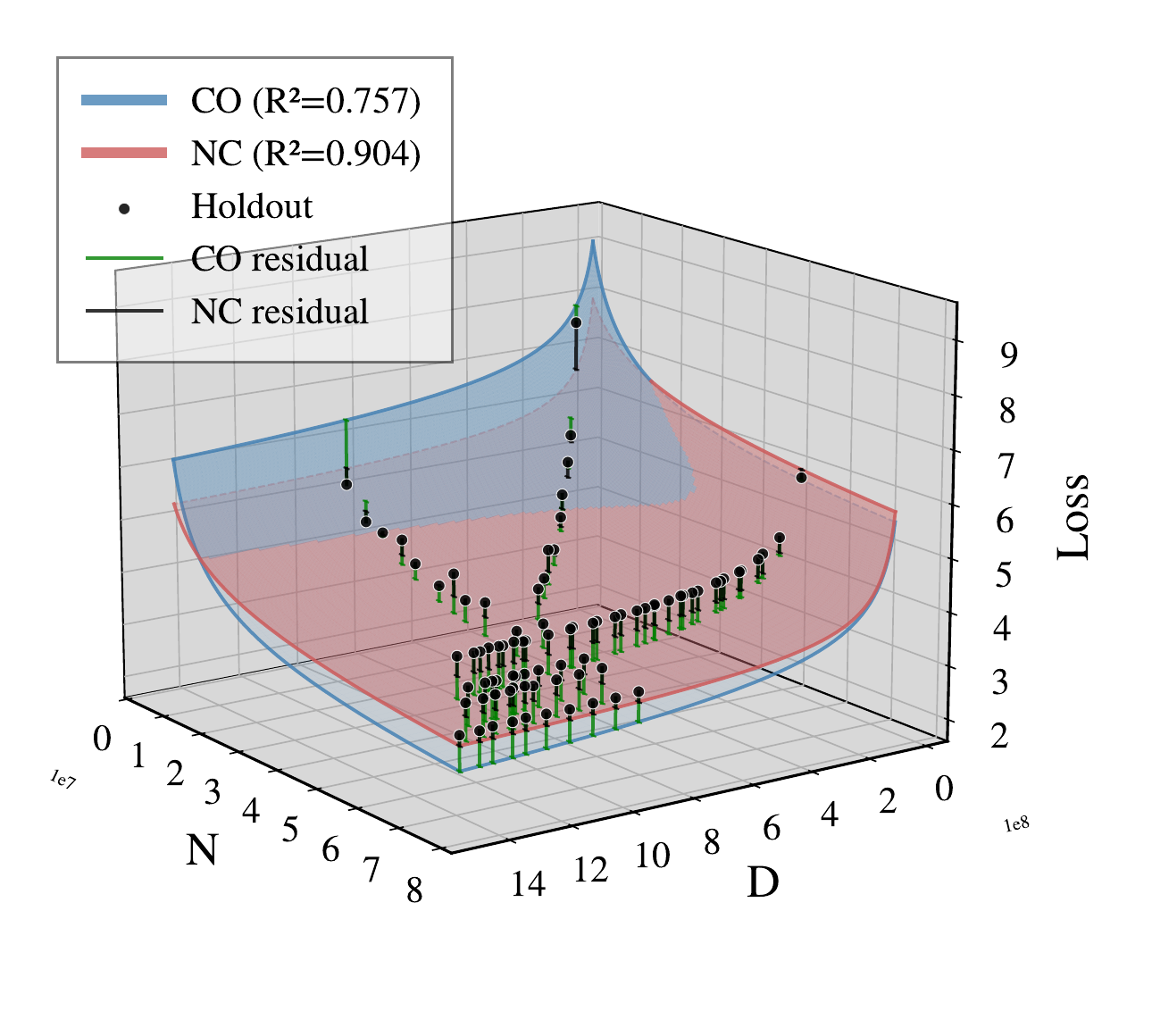}
        \caption{Surface fit.}
        \label{fig:09-surface}
    \end{subfigure}%
    \hfill
    \begin{subfigure}[b]{0.30\textwidth}
        \centering
        \includegraphics[width=\linewidth]{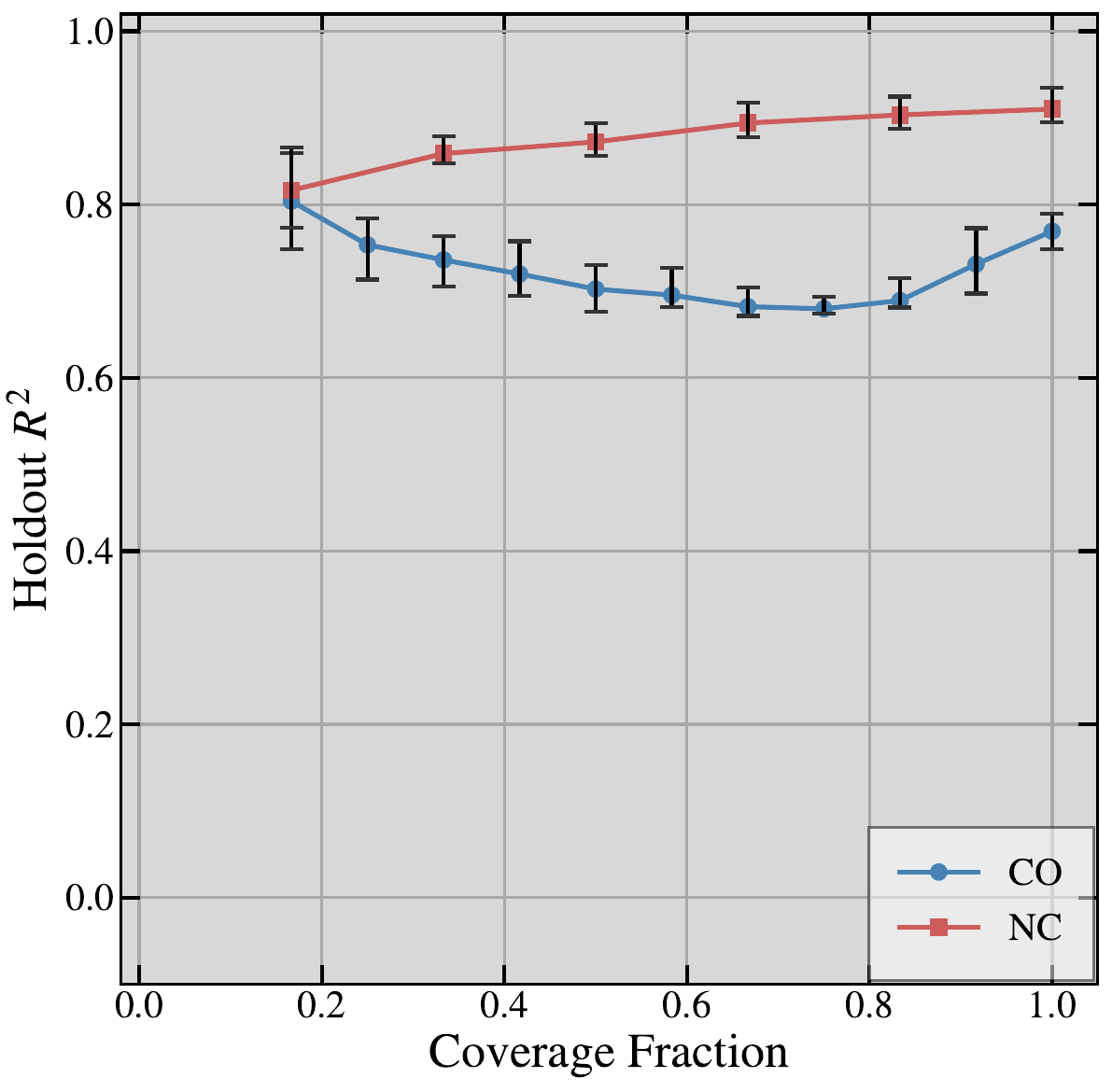}
        \caption{Coverage fraction analysis.}
        \label{fig:09-convergence}
    \end{subfigure}%
    \hfill
    \begin{subfigure}[b]{0.30\textwidth}
        \centering
        \includegraphics[width=\linewidth]{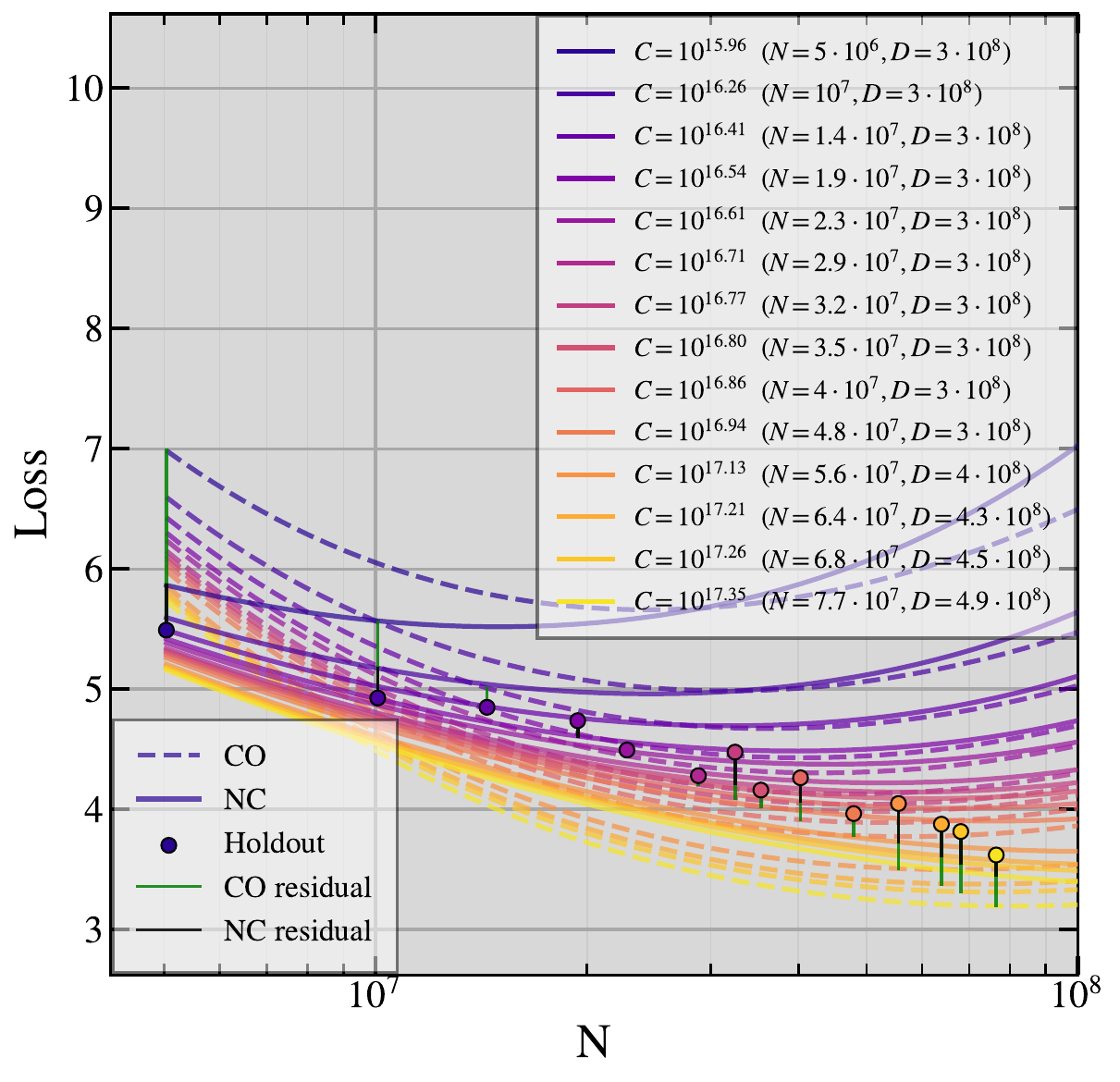}
        \caption{IsoFLOP curves.}
        \label{fig:09-isoflop}
    \end{subfigure}
    \caption{Chinchilla law on RedPajama, first epoch.}
    \label{fig:dump-09}
\end{figure*}

\begin{figure*}[!ht]
    \centering
    \begin{subfigure}[b]{0.37\textwidth}
        \centering
        \includegraphics[width=\linewidth]{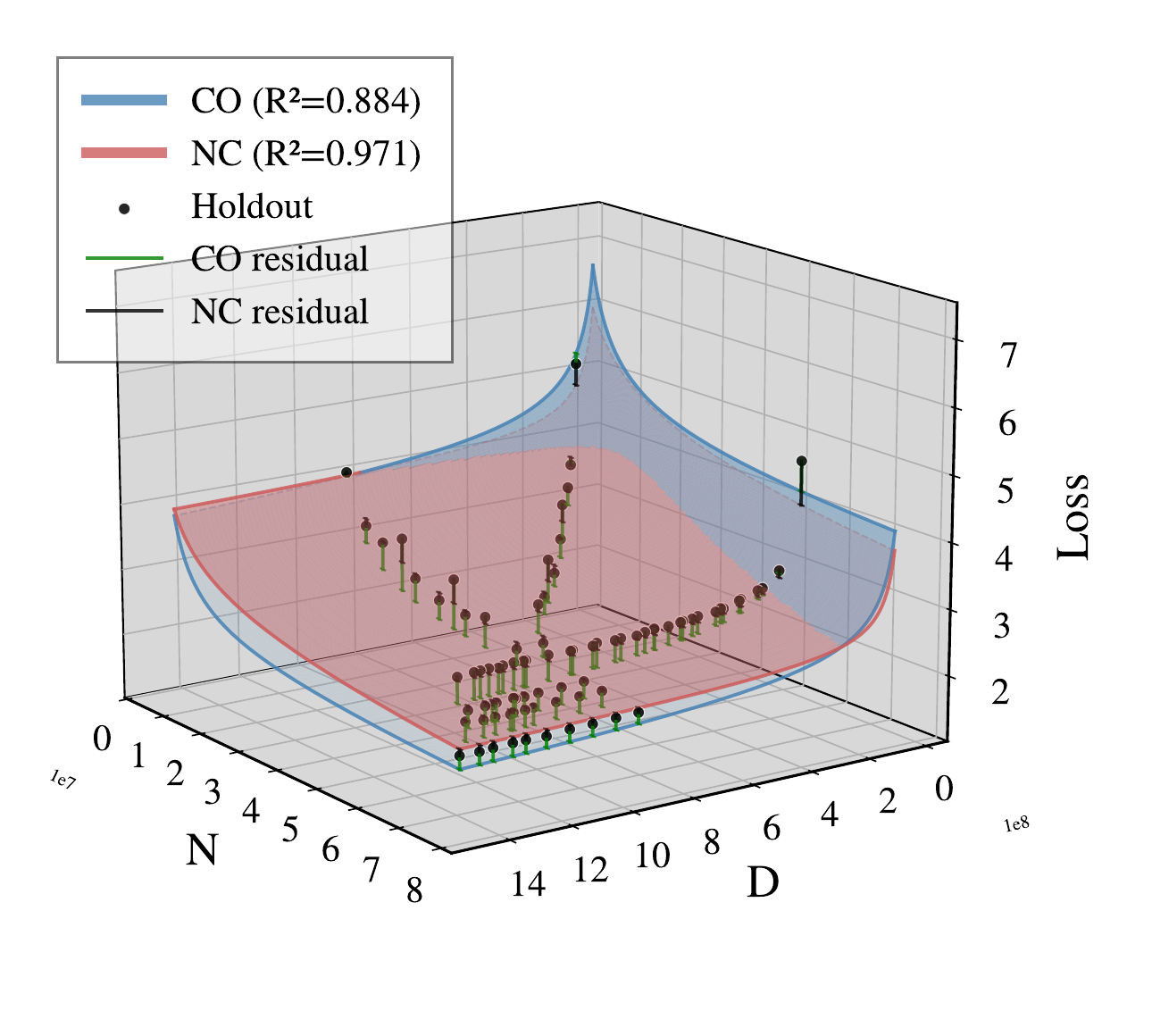}
        \caption{Surface fit.}
        \label{fig:10-surface}
    \end{subfigure}%
    \hfill
    \begin{subfigure}[b]{0.30\textwidth}
        \centering
        \includegraphics[width=\linewidth]{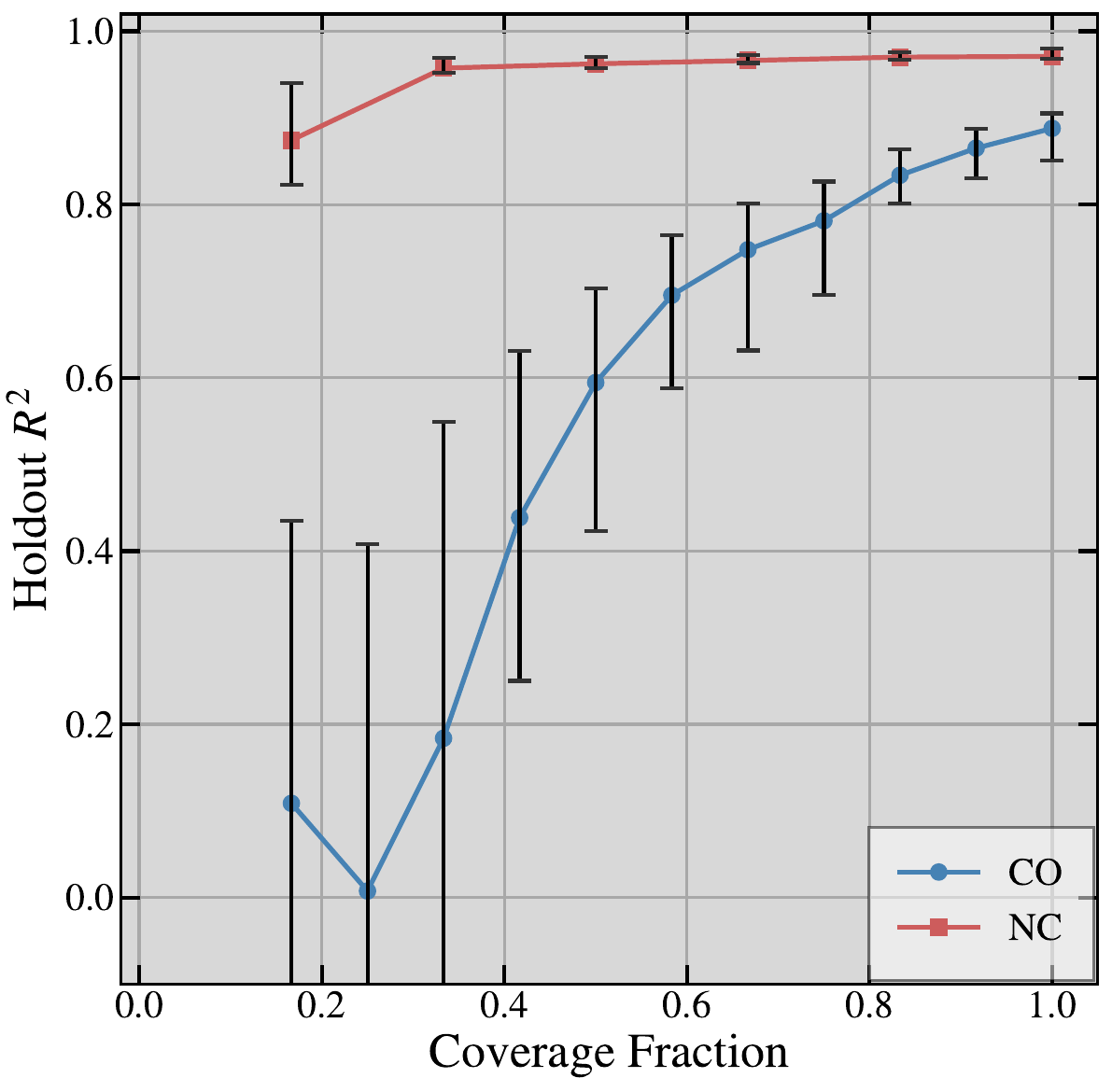}
        \caption{Coverage fraction analysis.}
        \label{fig:10-convergence}
    \end{subfigure}%
    \hfill
    \begin{subfigure}[b]{0.30\textwidth}
        \centering
        \includegraphics[width=\linewidth]{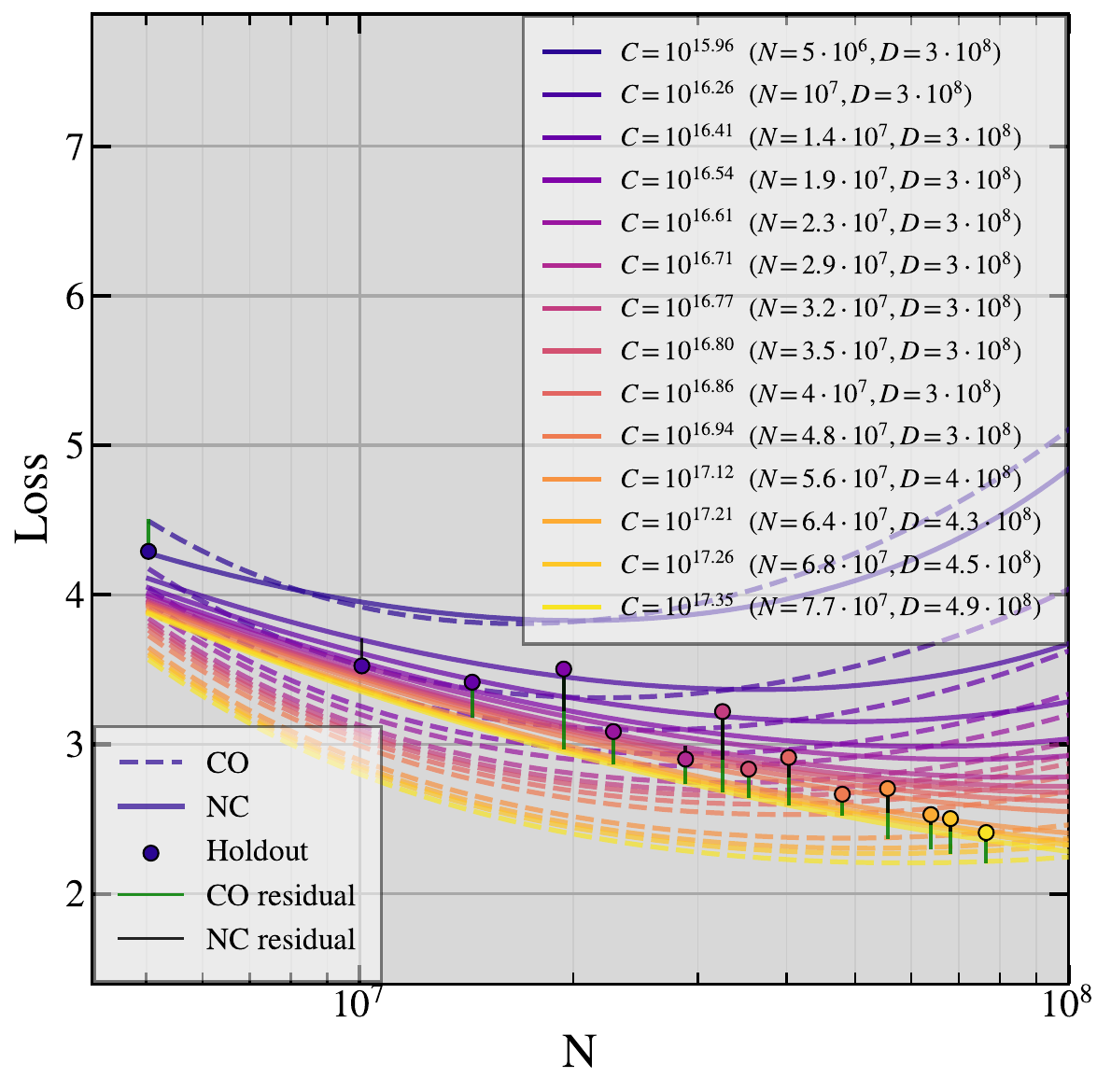}
        \caption{IsoFLOP curves.}
        \label{fig:10-isoflop}
    \end{subfigure}
    \caption{Droppo-Elibol law on Cosmopedia, first epoch.}
    \label{fig:dump-10}
\end{figure*}

\begin{figure*}[!ht]
    \centering
    \begin{subfigure}[b]{0.37\textwidth}
        \centering
        \includegraphics[width=\linewidth]{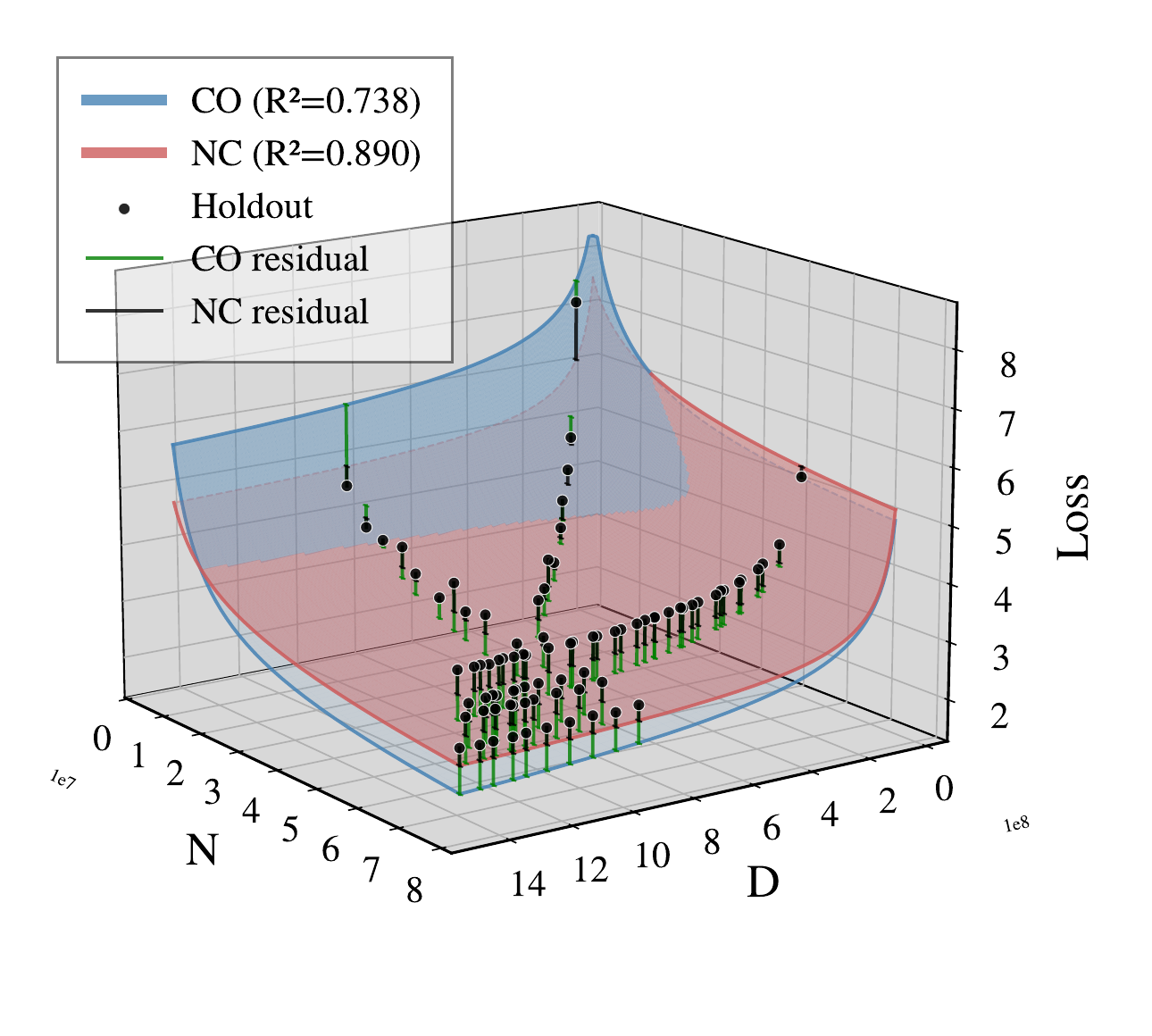}
        \caption{Surface fit.}
        \label{fig:11-surface}
    \end{subfigure}%
    \hfill
    \begin{subfigure}[b]{0.30\textwidth}
        \centering
        \includegraphics[width=\linewidth]{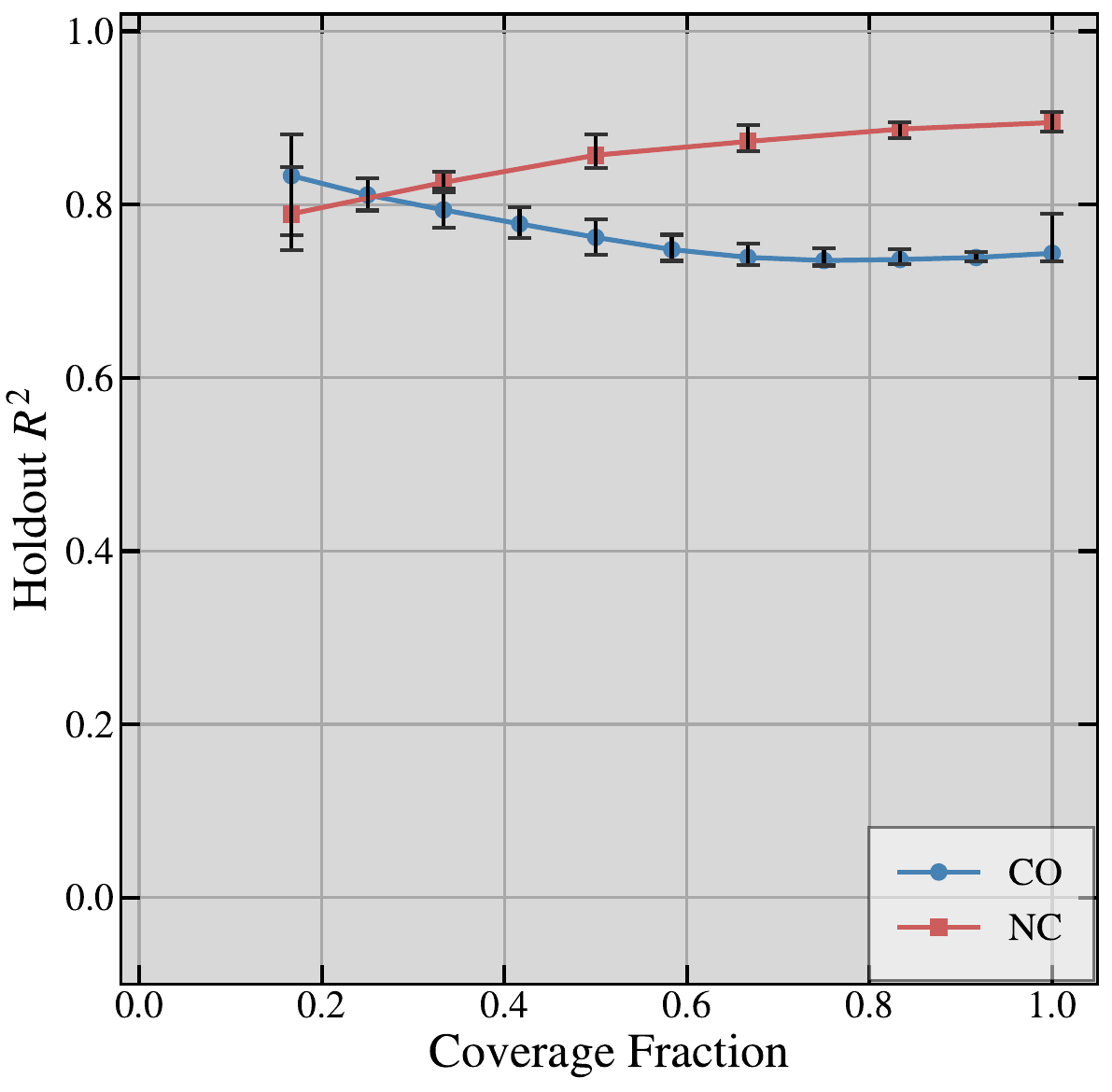}
        \caption{Coverage fraction analysis.}
        \label{fig:11-convergence}
    \end{subfigure}%
    \hfill
    \begin{subfigure}[b]{0.30\textwidth}
        \centering
        \includegraphics[width=\linewidth]{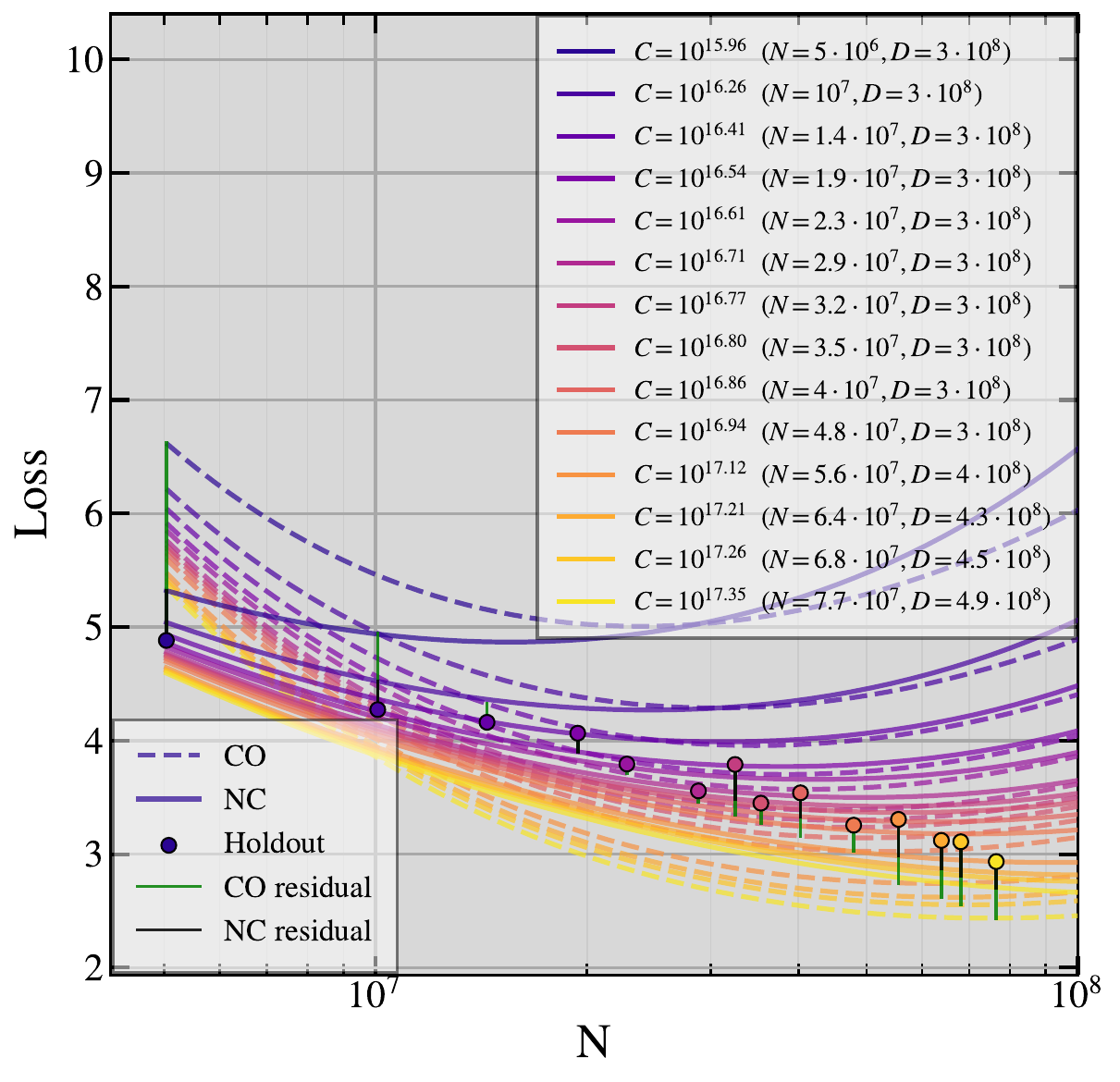}
        \caption{IsoFLOP curves.}
        \label{fig:11-isoflop}
    \end{subfigure}
    \caption{Chinchilla law on Wikipedia, first epoch.}
    \label{fig:dump-11}
\end{figure*}

\begin{figure*}[!ht]
    \centering
    \begin{subfigure}[b]{0.37\textwidth}
        \centering
        \includegraphics[width=\linewidth]{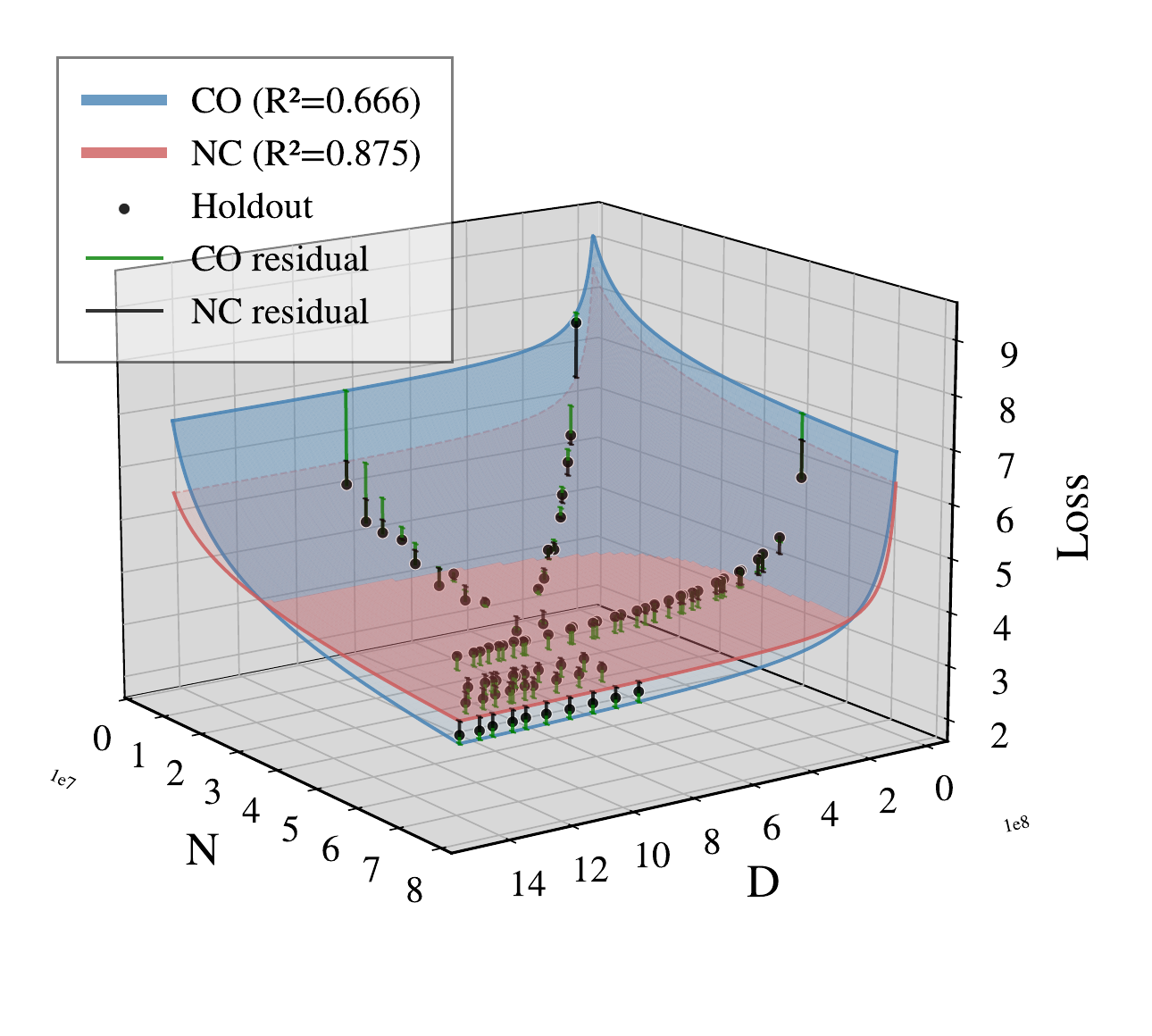}
        \caption{Surface fit.}
        \label{fig:12-surface}
    \end{subfigure}%
    \hfill
    \begin{subfigure}[b]{0.30\textwidth}
        \centering
        \includegraphics[width=\linewidth]{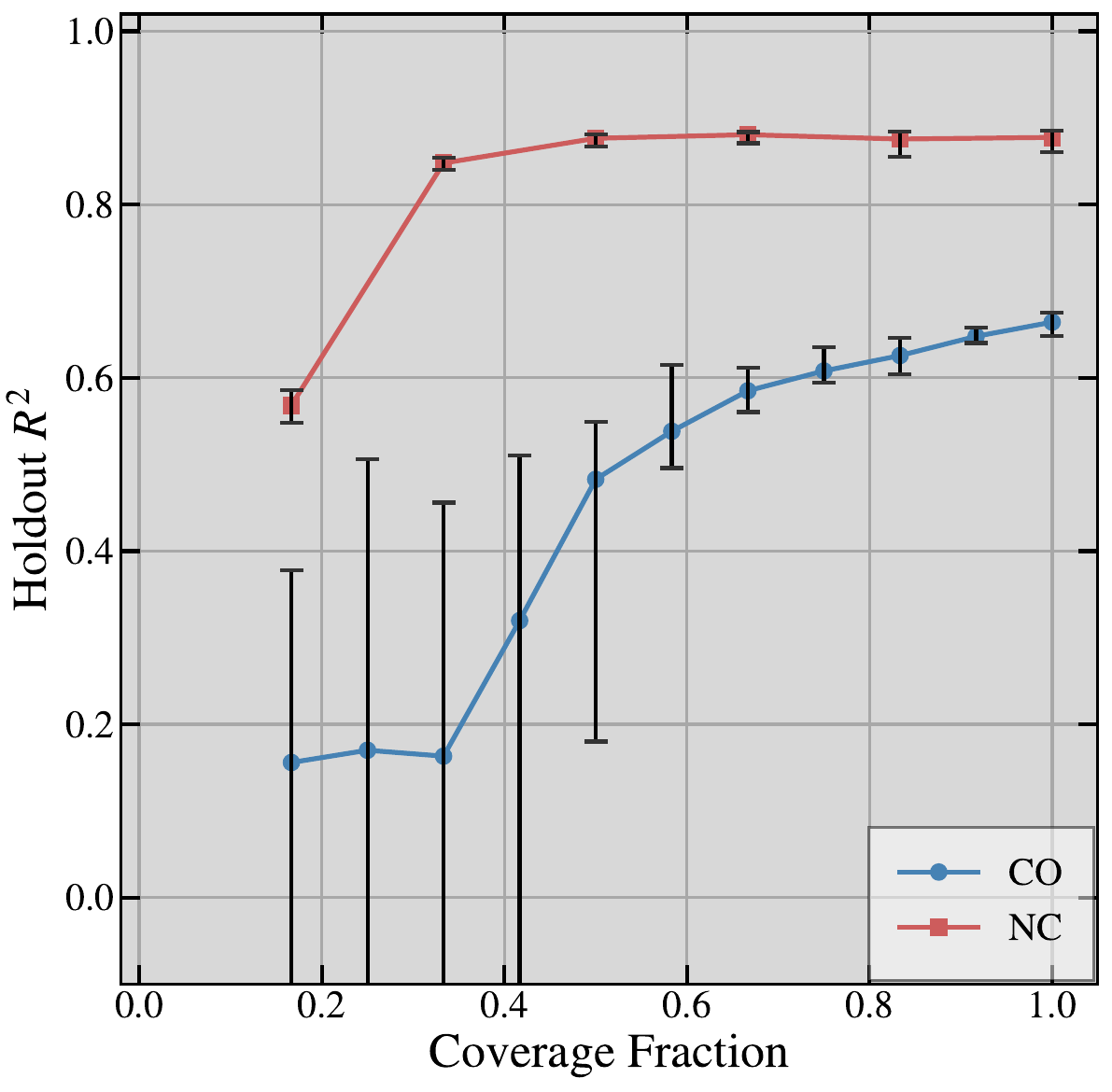}
        \caption{Coverage fraction analysis.}
        \label{fig:12-convergence}
    \end{subfigure}%
    \hfill
    \begin{subfigure}[b]{0.30\textwidth}
        \centering
        \includegraphics[width=\linewidth]{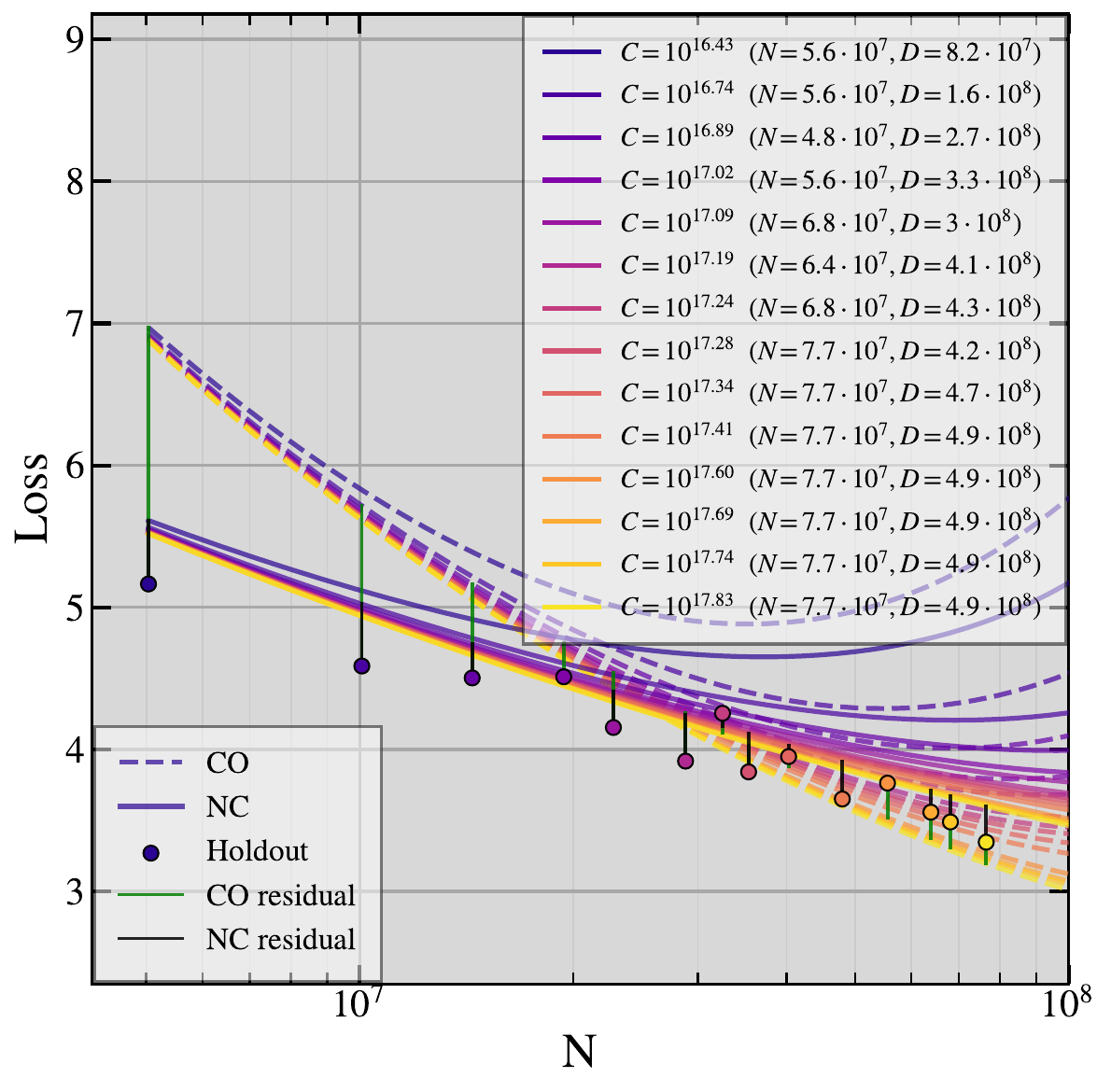}
        \caption{IsoFLOP curves.}
        \label{fig:12-isoflop}
    \end{subfigure}
    \caption{Kaplan law on RedPajama, final epoch.}
    \label{fig:dump-12}
\end{figure*}

\begin{figure*}[!ht]
    \centering
    \begin{subfigure}[b]{0.37\textwidth}
        \centering
        \includegraphics[width=\linewidth]{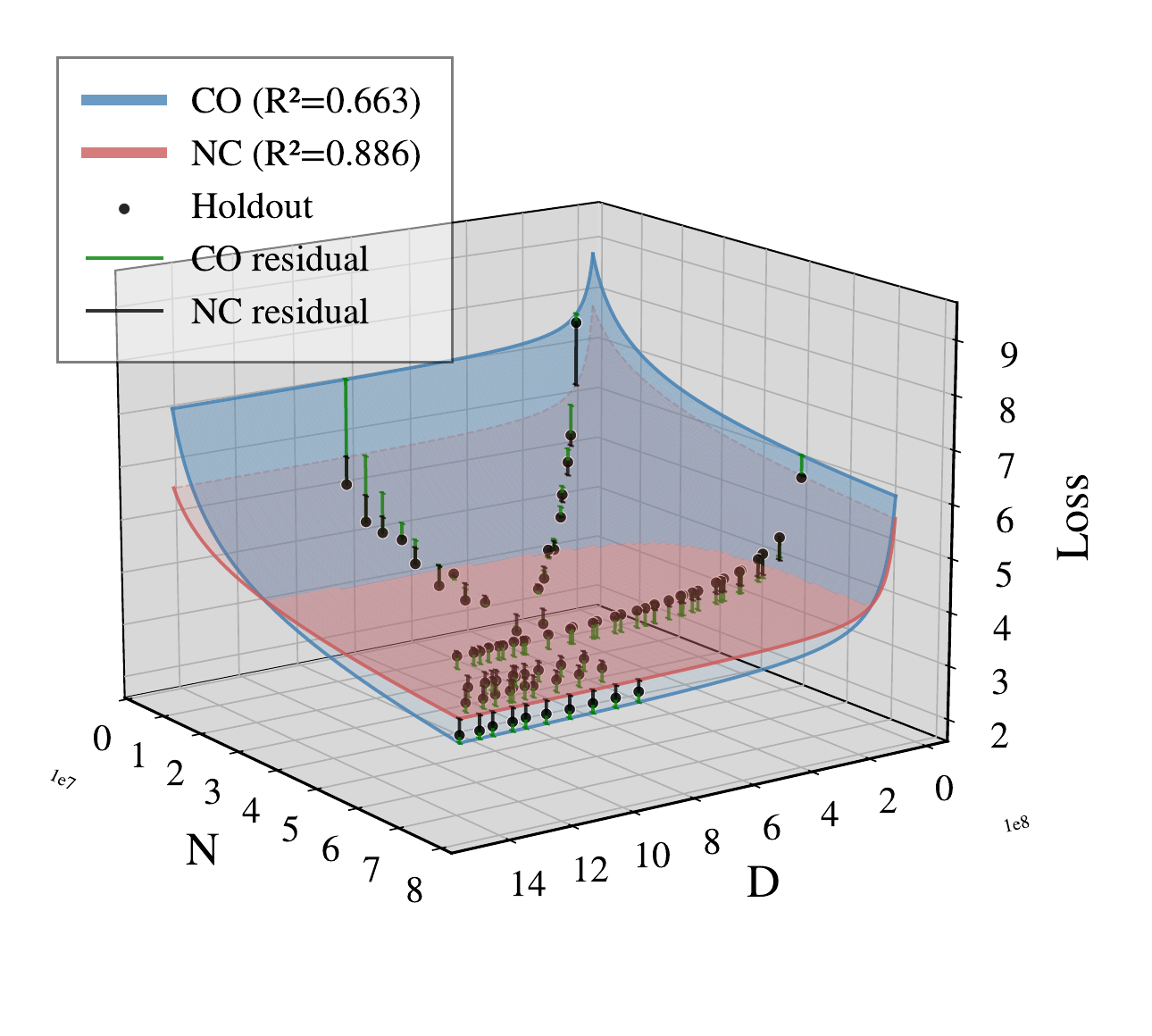}
        \caption{Surface fit.}
        \label{fig:13-surface}
    \end{subfigure}%
    \hfill
    \begin{subfigure}[b]{0.30\textwidth}
        \centering
        \includegraphics[width=\linewidth]{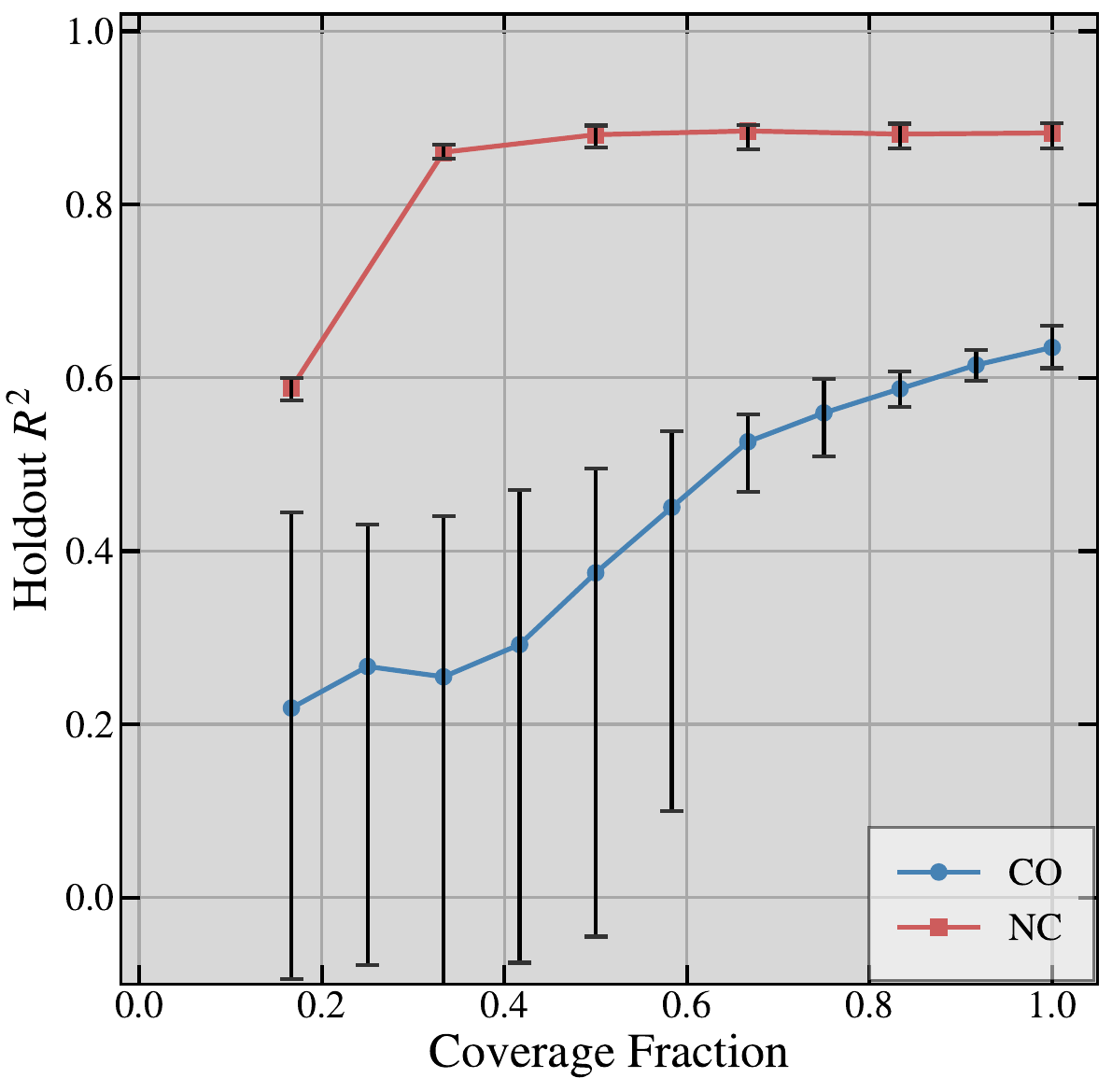}
        \caption{Coverage fraction analysis.}
        \label{fig:13-convergence}
    \end{subfigure}%
    \hfill
    \begin{subfigure}[b]{0.30\textwidth}
        \centering
        \includegraphics[width=\linewidth]{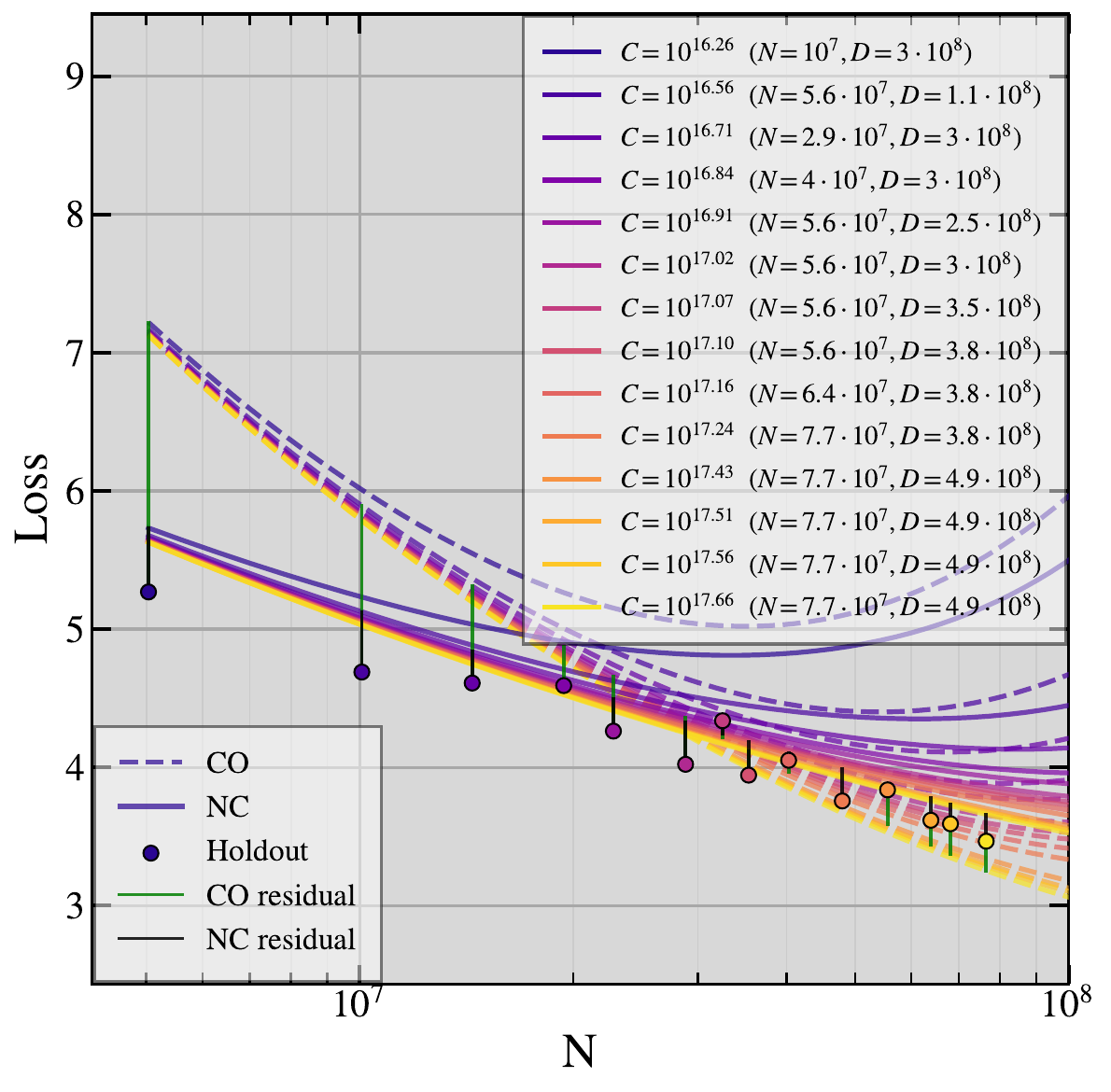}
        \caption{IsoFLOP curves.}
        \label{fig:13-isoflop}
    \end{subfigure}
    \caption{Kaplan law on RedPajama, second epoch.}
    \label{fig:dump-13}
\end{figure*}

\begin{figure*}[!ht]
    \centering
    \begin{subfigure}[b]{0.37\textwidth}
        \centering
        \includegraphics[width=\linewidth]{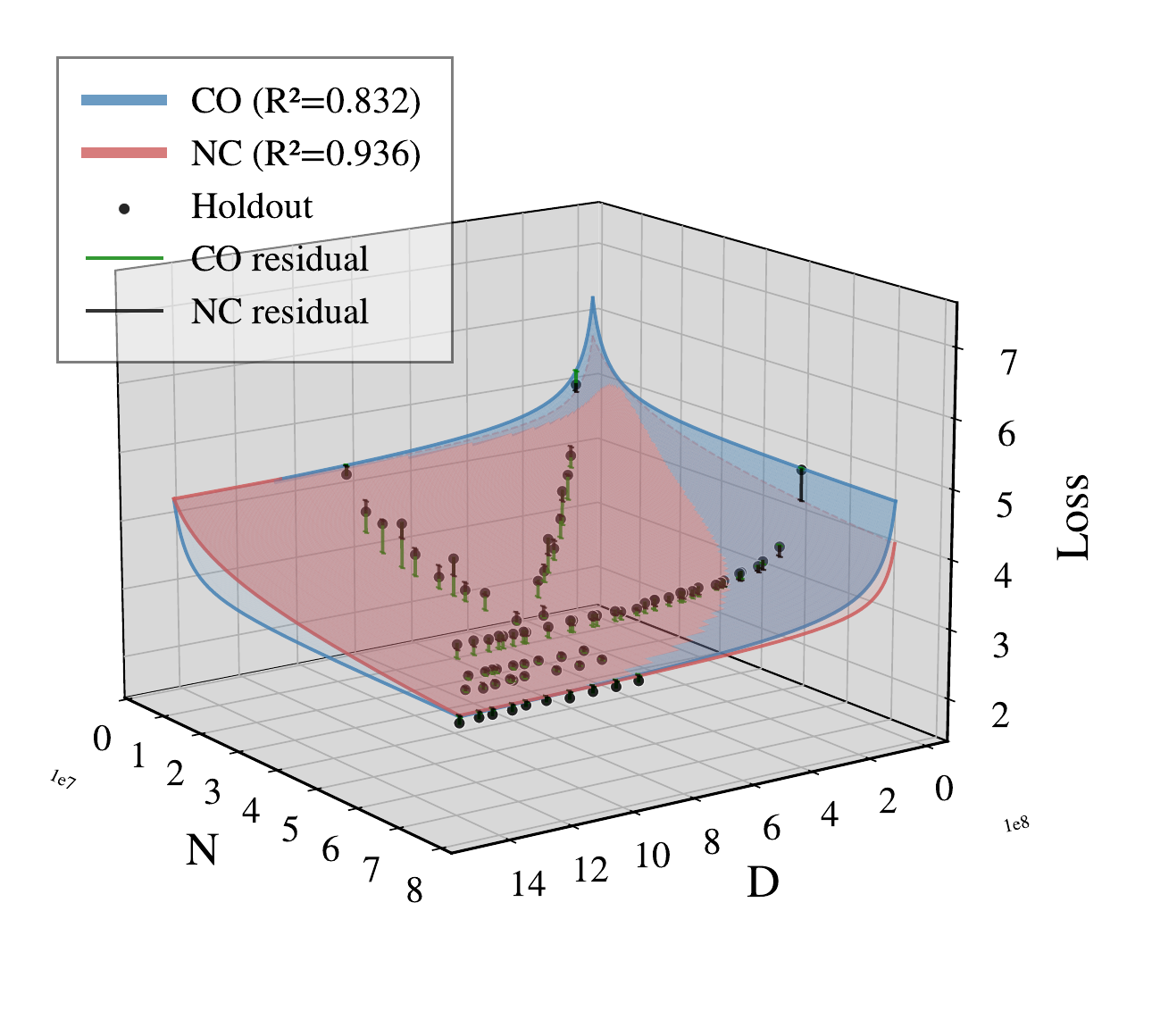}
        \caption{Surface fit.}
        \label{fig:14-surface}
    \end{subfigure}%
    \hfill
    \begin{subfigure}[b]{0.30\textwidth}
        \centering
        \includegraphics[width=\linewidth]{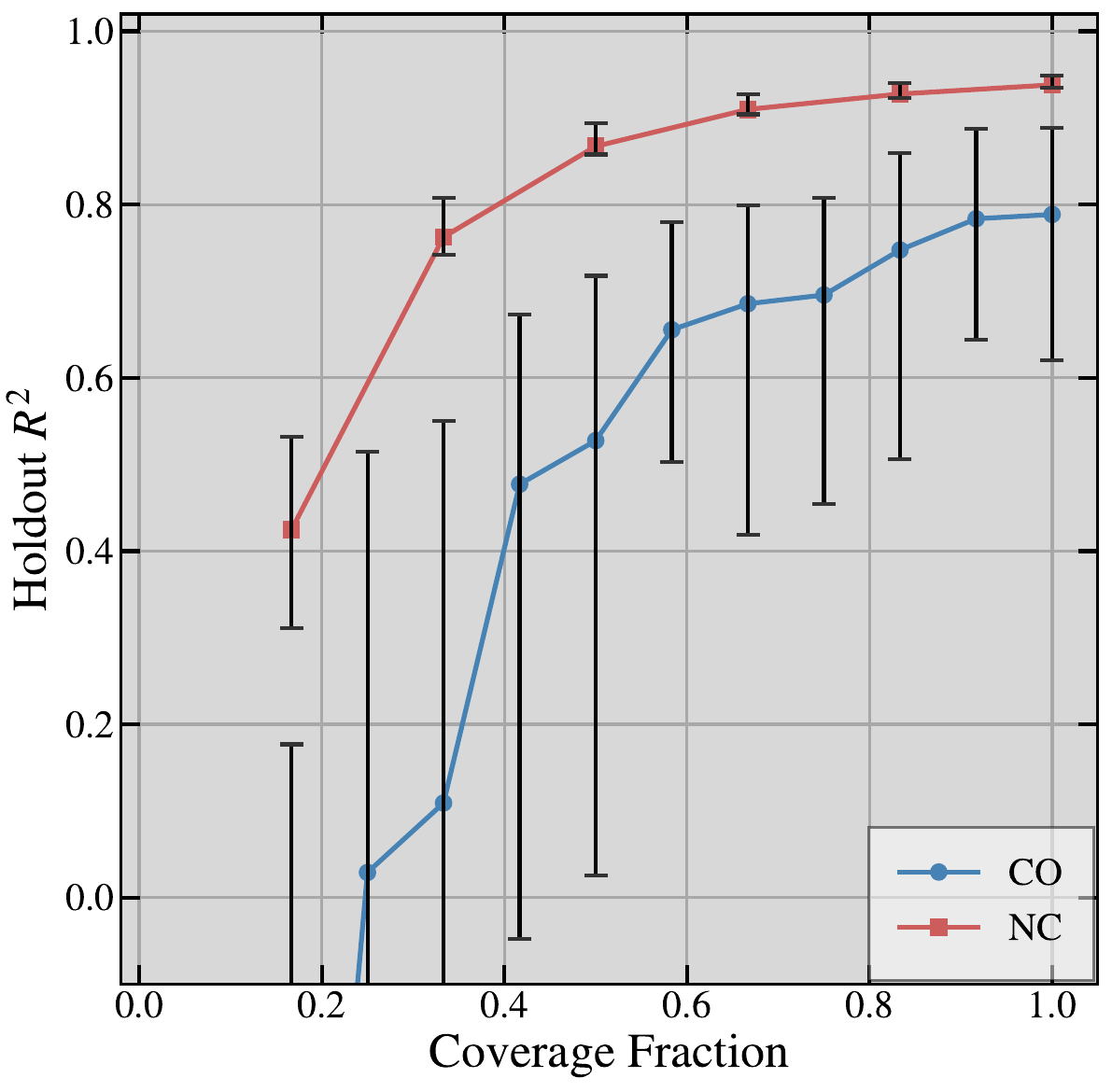}
        \caption{Coverage fraction analysis.}
        \label{fig:14-convergence}
    \end{subfigure}%
    \hfill
    \begin{subfigure}[b]{0.30\textwidth}
        \centering
        \includegraphics[width=\linewidth]{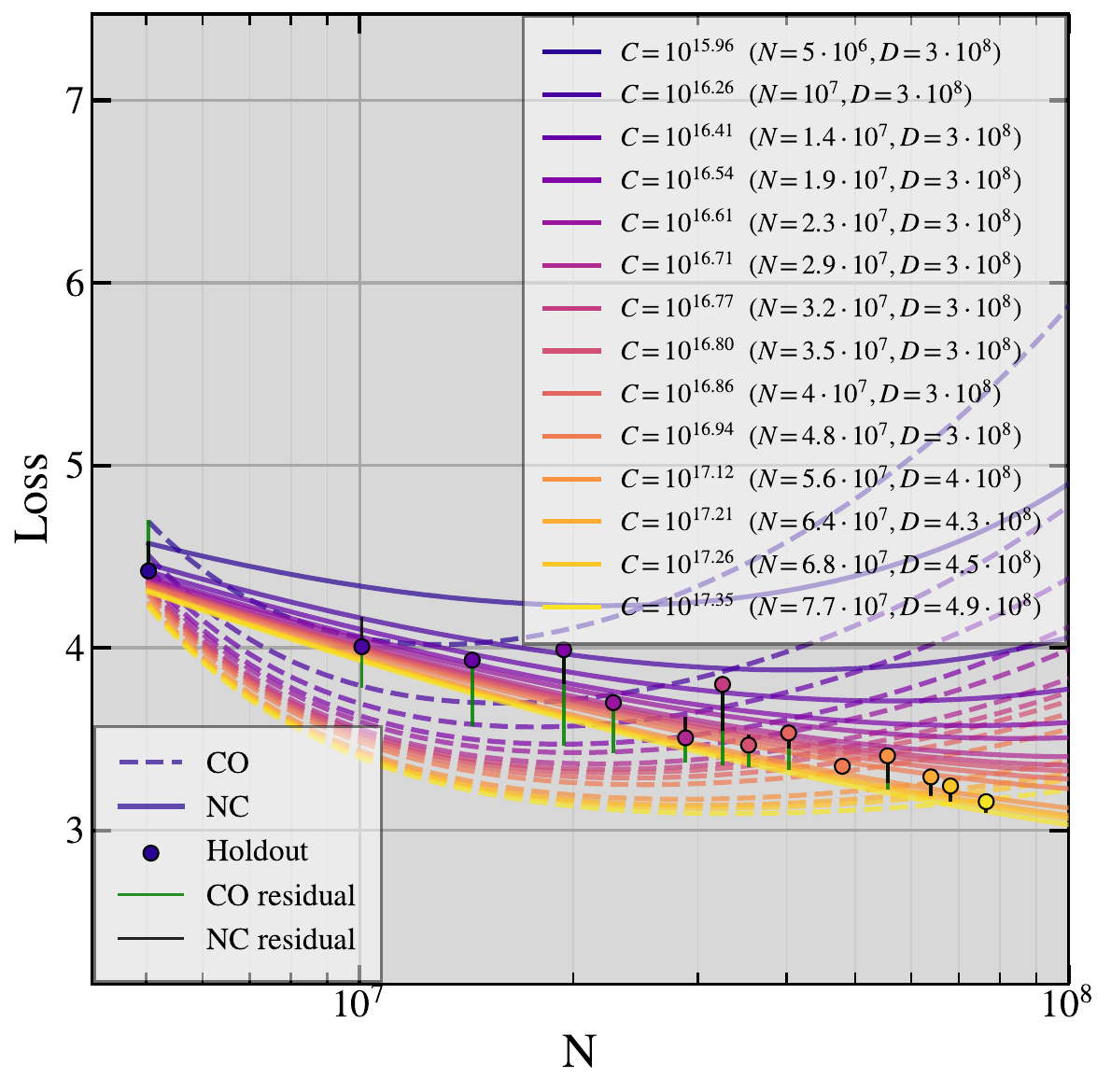}
        \caption{IsoFLOP curves.}
        \label{fig:14-isoflop}
    \end{subfigure}
    \caption{Droppo-Elibol law on C4, first epoch.}
    \label{fig:dump-14}
\end{figure*}

\begin{figure*}[!ht]
    \centering
    \begin{subfigure}[b]{0.37\textwidth}
        \centering
        \includegraphics[width=\linewidth]{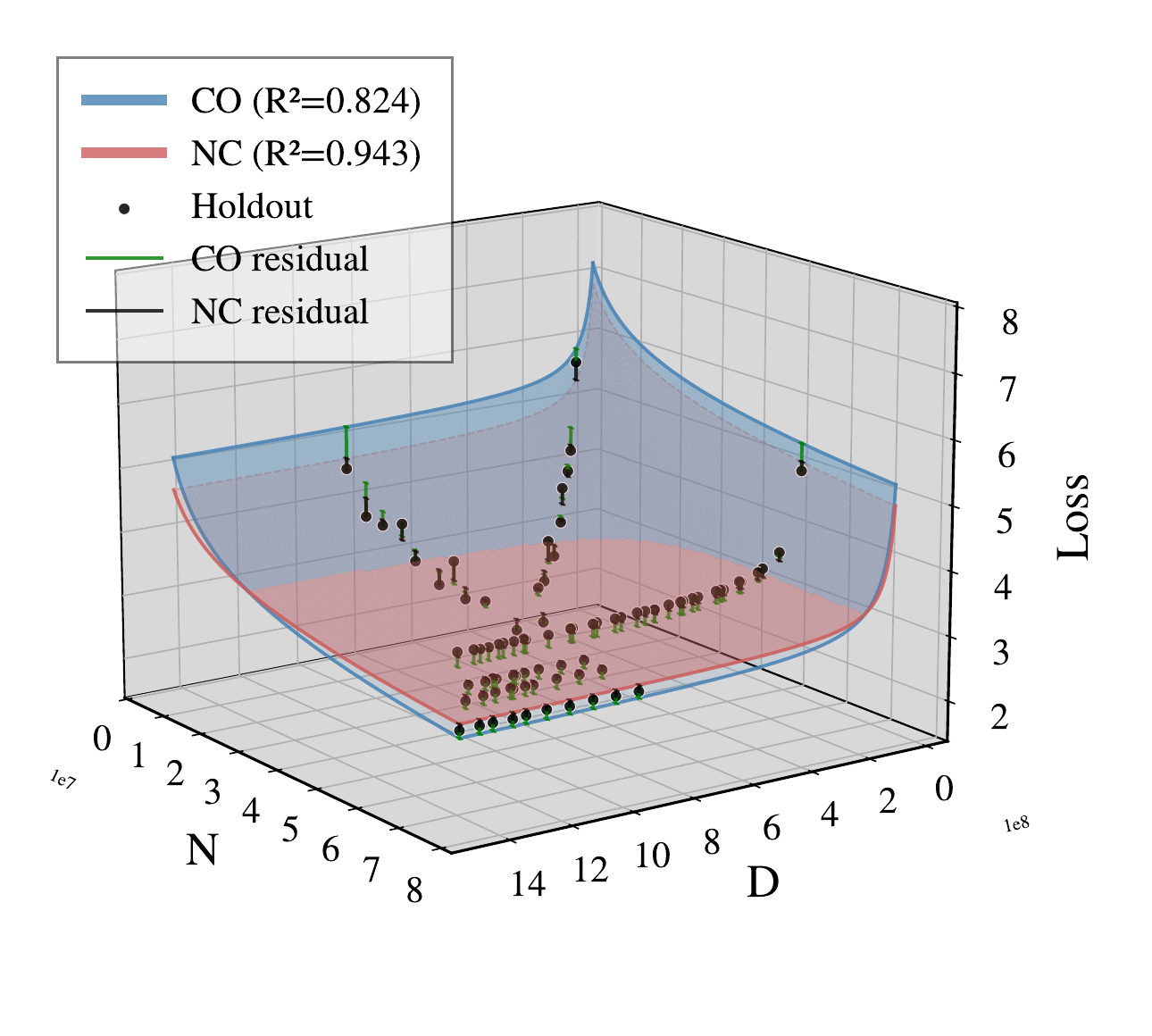}
        \caption{Surface fit.}
        \label{fig:15-surface}
    \end{subfigure}%
    \hfill
    \begin{subfigure}[b]{0.30\textwidth}
        \centering
        \includegraphics[width=\linewidth]{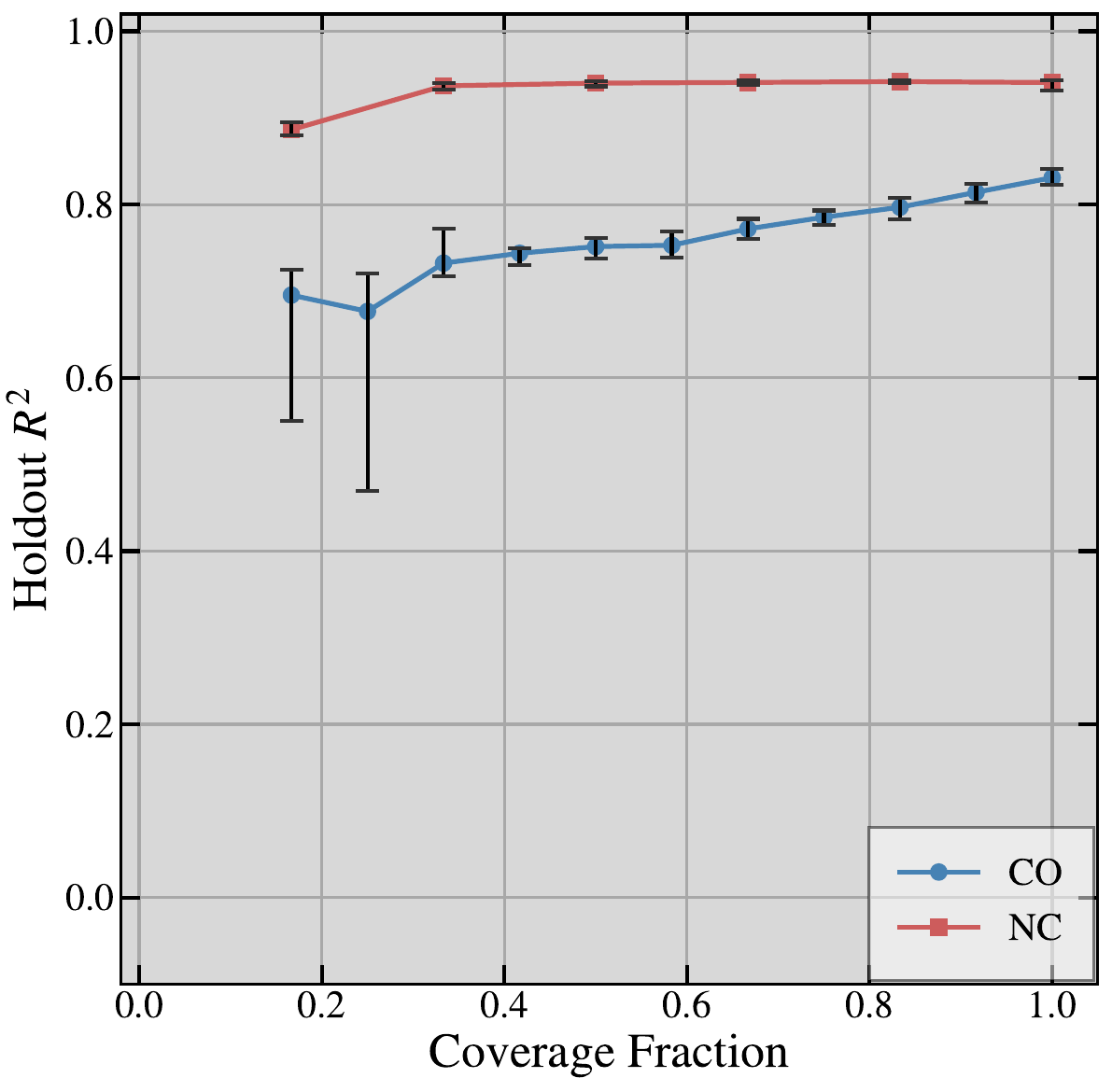}
        \caption{Coverage fraction analysis.}
        \label{fig:15-convergence}
    \end{subfigure}%
    \hfill
    \begin{subfigure}[b]{0.30\textwidth}
        \centering
        \includegraphics[width=\linewidth]{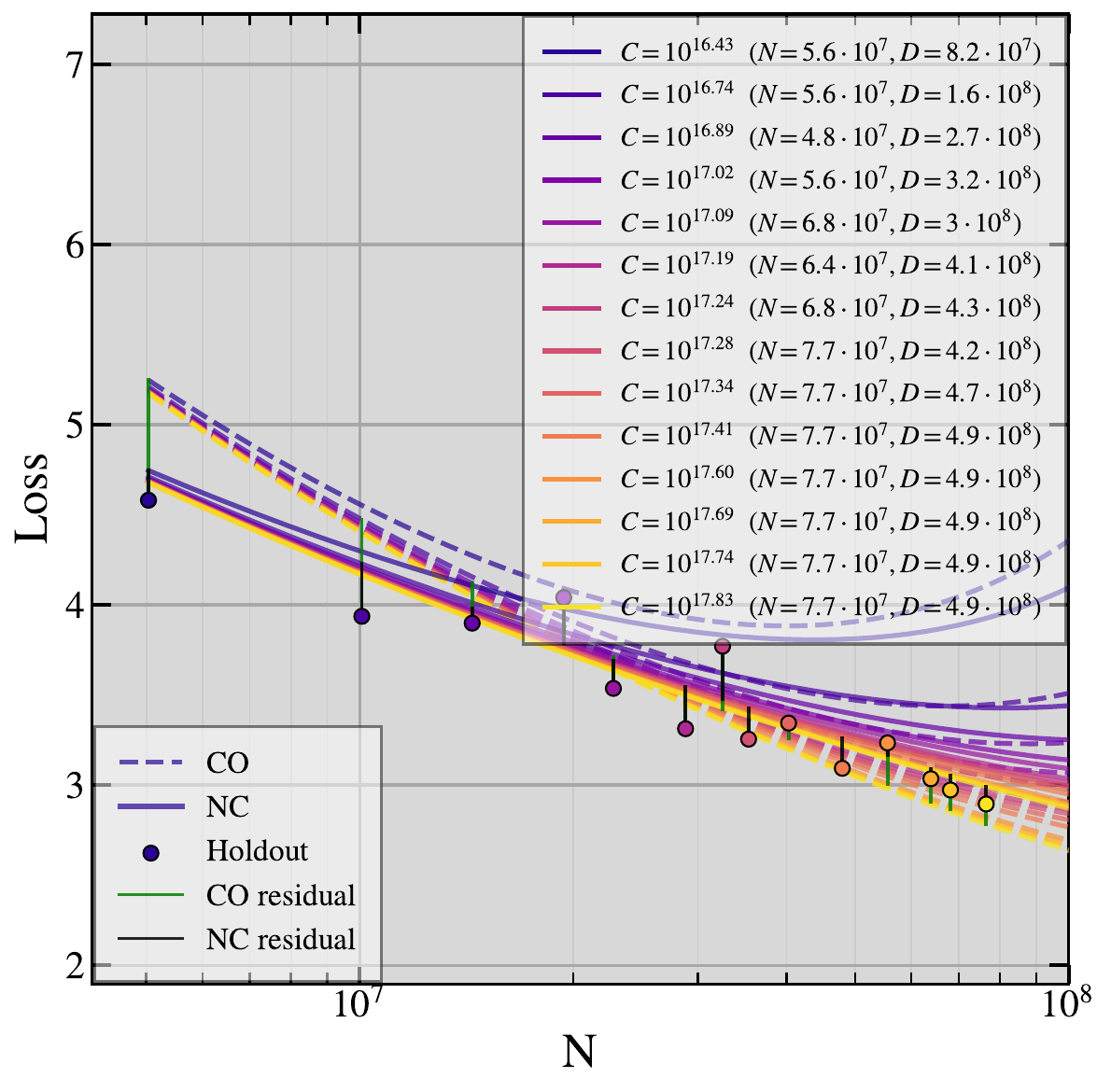}
        \caption{IsoFLOP curves.}
        \label{fig:15-isoflop}
    \end{subfigure}
    \caption{Kaplan law on peS2o, final epoch.}
    \label{fig:dump-15}
\end{figure*}

\begin{figure*}[!ht]
    \centering
    \begin{subfigure}[b]{0.37\textwidth}
        \centering
        \includegraphics[width=\linewidth]{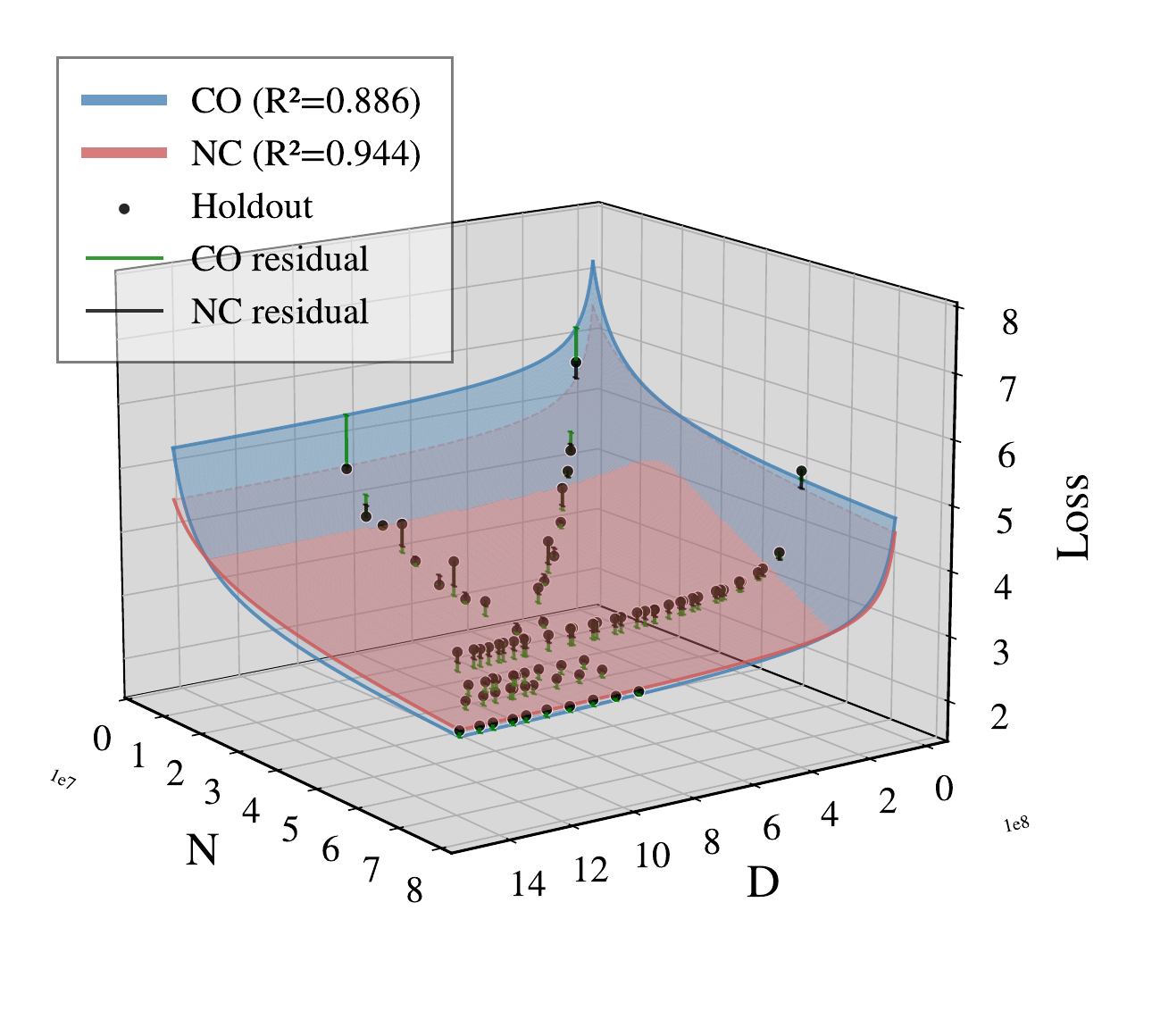}
        \caption{Surface fit.}
        \label{fig:17-surface}
    \end{subfigure}%
    \hfill
    \begin{subfigure}[b]{0.30\textwidth}
        \centering
        \includegraphics[width=\linewidth]{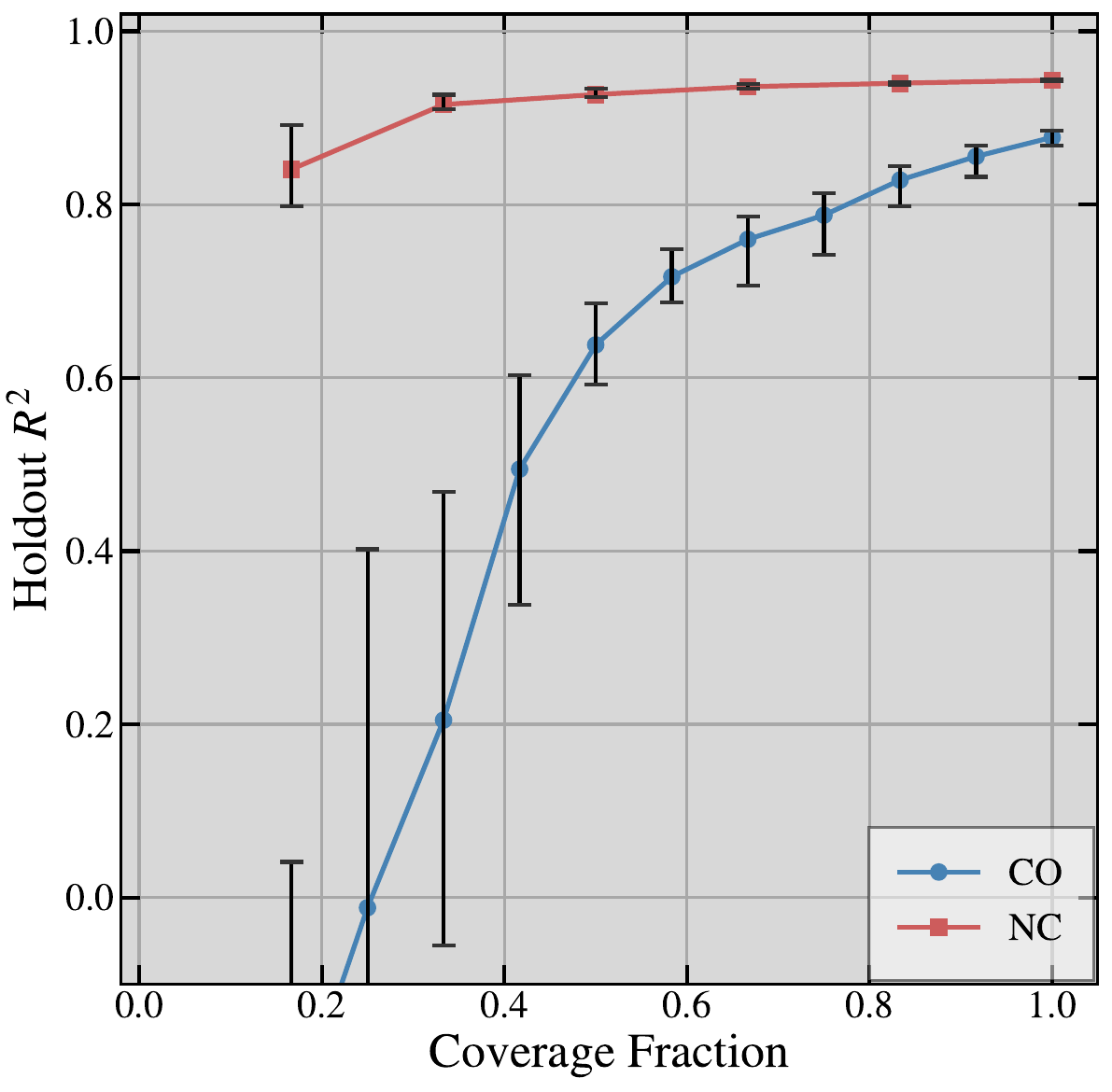}
        \caption{Coverage fraction analysis.}
        \label{fig:17-convergence}
    \end{subfigure}%
    \hfill
    \begin{subfigure}[b]{0.30\textwidth}
        \centering
        \includegraphics[width=\linewidth]{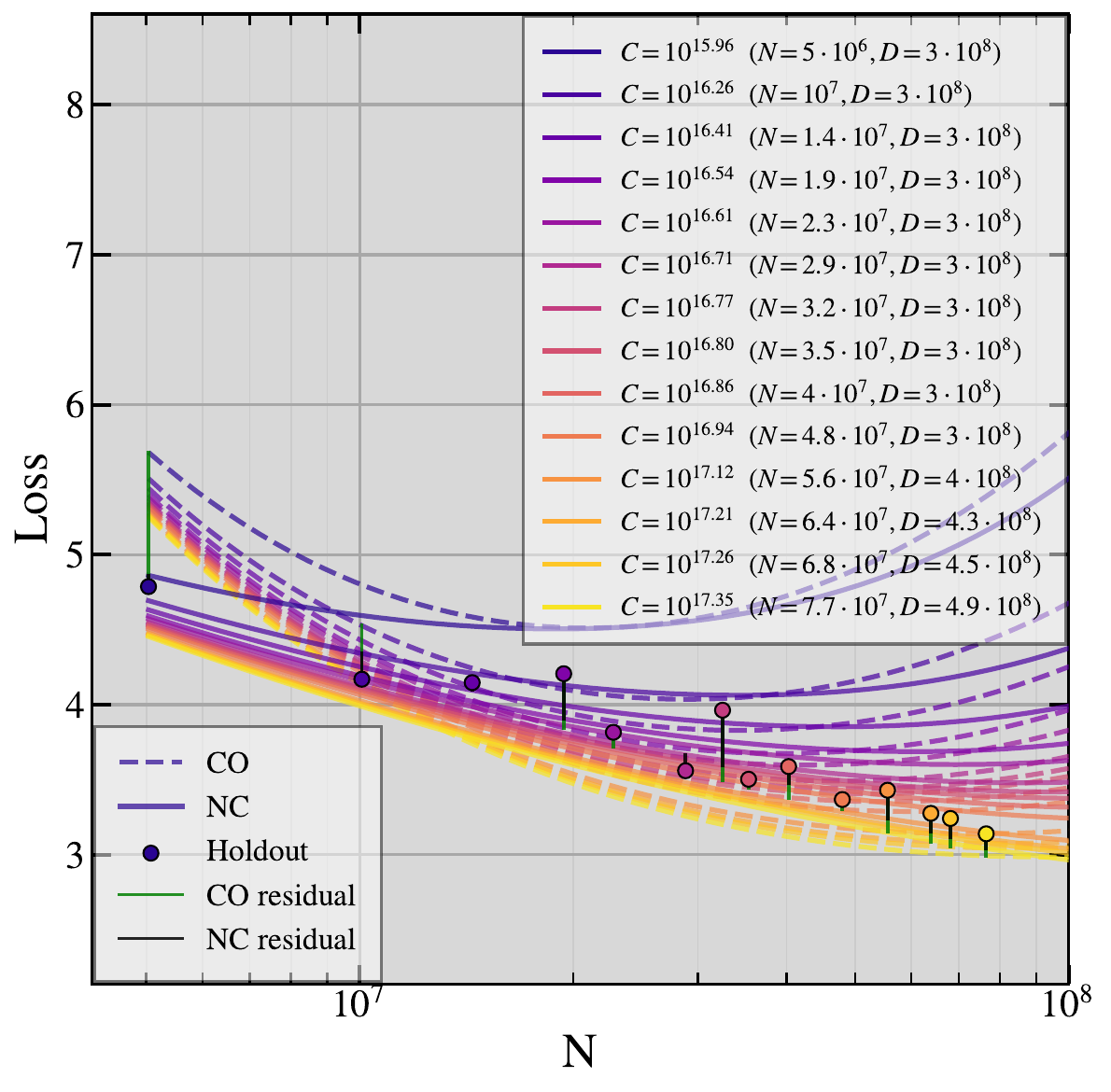}
        \caption{IsoFLOP curves.}
        \label{fig:17-isoflop}
    \end{subfigure}
    \caption{Droppo-Elibol law on peS2o, first epoch.}
    \label{fig:dump-17}
\end{figure*}

\begin{figure*}[!ht]
    \centering
    \begin{subfigure}[b]{0.37\textwidth}
        \centering
        \includegraphics[width=\linewidth]{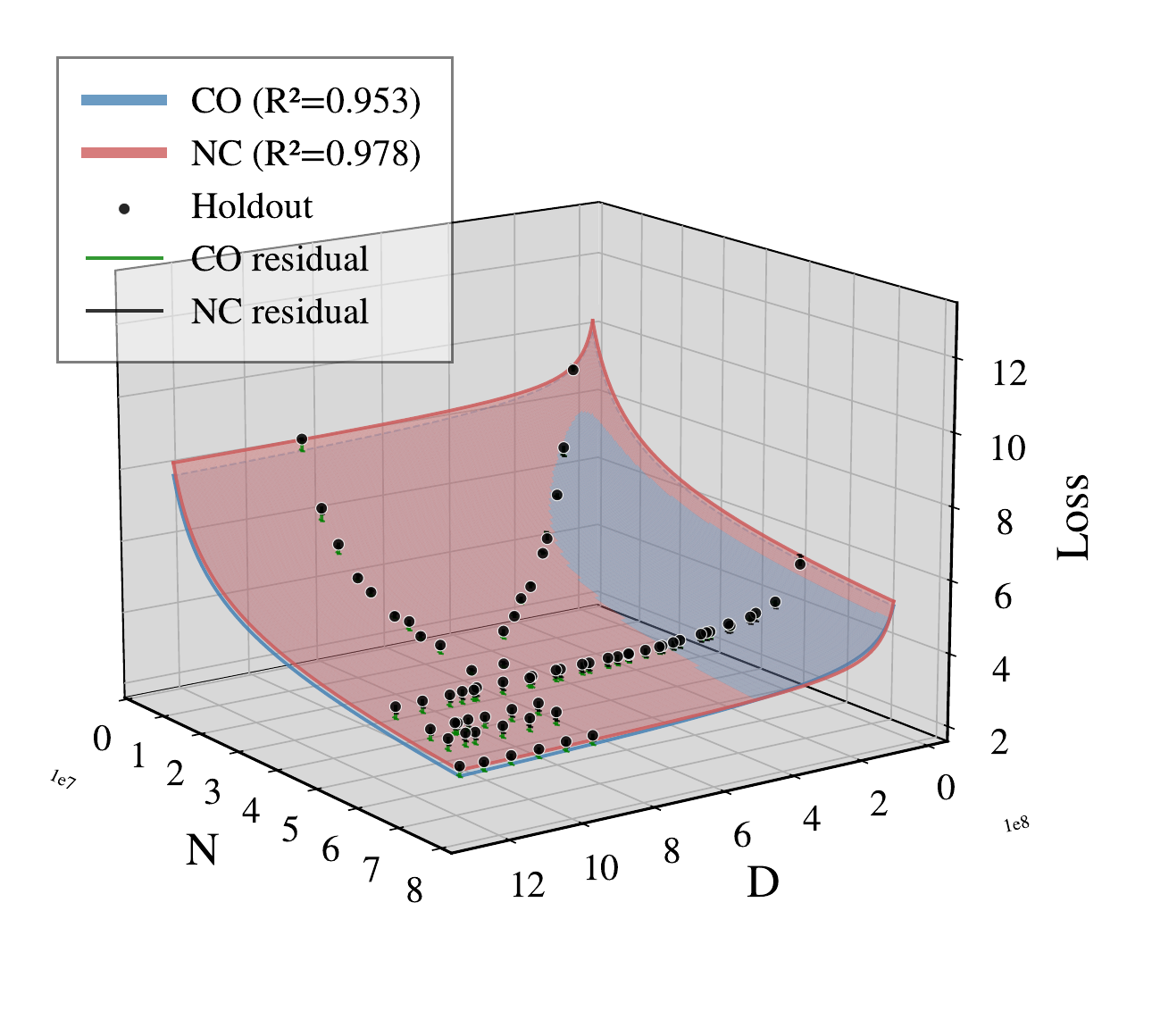}
        \caption{Surface fit.}
        \label{fig:bf16-01-surface}
    \end{subfigure}\hfill
    \begin{subfigure}[b]{0.30\textwidth}
        \centering
        \includegraphics[width=\linewidth]{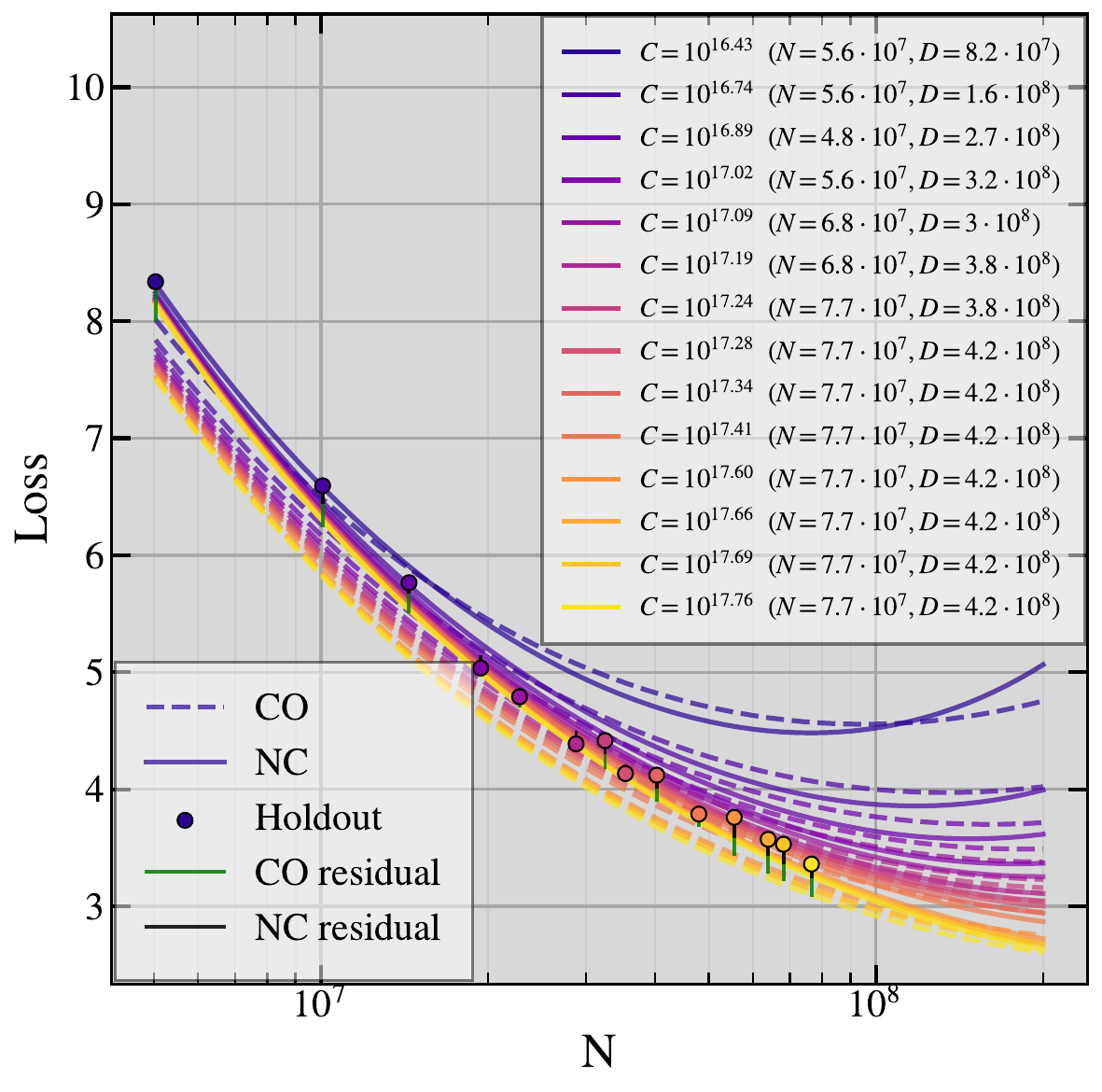}
        \caption{IsoFLOP curves.}
        \label{fig:bf16-01-isoflop}
    \end{subfigure}
    \caption{Chinchilla law on Wikipedia BF16, final epoch.}
    \label{fig:dump-bf16-01}
\end{figure*}
\begin{figure*}[!ht]
    \centering
    \begin{subfigure}[b]{0.37\textwidth}
        \centering
        \includegraphics[width=\linewidth]{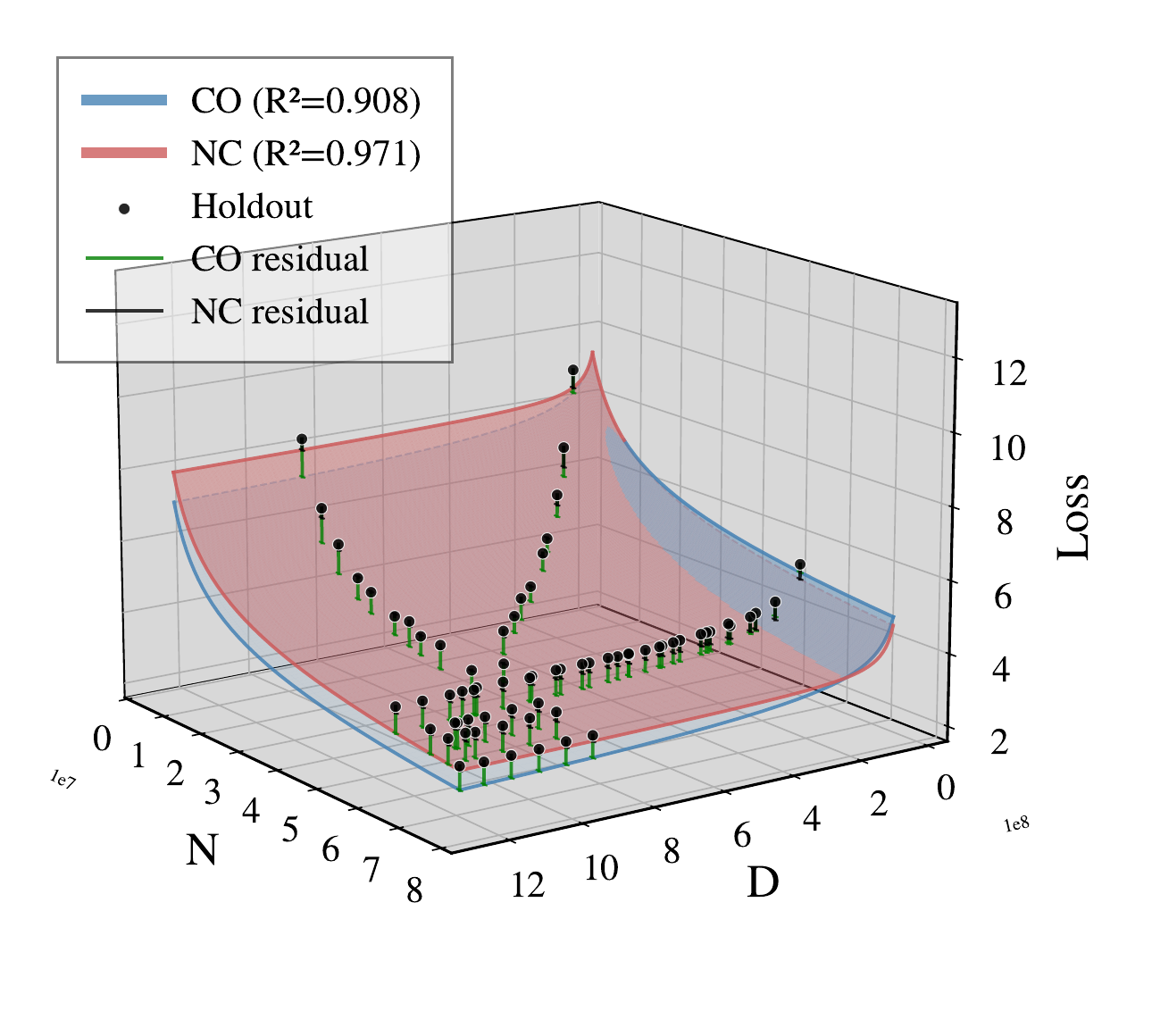}
        \caption{Surface fit.}
        \label{fig:bf16-02-surface}
    \end{subfigure}\hfill
    \begin{subfigure}[b]{0.30\textwidth}
        \centering
        \includegraphics[width=\linewidth]{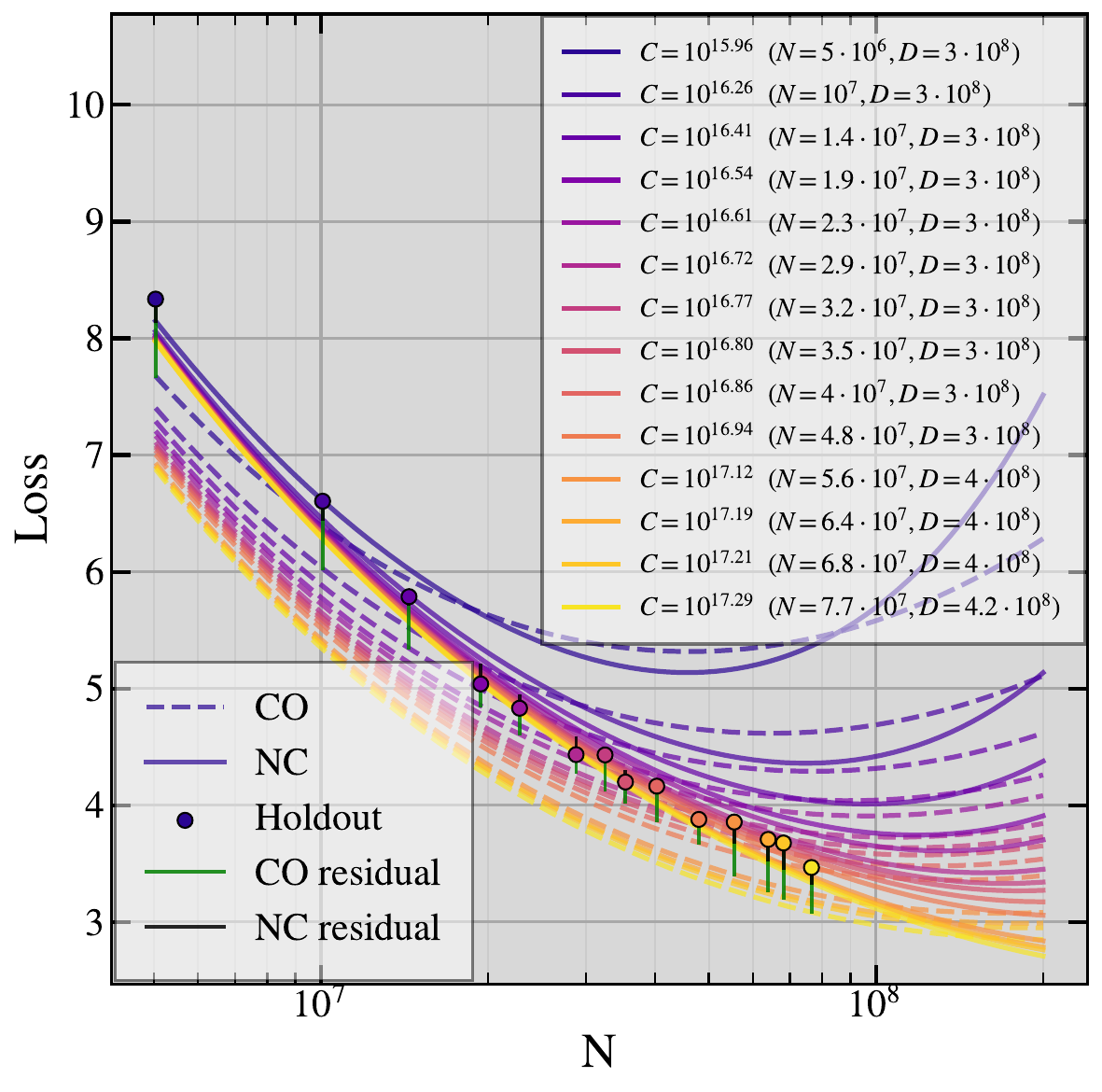}
        \caption{IsoFLOP curves.}
        \label{fig:bf16-02-isoflop}
    \end{subfigure}
    \caption{Chinchilla law on Wikipedia BF16, first epoch.}
    \label{fig:dump-bf16-02}
\end{figure*}
\begin{figure*}[!ht]
    \centering
    \begin{subfigure}[b]{0.37\textwidth}
        \centering
        \includegraphics[width=\linewidth]{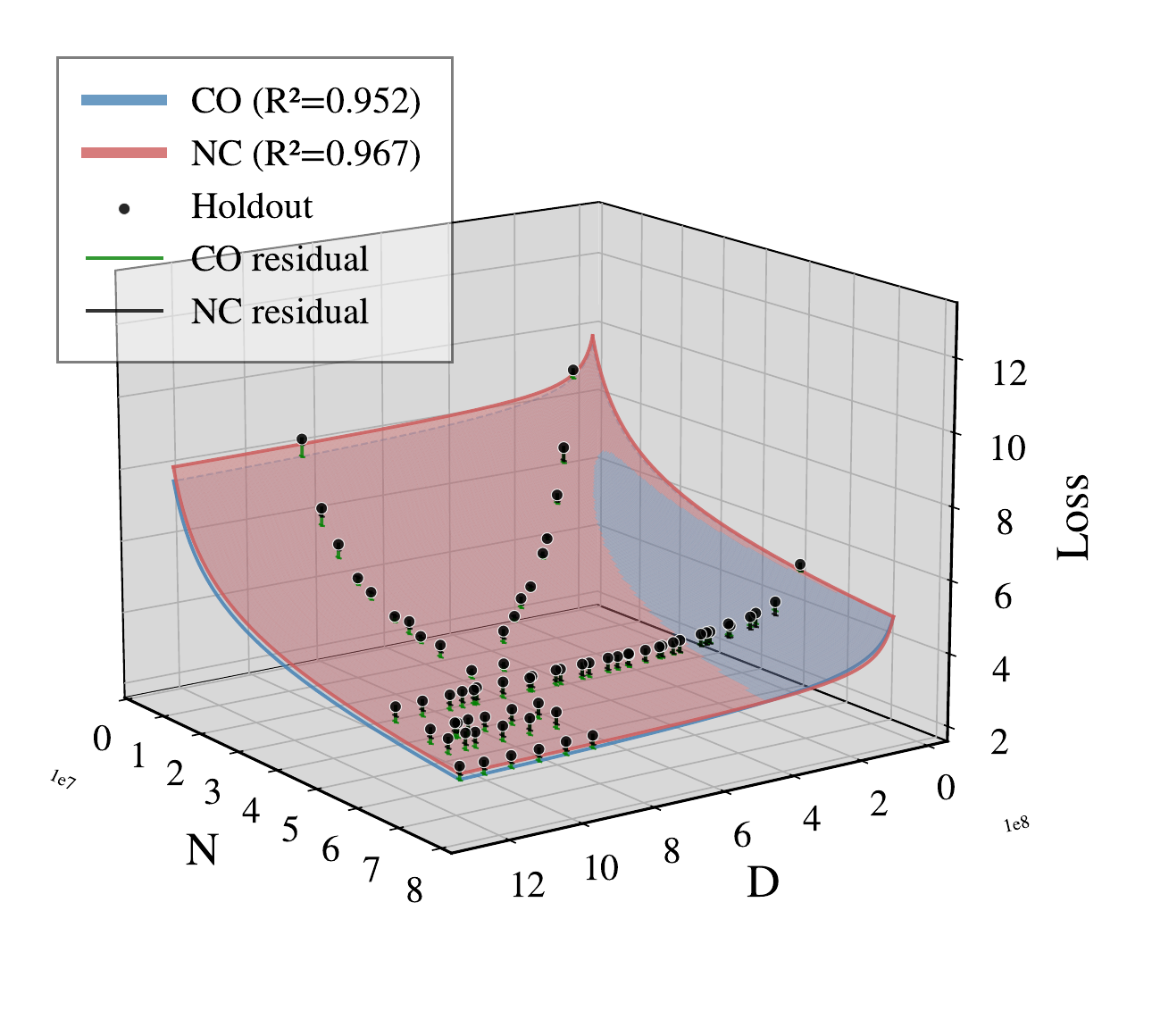}
        \caption{Surface fit.}
        \label{fig:bf16-03-surface}
    \end{subfigure}\hfill
    \begin{subfigure}[b]{0.30\textwidth}
        \centering
        \includegraphics[width=\linewidth]{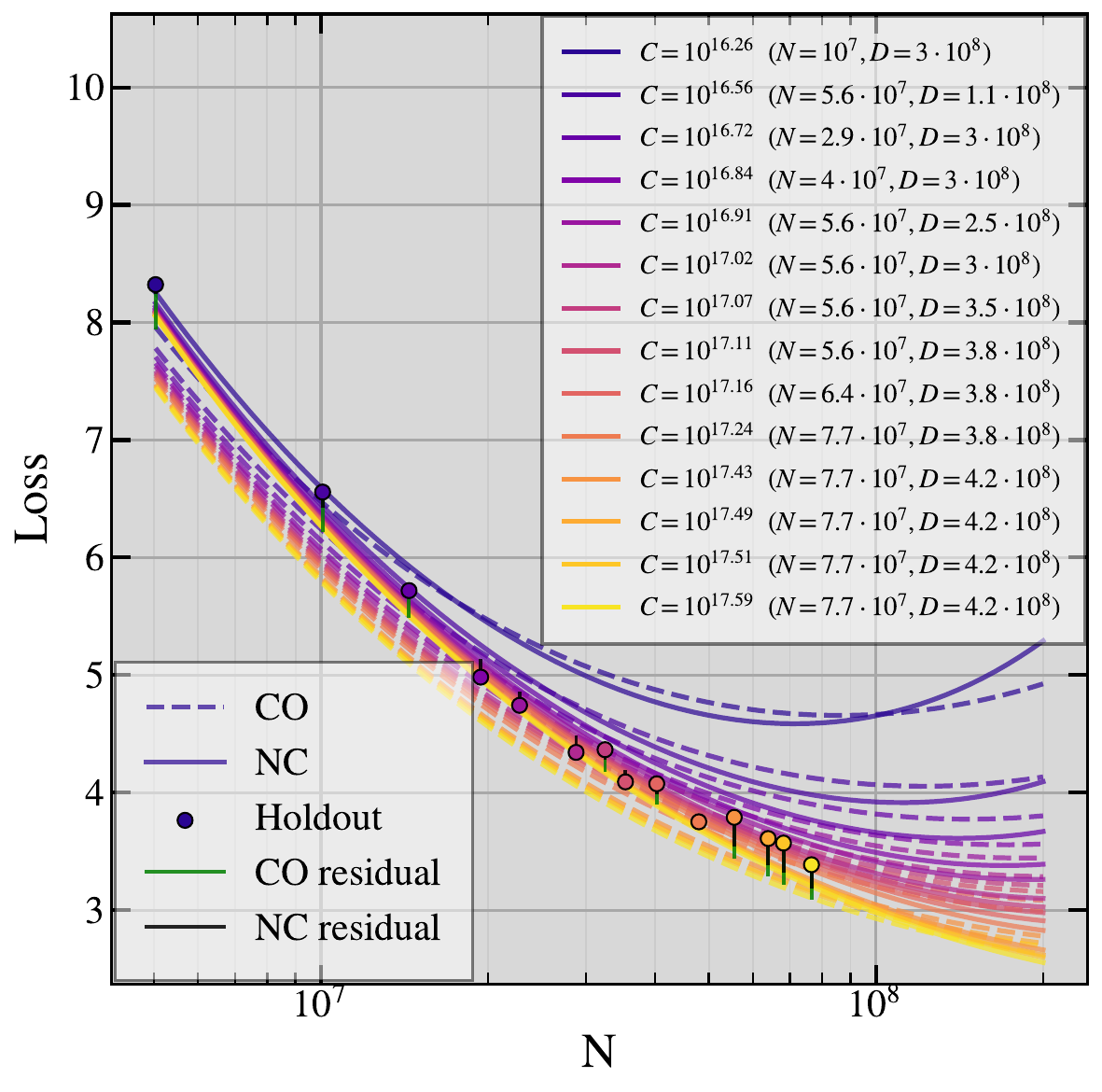}
        \caption{IsoFLOP curves.}
        \label{fig:bf16-03-isoflop}
    \end{subfigure}
    \caption{Chinchilla law on Wikipedia BF16, second epoch.}
    \label{fig:dump-bf16-03}
\end{figure*}
\begin{figure*}[!ht]
    \centering
    \begin{subfigure}[b]{0.37\textwidth}
        \centering
        \includegraphics[width=\linewidth]{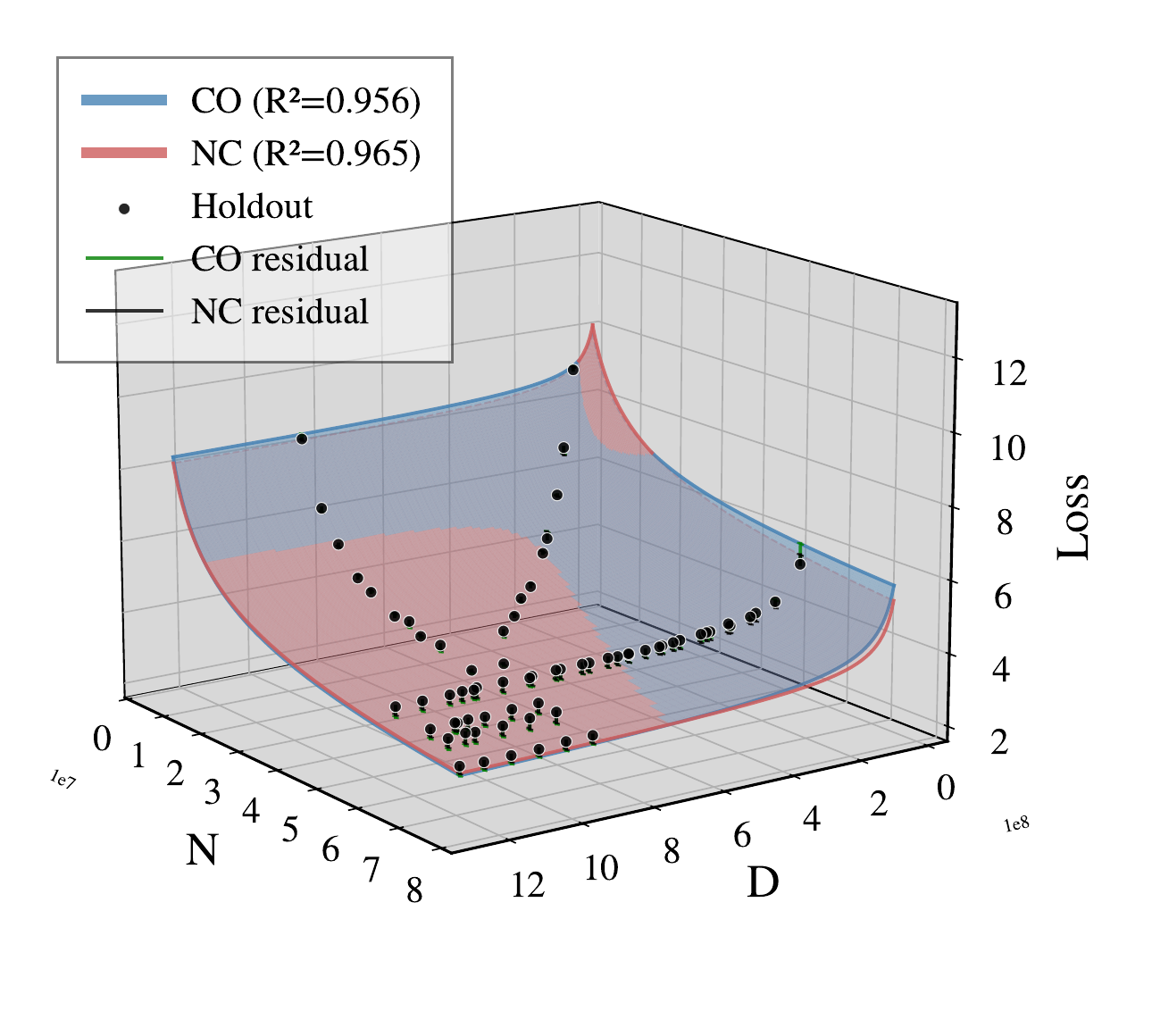}
        \caption{Surface fit.}
        \label{fig:bf16-04-surface}
    \end{subfigure}\hfill
    \begin{subfigure}[b]{0.30\textwidth}
        \centering
        \includegraphics[width=\linewidth]{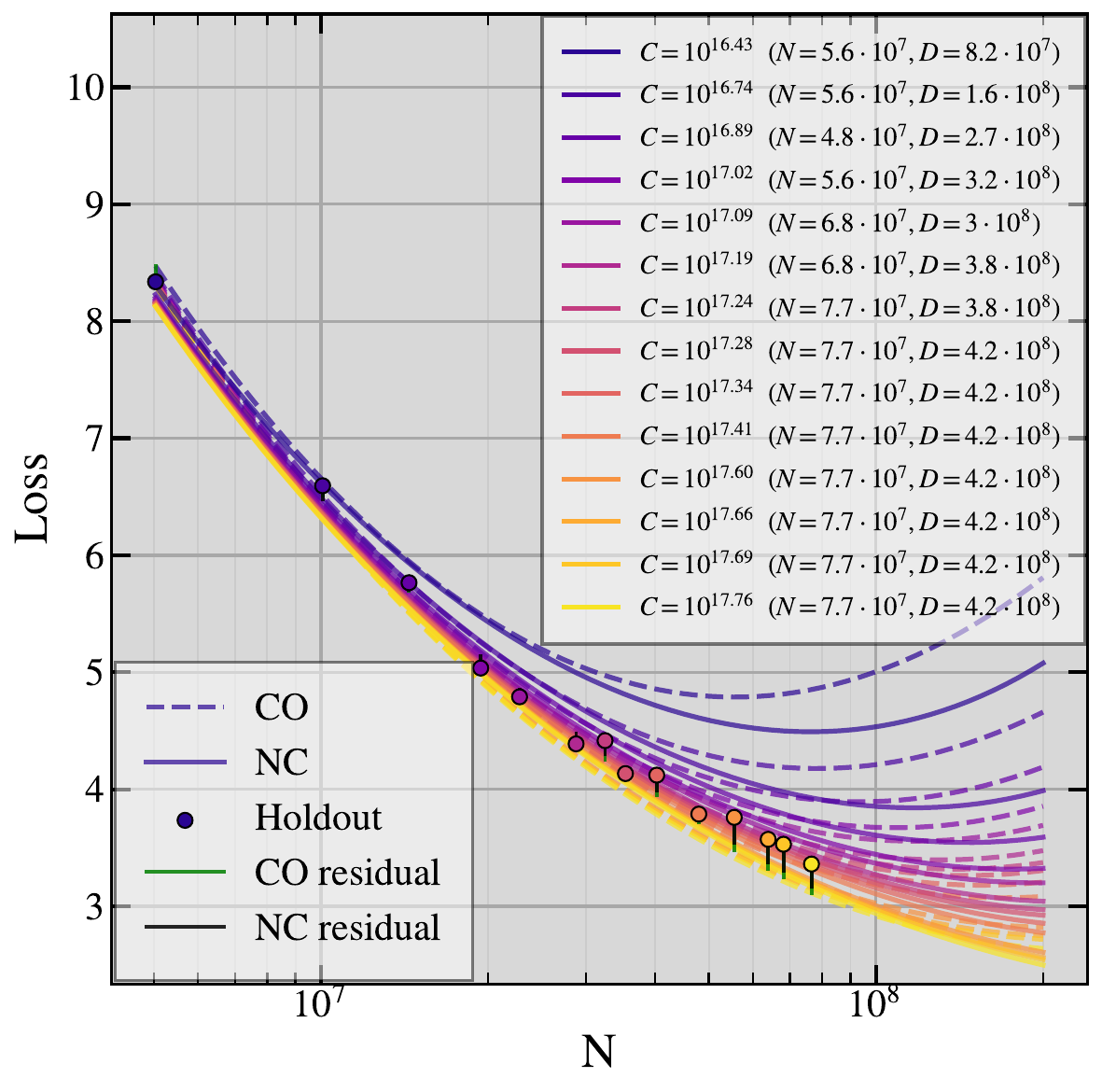}
        \caption{IsoFLOP curves.}
        \label{fig:bf16-04-isoflop}
    \end{subfigure}
    \caption{Droppo-Elibol law on Wikipedia BF16, final epoch.}
    \label{fig:dump-bf16-04}
\end{figure*}
\begin{figure*}[!ht]
    \centering
    \begin{subfigure}[b]{0.37\textwidth}
        \centering
        \includegraphics[width=\linewidth]{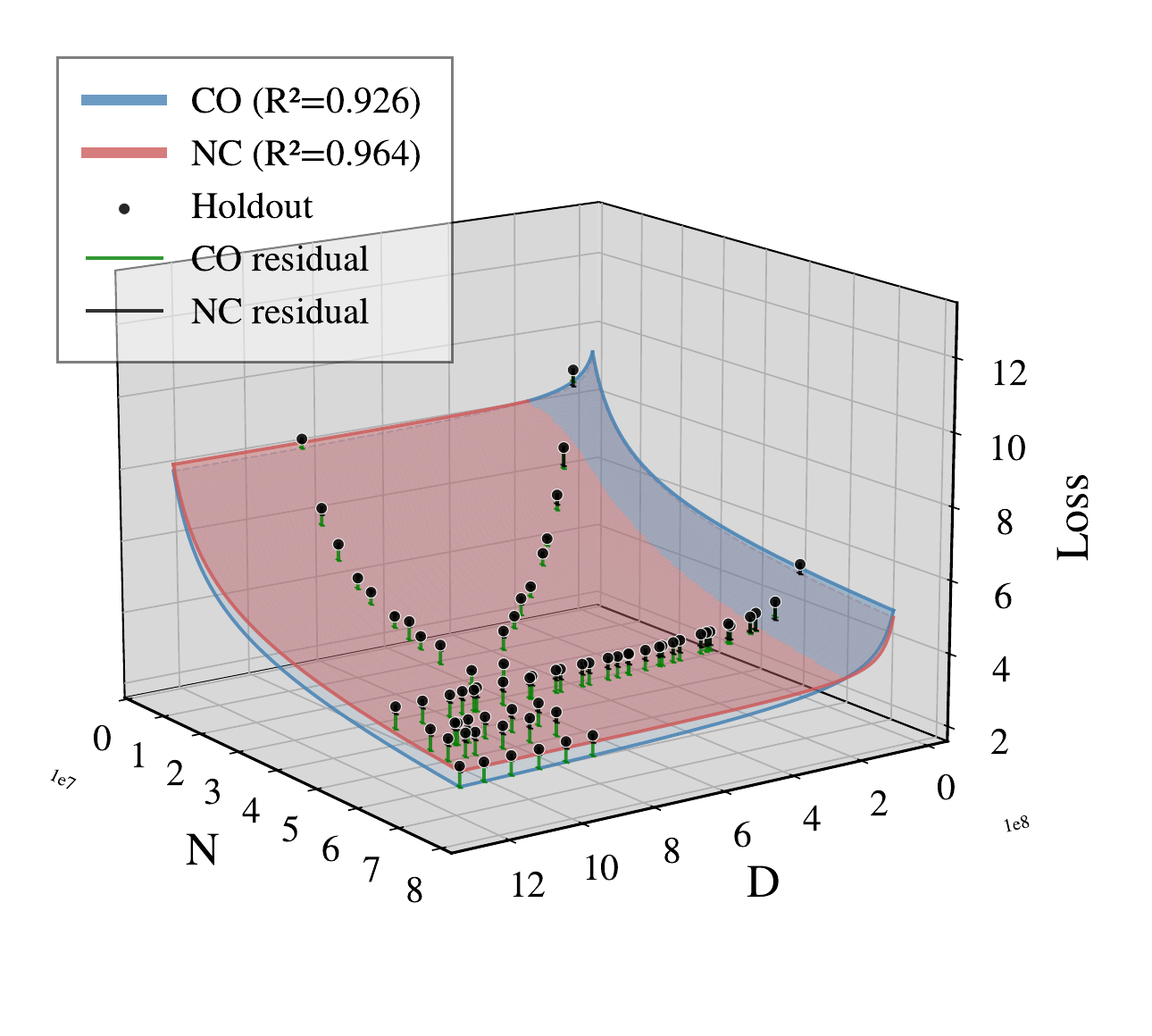}
        \caption{Surface fit.}
        \label{fig:bf16-05-surface}
    \end{subfigure}\hfill
    \begin{subfigure}[b]{0.30\textwidth}
        \centering
        \includegraphics[width=\linewidth]{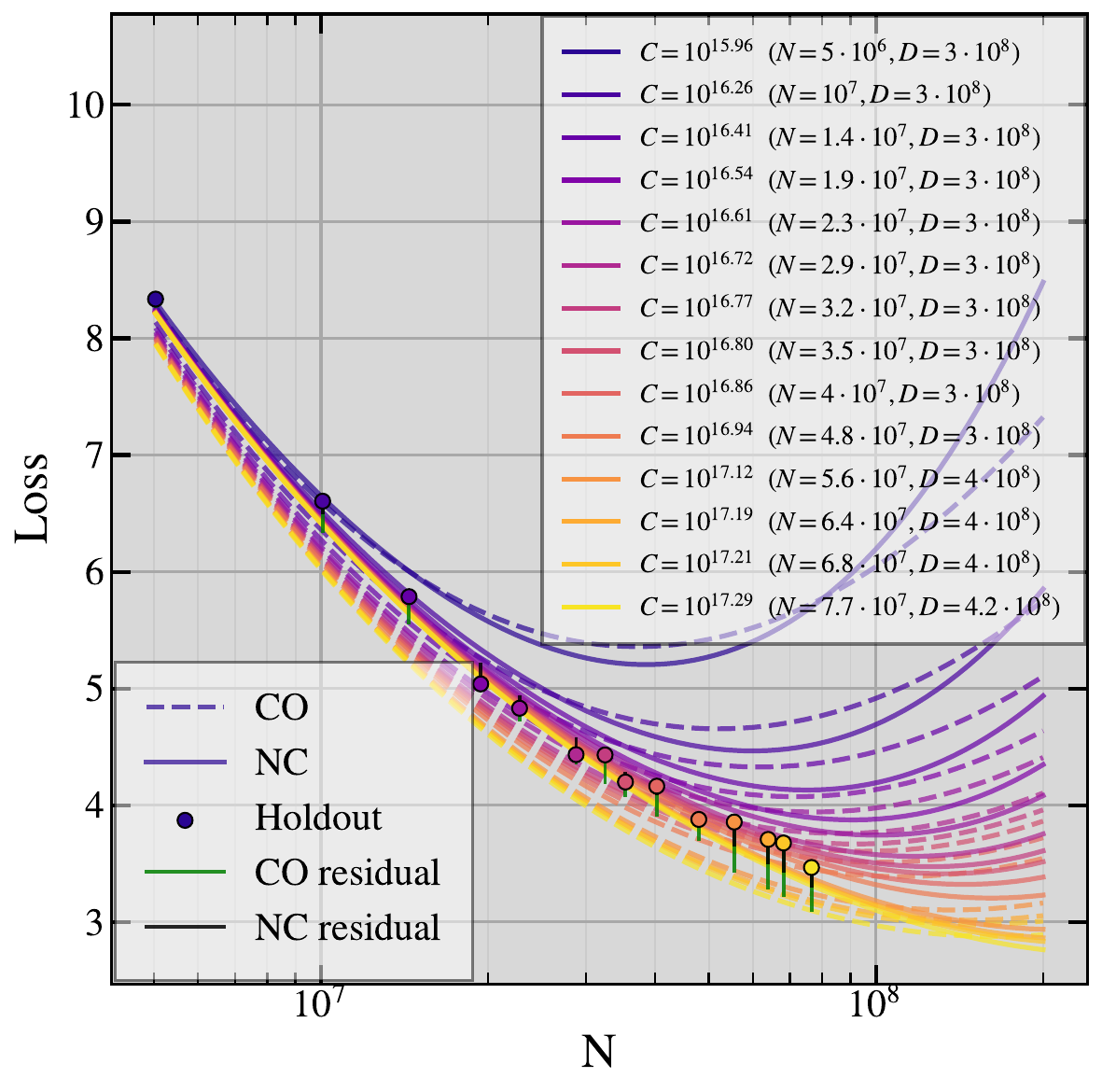}
        \caption{IsoFLOP curves.}
        \label{fig:bf16-05-isoflop}
    \end{subfigure}
    \caption{Droppo-Elibol law on Wikipedia BF16, first epoch.}
    \label{fig:dump-bf16-05}
\end{figure*}

\clearpage

\section{Appendix - Dataset, scaling-law, epoch extended breakdown}\label{app:full_all_results}
The following appendix material shows the full breakdown of our experimental fits at full coverage, disaggregating Table~\ref{tab:summary_full}. At full coverage, seed-to-seed variance is small for both designs; the CI inflation predicted by Corollary~\ref{prop:ci_inflation} is most visible at lower coverage levels (see convergence plots in previous three-panel plots). The CI annotation on each cell is the half-width of the 95\% confidence interval on the mean - i.e., the interval is the reported value $\pm$ CI. 

\begin{table}[H]
\small
\centering
\setlength{\tabcolsep}{2pt}
\caption{C4 - Holdout ($\mathcal{H}$). Mean across 30 seeds; CI = 95\% CI on the mean.}
\renewcommand{\arraystretch}{1.2}

\end{table}

\begin{table}[H]
\small
\centering
\setlength{\tabcolsep}{2pt}
\caption{C4 - Holdout-CO ($\mathcal{H}_{\mathrm{col}}$). Mean across 30 seeds; CI = 95\% CI on the mean.}
\renewcommand{\arraystretch}{1.2}
%
\end{table}

\begin{table}[H]
\small
\centering
\setlength{\tabcolsep}{2pt}
\caption{C4 - Holdout-NC ($\mathcal{H}_{\mathrm{nc}}$). Mean across 30 seeds; CI = 95\% CI on the mean.}
\renewcommand{\arraystretch}{1.2}
%
\end{table}

\begin{table}[H]
\small
\centering
\setlength{\tabcolsep}{2pt}
\caption{Cosmopedia - Holdout ($\mathcal{H}$). Mean across 30 seeds; CI = 95\% CI on the mean.}
\renewcommand{\arraystretch}{1.2}
%
\end{table}

\begin{table}[H]
\small
\centering
\setlength{\tabcolsep}{2pt}
\caption{Cosmopedia - Holdout-CO ($\mathcal{H}_{\mathrm{col}}$). Mean across 30 seeds; CI = 95\% CI on the mean.}
\renewcommand{\arraystretch}{1.2}
%
\end{table}

\begin{table}[H]
\small
\centering
\setlength{\tabcolsep}{2pt}
\caption{Cosmopedia - Holdout-NC ($\mathcal{H}_{\mathrm{nc}}$). Mean across 30 seeds; CI = 95\% CI on the mean.}
\renewcommand{\arraystretch}{1.2}
%
\end{table}

\begin{table}[H]
\small
\centering
\setlength{\tabcolsep}{2pt}
\caption{peS2o - Holdout ($\mathcal{H}$). Mean across 30 seeds; CI = 95\% CI on the mean.}
\renewcommand{\arraystretch}{1.2}
%
\end{table}

\begin{table}[H]
\small
\centering
\setlength{\tabcolsep}{2pt}
\caption{peS2o - Holdout-CO ($\mathcal{H}_{\mathrm{col}}$). Mean across 30 seeds; CI = 95\% CI on the mean.}
\renewcommand{\arraystretch}{1.2}
%
\end{table}

\begin{table}[H]
\small
\centering
\setlength{\tabcolsep}{2pt}
\caption{peS2o - Holdout-NC ($\mathcal{H}_{\mathrm{nc}}$). Mean across 30 seeds; CI = 95\% CI on the mean.}
\renewcommand{\arraystretch}{1.2}
%
\end{table}

\begin{table}[H]
\small
\centering
\setlength{\tabcolsep}{2pt}
\caption{RedPajama - Holdout ($\mathcal{H}$). Mean across 30 seeds; CI = 95\% CI on the mean.}
\renewcommand{\arraystretch}{1.2}
%
\end{table}

\begin{table}[H]
\small
\centering
\setlength{\tabcolsep}{2pt}
\caption{RedPajama - Holdout-CO ($\mathcal{H}_{\mathrm{col}}$). Mean across 30 seeds; CI = 95\% CI on the mean.}
\renewcommand{\arraystretch}{1.2}
%
\end{table}

\begin{table}[H]
\small
\centering
\setlength{\tabcolsep}{2pt}
\caption{RedPajama - Holdout-NC ($\mathcal{H}_{\mathrm{nc}}$). Mean across 30 seeds; CI = 95\% CI on the mean.}
\renewcommand{\arraystretch}{1.2}
%
\end{table}

\begin{table}[H]
\small
\centering
\setlength{\tabcolsep}{2pt}
\caption{Wikipedia - Holdout ($\mathcal{H}$). Mean across 30 seeds; CI = 95\% CI on the mean.}
\renewcommand{\arraystretch}{1.2}
%
\end{table}

\begin{table}[H]
\small
\centering
\setlength{\tabcolsep}{2pt}
\caption{Wikipedia - Holdout-CO ($\mathcal{H}_{\mathrm{col}}$). Mean across 30 seeds; CI = 95\% CI on the mean.}
\renewcommand{\arraystretch}{1.2}
%
\end{table}

\begin{table}[H]
\small
\centering
\setlength{\tabcolsep}{2pt}
\caption{Wikipedia - Holdout-NC ($\mathcal{H}_{\mathrm{nc}}$). Mean across 30 seeds; CI = 95\% CI on the mean.}
\renewcommand{\arraystretch}{1.2}
%
\end{table}

\begin{table}[H]
\small
\centering
\setlength{\tabcolsep}{2pt}
\caption{Wikipedia (BF16) - Holdout ($\mathcal{H}$). Mean across 30 seeds; CI = 95\% CI on the mean.}
\renewcommand{\arraystretch}{1.2}
%
\end{table}

\begin{table}[H]
\small
\centering
\setlength{\tabcolsep}{2pt}
\caption{Wikipedia (BF16) - Holdout-CO ($\mathcal{H}_{\mathrm{col}}$). Mean across 30 seeds; CI = 95\% CI on the mean.}
\renewcommand{\arraystretch}{1.2}
%
\end{table}

\begin{table}[H]
\small
\centering
\setlength{\tabcolsep}{2pt}
\caption{Wikipedia (BF16) - Holdout-NC ($\mathcal{H}_{\mathrm{nc}}$). Mean across 30 seeds; CI = 95\% CI on the mean.}
\renewcommand{\arraystretch}{1.2}
%
\end{table}

\clearpage

\section{Appendix - Fitted scaling law coefficients}\label{app:coefficients}

Tables~\ref{tab:chinchilla-params}-\ref{tab:epoch-params-r2} report the
best-fit parameters obtained by multi-start L-BFGS-B with differential
evolution polish, on seed 0.
$R^2$ is reported on four splits: training (train), unified holdout (uni),
collinear holdout (co), and non-collinear holdout (nc). Please refer to Appendix~\ref{app:optimizer_bounds} for the optimizer bounds we used. Both model designs were trained with the same bounds.

\begin{table}[H]
\small
\centering
\setlength{\tabcolsep}{2pt}
\caption{Chinchilla fitted parameters: $L(N,D)= E + A\,N^{-\alpha} + B\,D^{-\beta}$. Final epoch.}
\label{tab:chinchilla-params}
\begin{tabular}{llccccccccc}
\toprule
\textbf{Dataset} & \textbf{Design} & $E$ & $A$ & $\alpha$ & $B$ & $\beta$ & $R^2_{\text{train}}$ & $R^2_{\mathcal{H}}$ & $R^2_{\mathcal{H}_{\mathrm{col}}}$ & $R^2_{\mathcal{H}_{\mathrm{nc}}}$ \\
\midrule
Wikipedia      & CO & $1.857$ & $4.26{\rm e}6$ & $0.908$ & $2.99{\rm e}4$ & $0.518$ & 0.992 & 0.864 & 0.966 & 0.633 \\
               & NC & $0$ & $239.156$ & $0.260$ & $5.02{\rm e}4$ & $0.552$ & 0.937 & 0.898 & 0.929 & 0.829 \\
\midrule
Cosmopedia     & CO & $1.367$ & $4249.535$ & $0.471$ & $8.53{\rm e}6$ & $0.874$ & 0.986 & 0.950 & 0.961 & 0.937 \\
               & NC & $1.174$ & $871.957$ & $0.370$ & $1.37{\rm e}7$ & $0.902$ & 0.950 & 0.952 & 0.956 & 0.947 \\
\midrule
RedPajama     & CO & $2.400$ & $1.87{\rm e}5$ & $0.706$ & $2.68{\rm e}4$ & $0.517$ & 0.993 & 0.880 & 0.967 & 0.692 \\
               & NC & $0.242$ & $118.949$ & $0.209$ & $5.02{\rm e}4$ & $0.557$ & 0.944 & 0.916 & 0.939 & 0.865 \\
\midrule
peS2o          & CO & $2.117$ & $1.66{\rm e}4$ & $0.566$ & $2.45{\rm e}5$ & $0.667$ & 0.983 & 0.896 & 0.924 & 0.857 \\
               & NC & $0$ & $68.799$ & $0.174$ & $2.73{\rm e}7$ & $0.951$ & 0.935 & 0.945 & 0.957 & 0.929 \\
\midrule
C4             & CO & $0.410$ & $47.594$ & $0.161$ & $3.59{\rm e}7$ & $0.971$ & 0.979 & 0.945 & 0.960 & 0.927 \\
               & NC & $1.131$ & $82.645$ & $0.216$ & $3.82{\rm e}4$ & $0.580$ & 0.934 & 0.954 & 0.963 & 0.944 \\
\bottomrule
\end{tabular}
\end{table}

\begin{table}[H]
\small
\centering
\setlength{\tabcolsep}{2pt}
\caption{Kaplan fitted parameters: $L(N,D)=\bigl[(N_c/N)^{\alpha_N/\alpha_D} + D_c/D\bigr]^{\alpha_D}$. Final epoch.}
\label{tab:kaplan-params}
\begin{tabular}{llccccccccc}
\toprule
\textbf{Dataset} & \textbf{Design} & $N_c$ & $D_c$ & $\alpha_N$ & $\alpha_D$ & $\alpha_N/\alpha_D$ & $R^2_{\text{train}}$ & $R^2_{\mathcal{H}}$ & $R^2_{\mathcal{H}_{\mathrm{col}}}$ & $R^2_{\mathcal{H}_{\mathrm{nc}}}$ \\
\midrule
Wikipedia      & CO & $6.78{\rm e}8$ & $6.90{\rm e}8$ & $0.382$ & $0.563$ & $0.679$ & 0.985 & 0.612 & 0.862 & 0.045 \\
               & NC & $1.11{\rm e}10$ & $2.11{\rm e}7$ & $0.208$ & $2.000$ & $0.104$ & 0.927 & 0.884 & 0.909 & 0.829 \\
\midrule
Cosmopedia     & CO & $1.17{\rm e}9$ & $1.15{\rm e}8$ & $0.274$ & $0.841$ & $0.325$ & 0.985 & 0.942 & 0.959 & 0.923 \\
               & NC & $2.53{\rm e}9$ & $6.15{\rm e}7$ & $0.227$ & $1.075$ & $0.211$ & 0.950 & 0.959 & 0.961 & 0.956 \\
\midrule
RedPajama     & CO & $3.41{\rm e}9$ & $10^{9}$ & $0.296$ & $0.512$ & $0.577$ & 0.987 & 0.666 & 0.888 & 0.181 \\
               & NC & $1.89{\rm e}11$ & $3.74{\rm e}7$ & $0.162$ & $1.411$ & $0.115$ & 0.920 & 0.875 & 0.904 & 0.811 \\
\midrule
peS2o          & CO & $5.32{\rm e}9$ & $2.10{\rm e}8$ & $0.236$ & $0.685$ & $0.345$ & 0.981 & 0.824 & 0.881 & 0.747 \\
               & NC & $4.67{\rm e}10$ & $2.76{\rm e}7$ & $0.169$ & $1.435$ & $0.118$ & 0.936 & 0.943 & 0.954 & 0.928 \\
\midrule
C4             & CO & $1.89{\rm e}11$ & $3.74{\rm e}7$ & $0.141$ & $1.192$ & $0.118$ & 0.980 & 0.943 & 0.956 & 0.927 \\
               & NC & $1.89{\rm e}11$ & $3.74{\rm e}7$ & $0.140$ & $1.100$ & $0.128$ & 0.928 & 0.940 & 0.950 & 0.929 \\
\bottomrule
\end{tabular}
\end{table}

\begin{table}[H]
\small
\centering
\setlength{\tabcolsep}{2pt}
\caption{Droppo-Elibol fitted parameters. Final epoch.}
\label{tab:droppo-params}
\begin{tabular}{llcccccccccc}
\toprule
\textbf{Dataset} & \textbf{Design} & $L_\infty$ & $N_C$ & $D_C$ & $\alpha_N$ & $\alpha_D$ & $\alpha$ & $R^2_{\text{train}}$ & $R^2_{\mathcal{H}}$ & $R^2_{\mathcal{H}_{\mathrm{col}}}$ & $R^2_{\mathcal{H}_{\mathrm{nc}}}$ \\
\midrule
Wikipedia      & CO & $2.094$ & $4.64{\rm e}7$ & $7.72{\rm e}8$ & $0.735$ & $0.505$ & $0.720$ & 0.992 & 0.810 & 0.967 & 0.456 \\
               & NC & $0.114$ & $1.36{\rm e}8$ & $2.57{\rm e}7$ & $0.298$ & $0.853$ & $2.000$ & 0.957 & 0.916 & 0.940 & 0.861 \\
\midrule
Cosmopedia     & CO & $0.509$ & $8.02{\rm e}7$ & $3.34{\rm e}7$ & $0.358$ & $1.095$ & $1.476$ & 0.986 & 0.960 & 0.966 & 0.952 \\
               & NC & $0.556$ & $8.02{\rm e}7$ & $3.34{\rm e}7$ & $0.329$ & $1.158$ & $1.481$ & 0.954 & 0.963 & 0.966 & 0.959 \\
\midrule
RedPajama     & CO & $1.995$ & $3.69{\rm e}6$ & $2.61{\rm e}7$ & $1.016$ & $0.675$ & $1.918$ & 0.993 & 0.915 & 0.963 & 0.811 \\
               & NC & $0.054$ & $1.61{\rm e}9$ & $1.97{\rm e}7$ & $0.215$ & $0.776$ & $2.000$ & 0.959 & 0.918 & 0.939 & 0.875 \\
\midrule
peS2o          & CO & $1.047$ & $8.02{\rm e}7$ & $3.34{\rm e}7$ & $0.363$ & $0.983$ & $1.393$ & 0.983 & 0.894 & 0.916 & 0.863 \\
               & NC & $0.534$ & $1.10{\rm e}8$ & $1.52{\rm e}7$ & $0.264$ & $1.312$ & $1.833$ & 0.943 & 0.954 & 0.964 & 0.940 \\
\midrule
C4             & CO & $0.468$ & $3.48{\rm e}8$ & $1.49{\rm e}7$ & $0.205$ & $1.638$ & $1.794$ & 0.981 & 0.945 & 0.955 & 0.933 \\
               & NC & $0.050$ & $1.23{\rm e}10$ & $7.80{\rm e}6$ & $0.153$ & $1.266$ & $1.984$ & 0.940 & 0.954 & 0.963 & 0.944 \\
\bottomrule
\end{tabular}
\end{table}

\begin{table}[H]
\small
\centering
\caption{Repeated-data fitted scaling-law parameters. Final epoch.}
\label{tab:epoch-params-fit}
\begin{tabular}{llccccc}
\toprule
\textbf{Dataset} & \textbf{Design} & $A$ & $\alpha$ & $B$ & $\beta$ & $E$ \\
\midrule
Wikipedia  & CO & $4.22{\rm e}5$ & $0.746$ & $2.19{\rm e}4$ & $0.515$ & $1.538$ \\
           & NC & $194.407$ & $0.244$ & $9.09{\rm e}4$ & $0.601$ & $1.86{\rm e}{-}11$ \\
\midrule
Cosmopedia & CO & $7.22{\rm e}4$ & $0.656$ & $1.45{\rm e}6$ & $0.787$ & $1.752$ \\
           & NC & $303.693$ & $0.290$ & $1.46{\rm e}6$ & $0.790$ & $0.618$ \\
\midrule
RedPajama & CO & $1.15{\rm e}5$ & $0.667$ & $1.26{\rm e}4$ & $0.485$ & $2.164$ \\
           & NC & $97.241$ & $0.192$ & $4.86{\rm e}4$ & $0.566$ & $5.23{\rm e}{-}12$ \\
\midrule
peS2o      & CO & $3.07{\rm e}4$ & $0.600$ & $4.75{\rm e}5$ & $0.723$ & $2.265$ \\
           & NC & $70.386$ & $0.177$ & $8.92{\rm e}5$ & $0.762$ & $2.25{\rm e}{-}11$ \\
\midrule
C4         & CO & $1.68{\rm e}6$ & $0.879$ & $1.84{\rm e}7$ & $0.963$ & $3.088$ \\
           & NC & $34.408$ & $0.136$ & $1.82{\rm e}5$ & $0.684$ & $2.94{\rm e}{-}11$ \\
\bottomrule
\end{tabular}
\end{table}

\begin{table}[H]
\small
\centering
\caption{Repeated-data fit quality and repetition factors. Final epoch. Values reported as $\RhalfLow$ for $R^*_N$ indicate the L-BFGS-B optimizer terminating at the documented lower bound $R^*_N \in [\RhalfLow,\,\RhalfHigh]$; see Appendix~\ref{app:optimizer_bounds} for the full bound list and an explanation of why $R^*_N$ is not informatively constrained by our grid (epochs vary $D$ but not $N$).}
\label{tab:epoch-params-r2}
\begin{tabular}{llcccccc}
\toprule
\textbf{Dataset} & \textbf{Design} & $R^*_D$ & $R^*_N$ & $R^2_{\text{train}}$ & $R^2_{\mathcal{H}}$ & $R^2_{\mathcal{H}_{\mathrm{col}}}$ & $R^2_{\mathcal{H}_{\mathrm{nc}}}$ \\
\midrule
Wikipedia  & CO & $1.074$ & $0.100$ & 0.985 & 0.867 & 0.952 & 0.691 \\
           & NC & $1.469$ & $0.100$ & 0.959 & 0.915 & 0.932 & 0.882 \\
\midrule
Cosmopedia & CO & $1.500$ & $0.153$ & 0.984 & 0.950 & 0.965 & 0.933 \\
           & NC & $1.286$ & $0.100$ & 0.966 & 0.968 & 0.972 & 0.965 \\
\midrule
RedPajama & CO & $1.162$ & $0.100$ & 0.988 & 0.890 & 0.960 & 0.756 \\
           & NC & $1.624$ & $0.100$ & 0.964 & 0.929 & 0.943 & 0.902 \\
\midrule
peS2o      & CO & $1.897$ & $0.119$ & 0.986 & 0.923 & 0.950 & 0.891 \\
           & NC & $2.188$ & $0.100$ & 0.962 & 0.946 & 0.951 & 0.941 \\
\midrule
C4         & CO & $0.801$ & $0.564$ & 0.978 & 0.818 & 0.857 & 0.775 \\
           & NC & $2.352$ & $0.100$ & 0.936 & 0.957 & 0.959 & 0.955 \\
\bottomrule
\end{tabular}
\end{table}

\begin{table}[H]
\small
\centering
\setlength{\tabcolsep}{2pt}
\caption{BF16 Chinchilla fitted parameters: $L(N,D)= E + A\,N^{-\alpha} + B\,D^{-\beta}$. Final epoch.}
\label{tab:chinchilla-params-bf16}
\begin{tabular}{llccccccccc}
\toprule
\textbf{Dataset} & \textbf{Design} & $E$ & $A$ & $\alpha$ & $B$ & $\beta$ & $R^2_{\text{train}}$ & $R^2_{\mathcal{H}}$ & $R^2_{\mathcal{H}_{\mathrm{col}}}$ & $R^2_{\mathcal{H}_{\mathrm{nc}}}$ \\
\midrule
Wikipedia (BF16) & CO & $0$ & $3428.447$ & $0.399$ & $2366.171$ & $0.392$ & 0.992 & 0.953 & 0.970 & 0.933 \\
               & NC & $0.758$ & $4904.551$ & $0.421$ & $4.82{\rm e}5$ & $0.713$ & 0.991 & 0.978 & 0.981 & 0.973 \\
\bottomrule
\end{tabular}
\end{table}

\begin{table}[H]
\small
\centering
\setlength{\tabcolsep}{2pt}
\caption{BF16 Kaplan fitted parameters: $L(N,D)=\bigl[(N_c/N)^{\alpha_N/\alpha_D} + D_c/D\bigr]^{\alpha_D}$. Final epoch.}
\label{tab:kaplan-params-bf16}
\begin{tabular}{llccccccccc}
\toprule
\textbf{Dataset} & \textbf{Design} & $N_c$ & $D_c$ & $\alpha_N$ & $\alpha_D$ & $\alpha_N/\alpha_D$ & $R^2_{\text{train}}$ & $R^2_{\mathcal{H}}$ & $R^2_{\mathcal{H}_{\mathrm{col}}}$ & $R^2_{\mathcal{H}_{\mathrm{nc}}}$ \\
\midrule
Wikipedia (BF16) & CO & $1.19{\rm e}9$ & $2.84{\rm e}9$ & $0.397$ & $0.394$ & $1.008$ & 0.992 & 0.946 & 0.974 & 0.914 \\
               & NC & $1.98{\rm e}9$ & $1.12{\rm e}8$ & $0.354$ & $0.823$ & $0.430$ & 0.990 & 0.966 & 0.972 & 0.959 \\
\bottomrule
\end{tabular}
\end{table}

\begin{table}[H]
\small
\centering
\setlength{\tabcolsep}{2pt}
\caption{BF16 Droppo-Elibol fitted parameters. Final epoch.}
\label{tab:droppo-params-bf16}
\begin{tabular}{llcccccccccc}
\toprule
\textbf{Dataset} & \textbf{Design} & $L_\infty$ & $N_C$ & $D_C$ & $\alpha_N$ & $\alpha_D$ & $\alpha$ & $R^2_{\text{train}}$ & $R^2_{\mathcal{H}}$ & $R^2_{\mathcal{H}_{\mathrm{col}}}$ & $R^2_{\mathcal{H}_{\mathrm{nc}}}$ \\
\midrule
Wikipedia (BF16) & CO & $10^{-6}$ & $9.97{\rm e}8$ & $1.67{\rm e}9$ & $0.400$ & $0.388$ & $0.583$ & 0.992 & 0.956 & 0.972 & 0.938 \\
               & NC & $0.224$ & $1.21{\rm e}9$ & $1.65{\rm e}8$ & $0.379$ & $0.612$ & $0.882$ & 0.992 & 0.965 & 0.973 & 0.955 \\
\bottomrule
\end{tabular}
\end{table}

\begin{table}[H]
\small
\centering
\caption{BF16 Repeated-data fitted scaling-law parameters. Final epoch.}
\label{tab:epoch-params-bf16-fit}
\begin{tabular}{llccccc}
\toprule
\textbf{Dataset} & \textbf{Design} & $A$ & $\alpha$ & $B$ & $\beta$ & $E$ \\
\midrule
Wikipedia (BF16) & CO & $3125.835$ & $0.392$ & $3300.851$ & $0.427$ & $1.29{\rm e}{-}12$ \\
               & NC & $3045.878$ & $0.387$ & $3.49{\rm e}7$ & $0.995$ & $0.612$ \\
\bottomrule
\end{tabular}
\end{table}

\begin{table}[H]
\small
\centering
\caption{BF16 Repeated-data fit quality and repetition factors. Final epoch. Values reported as $\RhalfLow$ for $R^*_N$ indicate the L-BFGS-B optimizer terminating at the documented lower bound $R^*_N \in [\RhalfLow,\,\RhalfHigh]$; see Appendix~\ref{app:optimizer_bounds}.}
\label{tab:epoch-params-bf16-r2}
\begin{tabular}{llcccccc}
\toprule
\textbf{Dataset} & \textbf{Design} & $R^*_D$ & $R^*_N$ & $R^2_{\text{train}}$ & $R^2_{\mathcal{H}}$ & $R^2_{\mathcal{H}_{\mathrm{col}}}$ & $R^2_{\mathcal{H}_{\mathrm{nc}}}$ \\
\midrule
Wikipedia (BF16) & CO & $0.283$ & $0.100$ & 0.988 & 0.941 & 0.964 & 0.916 \\
               & NC & $0.723$ & $0.100$ & 0.986 & 0.973 & 0.975 & 0.970 \\
\bottomrule
\end{tabular}
\end{table}

\clearpage

\clearpage
\section*{NeurIPS Paper Checklist}

\begin{enumerate}

\item {\bf Claims}
    \item[] Question: Do the main claims made in the abstract and introduction accurately reflect the paper's contributions and scope?
    \item[] Answer: \answerYes{}
    \item[] Justification: The abstract and introduction state five claims; ill-conditioning (Proposition~\ref{prop:full_cond}), CI inflation (Corollary~\ref{prop:ci_inflation}), a TPP-diversity threshold (Proposition~\ref{prop:holdout_r2}), NC-vs-CO holdout ordering (Theorem~\ref{thm:holdout_regimes}, Tables~\ref{tab:summary_full}-\ref{tab:regime_a_winrate}), and Jacobian-geometry generality (Section~\ref{sec:multi_ray}) - each backed by a formal result or empirical table in the body, and constitute the main contributions of the paper.
    \item[] Guidelines:
    \begin{itemize}
        \item The answer \answerNA{} means that the abstract and introduction do not include the claims made in the paper.
        \item The abstract and/or introduction should clearly state the claims made, including the contributions made in the paper and important assumptions and limitations. A \answerNo{} or \answerNA{} answer to this question will not be perceived well by the reviewers. 
        \item The claims made should match theoretical and experimental results, and reflect how much the results can be expected to generalize to other settings. 
        \item It is fine to include aspirational goals as motivation as long as it is clear that these goals are not attained by the paper. 
    \end{itemize}

\item {\bf Limitations}
    \item[] Question: Does the paper discuss the limitations of the work performed by the authors?
     \item[] Answer: \answerYes{}
    \item[] Justification: Section~\ref{sec:discussion} (paragraph~\ref{sec:limitations}) discusses theoretical caveats (Theorem~\ref{thm:holdout_regimes} gives an ordering not a magnitude; Corollary~\ref{prop:ci_inflation} assumes i.i.d.\ Gaussian residuals), empirical scope (small-scale models, single architecture, English-only, single fitting protocol, assumption of small loss perturbations due to hyperparameter shifts from dataset-specific values), and generalization limits - softening on Cosmopedia/peS2o, pre-training-only scope, untested cross-domain and downstream tasks transfer.
    \item[] Guidelines:
    \begin{itemize}
        \item The answer \answerNA{} means that the paper has no limitation while the answer \answerNo{} means that the paper has limitations, but those are not discussed in the paper. 
        \item The authors are encouraged to create a separate ``Limitations'' section in their paper.
        \item The paper should point out any strong assumptions and how robust the results are to violations of these assumptions (e.g., independence assumptions, noiseless settings, model well-specification, asymptotic approximations only holding locally). The authors should reflect on how these assumptions might be violated in practice and what the implications would be.
        \item The authors should reflect on the scope of the claims made, e.g., if the approach was only tested on a few datasets or with a few runs. In general, empirical results often depend on implicit assumptions, which should be articulated.
        \item The authors should reflect on the factors that influence the performance of the approach. For example, a facial recognition algorithm may perform poorly when image resolution is low or images are taken in low lighting. Or a speech-to-text system might not be used reliably to provide closed captions for online lectures because it fails to handle technical jargon.
        \item The authors should discuss the computational efficiency of the proposed algorithms and how they scale with dataset size.
        \item If applicable, the authors should discuss possible limitations of their approach to address problems of privacy and fairness.
        \item While the authors might fear that complete honesty about limitations might be used by reviewers as grounds for rejection, a worse outcome might be that reviewers discover limitations that aren't acknowledged in the paper. The authors should use their best judgment and recognize that individual actions in favor of transparency play an important role in developing norms that preserve the integrity of the community. Reviewers will be specifically instructed to not penalize honesty concerning limitations.
    \end{itemize}

\item {\bf Theory assumptions and proofs}
    \item[] Question: For each theoretical result, does the paper provide the full set of assumptions and a complete (and correct) proof?
    \item[] Answer: \answerYes{}
    \item[] Justification: All assumptions are stated in each theorem environment. Complete proofs are in Appendix~\ref{app:proofs}: Proposition~\ref{prop:full_cond} in~\ref{app:proof_full_cond}, Corollary~\ref{prop:ci_inflation} in~\ref{app:proof_ci_inflation}, Proposition~\ref{prop:holdout_r2} in~\ref{app:proof_prop_holdout_r2}, and Theorem~\ref{thm:holdout_regimes} in~\ref{app:proof_holdout_r2}.
    \item[] Guidelines:
    \begin{itemize}
        \item The answer \answerNA{} means that the paper does not include theoretical results. 
        \item All the theorems, formulas, and proofs in the paper should be numbered and cross-referenced.
        \item All assumptions should be clearly stated or referenced in the statement of any theorems.
        \item The proofs can either appear in the main paper or the supplemental material, but if they appear in the supplemental material, the authors are encouraged to provide a short proof sketch to provide intuition. 
        \item Inversely, any informal proof provided in the core of the paper should be complemented by formal proofs provided in appendix or supplemental material.
        \item Theorems and Lemmas that the proof relies upon should be properly referenced. 
    \end{itemize}

    \item {\bf Experimental result reproducibility}
    \item[] Question: Does the paper fully disclose all the information needed to reproduce the main experimental results of the paper to the extent that it affects the main claims and/or conclusions of the paper (regardless of whether the code and data are provided or not)?
   \item[] Answer: \answerYes{}
    \item[] Justification: All architectures, hyperparameters, data splits, fitting protocols, and seed strategies are specified in Section~\ref{sec:empirical} and Appendices~\ref{app:experimental_setup}-\ref{app:seed_protocol}, with fitted coefficients in Appendix~\ref{app:coefficients}. An anonymized live online repository link is provided, and the supplementary material includes a ZIP file with a screenshot of that repository, with instructions to reproduce all tables either from pre-computed metrics (${\sim}2$ CPU-hours) or from scratch (${\sim}3{,}624$ GPU-hours).
    
    \item[] Guidelines:
    \begin{itemize}
        \item The answer \answerNA{} means that the paper does not include experiments.
        \item If the paper includes experiments, a \answerNo{} answer to this question will not be perceived well by the reviewers: Making the paper reproducible is important, regardless of whether the code and data are provided or not.
        \item If the contribution is a dataset and\slash or model, the authors should describe the steps taken to make their results reproducible or verifiable. 
        \item Depending on the contribution, reproducibility can be accomplished in various ways. For example, if the contribution is a novel architecture, describing the architecture fully might suffice, or if the contribution is a specific model and empirical evaluation, it may be necessary to either make it possible for others to replicate the model with the same dataset, or provide access to the model. In general. releasing code and data is often one good way to accomplish this, but reproducibility can also be provided via detailed instructions for how to replicate the results, access to a hosted model (e.g., in the case of a large language model), releasing of a model checkpoint, or other means that are appropriate to the research performed.
        \item While NeurIPS does not require releasing code, the conference does require all submissions to provide some reasonable avenue for reproducibility, which may depend on the nature of the contribution. For example
        \begin{enumerate}
            \item If the contribution is primarily a new algorithm, the paper should make it clear how to reproduce that algorithm.
            \item If the contribution is primarily a new model architecture, the paper should describe the architecture clearly and fully.
            \item If the contribution is a new model (e.g., a large language model), then there should either be a way to access this model for reproducing the results or a way to reproduce the model (e.g., with an open-source dataset or instructions for how to construct the dataset).
            \item We recognize that reproducibility may be tricky in some cases, in which case authors are welcome to describe the particular way they provide for reproducibility. In the case of closed-source models, it may be that access to the model is limited in some way (e.g., to registered users), but it should be possible for other researchers to have some path to reproducing or verifying the results.
        \end{enumerate}
    \end{itemize}

\item {\bf Open access to data and code}
    \item[] Question: Does the paper provide open access to the data and code, with sufficient instructions to faithfully reproduce the main experimental results, as described in supplemental material?
    \item[] Answer: \answerYes{}
    \item[] Justification: Anonymized code: \url{https://anonymous.4open.science/r/Tokens-per-Parameter_Coverage_Is_Critical_for_Robust_LLM_Scaling_Law_Extrapolation-CC76}. Trained LLMs checkpoints and metrics: \url{https://huggingface.co/datasets/TPPIsCriticalFor/colinear_scaling_models}. All five training corpora are publicly available under permissive licenses.
    \item[] Guidelines:
    \begin{itemize}
        \item The answer \answerNA{} means that paper does not include experiments requiring code.
        \item Please see the NeurIPS code and data submission guidelines (\url{https://neurips.cc/public/guides/CodeSubmissionPolicy}) for more details.
        \item While we encourage the release of code and data, we understand that this might not be possible, so \answerNo{} is an acceptable answer. Papers cannot be rejected simply for not including code, unless this is central to the contribution (e.g., for a new open-source benchmark).
        \item The instructions should contain the exact command and environment needed to run to reproduce the results. See the NeurIPS code and data submission guidelines (\url{https://neurips.cc/public/guides/CodeSubmissionPolicy}) for more details.
        \item The authors should provide instructions on data access and preparation, including how to access the raw data, preprocessed data, intermediate data, and generated data, etc.
        \item The authors should provide scripts to reproduce all experimental results for the new proposed method and baselines. If only a subset of experiments are reproducible, they should state which ones are omitted from the script and why.
        \item At submission time, to preserve anonymity, the authors should release anonymized versions (if applicable).
        \item Providing as much information as possible in supplemental material (appended to the paper) is recommended, but including URLs to data and code is permitted.
    \end{itemize}

\item {\bf Experimental setting/details}
    \item[] Question: Does the paper specify all the training and test details (e.g., data splits, hyperparameters, how they were chosen, type of optimizer) necessary to understand the results?
    \item[] Answer: \answerYes{}
        \item[] Justification: All $14$ architectures (Table~\ref{tab:model-architectures}), training hyperparameters (Table~\ref{tab:hyperparams}), per-dataset learning rates, data-split procedures, CO/NC grid definitions, holdout partitions, fitting protocol (restart counts, seed strategy), and compute infrastructure are specified in Section~\ref{sec:empirical} and Appendix~\ref{app:experimental_setup}.
    \item[] Guidelines:
    \begin{itemize}
        \item The answer \answerNA{} means that the paper does not include experiments.
        \item The experimental setting should be presented in the core of the paper to a level of detail that is necessary to appreciate the results and make sense of them.
        \item The full details can be provided either with the code, in appendix, or as supplemental material.
    \end{itemize}

\item {\bf Experiment statistical significance}
    \item[] Question: Does the paper report error bars suitably and correctly defined or other appropriate information about the statistical significance of the experiments?
    \item[] Answer: \answerYes{}
    \item[] Justification: All results report seed-to-seed standard deviations and $95\%$ CIs on the mean across $30$ optimizer seeds (Tables~\ref{tab:summary_full}-\ref{tab:winrate}), the overall win rate includes a $95\%$ CI over \winrateDenom seed-paired comparisons, and the budget-matched subset enumeration (Table~\ref{tab:regime_a_winrate}) reports bootstrap CIs across $22$ seeds. A BF16 mixed-precision replication on Wikipedia (Table~\ref{tab:summary_bf16} in the main text and Tables~\ref{tab:winrate_bf16}-\ref{tab:winrate_appendix_bf16} in Appendix~\ref{app:bf16_results}) independently confirms the same patterns with seed-paired NC win rate \winratebf (\winratebfNumer/\winratebfDenom).
    \item[] Guidelines:
    \begin{itemize}
        \item The answer \answerNA{} means that the paper does not include experiments.
        \item The authors should answer \answerYes{} if the results are accompanied by error bars, confidence intervals, or statistical significance tests, at least for the experiments that support the main claims of the paper.
        \item The factors of variability that the error bars are capturing should be clearly stated (for example, train/test split, initialization, random drawing of some parameter, or overall run with given experimental conditions).
        \item The method for calculating the error bars should be explained (closed form formula, call to a library function, bootstrap, etc.)
        \item The assumptions made should be given (e.g., Normally distributed errors).
        \item It should be clear whether the error bar is the standard deviation or the standard error of the mean.
        \item It is OK to report 1-sigma error bars, but one should state it. The authors should preferably report a 2-sigma error bar than state that they have a 96\% CI, if the hypothesis of Normality of errors is not verified.
        \item For asymmetric distributions, the authors should be careful not to show in tables or figures symmetric error bars that would yield results that are out of range (e.g., negative error rates).
        \item If error bars are reported in tables or plots, the authors should explain in the text how they were calculated and reference the corresponding figures or tables in the text.
    \end{itemize}

\item {\bf Experiments compute resources}
    \item[] Question: For each experiment, does the paper provide sufficient information on the computer resources (type of compute workers, memory, time of execution) needed to reproduce the experiments?
    \item[] Answer: \answerYes{}
    \item[] Justification: Appendix~\ref{hardware} reports
    worker type, memory, runtime, and storage for each
    experiment family: ${\sim}3{,}624$ GPU-hours for
    pre-training, ${\sim}5$ wall-clock hours (230 workers)
    for fitting/visualization, and ${\sim}24$ hours
    (480 workers) for subset enumeration, plus
    ${\sim}2.2$\,TB total storage. We also disclose that
    full-project compute exceeded the final reported
    experiment runs due to pilot and failed sweeps; this
    additional usage will be reported in the NeurIPS
    compute-reporting form.
    \item[] Guidelines:
    \begin{itemize}
        \item The answer \answerNA{} means that the paper does not include experiments.
        \item The paper should indicate the type of compute workers CPU or GPU, internal cluster, or cloud provider, including relevant memory and storage.
        \item The paper should provide the amount of compute required for each of the individual experimental runs as well as estimate the total compute. 
        \item The paper should disclose whether the full research project required more compute than the experiments reported in the paper (e.g., preliminary or failed experiments that didn't make it into the paper). 
    \end{itemize}
    
\item {\bf Code of ethics}
    \item[] Question: Does the research conducted in the paper conform, in every respect, with the NeurIPS Code of Ethics \url{https://neurips.cc/public/EthicsGuidelines}?
    \item[] Answer: \answerYes{}
    \item[] Justification: The work uses only publicly available, permissively licensed corpora, involves no human subjects, and releases small-scale checkpoints ($5$-$76.5$M parameters) that pose no dual-use risk. See Section~\ref{sec:safeguards} for a detailed assessment.
    \item[] Guidelines:
    \begin{itemize}
        \item The answer \answerNA{} means that the authors have not reviewed the NeurIPS Code of Ethics.
        \item If the authors answer \answerNo, they should explain the special circumstances that require a deviation from the Code of Ethics.
        \item The authors should make sure to preserve anonymity (e.g., if there is a special consideration due to laws or regulations in their jurisdiction).
    \end{itemize}

\item {\bf Broader impacts}
    \item[] Question: Does the paper discuss both potential positive societal impacts and negative societal impacts of the work performed?
    \item[] Answer: \answerYes{}
    \item[] Justification: Section~\ref{sec:broader_impact} discusses the corrective intent of the work (deflating overconfident scaling-law extrapolations) and Section~\ref{sec:safeguards} addresses the negligible misuse risk of the released checkpoints and the marginal indirect risk of accelerating capability development.
    \item[] Guidelines:
    \begin{itemize}
        \item The answer \answerNA{} means that there is no societal impact of the work performed.
        \item If the authors answer \answerNA{} or \answerNo, they should explain why their work has no societal impact or why the paper does not address societal impact.
        \item Examples of negative societal impacts include potential malicious or unintended uses (e.g., disinformation, generating fake profiles, surveillance), fairness considerations (e.g., deployment of technologies that could make decisions that unfairly impact specific groups), privacy considerations, and security considerations.
        \item The conference expects that many papers will be foundational research and not tied to particular applications, let alone deployments. However, if there is a direct path to any negative applications, the authors should point it out. For example, it is legitimate to point out that an improvement in the quality of generative models could be used to generate Deepfakes for disinformation. On the other hand, it is not needed to point out that a generic algorithm for optimizing neural networks could enable people to train models that generate Deepfakes faster.
        \item The authors should consider possible harms that could arise when the technology is being used as intended and functioning correctly, harms that could arise when the technology is being used as intended but gives incorrect results, and harms following from (intentional or unintentional) misuse of the technology.
        \item If there are negative societal impacts, the authors could also discuss possible mitigation strategies (e.g., gated release of models, providing defenses in addition to attacks, mechanisms for monitoring misuse, mechanisms to monitor how a system learns from feedback over time, improving the efficiency and accessibility of ML).
    \end{itemize}
    
\item {\bf Safeguards}
    \item[] Question: Does the paper describe safeguards that have been put in place for responsible release of data or models that have a high risk for misuse (e.g., pre-trained language models, image generators, or scraped datasets)?
    \item[] Answer: \answerYes{}
    \item[] Justification: Section~\ref{sec:safeguards} describes the safeguards: all corpora are publicly available and pre-filtered, released checkpoints are too small for dual-use capabilities, and the accompanying code targets fitting diagnostics rather than model serving.
    \item[] Guidelines:
    \begin{itemize}
        \item The answer \answerNA{} means that the paper poses no such risks.
        \item Released models that have a high risk for misuse or dual-use should be released with necessary safeguards to allow for controlled use of the model, for example by requiring that users adhere to usage guidelines or restrictions to access the model or implementing safety filters. 
        \item Datasets that have been scraped from the Internet could pose safety risks. The authors should describe how they avoided releasing unsafe images.
        \item We recognize that providing effective safeguards is challenging, and many papers do not require this, but we encourage authors to take this into account and make a best faith effort.
    \end{itemize}

\item {\bf Licenses for existing assets}
    \item[] Question: Are the creators or original owners of assets (e.g., code, data, models), used in the paper, properly credited and are the license and terms of use explicitly mentioned and properly respected?
    \item[] Answer: \answerYes{}
    \item[] Justification: All five corpora and every referenced scaling-law formalism are cited at first use (Section~\ref{sec:empirical}). The LLaMA architecture is credited to~\citet{touvron2023llama}. Dataset licenses are documented in Appendix~\ref{app:experimental_setup}; all are publicly available under permissive terms.
    \item[] Guidelines:
    \begin{itemize}
        \item The answer \answerNA{} means that the paper does not use existing assets.
        \item The authors should cite the original paper that produced the code package or dataset.
        \item The authors should state which version of the asset is used and, if possible, include a URL.
        \item The name of the license (e.g., CC-BY 4.0) should be included for each asset.
        \item For scraped data from a particular source (e.g., website), the copyright and terms of service of that source should be provided.
        \item If assets are released, the license, copyright information, and terms of use in the package should be provided. For popular datasets, \url{paperswithcode.com/datasets} has curated licenses for some datasets. Their licensing guide can help determine the license of a dataset.
        \item For existing datasets that are re-packaged, both the original license and the license of the derived asset (if it has changed) should be provided.
        \item If this information is not available online, the authors are encouraged to reach out to the asset's creators.
    \end{itemize}

\item {\bf New assets}
    \item[] Question: Are new assets introduced in the paper well documented and is the documentation provided alongside the assets?
    \item[] Answer: \answerYes{}
    \item[] Justification: We release trained model checkpoints on HuggingFace (\url{https://huggingface.co/datasets/TPPIsCriticalFor/colinear_scaling_models}) with accompanying documentation describing model configurations, training corpora, and hyperparameters. Experimental code for reproducing scaling law fits and analyses is provided through an anonymized live online repository link, and the supplementary material includes a ZIP file with a screenshot of that repository. All assets use publicly available, permissively licensed training data (RedPajama, Cosmopedia, Wikipedia, peS2o, C4).
    \item[] Guidelines:
    \begin{itemize}
        \item The answer \answerNA{} means that the paper does not release new assets.
        \item Researchers should communicate the details of the dataset\slash code\slash model as part of their submissions via structured templates. This includes details about training, license, limitations, etc. 
        \item The paper should discuss whether and how consent was obtained from people whose asset is used.
        \item At submission time, remember to anonymize your assets (if applicable). You can either create an anonymized URL or include an anonymized zip file.
    \end{itemize}

\item {\bf Crowdsourcing and research with human subjects}
    \item[] Question: For crowdsourcing experiments and research with human subjects, does the paper include the full text of instructions given to participants and screenshots, if applicable, as well as details about compensation (if any)? 
    \item[] Answer: \answerNA{}
    \item[] Justification: This work does not involve crowdsourcing or human subjects.
    \item[] Guidelines:
    \begin{itemize}
        \item The answer \answerNA{} means that the paper does not involve crowdsourcing nor research with human subjects.
        \item Including this information in the supplemental material is fine, but if the main contribution of the paper involves human subjects, then as much detail as possible should be included in the main paper. 
        \item According to the NeurIPS Code of Ethics, workers involved in data collection, curation, or other labor should be paid at least the minimum wage in the country of the data collector. 
    \end{itemize}

\item {\bf Institutional review board (IRB) approvals or equivalent for research with human subjects}
    \item[] Question: Does the paper describe potential risks incurred by study participants, whether such risks were disclosed to the subjects, and whether Institutional Review Board (IRB) approvals (or an equivalent approval/review based on the requirements of your country or institution) were obtained?
    \item[] Answer: \answerNA{}
    \item[] Justification: This paper does not involve crowdsourcing nor research with human subjects.
    \item[] Guidelines:
    \begin{itemize}
        \item The answer \answerNA{} means that the paper does not involve crowdsourcing nor research with human subjects.
        \item Depending on the country in which research is conducted, IRB approval (or equivalent) may be required for any human subjects research. If you obtained IRB approval, you should clearly state this in the paper. 
        \item We recognize that the procedures for this may vary significantly between institutions and locations, and we expect authors to adhere to the NeurIPS Code of Ethics and the guidelines for their institution. 
        \item For initial submissions, do not include any information that would break anonymity (if applicable), such as the institution conducting the review.
    \end{itemize}

\item {\bf Declaration of LLM usage}
    \item[] Question: Does the paper describe the usage of LLMs if it is an important, original, or non-standard component of the core methods in this research? Note that if the LLM is used only for writing, editing, or formatting purposes and does \emph{not} impact the core methodology, scientific rigor, or originality of the research, declaration is not required.
    \item[] Answer: \answerYes{}
    \item[] Justification:
    LLM usage is declared in
    Section~\ref{sec:discussion}
    (paragraph~\ref{sec:llm_usage}). In
    short: all research questions,
    theoretical results, experimental
    design, and empirical analyses
    originated with the authors. The
    primary non-trivial use of LLMs was
    as a sounding board for candidate
    proof steps - the authors drafted
    the structure of each derivation,
    asked an LLM to propose intermediate
    algebraic manipulations, and
    independently verified, rewrote, or
    discarded each suggestion before
    incorporation. Auxiliary uses
    included exposition, \LaTeX{}
    typesetting, boilerplate code for
    data loading, training, analysis,
    and visualization, and prose
    editing - all reviewed by the
    authors. We acknowledge Anthropic's
    Claude Opus 4.7 for its assistance
    in this capacity.
    \item[] Guidelines:
    \begin{itemize}
        \item The answer \answerNA{} means that the core method development in this research does not involve LLMs as any important, original, or non-standard components.
        \item Please refer to our LLM policy in the NeurIPS handbook for what should or should not be described.
    \end{itemize}

\end{enumerate}

\end{document}